\newcommand{\myTitle}{Stochastic Optimization for Machine Learning\xspace}
\newcommand{\myName}{Andrew Cotter\xspace}
\newcommand{\myUni}{Toyota Technological Institute at Chicago\xspace}
\newcommand{\myFaculty}{Nathan Srebro\xspace}
\newcommand{\myLocation}{Chicago, Illinois\xspace}
\newcommand{\myTime}{August, 2013\xspace}
\newcommand{\myDefenseTime}{July 19, 2013\xspace}
\newcounter{dummy} 
\providecommand{\mLyX}{L\kern-.1667em\lower.25em\hbox{Y}\kern-.125emX\@}
\newcommand{\backrefnotcitedstring}{\relax} 
\newcommand{\backrefcitedsinglestring}[1]{(Cited on page~#1.)}
\newcommand{\backrefcitedmultistring}[1]{(Cited on pages~#1.)}
\renewcommand*{\backref}[1]{}  
\renewcommand*{\backrefalt}[4]{
\ifcase #1
\backrefnotcitedstring
\or
\backrefcitedsinglestring{#2}
\else
\backrefcitedmultistring{#2}
\fi}
\newcommand{\algorithmname}{Algorithm}
\newcommand{\listalgorithmname}{List of \algorithmname s}
\newcommand{\listofalgorithms}{\listof{algorithm}{\listalgorithmname}}
\numberwithin{equation}{chapter}
\numberwithin{figure}{chapter}
\numberwithin{table}{chapter}
\numberwithin{algorithm}{chapter}
\let\c@table\c@figure
\let\c@algorithm\c@figure
\newtheorem{theorem}[equation]{Theorem}
\newtheorem{lemma}[equation]{Lemma}
\newtheorem{corollary}[equation]{Corollary}
\newtheorem*{rep@theorem}{\rep@title}
\newcommand{\newreptheorem}[2]{\newenvironment{rep#1}[1]{\def\rep@title{#2 \ref{##1}}\begin{rep@theorem}}{\end{rep@theorem}}}
\newif\ifproofsection
\newenvironment{proofs}{\proofsectiontrue}{\proofsectionfalse}
\BODY\end{reptheorem}
\BODY\end{theorem}
\BODY\end{replemma}
\BODY\end{lemma}
\BODY\end{repcorollary}
\BODY\end{corollary}
\BODY\end{proof}
\newcolumntype{L}{>{\centering\arraybackslash} m{0.04\textwidth}}
\newcolumntype{T}{>{\centering\arraybackslash} m{0.32\textwidth}}
\newcolumntype{H}{>{\centering\arraybackslash} m{0.48\textwidth}}
	\newcommand{\todo}[1]{\textcolor{BrickRed}{#1}\marginpar{\textcolor{Red}{TODO}}\xspace}
	\newcommand{\margintodo}[1]{\marginpar{\textcolor{Red}{TODO: }\textcolor{BrickRed}{#1}}}
	\newcommand{\todo}[1]{}
	\newcommand{\margintodo}[1]{}
\newcommand{\iid}{\emph{i.i.d.}\xspace}
\newcommand{\probability}[2][]{\mathrm{Pr}_{#1}\left\{#2\right\}}
\newcommand{\expectation}[2][]{\mathbb{E}_{#1}\left[#2\right]}
\newcommand{\variance}[2][]{\mathrm{Var}_{#1}\left(#2\right)}
\newcommand{\indicator}[2]{\mathbf{1}_{\left\{#1\right\}}\left(#2\right)}
\newcommand{\loss}[2][]{\mathcal{L}_{#1} \left(#2\right)}
\newcommand{\emploss}[2][]{\hat{\mathcal{L}}_{#1} \left(#2\right)}
\newcommand{\hinge}{\mathrm{hinge}}
\newcommand{\slant}{\mathrm{slant}}
\newcommand{\zeroone}{\mathrm{0/1}}
\newcommand{\smooth}{\mathrm{smooth}}
\newcommand{\hilbert}{\mathcal{H}}
\newcommand{\X}{\mathcal{X}}
\newcommand{\R}{\mathbb{R}}
\newcommand{\N}{\mathbb{N}}
\newcommand{\refw}{u}
\newcommand{\inner}[2]{\left\langle {#1}, {#2} \right\rangle}
\newcommand{\norm}[1]{\left\lVert {#1} \right\rVert}
\newcommand{\abs}[1]{\left\lvert {#1} \right\rvert}
\DeclareMathOperator{\sign}{sign}
\DeclareMathOperator{\argmax}{argmax}
\DeclareMathOperator{\argmin}{argmin}
\newcommand{\spectrum}[2][]{\sigma_{#1}\left( {#2} \right)}
\newcommand{\project}[2][]{\mathcal{P}_{#1}\left(#2\right)}
\DeclareMathOperator{\rank}{rank}
\DeclareMathOperator{\trace}{tr}
\DeclareMathOperator{\diag}{diag}
\newcommand{\code}[1]{\mbox{\texttt{#1}}}
\newcounter{lineno}
\newenvironment{pseudocode}{\begin{small}\begin{tabbing}\textbf{mm}\=mm\=mm\=mm\=mm\=mm\=mm\=mm\=mm\=\kill}{\end{tabbing}\end{small}}
\newcommand{\codename}{\setcounter{lineno}{0}\>}
\newcommand{\codeline}{\>\stepcounter{lineno}\textbf{\arabic{lineno}}\'\>}
\newcommand{\codeskip}{\>\>}
\begin{document}

\frenchspacing 

\raggedbottom 

\selectlanguage{american} 


\pagenumbering{roman} 

\pagestyle{plain} 



\begin{titlepage}

\begin{addmargin}[-1cm]{-3cm}
\begin{center}

\hfill
\vfill

\begingroup
\large\color{Maroon}\spacedallcaps{\myTitle} \\ \bigskip 
\endgroup
by \\ \medskip
\spacedallcaps{\myName}

\vfill

\includegraphics[width=0.3\textwidth]{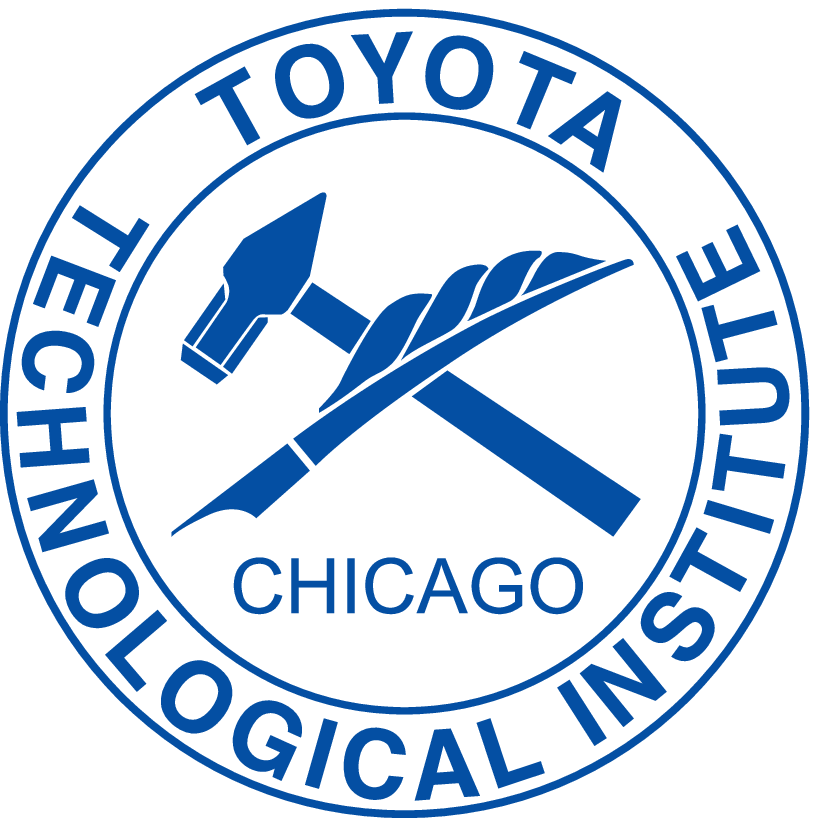} \\ \bigskip 

A thesis submitted in partial fulfillment of the requirements for the degree of \\ \medskip
Doctor of Philosophy in Computer Science \\ \medskip
at the \\ \medskip
\spacedallcaps{\myUni} \\
\myLocation \\ \bigskip
\myTime

\vfill

Thesis committee: \\ \medskip
Yury Makarychev \\
David McAllester \\
Nathan Srebro (thesis advisor) \\
Stephen Wright

\vfill

\end{center}
\end{addmargin}

\end{titlepage}


\thispagestyle{empty}

\begin{center}

\hfill
\vfill

\noindent\myName: \textit{\myTitle,} 
\textcopyright\ \myTime

\end{center}








\cleardoublepage

\pdfbookmark[1]{Abstract}{Abstract} 

\begingroup
\let\clearpage\relax
\let\cleardoublepage\relax
\let\cleardoublepage\relax

\chapter*{Abstract} 

It has been found that stochastic algorithms often find good solutions much
more rapidly than inherently-batch approaches. Indeed, a very useful rule of
thumb is that often, when solving a machine learning problem, an iterative
technique which relies on performing a very large number of
relatively-inexpensive updates will often outperform one which performs a
smaller number of much "smarter" but computationally-expensive updates.

In this thesis, we will consider the application of stochastic algorithms to
two of the most important machine learning problems. Part i is concerned with
the supervised problem of binary classification using kernelized linear
classifiers, for which the data have labels belonging to exactly two classes
(e.g. "has cancer" or "doesn't have cancer"), and the learning problem is to
find a linear classifier which is best at predicting the label. In Part ii, we
will consider the unsupervised problem of Principal Component Analysis, for
which the learning task is to find the directions which contain most of the
variance of the data distribution.

Our goal is to present stochastic algorithms for both problems which are, above
all, practical--they work well on real-world data, in some cases better than
all known competing algorithms. A secondary, but still very important, goal is
to derive theoretical bounds on the performance of these algorithms which are
at least competitive with, and often better than, those known for other
approaches.

\paragraph{Collaborators:} The work presented in this thesis was performed
jointly with Raman Arora, Karen Livescu, Shai Shalev-Shwartz and Nathan Srebro.

\endgroup			

\vfill

\cleardoublepage
\thispagestyle{empty}

\begin{center}

\hfill
\vfill

\begingroup
\large\color{Maroon}\spacedallcaps{\myTitle}
\endgroup

\vfill

A thesis presented by \\ \medskip
\spacedallcaps{\myName} \\ \medskip
in partial fulfillment of the requirements for the degree of \\
Doctor of Philosophy in Computer Science.

\vfill

\myUni \\
\myLocation \\ \bigskip
\myTime

\vfill

\newcommand{\signature}[3]{
	\rule{0pt}{0.1\textheight} \Large \texttt{#2} & & #3 \\
	\hline
	\small #1 & & \small Signature
}

\begin{tabularx}{\textwidth}{lXr}
\signature{Committee member}{Yury Makarychev}{\includegraphics[height=3em]{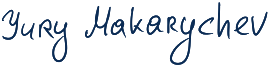}} \\
\signature{Committee member}{David McAllester}{\includegraphics[width=0.5\textwidth]{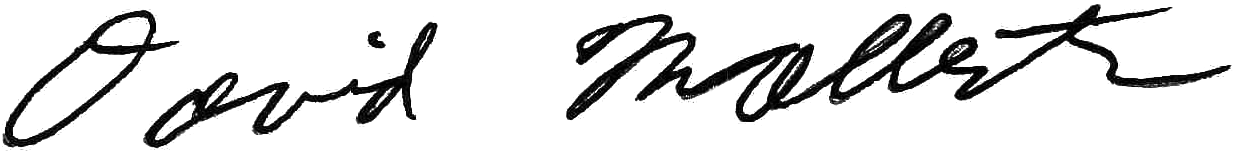}} \\
\small Chief academic officer & & \\
\signature{Committee member}{Nathan Srebro}{\includegraphics[height=3em]{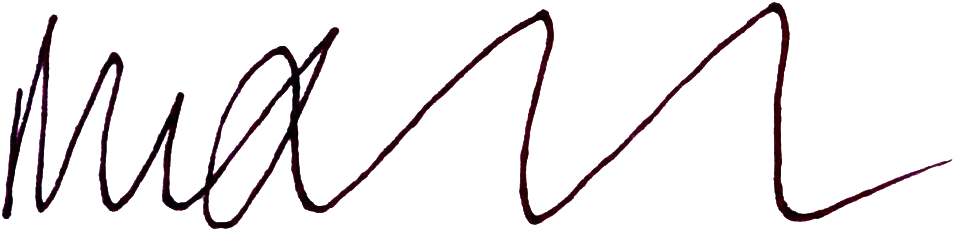}} \\
\small Thesis advisor & & \\
\signature{Committee member}{Stephen Wright}{\includegraphics[height=3em]{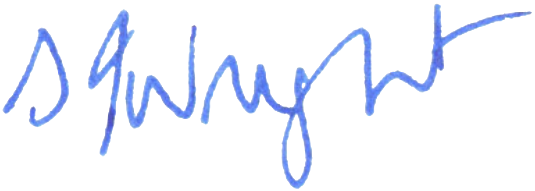}}
\end{tabularx}

\vfill

\myDefenseTime

\vfill

\end{center}

\cleardoublepage

\pdfbookmark[1]{Acknowledgments}{Acknowledgments} 

\begin{flushright}{\slshape
Think not, is my eleventh commandment; and sleep when you can, is my twelfth.} \\ \medskip
--- \defcitealias{Melville51}{Herman Melville}\citetalias{Melville51} \citep{Melville51}
\end{flushright}

\bigskip


\begingroup

\let\clearpage\relax
\let\cleardoublepage\relax
\let\cleardoublepage\relax

\chapter*{Acknowledgments} 

There are four people without whom the completion of this thesis would not have
been possible: Carole Cotter, whose degree of love and support has far exceeded
that which might be expected even from one's mother; my girlfriend, Allie
Shapiro, who has been, if anything, too understanding of my idiosyncrasies;
Umut Acar, who gave me my first opportunity to contribute to a successful
research project; and finally, in the last (and most significant) place, my
advisor, Nati Srebro, whose ability, knowledge and dedication to the discovery
of significant results provides the model of what a ``machine learning
researcher'' should be.

Many other people have contributed, both directly and indirectly, to the work
described in this thesis. Of these, the most significant are my co-authors on
computer science papers (in alphabetical order): Raman Arora, Beno\^{i}t
Hudson, Joseph Keshet, Karen Livescu, Shai Shalev-Shwartz, Ohad Shamir, Karthik
Sridharan and Duru T\"{u}rko\u{g}lu. Others include my fellow graduate
students: Avleen Bijral, Ozgur Sumer and Payman Yadollahpour; my colleagues at
NCAR, most significantly Kent Goodrich and John Williams; and professors,
including Karl Gustafson, Sham Kakade and David McAllester.

I would be remiss to fail to thank the remaining members of my thesis
committee, Yury Makarychev and Stephen Wright, both of whom, along with David
McAllester and Nati Srebro (previously mentioned), have been enormously helpful
in improving not only the writing style of this thesis, but also its content
and points of emphasis. Several others also gave me advice and asked
instructive questions, both while I was practicing for my thesis defense, and
afterwards: these include Francesco Orabona and Matthew Rocklin, as well as
Raman Arora, Allie Shapiro and Payman Yadollahpour (previously mentioned).

I must also thank the creators of the excellent
\href{http://code.google.com/p/classicthesis}{classicthesis} \LaTeX \ style,
which I think is extremely aesthetically pleasing, despite the fact that I
reduced the sizes of the margins in direct contravention of their instructions.

Finally, I apologize to anybody who I have slighted in this brief list of
acknowledgments. One would often say ``you know who you are'', but the people
who have inspired me or contributed to research avenues onto which I have
subsequently embarked are not only too numerous to list, but are often unknown
to me, and I to them. All that I can do is to confirm my debt to all
unacknowledged parties, and beg for their forgiveness.

\endgroup

\pagestyle{scrheadings} 

\cleardoublepage

\refstepcounter{dummy}

\pdfbookmark[1]{\contentsname}{tableofcontents} 

\setcounter{tocdepth}{2} 

\setcounter{secnumdepth}{3} 

\manualmark
\markboth{\spacedlowsmallcaps{\contentsname}}{\spacedlowsmallcaps{\contentsname}}
\tableofcontents

\automark[section]{chapter}
\renewcommand{\chaptermark}[1]{\markboth{\spacedlowsmallcaps{#1}}{\spacedlowsmallcaps{#1}}}
\renewcommand{\sectionmark}[1]{\markright{\thesection\enspace\spacedlowsmallcaps{#1}}}

\clearpage

\begingroup
\let\clearpage\relax
\let\cleardoublepage\relax
\let\cleardoublepage\relax


\refstepcounter{dummy}
\addcontentsline{toc}{chapter}{\listtablename} 
\pdfbookmark[1]{\listtablename}{lot} 

\listoftables

\vspace*{8ex}
\newpage


\refstepcounter{dummy}
\addcontentsline{toc}{chapter}{\listfigurename} 
\pdfbookmark[1]{\listfigurename}{lof} 

\listoffigures

\vspace*{8ex}
\newpage


\refstepcounter{dummy}
\addcontentsline{toc}{chapter}{\listalgorithmname} 
\pdfbookmark[1]{\listalgorithmname}{loa} 

\listofalgorithms

\vspace*{8ex}
\newpage

%
%
%
%
%
%
%
%

\endgroup

\cleardoublepage

\pagenumbering{arabic} 

\cleardoublepage 


\ctparttext{
\todo{write part-abstract! include list of contributions, and cite publications}
}

\part{Support Vector Machines}\label{part:svm} 

\chapter{Prior Work}\label{ch:svm-introduction}

\section{Overview}\label{sec:svm-introduction:overview}

One of the oldest and simplest problems in machine learning is that of
supervised binary classification, in which the goal is, given a set of training
vectors $x_1,\dots,x_n \in \X$ with corresponding labels
$y_1,\dots,y_n \in \{\pm 1\}$ drawn \iid from an unknown distribution
$\mathcal{D}$, to learn a classification function which assigns labels to
previously-unseen samples.

In this and the following chapters, we will consider the application of Support
Vector Machines (SVMs) \citep{CortesVa95} to such problems.  It's important to
clearly distinguish the problem to be solved (in this case, binary
classification) from the tool used to solve it (SVMs). In the years since their
introduction, the use of SVMs has become widespread, and it would not be unfair
to say that it is one of a handful of canonical machine learning techniques
with which nearly every student is familiar and nearly every practitioner has
used. For this reason, it has become increasingly easy to put the cart before
the horse, so to speak, and to view each advancement as \emph{improving the
performance of SVMs}, and not \emph{finding better binary classifiers}.

This is not merely a semantic distinction, since practitioners must measure the
performance of the classifiers which they find, while theoreticians often seek
to bound the amount of computation required to find (and/or use) a good
classifier. For both of these tasks, one requires a metric, a quantifiable way
of answering the question ``just how good is this classifier, anyway?''. For a
SVM, which as we will see in Section \ref{sec:svm-introduction:objective} can be
reduced to a convex optimization problem (indeed, there are several different
such reductions in widespread use), the most convenient measure is often the
suboptimality of the solution. This convenience is an illusion, however, since
unless one has a means of converting a bound on suboptimality into a bound on
the classification error on unseen data, this ``suboptimality metric'' will
tell you nothing at all about the true quality of the solution.

Instead, we follow \citet{BottouBo07,ShalevSr08}, and consider the ``quality''
of a SVM solution to be the expected proportion (with respect to $\mathcal{D}$)
of classification mistakes made by the classifier, also known as the
\emph{generalization error}. The use of generalization error is hardly novel,
neither in this thesis nor in the cited papers---rather, both seek to reverse
the trend of viewing SVMs not as a means to an end, but as an end in
themselves.

Having discussed the ``how'' and ``why'' of measuring the performance of SVMs,
let us move on to the ``what''. Any supervised learning technique consists of
(at least) two phases: training and testing. In the training phase, one uses
the set of provided labeled examples (the training data) to find a classifier.
In the testing phase, one \emph{uses} this classifier on previously-unseen
data. Ideally, this ``use'' would be the application of the learned classifier
in a production system---say, to provide targeted advertisements to web-page
viewers. In research, evaluation of true production systems is rare. Instead,
one simulates such an application by using the learned classifier to predict
the labels of a set of held-out testing data, the true labels of which are
known to the researcher, and then comparing the predicted labels to the true
labels in order to estimate the generalization error of the classifier.

Most work focuses on finding a classifier which generalizes well as quickly as
possible, i.e. jointly minimizing both the testing error and training runtime.
In Chapter \ref{ch:svm-sbp}, a kernel SVM optimization algorithm will be
presented which enjoys the best known bound, measured in these terms. Testing
runtime should not be neglected, however. Indeed, in some applications, it may
be far more important than the training runtime---for example, if one wishes to
use a SVM for face recognition on a camera, then one might not mind a training
runtime of weeks on a cluster, so long as evaluating the eventual solution on
the camera's processor is sufficiently fast. In Chapter \ref{ch:svm-sparse}, an
algorithm for improving the testing performance of kernel SVMs will be
presented, which again enjoys the best known bound on its performance.

The remainder of this chapter will be devoted to laying the necessary
groundwork for the presentation of these algorithms. In Section
\ref{sec:svm-introduction:objective}, the SVM problem will be discussed in detail. In
Section \ref{sec:svm-introduction:generalization}, generalization bounds will
be introduced, and a key result enabling their derivation will be given. The
chapter will conclude, in Sections \ref{sec:svm-introduction:traditional} and
\ref{sec:svm-introduction:non-traditional}, with discussions of several
alternative SVM optimization algorithms, including proofs of generalization
bounds.

\section{Objective}\label{sec:svm-introduction:objective}

\begin{table}

\begin{small}
\begin{center}
\begin{tabular}{llll}
\hline
& Definition & Name \\
\hline
$\mathcal{D}$ & & Data distribution \\
$n\in\N$ & & Training size \\
$x_1,\dots,x_n\in\X$ & & Training vectors \\
$y_1,\dots,y_n\in\{\pm 1\}$ & & Training labels \\
$\Phi:\X\rightarrow\hilbert$ & & Kernel map \\
$K:\X\times\X\rightarrow\R$ & $K(x,x') = \inner{\Phi(x)}{\Phi(x')}$ & Kernel function such that $K(x,x)\le 1$ \\
$g_w:\X\rightarrow\R$ & $g_w(x)=\inner{w}{\Phi(x)}$ & Linear classifier \\
$g_{w,b}:\X\rightarrow\R$ & $g_{w,b}(x)=\inner{w}{\Phi(x)}+b$ & Linear classifier with bias \\
$c_1,\dots,c_n\in\R$ & $c_i = y_i \inner{w}{\Phi(x_i)}$ & Training responses \\
$\ell_\zeroone:\R\rightarrow\R$ & $\ell_\zeroone(z) = \indicator{z\le 0}{z}$ & Zero-one loss \\
$\ell_\hinge:\R\rightarrow\R$ & $\ell_\hinge(z) = \max(0,1-z)$ & Hinge loss \\
$\hat{\mathcal{L}}:(\X\rightarrow\R)\rightarrow\R$ & $\emploss{g} = \frac{1}{n}\sum_{i=1}^{n}\ell\left(y_i g(x_i)\right)$ & Empirical loss \\
$\mathcal{L}:(\X\rightarrow\R)\rightarrow\R$ & $\loss{g} = \expectation[x,y\sim\mathcal{D}]{\ell\left(y g(x)\right)}$ & Expected loss \\
\hline
\end{tabular}
\end{center}
\end{small}

\caption{
Summary of common notation across Chapters \ref{ch:svm-introduction},
\ref{ch:svm-sbp} and \ref{ch:svm-sparse}. The training responses $c$ are
calculated and kept up-to-date during optimization, so the $w$ parameterizing
them should be taken to be the weight vector at the current optimization
step---see Section \ref{sec:svm-introduction:traditional} for details.  The
empirical and expected losses are generally subscripted with the particular
loss which they use---for example, $\emploss[\hinge]{g_w}$ is the empirical
hinge loss of the linear classifier defined by $w\in\hilbert$.
}

\label{tab:svm-introduction:notation}

\end{table}

Training a SVM amounts to finding a vector $w$ defining a classifier $x \mapsto
\sign(\inner{w}{\Phi\left(x\right)})$, that on the one hand has low norm, and
on the other has a small training error, as measured through the average hinge
loss on the training sample (see the definition of $\emploss[\hinge]{w}$ in
Table \ref{tab:svm-introduction:notation}). This is captured by the following
bi-criterion optimization problem \citep{HazanKoSr11}:
\begin{align}
\label{eq:svm-introduction:bi-criterion-objective} \mbox{minimize}: & \norm{w},
\emploss[\hinge]{g_w}
\end{align}
We focus on \emph{kernelized} SVMs, for which we assume the existence of a
function $\Phi:\X\rightarrow\hilbert$ which maps elements of $\X$ to elements
of a kernel Hilbert space in which the ``real'' work will be done. The linear
classifier which we seek is an element of this Hilbert space, and is therefore
linear, not with respect to $\X$, but rather with respect to $\hilbert$. It
turns out that we can work in a Hilbert space $\hilbert$ which is defined
implicitly, via a kernel function $K(x,x') = \inner{\Phi(x)}{\Phi(x')}$, in
which case we never need to evaluate $\Phi$, nor do we need to explicitly
represent elements of $\hilbert$. This can be accomplished by appealing to the
representor theorem, which enables us to write $w$ as a linear combination of
the training vectors with coefficients $\alpha$:
\begin{equation}
\label{eq:svm-introduction:representor-theorem} w = \sum_{i=1}^n \alpha_i y_i \Phi( x_i )
\end{equation}
It follows from this representation that we can write some important quantities
in terms of only kernel functions, without explicitly using $\Phi$:
\begin{align*}
\norm{w}^2 = & \sum_{i=1}^n \sum_{j=1}^n \alpha_i \alpha_j y_i y_j K(x_i, x_j)
\\
\inner{w}{\Phi(x)} = & \sum_{i=1}^n \alpha_i y_i K(x_i,x)
\end{align*}
These equations can be simplified further by making use of what we call the
responses (see Table \ref{tab:svm-introduction:notation}---more on these
later).
In the so-called ``linear'' setting, it is assumed that $\X=\hilbert=\R^d$, and
that $\Phi$ is the identity function. In this case, $K(x,x')=\inner{x}{x'}$,
and kernel inner products are nothing but Euclidean inner products on $\X$. It
is generally simpler to understand SVM optimization algorithms in the linear
setting, with the kernel ``extension'' only being given once the underlying
linear algorithm is understood. Here, we will attempt to do the next best
thing, and will generally present our results explicitly, using $\Phi$ instead
of $K$, referring to the kernel only when needed.

When first encountering the formulation of Problem
\ref{eq:svm-introduction:bi-criterion-objective}, two questions naturally
spring to mind: ``why use the hinge loss?'' and ``why seek a low-norm
solution?''. The answer to the second question is related to the first, so let
us first discuss the use of the hinge loss.
As was mentioned in the previous section, the underlying problem which we wish
to solve is binary classification, which would naturally indicate that we
should seek to find a classifier which makes the smallest possible number of
mistakes on the training set---in other words, we should seek to minimize the
training zero-one loss. Unfortunately, doing so is not practical, except for
extremely small problem instances, because the underlying optimization problem
is combinatorial in nature. There is, however, a class of optimization problems
which are widely-studied, and known to be relatively easy to solve: convex
optimization problems. Use of the hinge loss, which is a convex upper-bound on
the zero-one loss, can therefore be justified by the fact that it results in a
convex optimization problem, which we can then solve efficiently. In other
words, we are \emph{approximating} what we ``really want to do'', for the sake
of practicality.

This is, however, only a partial answer. While the hinge loss is indeed a
convex upper bound on the zero-one loss, it is far from the only one, and
others (such as the log loss) are in widespread use. Another more subtle but
ultimately more satisfying reason for using the hinge loss relates to the fact
that we want a solution which experiences low generalization error, coupled
with the observation that a solution which performs well on the training set
does not necessarily perform well with respect to the underlying data
distribution.
In the realizable case (i.e. where there exists a linear classifier which
performs perfectly with respect to the underlying data distribution
$\mathcal{D}$), the classification boundary of a perfect linear classifier must
lie \emph{somewhere} between the set of positive and negative training
instances, so, in order to maximize the chance of getting it nearly-``right'',
it only makes sense for a learning algorithm to place it in the middle. This
intuition can be formalized, and it turns out that large-margin predictors
(i.e. predictors for which all training elements of both classes are ``far''
from the classification boundary) do indeed tend to generalize well.
When applying a SVM to a realizable problem, the empirical hinge loss will be
zero provided that each training vector is at least $1/\norm{w}$-far from the
boundary, so minimizing the norm of $w$ is equivalent to maximizing the margin
of the classifier.
SVMs, which may also be applied to non-realizable problems, can therefore be
interpreted as extending the idea of searching for large-margin classifiers to
the case in which the data are not necessarily linearly separable, and we
should therefore suspect that SVM solutions will tend to generalize well. This
will be formalized in Section \ref{sec:svm-introduction:generalization}.

\subsection{Primal Objective}\label{subsec:svm-introduction:svm-primal}

A typical way in which Problem \ref{eq:svm-introduction:bi-criterion-objective}
is ``scalarized'' (i.e.  transformed from a bi-criterion objective into a
single unified objective taking an additional parameter) is the following:
\begin{align}
\label{eq:svm-introduction:regularized-objective} \mbox{minimize}: &
\frac{\lambda}{2} \norm{w}^{2} +\emploss[\hinge]{g_w}
\end{align}
Here, the parameter $\lambda$ controls the tradeoff between the norm (inverse
margin) and the empirical error.  Different values of $\lambda$ correspond to
different Pareto optimal solutions of Problem
\ref{eq:svm-introduction:bi-criterion-objective}, and the entire Pareto front
can be explored by varying $\lambda$. We will refer to this as the
``regularized objective''.

Alternatively, one could scalarize Problem
\ref{eq:svm-introduction:bi-criterion-objective} not by introducing a
regularization parameter $\lambda$, but instead by enforcing a bound $R$ on the
magnitude of $w$, and minimizing the empirical hinge loss subject to this
constraint:
\begin{align}
\label{eq:svm-introduction:norm-constrained-objective} \mbox{minimize}: &
\emploss[\hinge]{g_w}
\\
\notag \mbox{subject to}: & \norm{w} \le R
\end{align}
We will refer to this as the ``norm-constrained objective''. It and the
regularized objective are extremely similar. Like the regularized objective
(while varying $\lambda$), varying $R$ causes optimal solutions to the
norm-constrained objective to explore the Pareto frontier of Problem
\ref{eq:svm-introduction:bi-criterion-objective}, so the two objectives are
equivalent: for every choice of $\lambda$ in the regularized objective, there
exists a choice of $R$ such that the optimal solution to the norm-constrained
objective is the same as the optimal solution to the regularized objective, and
vice-versa.

\subsection{Dual Objective}

Some SVM training algorithms, of which Pegasos \citep{ShalevSiSr07} is a prime
example, optimize the regularized objective directly. However, particularly for
kernel SVM optimization, it is often more convenient to work on the Lagrangian
dual of Problem \ref{eq:svm-introduction:regularized-objective}:
\begin{align}
\label{eq:svm-introduction:dual-objective} \mbox{maximize}: &
\inner{\mathbf{1}}{\alpha} - \frac{1}{2} \norm{w}^2 \\
\notag \mbox{subject to}: & 0 \le \alpha_i \le \frac{1}{\lambda n}
\end{align}
Here, as in Equation \ref{eq:svm-introduction:representor-theorem}, $w =
\sum_{i=1}^n \alpha_i y_i \Phi(x_i)$, although now $\alpha$ is more properly
considered not as the vector of coefficients resulting from application of the
representor theorem, but rather the dual variables which result from taking the
Lagrangian dual of Problem \ref{eq:svm-introduction:regularized-objective}.
Optimal solutions to the dual objective correspond to optimal solutions of the
primal, and vice-versa. The most obvious practical difference between the two
is that the dual objective is a quadratic programming problem subject to box
constraints, and is thus particularly ``easy'' to solve efficiently, especially
in comparison to the primal, which is not even smooth (due to the ``hinge'' in
the hinge loss).

There is another more subtle difference, however. While optimal solutions to
Problems \ref{eq:svm-introduction:regularized-objective} and
\ref{eq:svm-introduction:dual-objective} both, by varying $\lambda$, explore
the same set of Pareto optimal solutions to Problem
\ref{eq:svm-introduction:bi-criterion-objective}, these alternative objectives are decidedly not
equivalent when one considers the quality of \emph{suboptimal} solutions. We'll
illustrate this phenomenon with a trivial example.

Consider the one-element dataset $\left(x_{1},y_{1}\right)=\left(1,1\right)$
with $\X = \hilbert = \R$ and $\Phi$ being the identity (i.e. this is a
linear SVM), and suppose that we wish to optimize the regularized or dual objective
with $\lambda=\frac{1}{2}$. For this dataset, both the weight vector $w$ and
the vector of dual variables $\alpha$ are one-dimensional scalars, and the
expression $w = \sum_{i=1}^n \alpha_i y_i x_i$ shows that $w=\alpha$. It is
easy to verify that the optimum of both of these objectives occurs at
$\hat{w}^{*}=\hat{\alpha}^{*}=1$.

For suboptimal solutions, however, the correspondence between the two
objectives is less attractive. Consider, for example, suboptimal solutions of
the form $w' = \alpha' = 1 - \delta$, and notice that these solutions are dual
feasible for $0 \le \delta \le 1$. The suboptimality of the dual objective
function for this choice of $\alpha$ is given by:
\begin{equation*}
\left(\hat{\alpha}^* - \frac{1}{2}\left(\hat{\alpha}^*\right)^{2}\right) -
\left(\alpha' - \frac{1}{2}\left(\alpha'\right)^{2}\right) =
\frac{1}{2}\delta^2
\end{equation*}
while the primal suboptimality for this choice of $w$ is:
\begin{equation*}
\left(\frac{\lambda}{2}\left(w'\right)^{2} + \max\left(0,1-w'\right)\right) -
\left(\frac{\lambda}{2}\left(\hat{w}^{*}\right)^{2} +
\max\left(0,1-\hat{w}^{*}\right)\right) = \frac{1}{4}\delta^{2} +
\frac{1}{2}\delta
\end{equation*}
In other words, $O(\epsilon)$-suboptimal solutions to the dual objective may be
only $O(\sqrt{\epsilon})$-suboptimal in the primal. As a consequence, an algorithm
which converges rapidly in the dual may not do so in the primal, and more
importantly, may not do so in terms of the ultimate quantity of interest,
generalization error. Hence, the apparent ease of optimizing the dual objective
may not be as beneficial as it at first appears.

In Chapter \ref{ch:svm-sbp}, an algorithm will be developed based on a
\emph{third} equivalent formulation of the SVM problem, called the
slack-constrained objective. The convergence rate of this algorithm, measured
in terms of suboptimality, is unremarkable. Instead, it performs well because
$\epsilon$-suboptimal solutions to the slack-constrained objective are better,
in terms of known bounds on the generalization error, than
$\epsilon$-suboptimal solutions to the other objectives mentioned in this
section.

\subsection{Unregularized Bias}\label{subsec:svm-introduction:unregularized-bias}

Frequently, SVM problems contain an unregularized bias term---rather than the
classification function being $g_w(x) = \inner{w}{\Phi(x)}$, with the
underlying optimization problem searching for a good low-weight vector $w$, the
classification function is taken to be the sign of $g_{w,b}(x) =
\inner{w}{\Phi(x)} + b$, where $w$ is, as before, low-norm, but
$b\in\R$ has no constraints on its magnitude. Intuitively, the
difference is that, in the former case, the classification boundary is a
hyperplane which passes through the origin (i.e. is a subspace of $\hilbert$),
while in the latter case it does not (i.e. it is an affine subspace).

The optimization problems which we have so far considered for SVMs do not
contain a bias term, and in Chapters \ref{ch:svm-sbp} and \ref{ch:svm-sparse}
our focus will be on such problems. This is purely to simplify the
presentation, however---the algorithms introduced in these chapters can be
extended to work on problems with an unregularized bias fairly easily.

Updating the objectives of Section \ref{subsec:svm-introduction:svm-primal} to
include an unregularized bias amounts to nothing more than inserting $b$ into
the hinge loss term. For example, the regularized objective of Problem
\ref{eq:svm-introduction:regularized-objective} becomes:
\begin{align}
\label{eq:svm-introduction:biased-regularized-objective} \mbox{minimize}: &
\frac{\lambda}{2} \norm{w}^{2} + \emploss[\hinge]{g_{w,b}}
\end{align}
Precisely the same substitution works on the norm-constrained objective of
Problem \ref{eq:svm-introduction:norm-constrained-objective}.

Finding the Lagrangian dual of Problem
\ref{eq:svm-introduction:biased-regularized-objective} gives us the analogue of
the dual objective of Problem \ref{eq:svm-introduction:dual-objective}
\begin{align}
\label{eq:svm-introduction:biased-dual-objective} \mbox{maximize}: &
\inner{\mathbf{1}}{\alpha} - \frac{1}{2} \norm{w}^2 \\
\notag \mbox{subject to}: & 0 \le \alpha_i \le \frac{1}{\lambda n},\ \ 
\sum_{i=1}^{n} y_i \alpha_i = 0
\end{align}
The only difference here is the addition of the constraint $\sum_{i=1}^{n} y_i
\alpha_i = 0$. As before, we can derive $w$ from $\alpha$ via Equation
\ref{eq:svm-introduction:representor-theorem}, but finding $b$ is slightly more
involved. For an \emph{optimal} vector $\alpha$ of dual variables, it will be
the case that $b = y_i - \inner{w}{\Phi(x_i)}$ for any $i$ such that
$0<\alpha_i<\nicefrac{1}{\lambda n}$ (i.e. $\alpha_i$ is bound to neither a
lower nor upper-bound constraint), so the most common method for finding $b$
from a (not necessarily optimal) $\alpha$ is to solve this equation for all
valid $i$, and average the results.

\section{SVM Generalization Bounds}\label{sec:svm-introduction:generalization}

\begin{table}

\begin{small}
\begin{center}
\begin{tabular}{r|cc|cc}
\hline
& \multicolumn{2}{c|}{Overall} & \multicolumn{2}{c}{$\epsilon = \Omega\left( L^* \right)$} \\
& Training & Testing & Training & Testing \\
\hline
SGD on Problem \ref{eq:svm-introduction:norm-constrained-objective} & $\left( \frac{L^* + \epsilon}{\epsilon} \right) \frac{R^4}{\epsilon^3}$ & $\frac{R^2}{\epsilon^2}$ & $\frac{R^4}{\epsilon^3}$ & $\frac{R^2}{\epsilon^2}$ \\
Dual Decomposition & $\left( \frac{L^* + \epsilon}{\epsilon} \right)^2 \frac{R^4}{\epsilon^2}$ & $\left( \frac{L^* + \epsilon}{\epsilon} \right) \frac{R^2}{\epsilon}$ & $\frac{R^4}{\epsilon^2}$ & $\frac{R^2}{\epsilon}$ \\
Perceptron + Online-to-Batch & $\left( \frac{L^* + \epsilon}{\epsilon} \right)^3 \frac{R^4}{\epsilon}$ & $\left( \frac{L^* + \epsilon}{\epsilon} \right)^2 R^2$ & $\frac{R^4}{\epsilon}$ & $R^2$ \\
Random Fourier Features & $\left( \frac{L^* + \epsilon}{\epsilon} \right) \frac{d R^4}{\epsilon^3}$ & $\frac{d R^2}{\epsilon^2}$ & $\frac{d R^4}{\epsilon^3}$ & $\frac{d R^2}{\epsilon^2}$ \\
\hline
\end{tabular}
\end{center}
\end{small}

\caption{
Summary of results from Sections \ref{sec:svm-introduction:traditional} and
\ref{sec:svm-introduction:non-traditional}. First three rows: upper bounds on
the number of kernel evaluations required to train a kernelized classifier /
evaluate a solution on a single previously-unseen example, such that
$\loss[\zeroone]{g_w}\leq L^* +\epsilon$. Last row: number of $d$-dimensional
inner products (random Fourier feature constructions) required.  All quantities
are given up to constant and log factors, and hold with probability $1-\delta$
(although the dependence on $\delta$ is not shown in this table). It is assumed
that $K(x,x) \le 1$ with probability one for $x\sim\mathcal{D}$, and
furthermore, in the last column, that $\mathcal{X} = \R^d$. The
quantity $L^*$ can be read as the ``optimal hinge loss'', and is the expected
hinge loss (with respect to $\mathcal{D}$) suffered by an arbitrary reference
classifier $g_u$ with $\norm{u}\le R$. For the last row, $\norm{u} \le
\norm{u}_1 \le R$, with the first inequality following from the fact that
$K(x,x') \le 1$. The ``Overall'' columns show the standard optimistic bounds,
while the ``$\epsilon = \Omega\left( L^* \right)$'' columns give versions of
these bounds in the most typical machine learning setting, in which the desired
level of suboptimality $\epsilon$ is comparable to the optimal hinge loss.
}

\label{tab:svm-introduction:runtimes}

\end{table}

Before introducing and analyzing particular SVM optimization algorithms, we must
first describe how these analyses will performed, and what the results of such
analyses should be.
In this section, we will give a bound on the sample size $n$ required to
guarantee good generalization performance (in terms of the 0/1 loss) for a
low-norm classifier which is $\epsilon$-suboptimal in terms of the empirical
hinge loss.
This bound (Lemma \ref{lem:svm-introduction:generalization-from-expected-loss})
essentially enables us to transform a bound on the empirical hinge loss into a
bound on generalization performance.
This lemma is a vital building block of the bounds derived in the following
sections of this chapter (see Table \ref{tab:svm-introduction:runtimes}), as
well as in Chapters \ref{ch:svm-sbp} and \ref{ch:svm-sparse}.

In order to simplify the presentation of our generalization bounds, throughout
this chapter, as well as Chapters \ref{ch:svm-sbp} and \ref{ch:svm-sparse}, we
make the simplifying assumption that $K(x,x') \le 1$ with probability $1$ (with
respect to the underlying data distribution $\mathcal{D}$).

\medskip
\begin{splitlemma}{lem:svm-introduction:generalization-from-expected-loss}

Let $u$ be an arbitrary linear classifier, and suppose that we sample a
training set of size $n$, with $n$ given by the following equation, for
parameters $B \ge \norm{u}$, $\epsilon>0$ and
$\delta\in\left(0,1\right)$:
\begin{equation}
\ifproofsection
\label{eq:svm-introduction:generalization-from-expected-loss-bound}
\else
\notag
\fi
n = \tilde{O}\left( \left(
\frac{\loss[\hinge]{g_u} + \epsilon}{\epsilon} \right) \frac{ \left( B +
\log\frac{1}{\delta} \right)^{2} }{\epsilon}
\right)
\end{equation}
Then, with probability $1-\delta$ over the \iid training sample
$x_{i},y_{i}:i\in\left\{ 1,\dots,n\right\}$, we have that
$\emploss[\hinge]{g_u} \le \loss[\hinge]{g_u} + \epsilon$ and
$\loss[\zeroone]{g_w} \le \emploss[\hinge]{g_u} + \epsilon$, and in particular
that
$\loss[\zeroone]{g_w} \le \loss[\hinge]{g_u} + 2\epsilon$ uniformly for all $w$
satisfying:
\begin{equation*}
\norm{w} \le B,\ \ 
\emploss[\hinge]{g_w} \le \emploss[\hinge]{g_u} + \epsilon
\end{equation*}
\end{splitlemma}
\begin{splitproof}

Lemma \ref{lem:svm-introduction:empirical-loss-from-expected-loss} gives that
$\emploss[\hinge]{g_w} \le \loss[\hinge]{u} + 2\epsilon$ provided that Equation
\ref{eq:svm-introduction:empirical-loss-from-expected-loss-bound} is satisfied.
Taking $L = \loss[\hinge]{u} + 2\epsilon$ in Lemma
\ref{lem:svm-introduction:generalization-from-empirical-loss} gives the desired
result, provided that Equation
\ref{eq:svm-introduction:generalization-from-empirical-loss-bound} is also
satisfied. Equation
\ref{eq:svm-introduction:generalization-from-expected-loss-bound} is what
results from combining these two bounds. The fact that this argument holds with
probability $1-2\delta$, while our claim is stated with probability $1-\delta$,
is due to the fact that scaling $\delta$ by $1/2$ only changes the bound by a
constant.
\end{splitproof}

The bounds which we derive from this Lemma are \emph{optimistic}, in the sense
that they will look like $n=O\left(\nicefrac{1}{\epsilon^2}\right)$ when the
problem is difficult (i.e. $\epsilon$ is small relative to
$\loss[\hinge]{g_u}$), and be better when the problem is easier, approaching
$n=O\left(\nicefrac{1}{\epsilon}\right)$ when it is linearly separable (i.e.
$\loss[\hinge]{g_u}=0$).

Applying Lemma \ref{lem:svm-introduction:generalization-from-expected-loss} to
a particular algorithm follows two steps. First, we determine, on an
algorithm-by-algorithm basis, what is required to find a solution $w$ which
satisfies the following:
\begin{equation*}
\norm{w} \le B,\ \ 
\emploss[\hinge]{g_w} \le \emploss[\hinge]{g_u} + \epsilon
\end{equation*}
Here, $u$ is an arbitrary reference classifier with $\norm{u} \le B$
(typically, $u$ will be taken to be the optimum). Next, we use Lemma
\ref{lem:svm-introduction:generalization-from-expected-loss} to find the sample
size $n$ required such that $\loss[\zeroone]{g_w} \le \loss[\hinge]{g_u} +
\epsilon$ is satisfied.

\section{Traditional Optimization Algorithms}\label{sec:svm-introduction:traditional}

\begin{algorithm}[t]

\begin{pseudocode}
\codename $\code{optimize}\left( n:\N, d:\N, x_{1},\dots,x_{n}:\R^d, y_{1},\dots,y_{n}:\left\{\pm 1\right\}, T:\N, K:\R^d\times\R^d\rightarrow\R \right)$\\
\codeline $w := 0$;\\
\codeline $\code{for } t=1 \code{ to } T$\\
\codeline \>\emph{Somehow} choose a working set $\mathcal{I}\subseteq\{1,\dots,n\}$
of training indices;\\
\codeline \>Calculate the responses $c_i = y_i \inner{w}{\Phi\left(x_i\right)}$ and
use them to optimize \emph{some}\\
\codeskip \>subproblem on the working set $\mathcal{I}$;\\
\codeline \>Update $w$ by adding \emph{some} linear combination of $\Phi\left(x_i\right)
: i\in\mathcal{I}$ to $w$;\\
\codeline \>Scale $w$ by multiplying it by \emph{some} quantity $\gamma$;\\
\codeline $\code{return } w$;
\end{pseudocode}

\caption{
Outline of a ``traditional'' SVM optimization algorithm. Different algorithms
will do different things in the ``\emph{some}'' portions.
}

\label{alg:svm-introduction:traditional}

\end{algorithm}

In this section, algorithms for optimizing a SVM based on traditional convex
optimization techniques will be described. One significant feature of many such
algorithms is that they can be represented as instances of a common
``outline'', in which a weight vector $w$ is found iteratively according to
Algorithm \ref{alg:svm-introduction:traditional}.
The various algorithms which we will consider in this section, as well as
Chapters \ref{ch:svm-sbp} and \ref{ch:svm-sparse}, differ in how they perform
the ``\emph{some}'' portions of this algorithm, but their presentation (and
analysis, to some extent) will be simplified by recognizing them as instances
of this outline. The existence of this general outline also has important
consequences for the implementation of these algorithms, since it enables one
to create an optimized shared implementation of those portions of the update
which are identical across all traditional optimizers, with specialized code
only being necessary for the small number of differences between them.
ISSVM\footnote{\url{http://ttic.uchicago.edu/~cotter/projects/SBP}}, which
contains the reference implementations of the algorithms of Chapters
\ref{ch:svm-sbp} and \ref{ch:svm-sparse}, is a project designed according to
these principles.

For a linear kernel, as was discussed in Section
\ref{sec:svm-introduction:objective}, $\Phi$ is the identity function, and one
may implement a traditional optimizer by essentially following exactly
Algorithm \ref{alg:svm-introduction:traditional}. For a kernelized SVM
optimizer, however, we cannot explicitly depend on $\Phi$, and must instead
appeal to Equation \ref{eq:svm-introduction:representor-theorem} to represent
$w$ as a linear combination of $\Phi\left(x_i\right)$ with coefficients
$\alpha_i$, which enables us to write the algorithm in terms of only the
coefficients $\alpha$ and kernel evaluations.

\begin{algorithm}[t]

\begin{pseudocode}
\codename $\code{optimize}\left( n:\N, d:\N, x_{1},\dots,x_{n}:\R^d, y_{1},\dots,y_{n}:\left\{\pm 1\right\}, T:\N, K:\R^d\times\R^d\rightarrow\R \right)$\\
\codeline $\alpha := 0$; $c := 0$;\\
\codeline $\code{for } t=1 \code{ to } T$\\
\codeline \>\emph{Somehow} choose a working set $\mathcal{I}\subseteq\{1,\dots,n\}$
of training indices;\\
\codeline \>Optimize \emph{some} subproblem on the working set $\mathcal{I}$;\\
\codeline \>Update $w$ by adding \emph{some} $\delta_i$ to $\alpha_i$ for every
$i\in\mathcal{I}$, and update the responses by\\
\codeskip \>adding $y_i y_j \delta_i K(x_i,x_j)$ to $c_j$ for every
$i\in\mathcal{I}$ and $j\in\{1,\dots,n\}$;\\
\codeline \>Scale $w$ by multiplying $\alpha$ and $c$ by \emph{some} quantity
$\gamma$;\\
\codeline $\code{return } \alpha$;
\end{pseudocode}

\caption{
Outline of a ``traditional'' SVM optimization algorithm in the kernel setting
which keeps track of a complete up-to-date vector of $n$ responses throughout.
Different algorithms will do different things in the ``\emph{some}'' portions.
}

\label{alg:svm-introduction:traditional-kernel}

\end{algorithm}

For all of the algorithms which we will consider, calculating the responses on
line $4$ will be the most computationally expensive step, with a cost of $kn$
kernel evaluations, where $k=\abs{\mathcal{I}}$ is the number of responses
which we need to calculate. Alternatively, we can modify the algorithm to keep
track of a complete vector of $n$ responses at all times, updating them
whenever we make a change to $\alpha$---this is done in Algorithm
\ref{alg:svm-introduction:traditional-kernel}. Updating the responses in line
$5$ costs $kn$ kernel evaluations, as before. However, this version has the
advantage that the complete vector of responses is available at every point. In
particular, we can use $c$ to choose the working set $\mathcal{I}$ on line $3$.

The existence of this ``improved'' optimization outline demonstrates why the
best linear SVM optimization algorithms tend to be different from the best
kernel SVM algorithms. In the linear setting, calculating $k$ responses only
costs $k$ inner products per iteration, while keeping a complete up-to-date
vector $c$ would cost $n$ inner products per iteration. As a result, good
algorithms tend to be ``fast but dumb'', in that each iteration is
computationally very inexpensive, but cannot include the use of advanced
working-set-selection heuristics because the responses are unavailable (and
computing them would lead to an unacceptable increase to the cost of each
iteration). In the kernel setting, however, we must perform $kn$ kernel
evaluations per iteration \emph{anyway}, and therefore prefer ``slower but
smarter'' updates, for two reasons. First, there is no additional cost for
keeping a complete up-to-date vector of responses available throughout, which
enables one to use a better working-set-selection heuristic; second, because
every iteration is already fairly expensive (at least $O(kn)$ as opposed to
$O(d)$ in the linear setting, where $\mathcal{X}=\R^d$), all of the steps of
the update, most importantly the subproblem optimization, can be made more
computationally expensive without increasing the asymptotic per-iteration cost
of the algorithm.

The work described in this thesis focuses on the kernel setting, so the
canonical optimization algorithm outline will be Algorithm
\ref{alg:svm-introduction:traditional-kernel}. With this said, it is important
to keep in mind that switching between the linear and kernel settings can
dramatically change the performance profile of each algorithms under
consideration. In particular, algorithms based on stochastic gradient descent
(Section \ref{subsec:svm-introduction:primal}), as well as stochastic dual
coordinate ascent (Section \ref{subsec:svm-introduction:dual}), which do not
make use of the complete vector of responses when finding a working set,
operate much more efficiently in the linear setting.

\subsection{Stochastic Gradient Descent}\label{subsec:svm-introduction:primal}

Probably the simplest algorithm which can be applied to SVM optimization is
stochastic gradient descent (SGD). We will present two SGD algorithms, the
first being SGD applied to the norm-constrained SVM objective of Problem
\ref{eq:svm-introduction:norm-constrained-objective}, and the second to the
regularized objective of Problem
\ref{eq:svm-introduction:regularized-objective}. Due to the similarity of the
objectives, the two optimization algorithms are almost the same, although, as
we will see, the former enjoys a slightly superior convergence rate, despite
the fact that the latter algorithm, also known as Pegasos \citep{ShalevSiSr07},
is far more well-known.

\subsubsection{Norm-Constrained Objective}

Formally differentiating the norm-constrained SVM objective (Problem
\ref{eq:svm-introduction:norm-constrained-objective}---refer to the definition
of the empirical hinge loss in Table \ref{tab:svm-introduction:notation})
yields that:
\begin{equation*}
\frac{\partial}{\partial w} = \frac{1}{n}\sum_{i=1}^{n} \indicator{y_i
\inner{w}{\Phi\left(x_i\right)} < 1}{y_i \Phi\left(x_i\right)}
\end{equation*}
We could perform projected gradient descent (GD) by iteratively subtracting a
multiple of the above quantity from $w$, and then projecting $w$ onto the
feasible set $\left\{w:\norm{w}\le R\right\}$. This, however, would be an
extremely computationally expensive update, because it requires knowledge of
all $n$ values of the responses $c_i = y_i \inner{w}{\Phi\left(x_i\right)}$,
and could potentially add a multiple of every training element to $w$ at each
iteration. As a result, regardless of whether we calculate the responses only
as-needed, or keep an up-to-date set of all of the responses, the cost of each
iteration is potentially as high as $n^2$ kernel evaluations in the kernel
setting.

It has been observed \citep{Zhang04,ShalevSiSr07,ShalevSr08,ShalevSiSrCo10}
that performing a large number of simple updates generally leads to faster
convergence than performing fewer computationally expensive (but ``smarter'')
updates, which motivates the idea of using SGD, instead of GD, by iteratively
performing the following steps on the $t$th iteration:
\begin{enumerate}
\item Sample $i$ uniformly from $\{1,\dots,n\}$.
\item If $y_i \inner{w}{\Phi\left(x_i\right)} < 1$ then subtract $\eta_t y_i
\Phi\left(x_i\right)$ from $w$.
\item If $\norm{w} > R$ then multiply $w$ by $R / \norm{w}$.
\end{enumerate}
Notice that the quantity added to $w$ in step 2 is equal in expectation (over
$i$) to the gradient of the objective with respect to $w$, and therefore that
this is a SGD algorithm. Furthermore, these three steps together constitute an
instance of the ``traditional'' algorithm outline, showing that they can be
performed using an efficient modular implementation of Algorithm
\ref{alg:svm-introduction:traditional-kernel}. The following lemma gives the
theoretical convergence rate of this algorithm, in the kernel setting:

\medskip
\begin{splittheorem}{thm:svm-introduction:norm-constrained-runtime}

Let $u\in\hilbert$ be an arbitrary reference classifier with $R \ge \norm{u}$.
Suppose that we perform $T$ iterations of SGD on the norm-constrained
objective, with $w_t$ the weight vector after the $t$th iteration. If we define
$\bar{w} = \frac{1}{T} \sum_{t=1}^T w_t$ as the average of these iterates, then
there exist values of the training size $n$ and iteration count $T$ such that
SGD on the norm-constrained objective with step size $\eta_t = R
\sqrt{\nicefrac{2}{t}}$ finds a solution $\bar{w} = \sum_{i=1}^{n}
\bar{\alpha}_{i}y_{i}\Phi\left(x_{i}\right)$ satisfying:
\begin{equation*}
\loss[\zeroone]{g_{\bar{w}}} \le \loss[\hinge]{g_u} + \epsilon
\end{equation*}
after performing the following number of kernel evaluations:
\begin{equation*}
\mbox{\#K} = \tilde{O}\left( \left( \frac{\loss[\hinge]{g_u} +
\epsilon}{\epsilon} \right) \frac{ R^{4} }{\epsilon^{3}} \log^3\frac{1}{\delta}
\right)
\end{equation*}
with the size of the support set of $\bar{w}$ (the number nonzero elements in
$\bar{\alpha}$) satisfying:
\begin{equation*}
\mbox{\#S} = O\left( \frac{R^{2}}{\epsilon^2} \log\frac{1}{\delta} \right)
\end{equation*}
the above statements holding with probability $1-\delta$.

\end{splittheorem}
\begin{splitproof}

We may bound the online regret of SGD using \citet[Theorem 1]{Zinkevich03},
which gives the following, if we let $i_t$ be the index sampled at the $t$th
iteration:
\begin{equation*}
\frac{1}{T} \sum_{t=1}^T \ell_{\hinge}\left(g_{w_t}\left(x_{i_t}\right)\right)
\le \frac{1}{T} \sum_{t=1}^T \ell_{\hinge}\left(g_u\left(x_{i_t}\right)\right)
+ R\sqrt{\frac{1}{2T}}
\end{equation*}
In order to change the above regret bound into a bound on the empirical hinge
loss, we use the online-to-batch conversion of \citet[Theorem 1]{CesaCoGe01},
which gives that with probability $1-\delta$:
\begin{equation*}
\emploss[\hinge]{g_{\bar{w}}} \le \emploss[\hinge]{g_u} + R\sqrt{\frac{1}{2T}}
+ R\sqrt{\frac{2}{T}\log\frac{1}{\delta}}
\end{equation*}
Which gives that $\bar{w}$ will be $\epsilon$-suboptimal after performing the
following number of iterations:'
\begin{equation*}
T = O\left( \frac{R^2}{\epsilon^2} \log\frac{1}{\delta} \right)
\end{equation*}
Because $w$ is projected onto the Euclidean ball of radius $R$ at the end of
every iteration, we must have that $\norm{\bar{w}} \le R$, so we may apply
Lemma \ref{lem:svm-introduction:generalization-from-expected-loss} to bound $n$
as (Equation
\ref{eq:svm-introduction:generalization-from-expected-loss-bound}):
\begin{equation*}
n = \tilde{O}\left( \left( \frac{\loss[\hinge]{g_u} + \epsilon}{\epsilon}
\right) \frac{ \left( R + \log\frac{1}{\delta} \right)^{2} }{\epsilon} \right)
\end{equation*}
The number of kernel evaluations performed over $T$ iterations is $Tn$, and the
support size is $T$, which gives the claimed result.
\end{splitproof}

\subsubsection{Regularized Objective}

Applying SGD to the regularized objective results in a similar update, except
that instead of explicitly projecting onto the feasible region, it is the
regularization term of the objective which ensures that the weight vector $w$
does not become too large. Differentiating Problem
\ref{eq:svm-introduction:regularized-objective} with respect to $w$ yields:
\begin{equation*}
\frac{\partial}{\partial w} = \lambda w + \frac{1}{n}\sum_{i=1}^{n}
\indicator{y_i \inner{w}{\Phi\left(x_i\right)} < 1}{y_i \Phi\left(x_i\right)}
\end{equation*}
Which shows that the following is a SGD update:
\begin{enumerate}
\item Sample $i$ uniformly from $\{1,\dots,n\}$.
\item Multiply $w$ by $1-\eta_t \lambda$.
\item If $y_i \inner{w}{\Phi\left(x_i\right)} < 1$ then subtract $\eta_t y_i
\Phi\left(x_i\right)$ from $w$.
\end{enumerate}
With an appropriate choice of the step size $\eta_t$, this algorithm is known
as Pegasos \citep{ShalevSiSr07}. Here, steps $2$ and $3$ together add $\eta$
times $\lambda w + \indicator{y_i \inner{w}{\Phi\left(x_i\right)} < 1}{y_i
\Phi\left(x_i\right)}$ to $w$, which is equal in expectation to the gradient of
the regularized objective with respect to $w$. Once more, this is an instance
of the ``traditional'' algorithm outline (except that the adding and scaling
steps are swapped, which makes no difference).

The only difference between SGD on the norm-constrained and regularized
objectives is that the first projects $w$ onto the Euclidean ball of radius
$R$, while the second shrinks $w$ by a factor proportional to the step size at
every iteration. At first glance, this is insignificant, but once we move on to
analyzing the performance of these algorithms we find that there are, in fact,
pronounced differences. 

The first difference results from the fact that, when one applies SGD to
non-strongly-convex objectives, the use of a step size $\eta_t \propto
1/\sqrt{t}$ typically yields the best convergence rates, whereas $\eta_t
\propto 1/t$ is better for strongly-convex objectives. These two algorithms are
not exceptions---the norm-constrained objective is not strongly convex, while
the regularized objective \emph{is} (with convexity parameter $\lambda$), and
these two algorithms perform best, at least in theory, with $O(1/\sqrt{t})$ and
$O(1/t)$ step sizes, respectively.

The second main difference is a result of our use of Lemma
\ref{lem:svm-introduction:generalization-from-expected-loss}, which gives a
bound on the generalization error suffered by weight vectors $w$ with some
maximum norm $B$. Because this lemma is stated in terms of a norm-constrained
weight vector, it is more convenient to apply it to the result of applying SGD
to the norm-constrained objective. On the regularized objective, a bound on the
norm of $w$ can only be derived from a bound on the suboptimality, which
ultimately causes the convergence rate to not be quite as good as we would
hope, as we can see in the following lemma:

\medskip
\begin{splittheorem}{thm:svm-introduction:regularized-runtime}

Let $u\in\hilbert$ be an arbitrary reference classifier. Suppose that we
perform $T$ iterations of Pegasos, with $w_t$ the weight vector after the $t$th
iteration. If we define $\bar{w} = \frac{1}{T} \sum_{t=1}^T w_t$ as the average
of these iterates, then there exist values of the training size $n$, iteration
count $T$ and regularization parameter $\lambda$ such that Pegasos with step
size $\eta_t = \nicefrac{1}{\lambda t}$ finds a solution $\bar{w} =
\sum_{i=1}^{n} \bar{\alpha}_{i}y_{i}\Phi\left(x_{i}\right)$ satisfying:
\begin{equation*}
\loss[\zeroone]{g_w} \le \loss[\hinge]{g_u} + \epsilon
\end{equation*}
after performing the following number of kernel evaluations:
\begin{equation*}
\mbox{\#K} = \tilde{O}\left( \min\left( \frac{1}{\epsilon}, \left(
\frac{\loss[\hinge]{g_u} + \epsilon}{\epsilon} \right)^{2} \right) \frac{
	\norm{u}^{4} }{\epsilon^{3}} \log^3\frac{1}{\delta} \right)
\end{equation*}
with the size of the support set of $w$ (the number nonzero elements in
$\bar{\alpha}$) satisfying:
\begin{equation*}
\mbox{\#S} = \tilde{O}\left( \frac{\norm{u}^{2}}{\epsilon^2}
\log\frac{1}{\delta} \right)
\end{equation*}
the above statements holding with probability $1-\delta$.

\end{splittheorem}
\begin{splitproof}

The analysis of \citet[Corollary 7]{KakadeTe09} permits us to bound the
suboptimality relative to the reference classifier $u$, with probability
$1-\delta$, as:
\begin{equation*}
\left( \frac{\lambda}{2}\norm{\bar{w}}^{2} +
\emploss[\hinge]{g_{\bar{w}}} \right) - \left( \frac{\lambda}{2}\norm{u}^{2} +
\emploss[\hinge]{g_u} \right) \le \frac{84 \log T}{\lambda T}
\log\frac{1}{\delta}
\end{equation*}
Solving this equation for $T$ gives that, if one performs the following number
of iterations, then the resulting solution will be $\epsilon/2$-suboptimal in
the \emph{regularized objective}, with probability $1-\delta$:
\begin{equation*}
T = \tilde{O}\left( \frac{1}{\epsilon} \cdot \frac{r^2}{\lambda}
\log\frac{1}{\delta} \right)
\end{equation*}
We next follow \citet{ShalevSr08} by decomposing the suboptimality in the
empirical hinge loss as:
\begin{align*}
\emploss[\hinge]{g_{\bar{w}}} - \emploss[\hinge]{g_u} & = \frac{\epsilon}{2} -
\frac{\lambda}{2}\norm{\bar{w}}^2 + \frac{\lambda}{2}\norm{u}^2 \\
& \le \frac{\epsilon}{2} + \frac{\lambda}{2}\norm{u}^2
\end{align*}
In order to have both terms bounded by $\epsilon/2$, we choose $\lambda =
\epsilon / \norm{u}^{2}$, which reduces the RHS of the above to $\epsilon$.
Continuing to use this choice of $\lambda$, we next decompose the squared norm
of $\bar{w}$ as:
\begin{align*}
\frac{\lambda}{2}\norm{\bar{w}}^2 & = \frac{\epsilon}{2} - \emploss[\hinge]{g_{\bar{w}}} +
\emploss[\hinge]{g_u} + \frac{\lambda}{2}\norm{u}^2 \\
& \le \frac{\epsilon}{2} + \emploss[\hinge]{g_u} + \frac{\lambda}{2}\norm{u}^2
\\
\norm{\bar{w}}^2 & \le 2 \left( \frac{ \emploss[\hinge]{g_u} + \epsilon }{ \epsilon}
\right) \norm{u}^2
\end{align*}
Hence, we will have that:
\begin{align*}
\norm{\bar{w}}^{2} & \le 2 \left(
\frac{ \emploss[\hinge]{g_u} + \epsilon }{ \epsilon} \right) \norm{u}^{2} \\
\emploss[\hinge]{g_{\bar{w}}} - \emploss[\hinge]{g_u} & \le \epsilon
\end{align*}
with probability $1-\delta$, after performing the following number of
iterations:
\begin{equation}
\label{eq:svm-introduction:regularized-time} T = \tilde{O}\left(
\frac{\norm{u}^2}{\epsilon^{2}} \log\frac{1}{\delta} \right)
\end{equation}

There are two ways in which we will use this bound on $T$ to find bound on the
number of kernel evaluations required to achieve some desired
error. The easiest is to note that the bound of Equation
\ref{eq:svm-introduction:regularized-time} exceeds that of Lemma
\ref{lem:svm-introduction:generalization-from-expected-loss}, so that if we
take $T=n$, then with high probability, we'll achieve generalization error $2
\epsilon$ after $Tn=T^{2}$ kernel evaluations, in which case:
\begin{equation*}
\mbox{\#K} = \tilde{O}\left(
\frac{\norm{u}^4}{\epsilon^{4}} \log^{2}\frac{1}{\delta} \right)
\end{equation*}
Because we take the number of iterations to be precisely the same as the number
of training examples, this is essentially the online stochastic setting.

Alternatively, we may combine our bound on $T$ with Lemma
\ref{lem:svm-introduction:generalization-from-expected-loss}. This yields the
following bound on the generalization error of Pegasos in the data-laden batch
setting.
\begin{equation*}
\mbox{\#K} = \tilde{O}\left( \left( \frac{\loss[\hinge]{g_u} +
\epsilon}{\epsilon} \right)^{2} \frac{ \norm{u}^{4} }{\epsilon^{3}}
\log^3\frac{1}{\delta} \right)
\end{equation*}
Combining these two bounds on $\mbox{\#K}$ gives the claimed result.
\end{splitproof}

Notice that the bound on $\mbox{\#K}$ is worse than that of Theorem
\ref{thm:svm-introduction:norm-constrained-runtime}. As a result, despite the
fact that Pegasos is better-known, SGD on the norm-constrained objective
appears to be the better algorithm. For this reason, in this and subsequent
chapters, we will not consider Pegasos when we compare the bounds of the
various algorithms under consideration.

\subsection{Dual Decomposition}\label{subsec:svm-introduction:dual}

Probably the most commonly-used methods for optimizing kernel SVMs are what we
call ``dual decomposition'' approaches. These methods work not by performing
SGD on a (primal) objective, but instead by applying coordinate ascent to the
dual objective (Problem \ref{eq:svm-introduction:dual-objective}).
Interestingly, although the principles justifying these techniques are very
different from those of Section \ref{subsec:svm-introduction:primal}, the
resulting algorithms are strikingly similar, and also fall under the umbrella
of the traditional algorithm outline of Algorithm
\ref{alg:svm-introduction:traditional-kernel}.

The various dual-decomposition approaches which we will discuss differ in how
they select the working set (line $3$ of Algorithm
\ref{alg:svm-introduction:traditional-kernel}), but once the working set has
been chosen, they all do the same thing: holding $\alpha_i : i\notin
\mathcal{I}$ fixed, they solve for the values of $\alpha_i : i \in \mathcal{I}$
which maximize the dual objective of Problem
\ref{eq:svm-introduction:dual-objective}.
Because the dual objective is a quadratic programming program subject to box
constraints, this $k$-dimensional subproblem is, likewise. When $k$ is sufficiently
small, a closed-form solution can be found, rendering the computational cost of
each dual-decomposition iteration comparable to the cost of a single
iteration of SGD. In particular, for $k=1$, if we take $\mathcal{I} = \{ i \}$,
then the problem is to maximize:
\begin{equation*}
\inner{\mathbf{1}}{\alpha} + \delta - \frac{1}{2} \left( \alpha + \delta e_i
\right)^T Q \left( \alpha + \delta e_i \right)
\end{equation*}
subject to the constraint that $0 \le \alpha_i + \delta \le
\nicefrac{1}{\lambda n}$, where $e_i$ is the $i$th standard unit basis vector
and $Q_{ij} = y_i y_j K\left( x_i, x_j \right)$. Simplifying this expression
gives:
\begin{equation*}
\delta \left( 1 - c_i \right) - \frac{1}{2} \delta^2 K\left( x_i, x_i \right)
\end{equation*}
Finally, differentiating with respect to $\delta$ and setting the result equal
to zero gives that:
\begin{equation}
\label{eq:svm-introduction:dual-decomposition-unbiased-update} \delta = \frac{1 - c_i}{K\left( x_i, x_i \right)}
\end{equation}
Because this equation results from maximizing a quadratic objective, we may
apply the constraints by simply clipping $\alpha_i + \delta$ to the set $\left[
0, \nicefrac{1}{\lambda n} \right]$. This establishes what to do on the
subproblem optimization step of line $4$ of Algorithm
\ref{alg:svm-introduction:traditional-kernel}, at least for $k=1$. The scaling
step of line $6$ is unnecessary, so the only step left undefined is the
working-set selection step of line $3$.

The simplest approach to choosing a working set is probably to sample $i$
uniformly at random from $\{ 1, \dots, n \}$. This is the Stochastic Dual
Coordinate Ascent (SDCA) algorithm of \citet{HsiehChLiKeSu08}. As was
previously mentioned, because this approach does not require the complete set
of responses to be available for working-set selection, it works particularly
well in the linear setting.
In the kernel setting, the most widely-used algorithm is probably SMO
(Sequential Minimal Optimization~\citep{Platt98,FanChLi05}), which
deterministically chooses $i$ to be the index which, after an update, results
in the largest increase in the dual objective.  SMO is implemented by the
popular LibSVM package \citep{LinLi03}. The algorithm used by SVM-Light is
similar in spirit, except that it heuristically chooses not a single training
example, but rather a comparatively large working set ($k=10$ is the default),
and solves the resulting subproblem using a built-in QP solver (the gradient
projection algorithm \citep[Chapter 16.6]{NocedalWr99} is a good choice),
instead of Equation
\ref{eq:svm-introduction:dual-decomposition-unbiased-update}.

\subsubsection{Unregularized Bias}

So far, for both SGD and dual-decomposition algorithms, we have only considered
SVM instances without an unregularized bias. For SGD, it turns out that
handling a bias is somewhat difficult, and in fact no ``clean'' method of doing
so is known. Most dual decomposition algorithms, however, can be extended to
handle the presence of an unregularized bias quite naturally. The reason for
this is that adding a bias to the primal problem is equivalent to adding the
constraint $\sum_{i=1}^n y_i \alpha_i = 0$ to the dual problem, as we saw in
Section \ref{subsec:svm-introduction:unregularized-bias}. If we choose a
working set of size $k=2$, this constraint can be enforced during subproblem
optimization---more rigorously, if $\mathcal{I} = \{ i, j \}$, then finding the
values of $\alpha_i$ and $\alpha_j$ which maximize the dual objective amounts
to finding a $\delta$ such that adding $y_i\delta$ to $\alpha_i$ and
subtracting $y_j\delta$ from $\alpha_j$ (which maintains the constraint that
$\sum_{i=1}^n y_i \alpha_i = 0$) maximizes:
\begin{equation*}
y_i\delta - y_j\delta - \frac{1}{2}\left( \alpha + y_i\delta e_i - y_j \delta
e_j \right)^T Q \left( \alpha + y_i\delta e_i - y_j \delta e_j \right)
\end{equation*}
Simplifying, differentiating with respect to $\delta$ and setting the result
equal to zero yields that:
\begin{equation*}
\delta = \frac{y_i \left(1-c_i\right) - y_j\left(1 - c_j\right) + 2 K\left(
x_i, x_j \right) }{K\left( x_i, x_i \right) + K\left( x_j, x_j \right)}
\end{equation*}
Once more, we must clip $\delta$ to the constraints $0 \le \alpha_i + y_i
\delta \le \nicefrac{1}{\lambda n}$ and $0 \le \alpha_j + y_j \delta \le
\nicefrac{1}{\lambda n}$ before applying it.
More generally, for larger subproblem sizes, such as those used by SVM-Light,
the constraint $\sum_{i=1}^n y_i \alpha_i = 0$ can be imposed during subproblem
optimization.

\subsubsection{Analysis}

The use of heuristics to choose the working set makes SMO very difficult to
analyze. Although it is known to converge linearly after some number of
iterations \cite{ChenFaLi06}, the number of iterations required to reach
this phase can be very large. To the best of our knowledge, the most satisfying
analysis for a dual decomposition method is the one given in
\citet{HushKeScSt06}, showing a total number of kernel evaluations of $O(n^2)$.
In terms of learning runtime, combining this result with Lemma
\ref{lem:svm-introduction:generalization-from-expected-loss} yields a runtime
and support size of:
\begin{align}
\label{eq:svm-introduction:dual-decomposition-runtime} \mbox{\#K} = &
\tilde{O}\left( \left( \frac{\loss[\hinge]{g_u}+\epsilon}{\epsilon} \right)^2 \frac{\norm{u}^4}{\epsilon^2} \right)
\\
\notag \mbox{\#S} = &
\tilde{O}\left( \left( \frac{\loss[\hinge]{g_u}+\epsilon}{\epsilon} \right) \frac{\norm{u}^2}{\epsilon} \right)
\end{align}
to guarantee $\loss[\zeroone]{g_w} \leq \loss[\hinge]{g_u}+\epsilon$.

Until very recently, there was no satisfying analysis of SDCA, although it is
now known \citep{ShalevZh13} that SDCA finds a $\epsilon$-suboptimal solution
(in terms of duality gap, which upper bounds both the primal and dual
suboptimality) in $\tilde{O}\left( n + \nicefrac{1}{\lambda \epsilon} \right)$
iterations. Each iteration costs $n$ kernel evaluations, so this analysis is
slightly worse than that of SMO, although the simplicity of the algorithm, and
its excellent performance in the linear SVM setting, still make it a worthwhile
addition to a practitioner's toolbox.

\section{Other Algorithms}\label{sec:svm-introduction:non-traditional}

To this point, we have only discussed the optimization of various equivalent
SVM objectives which work in the ``black-box kernel'' setting---i.e.  for which
we need no special knowledge of the kernel function $K$ in order to perform the
optimization, as long as we may evaluate it. In this section, we will consider
two additional algorithms which do not fall under this umbrella, but
nonetheless may be compared to the SVM optimization algorithms which we have so
far considered.

The first of these---the online Perceptron algorithm---does not optimize any
version of the SVM objective, but nevertheless achieves a learning guarantee of
the same form as those of the previous section. The second approach is based on
the idea of approximating a kernel SVM problem with a linear SVM problem,
which, as a linear SVM, may be optimized very rapidly, but suffers from the
drawback that special knowledge of the kernel function $K$ is required.

\subsection{Perceptron}\label{subsec:svm-introduction:non-traditional-perceptron}

The online Perceptron algorithm considers each training example in sequence,
and if $w$ errs on the point under consideration (i.e.~$y_i
\inner{w}{\Phi(x_i)}\leq 0$), then $y_i \Phi(x_i)$ is added into $w$ (i.e. $w$
is updated as $w \leftarrow w + y_i \Phi(x_i)$). Let $M$ be the number of
mistakes made by the Perceptron on the sequence of examples. Support vectors
are added only when a mistake is made, and so each iteration of the Perceptron
involves at most $M$ kernel evaluations. The total runtime is therefore $Mn$,
and the total support size $M$.

Analysis of the online Perceptron algorithm is typically presented as a bound
(e.g. \citet[Corollary 5]{Shalev07}) on the number of mistakes $M$ made by the
algorithm in terms of the hinge loss of the best classifier. While it is an
online learning algorithm, the Perceptron can also be used in the batch setting
using an online-to-batch conversion (e.g. \cite{CesaCoGe01}), in which case we
may derive a generalization guarantee. The proof of the following lemma follows
this outline, and bounds the generalization performance of the Perceptron:

\medskip
\begin{splittheorem}{thm:svm-sbp:perceptron-runtime}

Let $u\in\hilbert$ be an arbitrary reference classifier. There exists a value
of the training size $n$ such that when the Perceptron algorithm is run for a
single ``pass'' over the dataset, the result is a solution $w = \sum_{i=1}^{n}
\alpha_{i}y_{i}\Phi\left(x_{i}\right)$ satisfying:
\begin{equation*}
\loss[\zeroone]{g_w} \le \loss[\hinge]{g_u} + \epsilon
\end{equation*}
after performing the following number of kernel evaluations:
\begin{equation*}
\mbox{\#K} = \tilde{O}\left( \left( \frac{\loss[\hinge]{g_u} +
\epsilon}{\epsilon} \right)^{3} \frac{ \norm{u}^{4} }{\epsilon}
\frac{1}{\delta} \right)
\end{equation*}
with the size of the support set of $w$ (the number nonzero elements in
$\alpha$) satisfying:
\begin{equation*}
\mbox{\#S} = O\left( \left( \frac{\loss[\hinge]{g_u} + \epsilon}{\epsilon}
\right)^{2} \norm{u}^{2} \frac{1}{\delta} \right)
\end{equation*}
the above statements holding with probability $1-\delta$.

\end{splittheorem}
\begin{splitproof}

If we run the online Perceptron algorithm for a single pass over the dataset,
then Corollary 5 of \cite{Shalev07} gives the following mistake bound, for
$\mathcal{M}$ being the set of iterations on which a mistake is made:
\begin{align}
\label{eq:svm-sbp:perceptron-starting-bound} \MoveEqLeft \abs{\mathcal{M}} \le
\sum_{i\in\mathcal{M}}\ell\left(y_{i}\inner{u}{\Phi\left(x_{i}\right)}\right) \\
\nonumber & + \norm{u}
\sqrt{\sum_{i\in\mathcal{M}}\ell\left(y_{i}\inner{u}{\Phi\left(x_{i}\right)}\right)} +
\norm{u}^{2} \\
\MoveEqLeft \nonumber \sum_{i=1}^{n}\ell_{0/1}\left(y_{i}\inner{w_{i}}{\Phi\left(x_{i}\right)}\right)
\le \sum_{i=1}^{n}\ell\left(y_{i}\inner{u}{\Phi\left(x_{i}\right)}\right) + \\
\nonumber & + \norm{u} \sqrt{\sum_{i=1}^{n}\ell\left(y_{i}\inner{u}{\Phi\left(x_{i}\right)}\right)} +
r^{2} \norm{u}^{2}
\end{align}
Here, $\ell$ is the hinge loss and $\ell_{0/1}$ is the 0/1 loss. Dividing
through by $n$:
\begin{align*}
\MoveEqLeft \frac{1}{n}\sum_{i=1}^{n}\ell_{0/1}\left(y_{i}\inner{w_{i}}{\Phi\left(x_{i}\right)}\right)
\le \frac{1}{n}\sum_{i=1}^{n}\ell\left(y_{i}\inner{u}{\Phi\left(x_{i}\right)}\right) \\
\notag & + \frac{r
\norm{u}}{\sqrt{n}}\sqrt{\frac{1}{n}\sum_{i=1}^{n}\ell\left(y_{i}\inner{u}{\Phi\left(x_{i}\right)}\right)}
+ \frac{\norm{u}^{2}}{n}
\end{align*}
If we suppose that the $x_{i},y_{i}$s are \iid, and that
$w\sim\mbox{Unif}\left(w_{1},\dots,w_{n}\right)$ (this is a ``sampling''
online-to-batch conversion), then:
\begin{equation*}
\expectation{\loss[\zeroone]{g_w}} \le \loss[\hinge]{g_u} + \frac{r
\norm{u}}{\sqrt{n}}\sqrt{\loss[\hinge]{g_u}} + \frac{
\norm{u}^{2}}{n}
\end{equation*}
Hence, the following will be satisfied:
\begin{equation}
\label{eq:svm-sbp:perceptron-bound} \expectation{\loss[\zeroone]{g_w}} \le
\loss[\hinge]{g_u}+\epsilon
\end{equation}
when:
\begin{equation*}
n\le
O\left(\left(\frac{\loss[\hinge]{g_u}+\epsilon}{\epsilon}\right)\frac{
\norm{u}^{2}}{\epsilon}\right)
\end{equation*}
The expectation is taken over the random sampling of $w$. The number of kernel
evaluations performed by the $i$th iteration of the Perceptron will be equal to
the number of mistakes made before iteration $i$.  This quantity is upper
bounded by the total number of mistakes made over $n$ iterations, which is
given by the mistake bound of equation \ref{eq:svm-sbp:perceptron-starting-bound}:
\begin{align*}
\abs{\mathcal{M}}\le & n\loss[\hinge]{g_u} +
r\norm{u}\sqrt{n\loss[\hinge]{g_u}} + \norm{u}^{2}\\
\le &
O\left(\left(\frac{1}{\epsilon}\left(\frac{\loss[\hinge]{g_u}+\epsilon}{\epsilon}\right)\loss[\hinge]{g_u} \right. \right. \\
& \left. \left. + \sqrt{\frac{1}{\epsilon}\left(\frac{\loss[\hinge]{g_u}+\epsilon}{\epsilon}\right)\loss[\hinge]{g_u}}+1\right)\norm{u}^{2}\right)\\
\le &
O\left(\left(\left(\frac{\loss[\hinge]{g_u}+\epsilon}{\epsilon}\right)^{2}-\left(\frac{\loss[\hinge]{g_u}+\epsilon}{\epsilon}\right) \right. \right. \\
& \left. \left. + \sqrt{\left(\frac{\loss[\hinge]{g_u}+\epsilon}{\epsilon}\right)^{2}-\left(\frac{\loss[\hinge]{g_u}+\epsilon}{\epsilon}\right)}+1\right)
r^{2}\norm{u}^{2}\right)\\
\le &
O\left(\left(\frac{\loss[\hinge]{g_u}+\epsilon}{\epsilon}\right)^{2}\norm{u}^{2}\right)
\end{align*}
The number of mistakes $\abs{\mathcal{M}}$ is necessarily equal to the size of
the support set of the resulting classifier. Substituting this bound into the
number of iterations:
\begin{align*}
\mbox{\#K}= & n\abs{\mathcal{M}}\\
\le &
O\left(\left(\frac{\loss[\hinge]{g_u}+\epsilon}{\epsilon}\right)^{3}\frac{\norm{u}^{4}}{\epsilon}\right)
\end{align*}
This holds in expectation, but we can turn this into a high-probability result
using Markov's inequality, resulting in in a $\delta$-dependence of
$\frac{1}{\delta}$.
\end{splitproof}

Although this result has a $\delta$-dependence of $1/\delta$, this is merely a
relic of the simple online-to-batch conversion which we use in the analysis.
Using a more complex algorithm (e.g. \citet{CesaCoGe01}) would
likely improve this term to $\log\frac{1}{\delta}$.

The flexibility of being able to apply the Perceptron in both the batch and
online settings comes at a cost---unlike algorithms which rely on regularization
to ensure that the classifier does not overfit, the correctness of the
online-to-batch conversion which we use is predicated on the assumption that
the training examples considered during training are sampled independently from
the underlying distribution $\mathcal{D}$. For this reason, we may only make a
single ``pass'' over the training data, in the batch setting---if we considered
each sample multiple times, then each could no longer be regarded as an
independent draw from $\mathcal{D}$. As a result, unless training samples are
plentiful, the Perceptron may not be able to find a good solution before
``running out'' of training examples.

\subsection{Random Projections}\label{subsec:svm-introduction:non-traditional-random-projections}

We have so far relied only on ``black box'' kernel accesses. However, there are
specialized algorithms which work only for particular kernels (algorithms which
work well on linear kernels are by far the most common), or for a particular
class of kernels. One such is the random Fourier feature approach of
\citet{RahimiRe07}, which works for so-called ``shift-invariant'' kernels, each
of which can be written in terms of the difference between its arguments (i.e.
$K(x,x') = K'(x-x')$). The Gaussian kernel, which is probably the most
widely-used nonlinear kernel, is one such.

The fundamental idea behind this approach is, starting from a kernel SVM
problem using a shift invariant kernel, to find a linear SVM problem which
approximates the original problem, and may therefore be solved using any of the
many fast linear SVM optimizers in existence (e.g. Pegasos
\citep{ShalevSiSr07}, SDCA \citep{HsiehChLiKeSu08} or SIMBA
\citep{HazanKoSr11}).
Supposing that $\mathcal{X} = \R^d$, this is accomplished by observing
that we can write $K'$ in terms of its Fourier transform as \citep[Theorem
1]{RahimiRe07}:
\begin{align*}
K'(x - x') = & Z \int_{\R^d} p(w) \cos \inner{w}{x-x'} dw \\
= & Z \int_{\R^d} p(w) \left( \cos \inner{w}{x} \cos \inner{w}{x'} + \sin
\inner{w}{x} \sin \inner{w}{x'} \right) dw
\end{align*}
where $p$, the Fourier transform of $K'$, can be taken to be a probability
density function by factoring out the normalization constant $Z$. We may therefore find an unbiased estimator of $K(x,x')$ by
sampling $v_1,\dots,v_D \in \R^d$ independently from $p$ and defining
the explicit feature map:
\begin{equation*}
\tilde{\Phi}(x) = \sqrt{\frac{Z}{D}} \left[ \begin{array}{c} \cos
\inner{v_1}{x} \\ \sin \inner{v_1}{x} \\ \vdots \\ \cos \inner{v_D}{x} \\ \sin
\inner{v_D}{x} \end{array} \right]
\end{equation*}
so that $\expectation[v_1,\dots,v_D]{\inner{\tilde{\Phi}(x)}{\tilde{\Phi}(x')}}
= K(x,x')$. Drawing more samples from $p$ (increasing the dimension $D$ of the
explicit features $\tilde{\Phi}(x)$) increases the accuracy of this
approximation, with the number of features required to achieve good
generalization error being bounded by the following theorem:
\medskip
\begin{splittheorem}{thm:svm-sbp:projection-runtime}

Let $u\in\hilbert$ be an arbitrary reference classifier which is written in
terms of the training data as $u = \sum_{i=1}^n \alpha_i y_i
\Phi\left(x_i\right)$. Suppose that we perform the approximation procedure of
\citet{RahimiRe07} to yield a $2D$-dimensional linear approximation to the
original kernelized classification problem, and define $\tilde{u} =
\sum_{i=1}^n \alpha_i y_i \tilde{\Phi}\left(x_i\right)$. There exists a value
of the approximate linear SVM dimension $D$:
\begin{equation*}
%
%
D = \tilde{O}\left( \frac{ d \norm{\alpha}_1^2 }{\epsilon^2} \log\frac{1}{\delta} \right)
\end{equation*}
as well as a training size $n$, such that, if the resulting number of features
satisfies:
\begin{equation*}
%
%
nD = \tilde{O}\left( \left( \frac{\loss[\hinge]{g_u} + \epsilon}{\epsilon}
\right) \frac{ d \norm{\alpha}_1^4 }{\epsilon^3} \log^3\frac{1}{\delta} \right)
\end{equation*}
Then for any $w\in\R^{2D}$ solving the resulting approximate linear
problem such that:
\begin{equation*}
\norm{w} \le \left( 1 + \sqrt{\frac{\epsilon}{\norm{\alpha}_1}}
\right)\norm{\alpha}_1,\ \ 
\emploss[\hinge]{g_w} \le \emploss[\hinge]{g_{\tilde{u}}} + \epsilon
\end{equation*}
we will have that $\loss[\zeroone]{g_w} \le \loss[\hinge]{g_u} + 3\epsilon$,
with this result holding with probability $1-\delta$.
Notice that we have abused notation slightly here---losses of $g_u$ are for the
original kernel problem, while losses of $g_w$ and $g_{\tilde{u}}$ are of the
approximate linear SVM problem.

\end{splittheorem}
\begin{splitproof}

Hoeffding's inequality yields \citep[Claim 1]{RahimiRe07} that, in order to
ensure that $\abs{ K(x,x') - 
\inner{\tilde{\Phi}(x)}{\tilde{\Phi}(x')} } < \tilde{\epsilon}$ uniformly for
all $x,x' \in \R^d$ with probability $1-\delta$, the following number
of random Fourier features are required:
\begin{equation*}
D = O\left( \frac{d}{\tilde{\epsilon}^2} \log\left( \frac{1}{\delta
\tilde{\epsilon}^2} \right) \right)
\end{equation*}
Here, the proportionality constant hidden inside the big-Oh notation varies
depending on the particular kernel in which we are interested, and a uniform
bound on the $2$-norm of $x\in\mathcal{X}$. This result is not immediately
applicable, because the quantities which we need in order to derive a
generalization bound are the norm of the predictor, and the error in each
\emph{classification} (rather than the error in each kernel evaluation).
Applying the above bound yields that:
\begin{align*}
\MoveEqLeft \abs{\ell_{\hinge}\left( y \inner{u}{\Phi\left(x\right)} \right) -
\ell_{\hinge}\left( y \inner{ \tilde{u} }{\tilde{\Phi}\left(x\right)} \right)}
\\
& \le \abs{ \inner{u}{\Phi\left(x\right)} -
\inner{\tilde{u}}{\tilde{\Phi}\left(x\right)} } \\
& = \abs{\sum_{i=1}^n \alpha_i y_i \left( K\left(x_i,x\right) -
\inner{\tilde{\Phi}\left(x_i\right)}{\tilde{\Phi}\left(x\right)} \right) } \\
& \le \tilde{\epsilon} \norm{\alpha}_1
\end{align*}
and:
\begin{align*}
\abs{\norm{u}^2 - \norm{\tilde{u}}^2} = & \abs{\sum_{i=1}^n \sum_{j=1}^n
\alpha_i \alpha_j y_i y_j \left( K\left(x_i,x_j\right) -
\inner{\tilde{\Phi}\left(x_i\right)}{\tilde{\Phi}\left(x_j\right)} \right) } \\
\le & \tilde{\epsilon} \norm{\alpha}_1^2
\end{align*}
Observe that, because $K(x,x) \le 1$ by assumption, $\norm{u}_2 \le
\norm{u}_1$. Hence, if we take $B = \left( 1 + \sqrt{\tilde{\epsilon}}
\right)\norm{\alpha}_1$ so that $B^2 \ge \norm{u}^2 +
\tilde{\epsilon}\norm{\alpha}_1^2$ in Lemma
\ref{lem:svm-introduction:generalization-from-expected-loss}, we will have that
all $\tilde{w} \in \R^{2D}$ satisfying $\norm{w} \le B$ and
$\emploss[\hinge]{g_w} \le \emploss[\hinge]{g_u} + \epsilon$ generalize as
$\loss[\zeroone]{g_w} \le \loss[\hinge]{g_{\tilde{u}}} + 2\epsilon$, as long
as:
\begin{equation*}
n = \tilde{O}\left( \left( \frac{\loss[\hinge]{g_u} + \epsilon}{\epsilon}
\right) \frac{ \left( \left( 1 + \sqrt{\tilde{\epsilon}} \right)
\norm{\alpha}_1 + \log\frac{1}{\delta} \right)^2 }{\epsilon} \right)
\end{equation*}
Taking $\tilde{\epsilon} = \nicefrac{\epsilon}{\norm{\alpha}_1}$, multiplying
the bounds on $n$ and $D$, and observing that $\loss[\hinge]{g_{\tilde{u}}} \le
\loss[\hinge]{g_u} + \epsilon$ completes the proof.
\end{splitproof}

Computing each random Fourier feature requires performing a $d$-dimensional
inner product, costing $O(d)$ operations. This is the same cost as performing a
kernel evaluation for many common kernels with $\mathcal{X} = \R^d$, so
one may consider the computational cost of finding a single random Fourier
feature to be roughly comparable to the cost of performing a single kernel
evaluation. Hence, if we ignore the cost of optimizing the approximate linear
SVM entirely, and only account for the cost of \emph{constructing} it, then the
above bound on $nD$ can be compared directly to the bounds on the number of
kernel evaluations $\mbox{\#K}$ which were found elsewhere in this chapter.

\paragraph{Collaborators:} Much of the content of this chapter is not exactly
``novel'', but to the extent that it is, the work was performed jointly with
Shai Shalev-Shwartz and Nathan Srebro.

\clearpage
\section{Proofs for Chapter \ref{ch:svm-introduction}}

\begin{proofs}

\subsection{Proof of Lemma \ref{lem:svm-introduction:generalization-from-expected-loss}}\label{subsec:svm-introduction:generalization-proof}

In order to prove Lemma
\ref{lem:svm-introduction:generalization-from-expected-loss}, we must first
prove two intermediate results. The first of these, Lemma
\ref{lem:svm-introduction:generalization-from-empirical-loss}, will give a
bound on $n$ which guarantees good generalization performance for all
classifiers which achieve some target empirical hinge loss $L$. The second,
Lemma \ref{lem:svm-introduction:empirical-loss-from-expected-loss}, gives a
bound on $n$ which guarantees that the empirical hinge loss is close to the
expected hinge loss, for a single arbitrary classifier. Combining these results
yields the sample size required to guarantee that a suboptimal classifier is
within $\epsilon$ of the optimum, giving Lemma
\ref{lem:svm-introduction:generalization-from-expected-loss}.

We begin with the following lemma, which follows immediately from
\citet[Theorem 1]{SrebroSrTe10}---indeed, all of the results of this section may
be viewed as nothing but a clarification of results proved in this paper:

\begin{figure}[t]

\begin{center}
\includegraphics[width=0.45\textwidth]{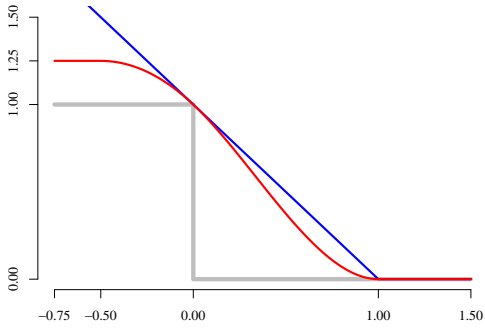}
\end{center}

\caption{
Plot of a smooth and bounded function (red) which upper bounds the 0/1 loss and lower
bounds the hinge loss.
}

\label{fig:svm-introduction:01-hinge-bound}

\end{figure}

\medskip
\begin{lemma}
\label{lem:svm-introduction:generalization-from-empirical-loss}

Suppose that we sample a training set of size $n$, with $n$ given by the
following equation, for parameters $L,B,\epsilon>0$ and
$\delta\in\left(0,1\right)$:
\begin{equation}
\label{eq:svm-introduction:generalization-from-empirical-loss-bound} n = \tilde{O}\left( \left(
\frac{L}{\epsilon} \right) \frac{\left( B + \sqrt{\log\frac{1}{\delta}}
\right)^2}{\epsilon} \right)
\end{equation}
Then, with probability $1-\delta$ over the \iid training sample
$x_{i},y_{i}:i\in\left\{ 1,\dots,n\right\}$, uniformly for all $w$ satisfying:
\begin{equation}
\label{eq:svm-introduction:generalization-from-empirical-loss-conditions}
\norm{w} \le B,\ \ 
\emploss[\hinge]{g_w} \le L
\end{equation}
we have that $\loss[\zeroone]{g_w} \le L + \epsilon$.

\end{lemma}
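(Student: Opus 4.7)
The plan is to invoke the optimistic (local-Rademacher-style) uniform generalization bound of \citet{SrebroSrTe10} (Theorem 1), which applies to \emph{smooth}, nonnegative, bounded losses on bounded-norm linear predictors. Since the 0/1 loss is not smooth, I would sandwich it between the hinge loss (above) and an auxiliary smooth surrogate (below). This matches the picture in Figure \ref{fig:svm-introduction:01-hinge-bound}.

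First, I would explicitly construct a function $\ell_{\smooth}:\R\to[0,1]$ with uniformly bounded second derivative (i.e.\ $H$-smooth for some absolute constant $H$) satisfying
\begin{equation*}
\ell_{\zeroone}(z) \le \ell_{\smooth}(z) \le \ell_{\hinge}(z) \qquad \text{for all } z\in\R.
\end{equation*}
A concrete choice is to glue together the constant $1$ on $(-\infty,0]$, a cubic bridge on $[0,1]$ interpolating $(0,1)$ to $(1,0)$ with matching zero derivatives at both endpoints, and $0$ on $[1,\infty)$.

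Next, I would apply the SST bound to the loss $\ell_{\smooth}$ over the class $\{g_w : \norm{w}\le B\}$. Since $K(x,x)\le 1$ almost surely, the class has worst-case norm $B$, and the theorem delivers, with probability $1-\delta$, that uniformly over all $w$ with $\norm{w}\le B$,
\begin{equation*}
\loss[\smooth]{g_w} \;\le\; \emploss[\smooth]{g_w} + \tilde{O}\!\left( \sqrt{ \emploss[\smooth]{g_w} } \cdot \frac{B + \sqrt{\log\tfrac{1}{\delta}}}{\sqrt{n}} + \frac{ \left(B + \sqrt{\log\tfrac{1}{\delta}}\right)^2 }{n} \right).
\end{equation*}
Chaining the sandwich: for any $w$ in the hypothesis set,
\begin{equation*}
\loss[\zeroone]{g_w} \;\le\; \loss[\smooth]{g_w} \;\le\; \emploss[\smooth]{g_w} + (\text{slack}) \;\le\; \emploss[\hinge]{g_w} + (\text{slack}) \;\le\; L + (\text{slack}).
\end{equation*}

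Finally, I would solve for $n$ making slack $\le \epsilon$. The $\sqrt{\cdot}$ term dominates: it requires $n \gtrsim L\bigl(B+\sqrt{\log(1/\delta)}\bigr)^2 / \epsilon^2$, which can be rewritten as $\left(L/\epsilon\right)\bigl(B+\sqrt{\log(1/\delta)}\bigr)^2/\epsilon$, exactly matching \eqref{eq:svm-introduction:generalization-from-empirical-loss-bound}; the low-order term $n\gtrsim \bigl(B+\sqrt{\log(1/\delta)}\bigr)^2/\epsilon$ is strictly weaker (assuming $L \ge \epsilon$, which is the interesting regime; otherwise it is absorbed into the $\tilde O$).

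The main obstacle is just careful bookkeeping: (i) verifying that the smoothness constant of $\ell_{\smooth}$ is absorbed into the $\tilde O$ and does not interact badly with $B$, and (ii) reading off the SST bound in the form above so that the optimistic $\sqrt{\emploss[\smooth]{g_w}}$ factor (rather than $\sqrt{L}$ pulled out a priori) is retained uniformly over the ball---this optimism is exactly what gives the $L/\epsilon$ (rather than $1/\epsilon$) factor in the final sample complexity and is the whole point of using the smooth sandwich instead of a standard uniform convergence argument based on Lipschitz Rademacher complexity.
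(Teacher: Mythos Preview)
Your approach is essentially identical to the paper's: sandwich the 0/1 loss below a smooth surrogate that is itself below the hinge, apply Theorem~1 of \citet{SrebroSrTe10} to the surrogate over the norm ball, chain the inequalities, and solve the slack for $n$.

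One small technical snag in your concrete construction: no $C^1$ function can satisfy $\ell_{\zeroone}(z)\le\ell_{\smooth}(z)\le\ell_{\hinge}(z)$ for all $z$ while staying in $[0,1]$. At $z=0$ both bounds equal $1$; the left inequality ($\ell_{\smooth}\ge 1$ on $(-\infty,0]$) forces $\ell_{\smooth}'(0)\ge 0$, while the right inequality ($\ell_{\smooth}(z)\le 1-z$ for $z>0$) forces $\ell_{\smooth}'(0)\le -1$. Your cubic bridge, having zero derivative at $0$, therefore overshoots the hinge for small $z>0$. The paper resolves this by letting the surrogate rise above $1$ (to $5/4$) on the negative side, where the hinge is large anyway; the resulting boundedness and smoothness constants are absolute and get absorbed into the $\tilde O$, so nothing else in your argument needs to change.
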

\begin{proof}

For a \emph{smooth} loss function, Theorem 1 of \citet{SrebroSrTe10} bounds the
expected loss in terms of the empirical loss, plus a factor depending on (among
other things) the sample size. Neither the 0/1 nor the hinge losses are smooth,
so we will define a bounded and smooth loss function which upper bounds the 0/1
loss and lower-bounds the hinge loss. The particular function which we use
doesn't matter, since its smoothness parameter and upper bound will ultimately
be absorbed into the big-Oh notation---all that is needed is the
\emph{existence} of such a function. One such is:
\begin{equation*}
\psi\left(x\right)=\left\{
\begin{array}{lcr}
\nicefrac{5}{4} & \cdots & x<-\nicefrac{1}{2}\\
-x^{2}-x+1 & \cdots & -\nicefrac{1}{2}\le x<0\\
x^{3}-x^{2}-x+1 & \cdots & 0\le x<1\\
0 & \cdots & x\ge1
\end{array}
\right.
\end{equation*}
This function, illustrated in Figure \ref{fig:svm-introduction:01-hinge-bound}, is $4$-smooth
and $\nicefrac{5}{4}$-bounded. If we define $\loss[\smooth]{g_w}$ and
$\emploss[\smooth]{g_w}$ as the expected and empirical $\psi$-losses,
respectively, then the aforementioned theorem gives that, with probability
$1-\delta$ uniformly over all $w$ such that $\norm{w}\le B$:
\begin{align*}
\MoveEqLeft \loss[\smooth]{g_w} \le \emploss[\smooth]{g_w} \\
& + O\left( \frac{B^2 \log^3 n}{n} + \frac{\log{\frac{1}{\delta}}}{n}
+ \sqrt{\emploss[\smooth]{g_w}}\left( \sqrt{\frac{B^2 \log^3 n}{n}} +
\sqrt{\frac{\log{\frac{1}{\delta}}}{n}} \right) \right)
\end{align*}
Because $\psi$ is lower-bounded by the 0/1 loss and upper-bounded by the hinge
loss, we may replace $\loss[\smooth]{g_w}$ with $\loss[\zeroone]{g_w}$ on the LHS of
the above bound, and $\emploss[\smooth]{g_w}$ with $L$ on the RHS.  Setting the
big-Oh expression to $\epsilon$ and solving for $n$ then gives the desired
result.
\end{proof}

While Lemma \ref{lem:svm-introduction:generalization-from-empirical-loss}
bounds the generalization error of classifiers with empirical hinge loss less
than the target value of $L$, the question of how to set $L$ remains
unanswered. The following lemma shows that we may take it to be
$\loss[\hinge]{w^{*}} + \epsilon$, where $w^{*}$ minimizes the expected hinge
loss:

\medskip
\begin{lemma}
\label{lem:svm-introduction:empirical-loss-from-expected-loss}

Let $u$ be an arbitrary linear classifier, and suppose that we sample a
training set of size $n$, with $n$ given by the following equation, for
parameters $\epsilon>0$ and $\delta\in\left(0,1\right)$:
\begin{equation}
\label{eq:svm-introduction:empirical-loss-from-expected-loss-bound} n = 2 \left(
\frac{\loss[\hinge]{g_u} + \epsilon}{\epsilon} \right)
\frac{\norm{u}\log\frac{1}{\delta}}{\epsilon}
\end{equation}
Then, with probability
$1-\delta$ over the \iid training sample $x_{i},y_{i}:i\in\left\{
1,\dots,n\right\}$, we have that $\emploss[\hinge]{g_u} \le \loss[\hinge]{g_u} +
\epsilon$.

\end{lemma}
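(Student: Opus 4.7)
The plan is to apply a Bernstein-type concentration inequality to the empirical average
\[
\emploss[\hinge]{g_u} = \frac{1}{n}\sum_{i=1}^n \ell_\hinge\!\left(y_i \inner{u}{\Phi(x_i)}\right),
\]
which is simply a mean of $n$ i.i.d.\ nonnegative random variables with common expectation $\loss[\hinge]{g_u}$. Unlike Lemma~\ref{lem:svm-introduction:generalization-from-empirical-loss}, no union bound over a class of predictors is needed here, since $u$ is a single fixed classifier; this is why we get the cleaner linear dependence on $\norm{u}$ rather than the $\norm{u}^2$ one sees in uniform bounds.

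First I would bound the range of each summand. Since by assumption $K(x,x)\le 1$ almost surely, Cauchy--Schwarz gives $\abs{\inner{u}{\Phi(x)}}\le\norm{u}$, so each hinge loss value lies in $[0, 1+\norm{u}]$. Next, because the summands are nonnegative, their second moment (and hence variance) is bounded by the range times the mean:
\[
\mathbb{E}\!\left[\ell_\hinge(y\inner{u}{\Phi(x)})^2\right] \;\le\; (1+\norm{u})\cdot \loss[\hinge]{g_u}.
\]
This is the ``optimistic variance'' bound that turns Bernstein into an $O(1/\epsilon)$ rate when the expected loss is small.

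I would then invoke Bernstein's inequality to conclude that, with probability at least $1-\delta$,
\[
\emploss[\hinge]{g_u} - \loss[\hinge]{g_u}
\;\le\; \sqrt{\frac{2(1+\norm{u})\loss[\hinge]{g_u}\log(1/\delta)}{n}}
\;+\; \frac{2(1+\norm{u})\log(1/\delta)}{3n}.
\]
Requiring the right-hand side to be at most $\epsilon$ and using the standard trick of splitting cases on whether $\loss[\hinge]{g_u}\le\epsilon$ or not (or equivalently applying the inequality $\sqrt{ab}\le\tfrac{1}{2}(a/c+cb)$ to collapse the sum of two terms), both contributions can be unified into the single factor $(\loss[\hinge]{g_u}+\epsilon)/\epsilon$ appearing in Equation~\ref{eq:svm-introduction:empirical-loss-from-expected-loss-bound}. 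Solving for $n$ then yields the claimed bound.

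The main obstacle is cosmetic rather than conceptual: pinning down the specific leading constant $2$ and the exact form $\norm{u}$ (rather than $1+\norm{u}$) in the statement. This can likely be handled either by noting that the $1$ is dominated by $\norm{u}$ in the regime of interest (after which the constants need to be tracked through the Bernstein calculation), or by using a one-sided Bernstein bound tailored to $[0,B]$-valued variables that directly produces the desired constant. Everything else is a straightforward application of a textbook concentration inequality to a sum of bounded i.i.d.\ random variables.
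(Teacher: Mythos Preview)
Your proposal is correct and follows essentially the same approach as the paper: bound the variance of the hinge loss by the range times the mean, apply Bernstein's inequality, and solve for $n$. The only difference is that the paper asserts the hinge loss is bounded by $\norm{u}$ rather than $1+\norm{u}$ (a slight sloppiness on the paper's part), which is exactly the cosmetic issue you already flagged; otherwise the argument is identical, and the paper obtains the constant $2$ directly by relaxing $\loss[\hinge]{g_u}+\epsilon/3$ to $\loss[\hinge]{g_u}+\epsilon$ in the Bernstein exponent.
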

\begin{proof}

The hinge loss is upper-bounded by $\norm{u}$ (by assumption, $K(x,x) \le 1$
with probability $1$, when $x\sim\mathcal{D}$), from which it follows that
$\variance[x,y]{\ell\left(y\inner{u}{x}\right)} \le
\norm{u}\loss[\hinge]{g_u}$.  Hence, by Bernstein's inequality:
\begin{align*}
\probability{ \emploss[\hinge]{g_u} > \loss[\hinge]{g_u} + \epsilon } \le &
\exp\left( - \frac{n}{\norm{u}}\left(
\frac{\epsilon^{2}/2}{\loss[\hinge]{g_u}+\epsilon/3} \right) \right) \\
\le & \exp\left( - \frac{n}{2\norm{u}}\left(
\frac{\epsilon^{2}}{\loss[\hinge]{g_u}+\epsilon} \right) \right)
\end{align*}
Setting the LHS to $\delta$ and solving for $n$ gives the desired result.
\end{proof}

Combining lemmas \ref{lem:svm-introduction:generalization-from-empirical-loss}
and \ref{lem:svm-introduction:empirical-loss-from-expected-loss} gives us a
bound of the desired form, which is given by the following lemma:

\subsection{Other Proofs}

\end{proofs}

\chapter{The Kernelized Stochastic Batch Perceptron}\label{ch:svm-sbp}

\section{Overview}\label{sec:svm-sbp:overview}

In this chapter, we will present a novel algorithm for training kernel Support
Vector Machines (SVMs), and establish learning runtime guarantees which are
better then those for any other known kernelized SVM optimization approach. We
also show experimentally that our method works well in practice compared to
existing SVM training methods. The content of this chapter was originally
presented in the 29th International Conference on Machine Learning (ICML 2012)
\citep{CotterShSr12}.

Our method is a stochastic gradient
\footnote{we are a bit loose in often using ``gradient'' when we actually refer
to subgradients of a convex function, or equivalently supergradients of a
concave function.}
method on a non-standard scalarization of the bi-criterion SVM objective of
Problem \ref{eq:svm-introduction:bi-criterion-objective} in Chapter \ref{ch:svm-introduction}. In particular, we use
the ``slack constrained'' scalarized optimization problem introduced by
\citet{HazanKoSr11}, where we seek to maximize the classification margin,
subject to a constraint on the total amount of ``slack'', i.e.~sum of the
violations of this margin. Our approach is based on an efficient method for
computing unbiased gradient estimates on the objective. Our algorithm can be
seen as a generalization of the ``Batch Perceptron'' to the non-separable case
(i.e.~when errors are allowed), made possible by introducing stochasticity, and
we therefore refer to it as the ``Stochastic Batch Perceptron'' (SBP).
%
%

The SBP is fundamentally different from other stochastic gradient approaches to
the problem of training SVMs (such as SGD on the norm-constrained and
regularized objectives of Section \ref{sec:svm-introduction:objective} in
Chapter \ref{ch:svm-introduction}, see Section
\ref{sec:svm-introduction:traditional} for details), in that calculating each
stochastic gradient estimate still requires considering the {\em entire} data
set. In this regard, despite its stochasticity, the SBP is very much a
``batch'' rather than ``online'' algorithm. For a linear SVM, each iteration
would require runtime linear in the training set size, resulting in an
unacceptable overall runtime. However, in the kernel setting, essentially all
known approaches already require linear runtime per iteration. A more careful
analysis reveals the benefits of the SBP over previous kernel SVM optimization
algorithms.
%
%

As was done in Chapter \ref{ch:svm-introduction}, we follow
\citet{BottouBo07,ShalevSr08} and compare the runtimes required to ensure a
generalization error of $L^*+\epsilon$, assuming the existence of some unknown
predictor $u$ with norm $\norm{u}$ and expected hinge loss $L^*$. We derive an
``optimistic'' bound, as described in Section
\ref{sec:svm-introduction:generalization} of Chapter \ref{ch:svm-introduction},
for which the main advantage of the SBP over competing algorithms is in the
``easy'' regime: when $\epsilon = \Omega(L^*)$ (i.e. we seek a constant factor
approximation to the best achievable error, such as if we desire an error of
$1.01 L^*$). In such a case, the overall SBP runtime is $\norm{u}^4/\epsilon$,
compared with $\norm{u}^4/\epsilon^3$ for Pegasos and $\norm{u}^4/\epsilon^2$
for the best known dual decomposition approach.
%
%

\section{Setup and Formulations}

In Section \ref{sec:svm-introduction:objective} of Chapter
\ref{ch:svm-introduction}, the bi-criterion SVM objective was presented, along
with several of the equivalent scalarizations of this objective which are
optimized by various popular algorithms.
The SBP relies on none of these, but rather on the ``slack constrained''
scalarization \cite{HazanKoSr11}, for which we maximize the ``margin'' subject
to a constraint of $\nu$ on the total allowed ``slack'', corresponding to the
average error. That is, we aim at maximizing the margin by which all points are
correctly classified (i.e.~the minimal distance between a point and the
separating hyperplane), after allowing predictions to be corrected by a total
amount specified by the slack constraint:
\begin{align}
\label{eq:svm-sbp:slack-constrained-objective} \max_{w\in\R^d}
\max_{\xi\in\R^n} \min_{i\in\left\{1,\dots,n\right\}} & \left(
y_{i}\inner{w}{\Phi\left(x_{i}\right)}+\xi_{i} \right) \\
\nonumber \mbox{subject to: } & \norm{w} \le 1,~ \xi \succeq 0, ~
\mathbf{1}^{T}\xi\le n\nu
\end{align}
This scalarization is equivalent to the original bi-criterion objective in that
varying $\nu$ explores different Pareto optimal solutions of Problem
\ref{eq:svm-introduction:bi-criterion-objective}. This is captured by the
following Lemma, which also quantifies how suboptimal solutions of the
slack-constrained objective correspond to Pareto suboptimal points:

\medskip
\begin{lemma}
\label{lem:svm-sbp:slack-constrained-suboptimality}

For any $u \ne 0$, consider Problem \ref{eq:svm-sbp:slack-constrained-objective} with
$\nu = \emploss[\hinge]{g_u}/\norm{u}$. Let $\bar{w}$ be an
$\bar{\epsilon}$-suboptimal solution to this problem with objective value
$\gamma$, and consider the rescaled solution $w=\bar{w}/\gamma$. Then:
\begin{align*}
\norm{w} \le & \frac{1}{1-\bar{\epsilon}\norm{u}}\norm{u} ~~,~~
\emploss[\hinge]{g_w} \le  \frac{1}{1-\bar{\epsilon}\norm{u}}
\emploss[\hinge]{g_u}
\end{align*}

\end{lemma}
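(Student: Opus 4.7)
The plan is to compare the slack-constrained optimum value to what the reference classifier $u$ itself can achieve, then exploit $\bar{\epsilon}$-suboptimality and the rescaling $w = \bar{w}/\gamma$ to transfer bounds back to the bi-criterion picture. The whole argument rests on one clean calculation: the choice $\nu = \emploss[\hinge]{g_u}/\norm{u}$ is precisely tuned so that $u/\norm{u}$ is feasible with margin $1/\norm{u}$.

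First I would establish a lower bound on the optimum of Problem \ref{eq:svm-sbp:slack-constrained-objective}. Plug in the candidate $\tilde{u} = u/\norm{u}$ (which has unit norm) together with slacks $\xi_i = \max\bigl(0,\,1/\norm{u} - y_i\inner{\tilde{u}}{\Phi(x_i)}\bigr)$. By construction every $y_i\inner{\tilde{u}}{\Phi(x_i)} + \xi_i \ge 1/\norm{u}$, so the inner minimum is at least $1/\norm{u}$. The total slack is
\begin{equation*}
\sum_{i=1}^n \max\!\Bigl(0,\tfrac{1}{\norm{u}} - \tfrac{y_i\inner{u}{\Phi(x_i)}}{\norm{u}}\Bigr) = \tfrac{1}{\norm{u}}\sum_{i=1}^n \max\bigl(0,\, 1 - y_i\inner{u}{\Phi(x_i)}\bigr) = \tfrac{n\,\emploss[\hinge]{g_u}}{\norm{u}} = n\nu,
\end{equation*}
so the slack constraint is met with equality. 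Hence the optimum value is $\gamma^\ast \ge 1/\norm{u}$, and the $\bar{\epsilon}$-suboptimality hypothesis gives $\gamma \ge 1/\norm{u} - \bar{\epsilon} = (1-\bar{\epsilon}\norm{u})/\norm{u}$.

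Next I would read off the norm bound immediately: since $\norm{\bar{w}}\le 1$ by feasibility, $\norm{w} = \norm{\bar{w}}/\gamma \le 1/\gamma \le \norm{u}/(1-\bar{\epsilon}\norm{u})$. For the empirical hinge loss, let $\bar{\xi}$ be the slack vector that together with $\bar{w}$ attains objective value $\gamma$, so that $\bar{\xi}_i \ge \gamma - y_i\inner{\bar{w}}{\Phi(x_i)}$ for all $i$. Dividing this inequality by $\gamma$ and using $y_i\inner{w}{\Phi(x_i)} = y_i\inner{\bar{w}}{\Phi(x_i)}/\gamma$ gives $1 - y_i\inner{w}{\Phi(x_i)} \le \bar{\xi}_i/\gamma$. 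Taking positive parts (valid since $\bar{\xi}_i\ge 0$) and averaging yields
\begin{equation*}
\emploss[\hinge]{g_w} \le \tfrac{1}{n\gamma}\sum_{i=1}^n \bar{\xi}_i \le \tfrac{\nu}{\gamma} = \tfrac{\emploss[\hinge]{g_u}}{\norm{u}\gamma} \le \tfrac{\emploss[\hinge]{g_u}}{1-\bar{\epsilon}\norm{u}},
\end{equation*}
which is the second claimed bound.

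I do not anticipate a serious obstacle. The one step that requires care is the feasibility computation identifying the slack mass of $u/\norm{u}$ with $n\nu$: the $1/\norm{u}$ factor pulled out of the hinge has to line up exactly with the choice of $\nu$, and this is what makes the argument tight. Everything else is algebraic bookkeeping on the definition of $w = \bar{w}/\gamma$ and linearity of the slack constraint under rescaling.
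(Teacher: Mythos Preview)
Your proof is correct. The paper itself does not prove this lemma but simply cites it as Lemma~2.1 of \citet{HazanKoSr11}; your argument is essentially the standard one that underlies that reference.
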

\begin{proof}

This is Lemma 2.1 of \citet{HazanKoSr11}.
\end{proof}

\section{The Stochastic Batch Perceptron}\label{sec:svm-sbp:algorithm}

In this section, we will develop the Stochastic Batch Perceptron. We consider
Problem \ref{eq:svm-sbp:slack-constrained-objective} as optimization of the variable
$w$ with a single constraint $\norm{w}\leq 1$, with the objective being to
maximize:
\begin{equation}
\label{eq:svm-sbp:fw-definition} f\left(w\right) ~= \max_{\xi\succeq 0,
\mathbf{1}^{T}\xi\le n\nu} ~~\min_{p\in\Delta^{n}} ~~\sum_{i=1}^{n} p_{i}
\left( y_{i}\inner{w}{\Phi\left(x_{i}\right)}+\xi_{i} \right)
\end{equation}
Notice that we replaced the minimization over training indices $i$ in Problem
\ref{eq:svm-sbp:slack-constrained-objective} with an equivalent minimization over the
probability simplex, $\Delta^n = \{p \succeq 0 : \mathbf{1}^T p = 1\}$, and
that we consider $p$ and $\xi$ to be a part of the objective, rather than
optimization variables. The objective $f(w)$ is a concave function of $w$, and
we are maximizing it over a convex constraint $\norm{w}\leq 1$, and so this is
a convex optimization problem in $w$.

Our approach will be to perform a stochastic gradient update on $w$ at each
iteration: take a step in the direction specified by an unbiased estimator of a
(super)gradient of $f(w)$, and project back to $\norm{w}\leq 1$. To this end,
we will need to identify the (super)gradients of $f(w)$ and understand how to
efficiently calculate unbiased estimates of them.

\subsection{Warmup: The Separable Case}\label{subsec:svm-sbp:warmup}

As a warmup, we first consider the separable case, where $\nu=0$ and no errors
are allowed. The objective is then:
\begin{equation}
\label{eq:svm-sbp:sep-fw} f(w) = \min_i y_{i}\inner{w}{\Phi\left(x_{i}\right)},
\end{equation}
This is simply the ``margin'' by which all points are correctly classified,
i.e.~$\gamma$ s.t.~$\forall_i~ y_i \inner{w}{\Phi(x_i)}\geq\gamma$. We seek a
linear predictor $w$ with the largest possible margin. It is easy to see that
(super)gradients with respect to $w$ are given by $y_i \Phi(x_i)$ for any index
$i$ attaining the minimum in Equation \ref{eq:svm-sbp:sep-fw}, i.e.~by the ``most poorly
classified'' point(s). A gradient ascent approach would then be to iteratively
find such a point, update $w \leftarrow w+\eta y_i \Phi(x_i)$, and project back
to $\norm{w}\leq 1$. This is akin to a ``batch Perceptron'' update, which at
each iteration searches for a violating point and adds it to the predictor.

In the separable case, we could actually use \emph{exact} supergradients of the
objective. As we shall see, it is computationally beneficial in the
non-separable case to base our steps on unbiased gradient estimates. We
therefore refer to our method as the ``Stochastic Batch Perceptron'' (SBP), and
view it as a generalization of the batch Perceptron which uses stochasticity
and is applicable in the non-separable setting. In the same way that the
``batch Perceptron'' can be used to maximize the margin in the separable case,
the SBP can be used to obtain any SVM solution along the Pareto front of the
bi-criterion Problem \ref{eq:svm-introduction:bi-criterion-objective}.

\subsection{Supergradients of $f(w)$}\label{subsec:svm-sbp:minimax-optimality}

\begin{figure}

\begin{center}
\includegraphics[width=0.45\textwidth]{}
\end{center}

\caption{
Illustration of how one finds $\xi^{*}$ and $p^{*}$. The upper curve
represents the values of the responses $c_{i}$, listed in order of increasing
magnitude.
The lower curve illustrates a minimax optimal probability distribution $p^{*}$.
%
}

\label{fig:svm-sbp:water}

\end{figure}

Recall that, in Chapter \ref{ch:svm-introduction}, we defined the vector of
``responses'' $c \in \R^n$ to be, for a fixed $w$:
\begin{equation}
\label{eq:svm-sbp:responses-definition} c_{i} =
y_{i}\inner{w}{\Phi\left(x_{i}\right)}
\end{equation}
The objective of the max-min optimization problem in the definition of $f(w)$
can be written as $p^T(c+\xi)$. 
Supergradients of $f(w)$ at $w$ can be characterized explicitly in terms of
minimax-optimal pairs $p^*$ and $\xi^*$ such that $p^* =
\argmin_{p\in\Delta^n} p^t (c+\xi^*)$ and $\xi^* =
\argmax_{\xi\succeq 0, \mathbf{1}^{T}\xi\le n\nu} (p^*)^T (c+\xi)$.

\medskip
\begin{splitlemma}{lem:svm-sbp:slack-constrained-supergradient}

For any $w$, let $p^*,\xi^*$ be minimax optimal for Equation
\ref{eq:svm-sbp:fw-definition}. Then $\sum_{i=1}^{n}
p^{*}_{i}y_{i}\Phi\left(x_{i}\right)$ is a supergradient of $f(w)$ at $w$.

\end{splitlemma}
\begin{splitproof}

By the definition of $f$, for any $v\in\R^d$:
\begin{equation*}
f\left(w+v\right) =
\max_{\xi\succeq 0, \mathbf{1}^{T} \xi \le n\nu}\min_{p\in\Delta^{n}}
\sum_{i=1}^{n} p_{i} \left( y_{i}\inner{w+v}{\Phi\left(x_{i}\right)} + \xi_{i}
\right)
\end{equation*}
Substituting the particular value $p^{*}$ for $p$ can only increase the RHS,
so:
\begin{align*}
f\left(w+v\right) \le & \max_{\xi\succeq 0, \mathbf{1}^{T} \xi \le n\nu}
\sum_{i=1}^{n} p^{*}_{i} \left( y_{i}\inner{w+v}{\Phi\left(x_{i}\right)} +
\xi_{i} \right) \\
\le & \max_{\xi\succeq 0, \mathbf{1}^{T} \xi \le n\nu} \sum_{i=1}^{n} p^{*}_{i}
\left( y_{i}\inner{w}{\Phi\left(x_{i}\right)} + \xi_{i} \right)
+ \sum_{i=1}^{n} p^{*}_{i}y_{i}\inner{v}{\Phi\left(x_{i}\right)}
\end{align*}
Because $p^{*}$ is minimax-optimal at $w$:
\begin{align*}
f\left(w+v\right) \le & f\left(w\right) + \sum_{i=1}^{n}
p^{*}_{i}y_{i}\inner{v}{\Phi\left(x_{i}\right)} \\
\le & f\left(w\right) +
\inner{v}{\sum_{i=1}^{n}p^{*}_{i}y_{i}\Phi\left(x_{i}\right)}
\end{align*}
So $\sum_{i=1}^{n} p^{*}_{i}y_{i}\Phi\left(x_{i}\right)$ is a supergradient of
$f$.
\end{splitproof}

This suggests a simple method for obtaining unbiased estimates of
supergradients of $f(w)$: sample a training index $i$ with probability
$p^{*}_{i}$, and take the stochastic supergradient to be
$y_{i}\Phi\left(x_{i}\right)$. The only remaining question is how one
finds a minimax optimal $p^{*}$.

For any $\xi$, a solution of $\min_{p \in \Delta^n} p^T(x+\xi)$ must put all of
the probability mass on those indices $i$ for which $c_{i}+\xi_{i}$ is
minimized. Hence, an optimal $\xi^{*}$ will maximize the minimal value of
$c_{i}+\xi^{*}_{i}$. This is illustrated in Figure \ref{fig:svm-sbp:water}. The
intuition is that the total mass $n\nu$ available to $\xi$ is distributed among
the indices as if this volume of water were poured into a basin with height
$c_{i}$. The result is that the indices $i$ with the lowest responses have
columns of water above them such that the common surface level of the water is
$\gamma$.

Once the ``water level'' $\gamma$ has been determined, we may find the
probability distribution $p$. Note first that, in order for $p$ to be optimal
at $\xi^{*}$, it must be supported on a subset of those indices $i$ for which
$c_{i}+\xi^{*}_{i}=\gamma$, since any such choice results in
$p^{T}\left(c+\xi^{*}\right)=\gamma$, while any choice supported on another set
of indices must have $p^{T}\left(c+\xi^{*}\right)>\gamma$.

However, merely being supported on this set is insufficient for minimax
optimality. If $i$ and $j$ are two indices with $\xi^{*}_{i},\xi^{*}_{j}>0$,
and $p_{i}>p_{j}$, then $p^{T}\left(x+\xi^{*}\right)$ could be made larger by
increasing $\xi^{*}_{i}$ and decreasing $\xi^{*}_{j}$ by the same amount.
Hence, we must have that $p^{*}$ takes on the constant value $q$ on all indices
$i$ for which $\xi^{*}_{i}>0$.
What about if $c_{i}=\gamma$ (and therefore $\xi^{*}_{i}=0$)? For such indices,
$\xi^{*}_{i}$ cannot be decreased any further, due to the nonnegativity
constraint on $\xi$, so we may have that $p_{i}<q$. If $p_{i}>q$, however, then
a similar argument to the above shows that $p$ is not minimax optimal.

The final characterization of minimax optimal probability distributions is that
$p^{*}_{i}\le q$ for all indices $i$ such that $c_{i}=\gamma$, and that
$p^{*}_{i}=q$ if $c_{i}<\gamma0$. This is illustrated in the lower portion of
Figure \ref{fig:svm-sbp:water}. In particular, the uniform distribution over all
indices such that $c_{i}<\gamma$ is minimax optimal.

\begin{algorithm}[t]

\begin{pseudocode}
\codename $\code{find\_gamma}\left( C:\R^n, n\nu:\R \right)$\\
\codeline $low := 1$; $up := n$;\\
\codeline $low\_max := -\infty$; $low\_sum := 0$;\\
\codeline $\code{while } low < up$\\
\codeline \>$mid := \code{partition}( C\left[ low:up \right] )$;\\
\codeline \>$mid\_max := \max\left( low\_max, C\left[ low:\left( mid - 1 \right) \right] \right)$;\\
\codeline \>$mid\_sum := low\_sum + \sum C\left[ low:\left( mid - 1 \right) \right]$;\\
\codeline \>$\code{if } mid\_max \cdot \left( mid - 1 \right) - mid\_sum \ge n\nu \code{ then}$\\
\codeline \>\>$up := mid - 1$;\\
\codeline \>$else$\\
\codeline \>\>$low := mid$; $low\_max := mid\_ max$; $low\_sum := mid\_sum$;\\
\codeline $\code{return } \left( n\nu - low\_max \cdot \left( low - 1 \right) + low\_sum \right) / \left( low - 1 \right) + low\_max$;
\end{pseudocode}

\caption{
Divide-and-conquer algorithm for finding the ``water level'' $\gamma$ from an
array of responses $C$ and total volume $n\nu$. The $\code{partition}$ function
chooses a pivot value from the array it receives as an argument (the median
would be ideal), places all values less than the pivot at the start of the
array, all values greater at the end, and returns the index of the pivot in the
resulting array.
}

\label{alg:svm-sbp:water}

\end{algorithm}

It is straightforward to find the water level $\gamma$ in linear time once the
responses $c_i$ are sorted (as in Figure \ref{fig:svm-sbp:water}), i.e.~with a total
runtime of $O(n \log n)$ due to sorting. It is also possible to find the water
level $\gamma$ in linear time, without sorting the responses, using a
divide-and-conquer algorithm (Algorithm \ref{alg:svm-sbp:water}).
This algorithm works by subdividing the set of responses into those less than,
equal to and greater than a pivot value (if one uses the median, which can be
found in linear time using e.g. the median-of-medians algorithm
\citep{BlumFlPrRiTa73}, then the overall will be linear in $n$). Then, it
calculates the size, minimum and sum of each of these subsets, from which the
total volume of the water required to cover the subsets can be easily
calculated. It then recurses into the subset containing the point at which a
volume of $n\nu$ just suffices to cover the responses, and continues until
$\gamma$ is found.

\subsection{Putting it Together}\label{subsec:svm-sbp:slack-constrained-sgd}

\begin{algorithm}[t]

\begin{pseudocode}
\codename $\code{optimize}\left( n:\N, d:\N, x_{1},\dots,x_{n}:\R^d, y_{1},\dots,y_{n}:\left\{\pm 1\right\}, T:\N, \nu:\R_{+}, K:\R^d\times\R^d\rightarrow\R_{+} \right)$\\
\codeline $\eta_{0} := 1 / \sqrt{\max_{i}K\left(x_{i},x_{i}\right)}$;\\
\codeline $\alpha^{(0)} := 0^{n}$; $c^{(0)} := 0^{n}$; $r_{0} := 0$;\\
\codeline $\code{for } t := 1 \code{ to } T$\\
\codeline \>$\eta_{t} := \eta_{0}/\sqrt{t}$;\\
\codeline \>$\gamma := \code{find\_gamma}\left( c^{(t-1)}, n\nu \right)$;\\
\codeline \>$\code{sample } i \sim \code{uniform} \left\{ j : c^{(t-1)}_{j} < \gamma \right\}$;\\
\codeline \>$\alpha^{(t)} := \alpha^{(t-1)} + \eta_{t} e_{i}$;\\
\codeline \>$r_{t}^2 := r_{t-1}^2 + 2 \eta_{t} c^{(t-1)}_{i} + \eta_{t}^{2} K\left(x_{i},x_{i}\right)$;\\
\codeline \>$\code{for } j=1 \code{ to } n$\\
\codeline \>\>$c^{(t)}_{j} := c^{(t-1)}_{j} + \eta_{t} y_{i} y_{j} K\left(x_{i},x_{j}\right)$;\\
\codeline \>$\code{if } \left( r_{t} > 1 \right) \code{ then}$\\
\codeline \>\>$\alpha^{(t)} := \left(1/r_{t}\right) \alpha^{(t)}$; $c^{(t)} := \left(1/r_{t}\right) c^{(t)}$; $r_{t} := 1$;\\
\codeline $\bar{\alpha} := \frac{1}{T} \sum_{t=1}^{T} \alpha^{(t)}$; $\bar{c} := \frac{1}{T} \sum_{t=1}^{T} c^{(t)}$; $\gamma := \code{find\_gamma}\left( \bar{c}, n\nu \right)$;\\
\codeline $\code{return } \bar{\alpha}/\gamma$;
\end{pseudocode}

\caption{
Stochastic gradient ascent algorithm for optimizing the kernelized version of
Problem \ref{eq:svm-sbp:slack-constrained-objective}. Here, $e_{i}$ is the
$i$th standard unit basis vector. The $\code{find\_gamma}$ subroutine finds the
``water level'' $\gamma$ from the vector of responses $c$ and total volume
$n\nu$.
}

\label{alg:svm-sbp:slack-constrained-sgd}

\end{algorithm}

We are now ready to summarize the SBP algorithm. Starting from $w^{(0)}=0$ (so
both $\alpha^{(0)}$ and all responses are zero), each iteration proceeds as
follows:
\begin{enumerate}
\item Find $p^*$ by finding the ``water level'' $\gamma$ from the responses
(Section \ref{subsec:svm-sbp:minimax-optimality}), and taking $p^*$ to be uniform on
those indices for which $c_i\leq \gamma$.
\item Sample $j \sim p^*$.
\item Update $w^{(t+1)} \leftarrow
\mathcal{P}\left(w^{(t)}+\eta_{t}y_{j}\Phi\left(x_{j}\right)\right)$, where
$\mathcal{P}$ projects onto the unit ball and $\eta_{t} = \frac{1}{\sqrt{t}}$.
This is done by first increasing $\alpha \leftarrow\alpha + \eta_{t}$, updating
the responses $c$ accordingly, and projecting onto the set $\norm{w} \le 1$.
\end{enumerate}
Detailed pseudo-code may be found in Algorithm
\ref{alg:svm-sbp:slack-constrained-sgd}---observe that it is an instance of the
traditional SVM optimization outline of Section
\ref{sec:svm-introduction:traditional} of Chapter \ref{ch:svm-introduction}
(compare to Algorithm \ref{alg:svm-introduction:traditional-kernel}).  Updating
the responses requires $O(n)$ kernel evaluations (the most computationally
expensive part) and all other operations require $O(n)$ scalar arithmetic
operations.

Since at each iteration we are just updating using an unbiased
estimator of a supergradient, we can rely on the standard analysis of
stochastic gradient descent to bound the suboptimality after $T$
iterations: 

\medskip
\begin{splitlemma}{lem:svm-sbp:zinkevich-batch}

For any $T,\delta>0$, after $T$ iterations of the Stochastic Batch
Perceptron, with probability at least $1-\delta$, the average iterate
$\bar{w} = \frac{1}{T}\sum_{t=1}^{T} w^{(t)}$ (corresponding to
$\bar{\alpha} = \frac{1}{T} \sum_{t=1}^{T} \alpha^{(t)}$), satisfies:
%
%
$
f\left(\bar{w}\right) \geq 
\sup_{\norm{w}\leq 1} f\left(w \right) -
O\left(\sqrt{\frac{1}{T}\log\frac{1}{\delta}}\right).
$
%

\end{splitlemma}
\begin{splitproof}

Define $h=-\frac{1}{r}f$, where $f$ is as in Equation \ref{eq:svm-sbp:fw-definition}.
Then the stated update rules constitute an instance of Zinkevich's algorithm,
in which steps are taken in the direction of stochastic subgradients $g^{(t)}$
of $h$ at $w^{(t)}=\sum_{i=1}^{n}\alpha_{i}y_{i}\Phi\left(x_{i}\right)$.

The claimed result follows directly from \citet[Theorem 1]{Zinkevich03}
combined with an online-to-batch conversion analysis in the style of
\citet[Lemma 1]{CesaCoGe01}.
\end{splitproof}

Since each iteration is dominated by $n$ kernel evaluations, and thus takes
linear time (we take a kernel evaluation to require $O(1)$ time), the overall
runtime to achieve $\epsilon$ suboptimality for Problem
\ref{eq:svm-sbp:slack-constrained-objective} is $O(n/\epsilon^2)$.

\subsection{Learning Runtime}\label{subsec:svm-sbp:slack-constrained-runtime}

The previous section has given us the runtime for obtaining a certain
suboptimality of Problem \ref{eq:svm-sbp:slack-constrained-objective}. However, since
the suboptimality in this objective is not directly comparable to the
suboptimality of other scalarizations, e.g. Problem
\ref{eq:svm-introduction:regularized-objective}, we follow \citet{BottouBo07,ShalevSr08}, and
analyze the runtime required to achieve a desired generalization performance,
instead of that to achieve a certain optimization accuracy on the empirical
optimization problem.

Recall that our true learning objective is to find a predictor with low
generalization error $\loss[\zeroone]{g_w} = \probability[(x,y)]{ y\inner{w}{\Phi(x)} \leq
0 }$ with respect to some unknown distribution over $x,y$ based on a training
set drawn \iid from this distribution. We assume that there exists some
(unknown) predictor $u$ that has norm $\norm{u}$ and low expected hinge loss
$L^* = \loss[\hinge]{g_u} = \expectation{\ell(y\inner{u}{\Phi(x)})}$
(otherwise, there is no point in training a SVM), and analyze the runtime to
find a predictor $w$ with generalization error $\loss[\zeroone]{g_w} \leq
L^*+\epsilon$.

In order to understand the SBP runtime, we will follow \citet{HazanKoSr11} by
optimizing the empirical SVM bi-criterion Problem
\ref{eq:svm-introduction:bi-criterion-objective} such that:
\begin{align}
\label{eq:svm-sbp:slack-constrained-target} \norm{w} & \le 2\norm{u} ~~;~~ 
 \emploss[\hinge]{g_w} - \emploss[\hinge]{g_u} \le \epsilon/2
\end{align}
which suffices to ensure $\loss[\zeroone]{g_w} \leq L^*+\epsilon$ with high
probability.
Referring to Lemma \ref{lem:svm-sbp:slack-constrained-suboptimality}, Equation
\ref{eq:svm-sbp:slack-constrained-target} will be satisfied for
$\bar{w}/\gamma$ as long as $\bar{w}$ optimizes the objective of Problem
\ref{eq:svm-sbp:slack-constrained-objective} to within:
\begin{equation}
\label{eq:svm-sbp:reqbarepsilon} \bar{\epsilon} = \frac{\epsilon/2}{\norm{u}
(\emploss[\hinge]{g_u}+\epsilon/2)} \geq \Omega\left(\frac{\epsilon}{\norm{u}
(\emploss[\hinge]{g_u}+\epsilon)}\right)
\end{equation}
The following theorem performs this analysis, and combines it with a bound on
the required sample size from Chapter \ref{ch:svm-introduction} to yield a
generalization bound:

\medskip
\medskip
\begin{splittheorem}{thm:svm-sbp:slack-constrained-runtime}

Let $u$ be an arbitrary linear classifier in the RKHS and let $\epsilon>0$ be
given. There exist values of the training size $n$, iteration count $T$ and parameter
$\nu$ such that Algorithm \ref{alg:svm-sbp:slack-constrained-sgd} finds a solution $w =
\sum_{i=1}^{n} \alpha_{i}y_{i}\Phi\left(x_{i}\right)$ satisfying:
\begin{equation*}
\loss[\zeroone]{g_w} \le \loss[\hinge]{g_u} + \epsilon
\end{equation*}
where $\loss[\zeroone]{g_w}$ and $\loss[\hinge]{g_u}$ are the expected 0/1 and hinge
losses, respectively, after performing the following number of kernel
evaluations:
\begin{equation}
\ifproofsection
\notag
\else
\label{eq:svm-sbp:sbp-runtime}
\fi
\mbox{\#K} = \tilde{O}\left( \left( \frac{\loss[\hinge]{g_u} +
\epsilon}{\epsilon} \right)^{3} \frac{ \norm{u}^{4} }{\epsilon}
\log^{2}\frac{1}{\delta} \right)
\end{equation}
with the size of the support set of $w$ (the number nonzero elements in
$\alpha$) satisfying:
\begin{equation}
\ifproofsection
\notag
\else
\label{eq:svm-sbp:sbp-support}
\fi
\mbox{\#S} = O\left( \left( \frac{\loss[\hinge]{g_u} +
\epsilon}{\epsilon} \right)^{2} \norm{u}^{2} \log\frac{1}{\delta} \right)
\end{equation}
the above statements holding with probability $1-\delta$.

\end{splittheorem}
\begin{splitproof}

For a training set of size $n$, where:
\begin{equation*}
n = \tilde{O}\left( \left( \frac{\loss[\hinge]{g_u} +
\epsilon}{\epsilon} \right) \frac{ B^{2} }{\epsilon} \log\frac{1}{\delta}
\right)
\end{equation*}
taking $B=2\norm{u}$ in Lemma \ref{lem:svm-introduction:generalization-from-expected-loss} gives
that $\emploss[\hinge]{g_u} \le \loss[\hinge]{g_u} + \epsilon$
and $\loss[\zeroone]{g_w} \le \loss[\hinge]{g_u} + 2\epsilon$
with probability $1-\delta$ over the training sample, uniformly for all linear
classifiers $w$ such that $\norm{w} \le B$ and $\emploss[\hinge]{g_w}
- \emploss[\hinge]{g_ u } \le \epsilon$. We will now show that these inequalities are
satisfied by the result of Algorithm \ref{alg:svm-sbp:slack-constrained-sgd}. Define:
\begin{equation*}
\hat{w}^{*} = \underset{w:\norm{w}\le\norm{u}}{\argmin} \emploss[\hinge]{g_w}
\end{equation*}
Because $\hat{w}^{*}$ is a Pareto optimal solution of the bi-criterion
objective of Problem \ref{eq:svm-introduction:bi-criterion-objective}, if we choose the
parameter $\nu$ to the slack-constrained objective (Problem
\ref{eq:svm-sbp:slack-constrained-objective}) such that $\norm{\hat{w}^{*}}\nu =
\emploss[\hinge]{g_{\hat{w}^*}}$, then the optimum of the
slack-constrained objective will be equivalent to $\hat{w}^{*}$ (Lemma
\ref{lem:svm-sbp:slack-constrained-suboptimality}). As was discussed in Section
\ref{subsec:svm-sbp:slack-constrained-runtime}, We will use Lemma
\ref{lem:svm-sbp:zinkevich-batch} to find the number of iterations $T$ required to
satisfy Equation \ref{eq:svm-sbp:reqbarepsilon} (with $u=\hat{w}^{*}$).  This yields
that, if we perform $T$ iterations of Algorithm
\ref{alg:svm-sbp:slack-constrained-sgd}, where $T$ satisfies the following:
\begin{equation}
\label{eq:svm-sbp:slack-constrained-time} T \ge O\left(
\left(\frac{\emploss[\hinge]{g_{\hat{w}^*}} + \epsilon }{\epsilon}\right)^2
\norm{\hat{w}^{*}}^2 \log\frac{1}{\delta} \right)
\end{equation}
then the resulting solution $w=\bar{w}/\gamma$ will satisfy:
\begin{align*}
\norm{w} & \le 2\norm{\hat{w}^{*}} \\
\notag \emploss[\hinge]{g_w} - \emploss[\hinge]{g_{\hat{w}^*}} & \le \epsilon
\end{align*}
with probability $1-\delta$. That is:
\begin{align*}
\norm{w} & \le 2 \norm{\hat{w}^{*}} \\
& \le B
\end{align*}
and:
\begin{align*}
\emploss[\hinge]{g_w} & \le \emploss[\hinge]{g_{\hat{w}^*}} + \epsilon \\
& \le \emploss[\hinge]{g_ u } + \epsilon
\end{align*}
These are precisely the bounds on $\norm{w}$ and $\emploss[\hinge]{g_w}$ which
we determined (at the start of the proof) to be necessary to permit us to apply
Lemma \ref{lem:svm-introduction:generalization-from-expected-loss}. Each of the
$T$ iterations requires $n$ kernel evaluations, so the product of the bounds on
$T$ and $n$ bounds the number of kernel evaluations (we may express Equation
\ref{eq:svm-sbp:slack-constrained-time} in terms of $\loss[\hinge]{g_u}$ and
$\norm{u}$ instead of $\emploss[\hinge]{g_{\hat{w}^*}}$ and $\norm{\hat{w}^*}$,
since $\emploss[\hinge]{g_{\hat{w}^*}} \le \emploss[\hinge]{g_u} \le
\loss[\hinge]{g_u}+\epsilon$ and $\norm{\hat{w}^*} \le \norm{u}$).

Because each iteration will add at most one new element to the support set, the
size of the support set is bounded by the number of iterations, $T$.

This discussion has proved that we can achieve suboptimality $2\epsilon$ with
probability $1-2\delta$ with the given $\mbox{\#K}$ and $\mbox{\#S}$. Because
scaling $\epsilon$ and $\delta$ by $1/2$ only changes the resulting bounds by
constant factors, these results apply equally well for suboptimality $\epsilon$
with probability $1-\delta$.
\end{splitproof}

In the realizable case, where $L^*=0$, or more generally when we would like to
reach $L^*$ to within a small constant multiplicative factor, we have
$\epsilon=\Omega(L^*)$, the first factors in Equation
\ref{eq:svm-sbp:sbp-runtime} is a constant, and the runtime simplifies to
$\tilde{O}(\norm{u}^4/\epsilon)$. As we will see in Section
\ref{sec:svm-sbp:comparison}, this is a better guarantee than that enjoyed by
any other SVM optimization approach.

\subsection{Including an Unregularized Bias}\label{subsec:svm-sbp:unregularized-bias}

\begin{figure}

\begin{center}
\includegraphics[width=0.9\textwidth]{}
\end{center}

\caption{
Illustration of how one finds the ``water level'' in a problem with an
unregularized bias. The two curves represent the heights of two basins of
heights $c_{i}-b$ and $c_{i}+b$, corresponding to the negative and positive
examples, respectively, with the bias $b$ determining the relative heights of
the basins. Optimizing over $\xi$ and $p$ corresponds to filling these two
basins with water of total volume $n\nu$ and common water level $\gamma$, while
optimizing $b$ corresponds to ensuring that water covers the same number of
indices in each basin.
}

\label{fig:svm-sbp:bias}

\end{figure}

\begin{algorithm}[t]

\begin{pseudocode}
\codename $\code{find\_gamma\_and\_bias}\left( y:\left\{\pm1\right\}^n, C:\R^n, n\nu:\R \right)$\\
\codeline $C^+ := \{ C[i] : y[i] = +1 \}$; $n^+ := \abs{C^+}$; $low^+ := 1$; $up^+ := n^+$;\\
\codeline $C^- := \{ C[i] : y[i] = -1 \}$; $n^- := \abs{C^-}$; $low^- := 1$; $up^- := n^-$;\\
\codeline $low\_max^+ := -\infty$; $low\_sum^+ := 0$; $low\_max^- := -\infty$; $low\_sum^- := 0$;\\
\codeline $mid^+ := \code{partition}( C^+\left[ low^+:up^+ \right] )$;\\
\codeline $mid^- := \code{partition}( C^-\left[ low^-:up^- \right] )$;\\
\codeline $mid\_max^+ := \max\left( C\left[ low^+:\left( mid^+ - 1 \right) \right] \right)$; $mid\_sum^+ := \sum C\left[ low^+:\left( mid^+ - 1 \right) \right]$;\\
\codeline $mid\_max^- := \max\left( C\left[ low^-:\left( mid^- - 1 \right) \right] \right)$; $mid\_sum^- := \sum C\left[ low^-:\left( mid^- - 1 \right) \right]$;\\
\codeline $\code{while } \left( low^+ < up^+ \right) \code{ or } \left( low^- < up^- \right)$\\
\codeline \>$direction^+ := 0$; $direction^- := 0$;\\
\codeline \>$\code{if } mid^+ < low^- \code{ then } direction^+ = 1$;\\
\codeline \>$\code{else if } mid^+ > up^- \code{ then } direction^+ = -1$;\\
\codeline \>$\code{if } mid^- < low^+ \code{ then } direction^- = 1$;\\
\codeline \>$\code{else if } mid^- > up^+ \code{ then } direction^- = -1$;\\
\codeline \>$\code{if } direction^+ = direction^- = 0 \code{ then}$\\
\codeline \>\>$volume^+ := mid\_max^+ \cdot \left( mid^+ - 1 \right) - mid\_sum^+$;\\
\codeline \>\>$volume^- := mid\_max^- \cdot \left( mid^- - 1 \right) - mid\_sum^-$;\\
\codeline \>\>$\code{if } volume^+ + volume^- \ge n\nu \code{ then}$\\
\codeline \>\>\>$\code{if } mid^+ > mid^- \code{ then } direction^+ = -1$;\\
\codeline \>\>\>$\code{else if } mid^- > mid^+ \code{ then } direction^- = -1$;\\
\codeline \>\>\>$\code{else if } up^+ - low^+ > up^- - low^- \code{ then } direction^+ = -1$;\\
\codeline \>\>\>$\code{else } direction^- = -1$;\\
\codeline \>\>$\code{else}$\\
\codeline \>\>\>$\code{if } mid^+ < mid^- \code{ then } direction^+ = 1$;\\
\codeline \>\>\>$\code{else if } mid^- < mid^+ \code{ then } direction^- = 1$;\\
\codeline \>\>\>$\code{else if } up^+ - low^+ > up^- - low^- \code{ then } direction^+ = 1$;\\
\codeline \>\>\>$\code{else } direction^- = 1$;\\
\codeline \>$\code{if } direction^+ \ne 0 \code{ then}$\\
\codeline \>\>$\code{if } direction^+ > 0 \code{ then } up^+ := mid^+ - 1$;\\
\codeline \>\>$\code{else } low^+ := mid^+$; $low\_max^+ := mid\_max^+$; $low\_sum^+ := mid\_sum^+$;\\
\codeline \>\>$mid^+ := \code{partition}( C^+\left[ low^+:up^+ \right] )$;\\
\codeline \>\>$mid\_max^+ := \max\left( low\_max^+, C\left[ low^+:\left( mid^+ - 1 \right) \right] \right)$;\\
\codeline \>\>$mid\_sum^+ := low\_sum^+ + \sum C\left[ low^+:\left( mid^+ - 1 \right) \right]$;\\
\codeline \>$\code{if } direction^- \ne 0 \code{ then}$\\
\codeline \>\>$\code{if } direction^- > 0 \code{ then } up^- := mid^- - 1$;\\
\codeline \>\>$\code{else } low^- := mid^-$; $low\_max^- := mid\_max^-$; $low\_sum^- := mid\_sum^-$;\\
\codeline \>\>$mid^- := \code{partition}( C^-\left[ low^-:up^- \right] )$;\\
\codeline \>\>$mid\_max^- := \max\left( low\_max^-, C\left[ low^-:\left( mid^- - 1 \right) \right] \right)$;\\
\codeline \>\>$mid\_sum^- := low\_sum^- + \sum C\left[ low^-:\left( mid^- - 1 \right) \right]$;\\
\codeline // at this point $low^+ = low^- = up^+ = up^-$\\
\codeline $\Delta\gamma := \left( n\nu + low\_sum^+ + low\_sum^- \right) / \left( low^+ - 1 \right) - low\_max^+ - low\_max^-$;\\
\codeline $\code{if } low^+ < n^+ \code{ then } \Delta\gamma^+ := \min\left( \Delta\gamma, C^+[ low^+ ] - low\_max^+ \right) \code{ else } \Delta\gamma^+ := \Delta\gamma$;\\
\codeline $\code{if } low^- < n^- \code{ then } \Delta\gamma^- := \min\left( \Delta\gamma, C^-[ low^- ] - low\_max^- \right) \code{ else } \Delta\gamma^- := \Delta\gamma$;\\
\codeline $\gamma^+ := low\_max^+ + 0.5 \cdot \left( \Delta\gamma + \Delta\gamma^+ -\Delta\gamma^- \right)$;\\
\codeline $\gamma^- := low\_max^- + 0.5 \cdot \left( \Delta\gamma - \Delta\gamma^+ +\Delta\gamma^- \right)$;\\
\codeline $\gamma := 0.5 \cdot \left( \gamma^+ + \gamma^- \right)$; $b := 0.5 \cdot \left( \gamma^- - \gamma^+ \right)$;\\
\codeline $\code{return } \left( \gamma, b \right)$;
\end{pseudocode}

\caption{
Divide-and-conquer algorithm for finding the ``water level'' $\gamma$ and bias
$b$ from an array of labels $y$, array of responses $C$ and total volume
$n\nu$, for a problem with an unregularized bias. The $\code{partition}$
function is as in Algorithm \ref{alg:svm-sbp:water}.
}

\label{alg:svm-sbp:water-bias}

\end{algorithm}

We have so far considered only homogeneous SVMs, without an unregularized bias
term.  It is possible to use the SBP to train SVMs with a bias term, i.e.~where
one seeks a predictor of the form $x \mapsto (\inner{w}{\Phi(x)}+b)$.
This is done by including the optimization over $b$ inside the objective
function $f(w)$, together with the optimization over the slack variables
$\xi$. That is, we now take stochastic gradient steps on:
\begin{equation}
\label{eq:svm-sbp:fw-definition-bias} f(w) = \max_{\begin{array}{c}
\scriptstyle b\in\R, \xi\succeq 0 \\ \scriptstyle \mathbf{1}^{T}\xi\le n\nu
\end{array}}\min_{p\in\Delta^{n}} \sum_{i=1}^{n} p_{i} \left(
y_{i}\inner{w}{\Phi(x_{i})} + y_{i} b + \xi_{i} \right)
\end{equation}
Lemma \ref{lem:svm-sbp:slack-constrained-supergradient} still holds, but we
must now find minimax optimal $p^*$,$\xi^*$ and $b^*$. This can be accomplished
using a modified ``water filling'' involving two basins, one containing the
positively-classified examples, and the other the negatively-classified ones.

As before, finding the water level $\gamma$ reduces to finding minimax-optimal
values of $p^{*}$, $\xi^{*}$ and $b^{*}$. The characterization of such
solutions is similar to that in the case without an unregularized bias.  In
particular, for a fixed value of $b$, we may still think about ``pouring water
into a basin'', except that the height of the basin is now $c_{i}+y_{i}b$,
rather than $c_{i}$.

When $b$ is not fixed it is easier to think of \emph{two} basins, one
containing the positive examples, and the other the negative examples. These
basins will be filled with water of a total volume of $n\nu$, to a common water
level $\gamma$. The relative heights of the two basins are determined by $b$:
increasing $b$ will raise the basin containing the positive examples, while
lowering that containing the negative examples by the same amount. This is
illustrated in Figure \ref{fig:svm-sbp:bias}.

It remains only to determine what characterizes a minimax-optimal value of $b$.
Let $k^{+}$ and $k^{-}$ be the number of elements covered by water in the
positive and negative basins, respectively, for some $b$. If $k^{+}>k^{-}$,
then raising the positive basin and lowering the negative basin by the same
amount (i.e.  increasing $b$) will raise the overall water level, showing that
$b$ is not optimal. Hence, for an optimal $b$, water must cover an equal number
of indices in each basin. Similar reasoning shows that an optimal $p^{*}$ must
place equal probability mass on each of the two classes.

Once more, the resulting problem is amenable to a divide-and-conquer approach.
The water level $\gamma$ and bias $b$ will be found in $O(n)$ time by Algorithm
\ref{alg:svm-sbp:water-bias}, provided that the $\code{partition}$ function
chooses the median as the pivot.

\section{Relationship to Other Methods}\label{sec:svm-sbp:comparison}

\begin{table}

\begin{small}
\begin{center}
\begin{tabular}{l|cc}
\hline
& Overall & $\epsilon = \Omega\left( L^* \right)$ \\
\hline
SBP & $\left( \frac{L^* + \epsilon}{\epsilon} \right)^3 \frac{R^4}{\epsilon}$ & $\frac{R^4}{\epsilon}$ \\
SGD on Problem \ref{eq:svm-introduction:norm-constrained-objective} & $\left( \frac{L^* + \epsilon}{\epsilon} \right) \frac{R^4}{\epsilon^3}$ & $\frac{R^4}{\epsilon^3}$ \\
Dual Decomposition & $\left( \frac{L^* + \epsilon}{\epsilon} \right)^2 \frac{R^4}{\epsilon^2}$ & $\frac{R^4}{\epsilon^2}$  \\
Perceptron + Online-to-Batch & $\left( \frac{L^* + \epsilon}{\epsilon} \right)^3 \frac{R^4}{\epsilon}$ & $\frac{R^4}{\epsilon}$ \\
Random Fourier Features & $\left( \frac{L^* + \epsilon}{\epsilon} \right) \frac{d R^4}{\epsilon^3}$ & $\frac{d R^4}{\epsilon^3}$ \\
\hline
\end{tabular}
\end{center}
\end{small}

\caption{
Upper bounds, up to constant and log factors, on the runtime (number of kernel
evaluations, or random Fourier feature constructions in the last row) required
to achieve $\loss[\zeroone]{g_w}\leq L^* +\epsilon$, where $R$ bounds the norm
of a reference classifier achieving hinge loss $L^*$. See Chapter
\ref{ch:svm-introduction} (in particular Sections
\ref{sec:svm-introduction:traditional} and
\ref{sec:svm-introduction:non-traditional}, as well as Table
\ref{tab:svm-introduction:runtimes}) for derivations of the non-SBP bounds.
}

\label{tab:svm-sbp:bounds}

\end{table}

We will now discuss the relationship between the SBP and several other SVM
optimization approaches, highlighting similarities and key differences, and
comparing their performance guarantees.

In Table \ref{tab:svm-sbp:bounds}, we compare the best known upper bounds on
the number of kernel evaluations required to achieve 0/1 generalization error
which is within $\epsilon$ of the hinge loss achieved by the best predictor,
under the assumption that one makes optimal choices of the training set size
$n$, iteration count $T$, and other algorithm parameters (such as $\lambda$ or
$\nu$).

\subsection{Traditional Optimization Algorithms}

The SBP is an instance of the ``traditional'' SVM optimization algorithm
outline described in Section \ref{sec:svm-introduction:traditional} of Chapter
\ref{ch:svm-introduction}. As can be seen in Table \ref{tab:svm-sbp:bounds},
the upper bound on the amount of computation required to find a solution which
generalizes well is better for the SBP than for any other known traditional SVM
optimizer. In particular, when $\epsilon = \Omega\left(L^*\right)$, the SBP
converges at a $\nicefrac{1}{\epsilon}$ rate, as compared to the
$\nicefrac{1}{\epsilon^2}$ rate enjoyed by the best alternative SVM optimizer
(the dual decomposition approach).

\subsection{Perceptron}

While the Perceptron is an online learning algorithm, it can also be used for
obtaining guarantees on the generalization error using an online-to-batch
conversion (e.g. \citet{CesaCoGe01}), as was described in Section
\ref{subsec:svm-introduction:non-traditional-perceptron} of Chapter
\ref{ch:svm-introduction}. Although the generalization bound which we derive
for the Perceptron is of the same order as that for the SBP (see Table
\ref{tab:svm-sbp:bounds}), the Perceptron does {\em not} converge to a Pareto
optimal solution to the bi-criterion SVM Problem
\ref{eq:svm-introduction:bi-criterion-objective}, and therefore cannot be
considered a SVM optimization procedure.

Furthermore, the online Perceptron generalization analysis relies on an
online-to-batch conversion, and is therefore valid only for a \emph{single}
pass over the data. If we attempt to run the Perceptron for multiple passes,
then it might begin to overfit uncontrollably. Hence, when applied to a fixed
dataset, the online Perceptron will occasionally ``run out of data'', in that
performing a single pass over the dataset will result in a poor classifier,
while too many passes will result in overfitting. Although the worst-case
theoretical guarantee obtained after a single pass is indeed similar to that
for an optimum of the SVM objective, in practice an optimum of the empirical
SVM optimization problem does seem to have significantly better generalization
performance.

With this said, the lack of explicit regularization may also be regarded as an
advantage, because the online Perceptron is \emph{parameter free}. In order to
make a full accounting of the cost of optimizing a SVM on a dataset with which
one is unfamiliar, one should not only consider the cost of optimizing a
particular instance of the SVM objective, but also that of performing a
parameter search to find a good value of the regularization parameter. Because
no such search is necessary for the online Perceptron, it is likely to
sometimes be preferable to a SVM optimizer, particularly when training
examples are abundant.

\subsection{Random Projections}

The random Fourier projection approach of \citet{RahimiRe07} can be used to
transform a kernel SVM problem into an approximately-equivalent linear SVM.
This algorithm was described in Section
\ref{subsec:svm-introduction:non-traditional-random-projections} of Chapter
\ref{ch:svm-introduction}. Unlike the other methods considered, which rely only
on black-box kernel accesses, Rahimi and Recht's projection technique can only
be applied on a certain class of kernel functions (shift-invariant kernels), of
which the Gaussian kernel is a member.

For $d$-dimensional feature vectors, and using a Gaussian kernel with parameter
$\sigma^2$, Rahimi and Recht's approach is to sample $v_{1},\dots,v_{D}\in\R^d$
independently according to $v_{i}\sim\mathcal{N}\left(0,I\right)$, and then
define the mapping $\tilde{\Phi}:\R^d\rightarrow\R^{2D}$ as:
\begin{align*}
{\tilde{\Phi}\left(x\right)}_{2i} = & \frac{1}{\sqrt{D}} \cos\left(
\frac{1}{\sigma} \inner{v_{i}}{x} \right) \\
{\tilde{\Phi}\left(x\right)}_{2i+1} = & \frac{1}{\sqrt{D}} \sin\left(
\frac{1}{\sigma} \inner{v_{i}}{x} \right)
\end{align*}
Then $\inner{\tilde{\Phi}\left(x_i\right)}{\tilde{\Phi}\left(x_j\right)} \approx
K\left(x_i,x_j\right)$, with the quality of this approximation improving with
increasing $D$.

Notice that computing each pair of Fourier features requires computing the
$d$-dimensional inner product $\inner{v}{x}$. For comparison, let us write the
Gaussian kernel in the following form:
\begin{align*}
K\left(x_i,x_j\right) =& \exp\left( -\frac{1}{2\sigma^{2}} \norm{x_i - x_j}^2
\right) \\ =& \exp\left( -\frac{1}{2\sigma^{2}} \left( \norm{x_i}^2 +
\norm{x_j}^2 - 2\inner{x_i}{x_j} \right) \right)
\end{align*}
The norms $\norm{x_i}$ may be cheaply precomputed, so the dominant cost of
performing a single Gaussian kernel evaluation is, likewise, that of the
$d$-dimensional inner product $\inner{x_i}{x_j}$.

This observation suggests that the computational cost of the use of Fourier
features may be directly compared with that of a kernel-evaluation-based SVM
optimizer in terms of $d$-dimensional inner products. The last row of Table
\ref{tab:svm-sbp:bounds} contains the upper bounds on the number of random
feature computations required to achieve $\epsilon$-sized generalization error,
and neglects the cost of optimizing the resulting linear SVM \emph{entirely}.
Despite this advantage, the upper bound on the performance of the SBP is still
far superior.

\subsection{SIMBA}

Recently, \citet{HazanKoSr11} presented SIMBA, a method for training {\em
linear} SVMs based on the same ``slack constrained'' scalarization (Problem
\ref{eq:svm-sbp:slack-constrained-objective}) we use here. SIMBA also fully optimizes
over the slack variables $\xi$ at each iteration, but differs in that, instead
of fully optimizing over the distribution $p$ (as the SBP does), SIMBA updates
$p$ using a stochastic mirror descent step. The predictor $w$ is then updated,
as in the SBP, using a random example drawn according to $p$. A SBP iteration
is thus in a sense more ``thorough'' then a SIMBA iteration. The SBP
theoretical guarantee
(Lemma \ref{lem:svm-sbp:zinkevich-batch}) is correspondingly better by a logarithmic
factor (compare to \citet[Theorem 4.3]{HazanKoSr11}). All else being equal, we
would prefer performing a SBP iteration over a SIMBA iteration.

For linear SVMs,
a SIMBA iteration can be performed in time $O(n+d)$. However, fully optimizing
$p$ as described in Section \ref{subsec:svm-sbp:minimax-optimality} requires the
responses $c_i$, and calculating or updating all $n$ responses would
require time $O(nd)$. In this setting, therefore, a SIMBA iteration is
much more efficient than a SBP iteration.

In the kernel setting, as was discussed in Section
\ref{sec:svm-introduction:traditional} of Chapter \ref{ch:svm-introduction},
calculating even a single response requires $O(n)$ kernel evaluation, which is
the same cost as updating \emph{all} responses after a change to a single
coordinate $\alpha_{i}$. This makes the responses essentially ``free'', and
gives an advantage to methods such as the SBP (and the dual decomposition
methods discussed below) which make use of the responses.

Although SIMBA is preferable for linear SVMs,
the SBP is preferable for kernelized SVMs.
It should also be noted that SIMBA relies heavily on having direct access to
features, and that it is therefore not obvious how to apply it directly in the
kernel setting.

\section{Experiments}\label{sec:svm-sbp:experiments}

\begin{table}

\begin{small}
\begin{center}
\begin{tabular}{lcc|ccc|ccc}
\hline
& & & \multicolumn{3}{c|}{Without unreg. bias} & \multicolumn{3}{c}{With unreg. bias} \\
Dataset & Train size $n$ & Test size & $\sigma^2$ & $\lambda$ & $\nu$ & $\sigma^2$ & $\lambda$ & $\nu$ \\
\hline
Reuters & $7770$   & $3229$  & $0.5$ & $\nicefrac{1}{n}$  & $6.34\times{10}^{-4}$ &        &                        &                        \\
Adult   & $31562$  & $16282$ & $10$  & $\nicefrac{1}{n}$  & $1.10\times{10}^{-2}$ & $100$  & $\nicefrac{1}{100n}$   & $5.79\times{10}^{-4}$  \\
MNIST   & $60000$  & $10000$ & $25$  & $\nicefrac{1}{n}$  & $2.21\times{10}^{-4}$ & $25$   & $\nicefrac{1}{1000n}$  & $6.42\times{10}^{-11}$ \\
Forest  & $522910$ & $58102$ &       &                    &                       & $5000$ & $\nicefrac{1}{10000n}$ & $7.62\times{10}^{-10}$ \\
\hline
\end{tabular}
\end{center}
\end{small}

\caption{
Datasets, downloaded from \url{http://leon.bottou.org/projects/lasvm}, and
parameters used in the experiments, in which we use a Gaussian kernel with
bandwidth $\sigma$. Reuters is also known as the "money\_fx" dataset. For the
multiclass MNIST dataset, we took the digit '8' to be the positive class, with
the other nine digits together being the negative class.
}

\label{tab:svm-sbp:datasets}

\end{table}

We compared the SBP to other ``traditional'' SVM optimization approaches on the
datasets in Table \ref{tab:svm-sbp:datasets}. We compared to Pegasos
\citep{ShalevSiSrCo10}, SDCA \citep{HsiehChLiKeSu08}, and SMO \citep{Platt98}
with a second order heuristic for working point selection \cite{FanChLi05}.
All of these algorithms were discussed in Section
\ref{sec:svm-introduction:traditional} Chapter \ref{ch:svm-introduction}. These
approaches work on the regularized formulation of Problem
\ref{eq:svm-introduction:regularized-objective} or its dual (Problem
\ref{eq:svm-introduction:dual-objective}). To enable comparison, the parameter
$\nu$ for the SBP was derived from $\lambda$ as
$\norm{\hat{w}^{*}}\nu=\frac{1}{n}\sum_{i=1}^{n}\ell\left(y_{i}\inner{w^{*}}{\Phi\left(x_{i}\right)}\right)$,
where $\hat{w}^*$ is the known (to us) optimum. 

\begin{figure}

\begin{center}
\begin{tabular}{ @{} L @{} H @{} H @{} }
& \large{Reuters} & \large{Adult} \\
\rotatebox{90}{\scriptsize{Test error}} &
\includegraphics[width=0.45\textwidth]{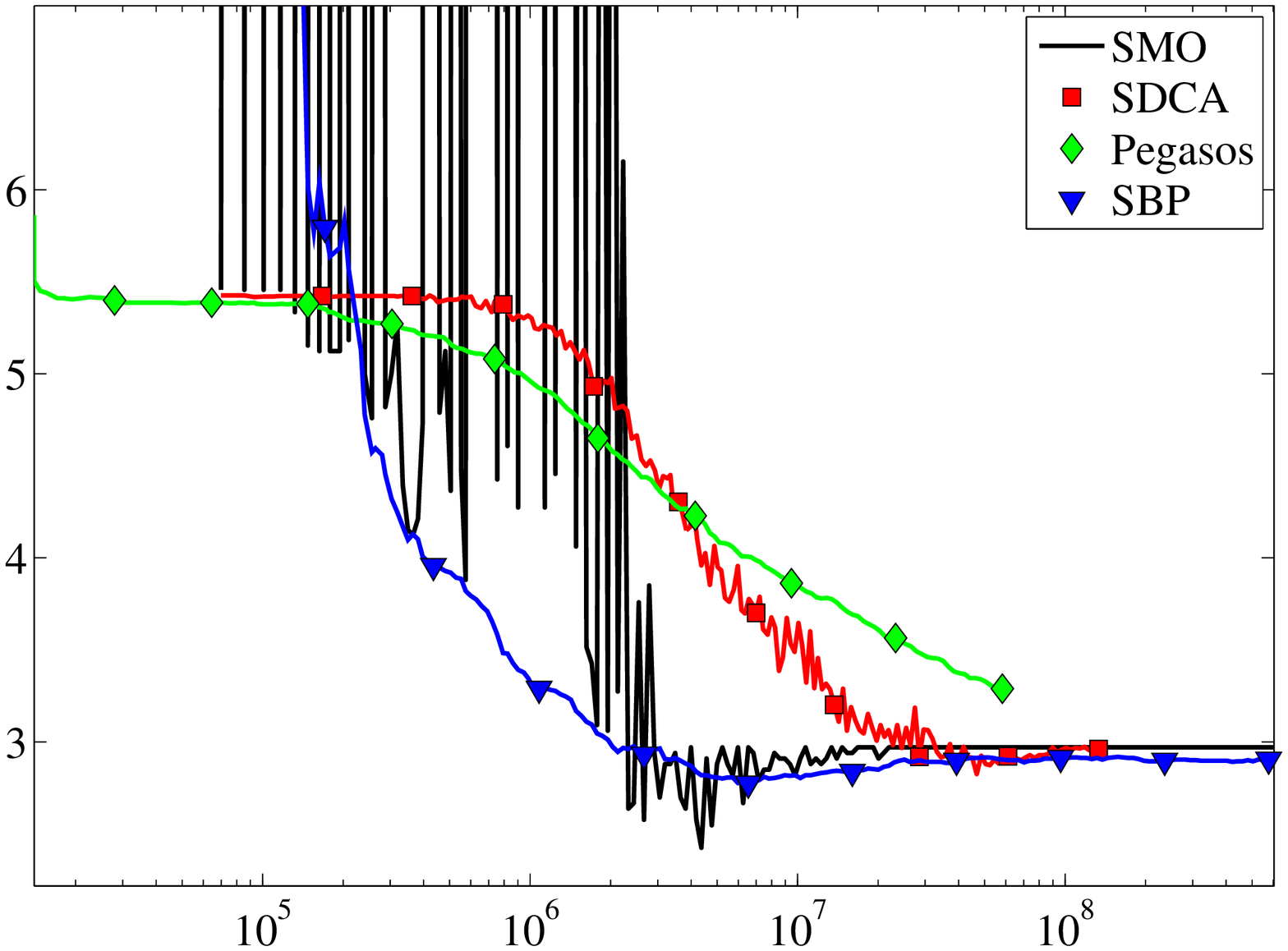} &
\includegraphics[width=0.45\textwidth]{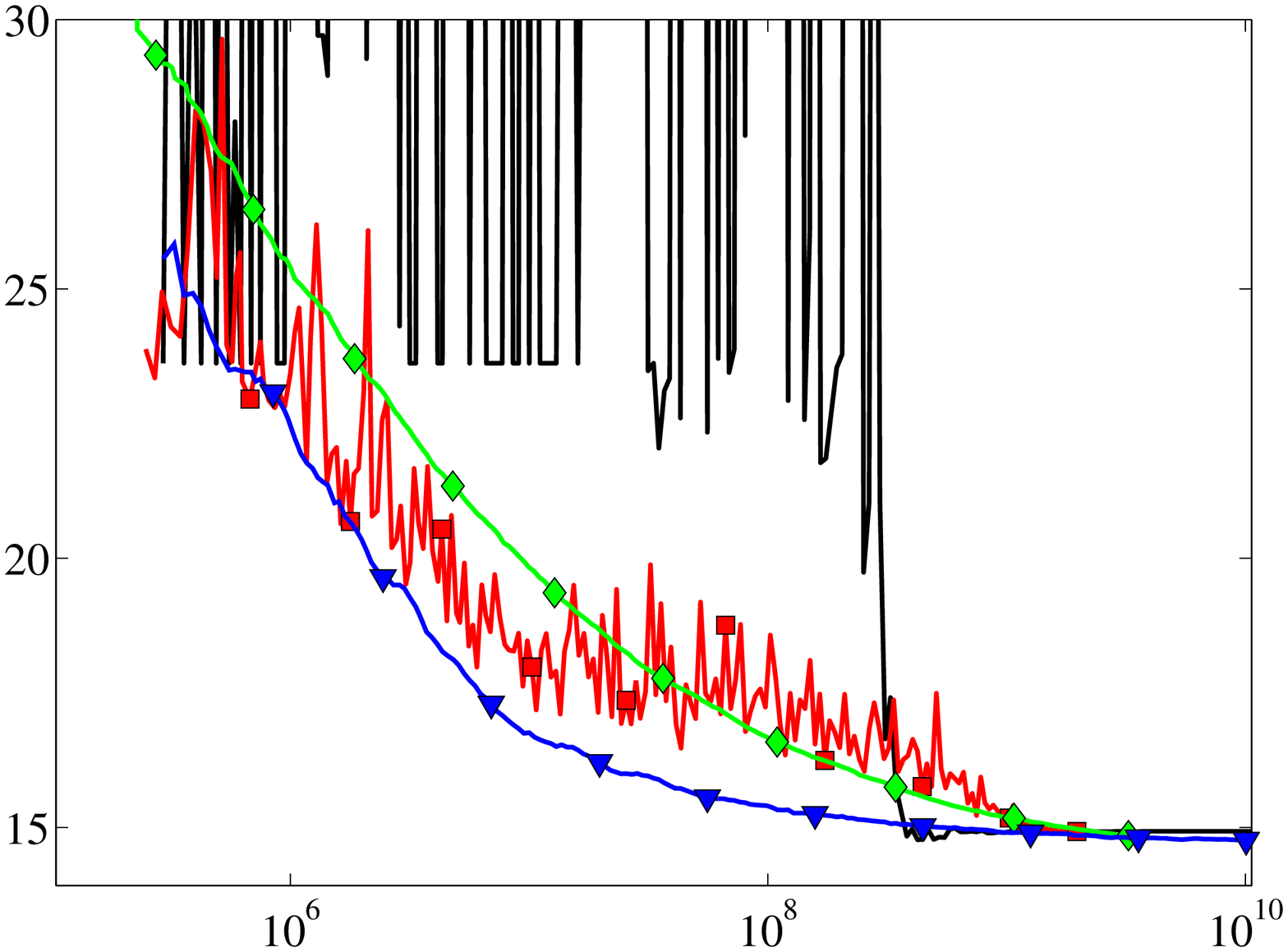} \\
& \scriptsize{Kernel Evaluations} & \scriptsize{Kernel Evaluations}
\end{tabular}
\begin{tabular}{ @{} L @{} H @{} }
& \large{MNIST} \\
\rotatebox{90}{\scriptsize{Test error}} &
\includegraphics[width=0.45\textwidth]{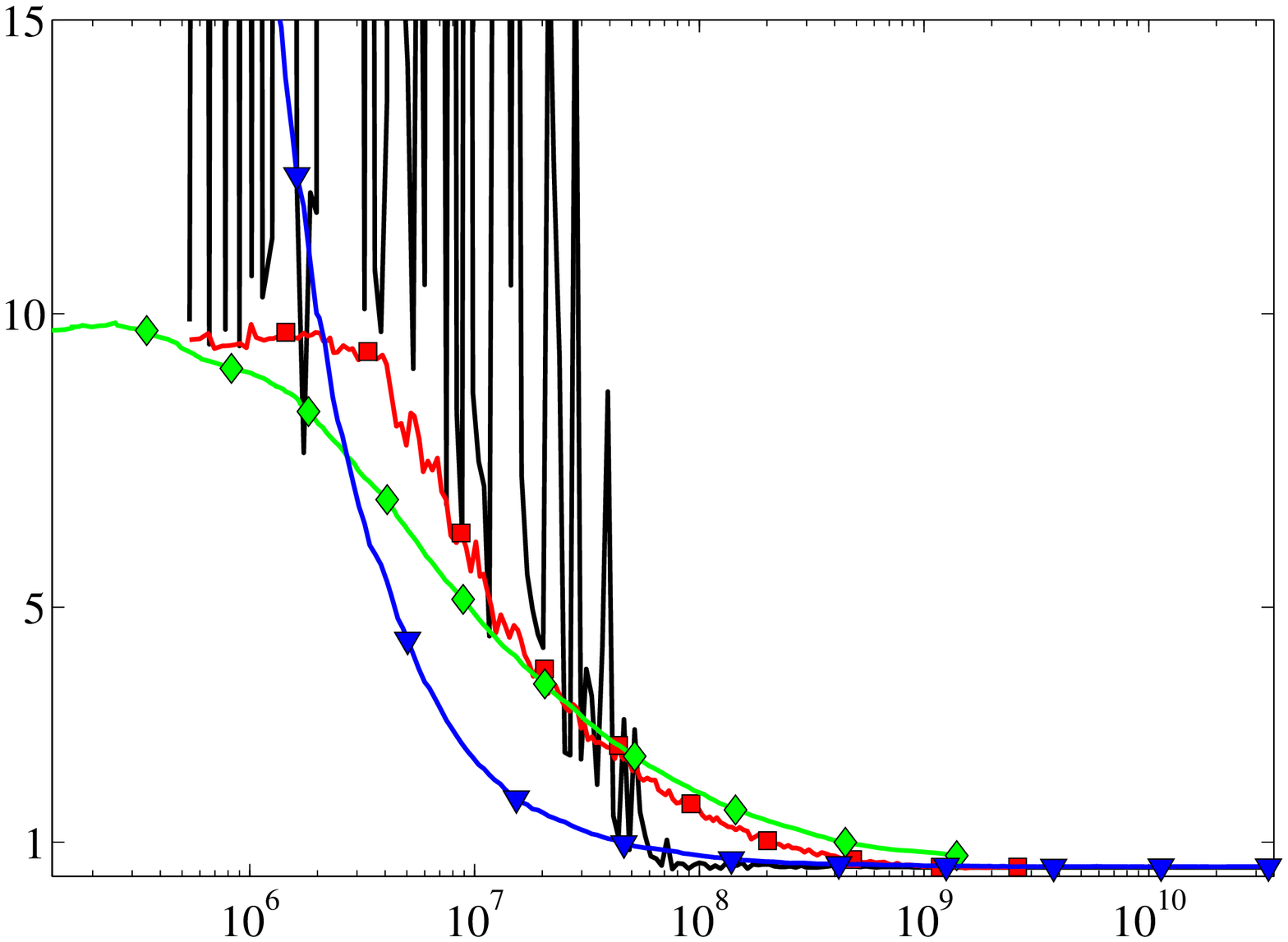} \\
& \scriptsize{Kernel Evaluations}
\end{tabular}
\end{center}

\caption{
Classification error on the held-out testing set (linear scale) vs. the number
of kernel evaluations performed during optimization (log scale), averaged over
ten runs All algorithms were run for ten epochs.
}

\label{fig:svm-sbp:experiments}

\end{figure}

We first compared the methods on a SVM formulation \emph{without} an
unregularized bias, since Pegasos and SDCA do not naturally handle one. So that
this comparison would be implementation-independent, we measure performance in
terms of the number of kernel evaluations. As can be seen in Figure
\ref{fig:svm-sbp:experiments}, the SBP outperforms Pegasos and SDCA, as predicted by
the upper bounds. The SMO algorithm has a dramatically different performance
profile, in line with the known analysis: it makes relatively little progress,
in terms of generalization error, until it reaches a certain critical point,
after which it converges rapidly. Unlike the other methods, terminating SMO
early in order to obtain a cruder solution does not appear to be advisable.

To give a sense of actual runtime, we compared our implementation of the
SBP\footnote{Source code is available from
\url{http://ttic.uchicago.edu/~cotter/projects/SBP}} to the SVM package LibSVM,
running on an Intel E7500 processor. We allowed an unregularized bias (since
that is what LibSVM uses), and used the parameters in Table \ref{tab:svm-sbp:datasets}.
For these experiments, we replaced the Reuters dataset with the version of the
Forest dataset used by \citet{NguyenMaTaHa10}, using their parameters. LibSVM
(with default optimization parameters, except that ``shrinking'' was turned
off) converged to a solution with $14.9$\% error in $195$s on Adult, $0.44$\%
in $1980$s on MNIST, and $1.8$\% in $35$ hours on Forest. In \emph{one-quarter}
of each of these runtimes, SBP obtained $15.0$\% error on Adult, $0.46$\% on
MNIST, and $1.6$\% on Forest. These results of course depend heavily on the
specific stopping criterion used by LibSVM, and do not directly compare the
runtimes required to reach solutions which generalize well. We refer to the
experiments discussed above for a more controlled comparison. When comparing
these SBP results to those of our own SMO implementation, it appears to us that
the SBP converges to an acceptable solution more rapidly on Adult, but more
slowly on Reuters and MNIST.

\subsection{Perceptron}

\begin{figure}

\begin{center}
\begin{tabular}{ @{} L @{} H @{} H @{} }
& \large{Reuters} & \large{Adult} \\
\rotatebox{90}{\scriptsize{Test error}} &
\includegraphics[width=0.45\textwidth]{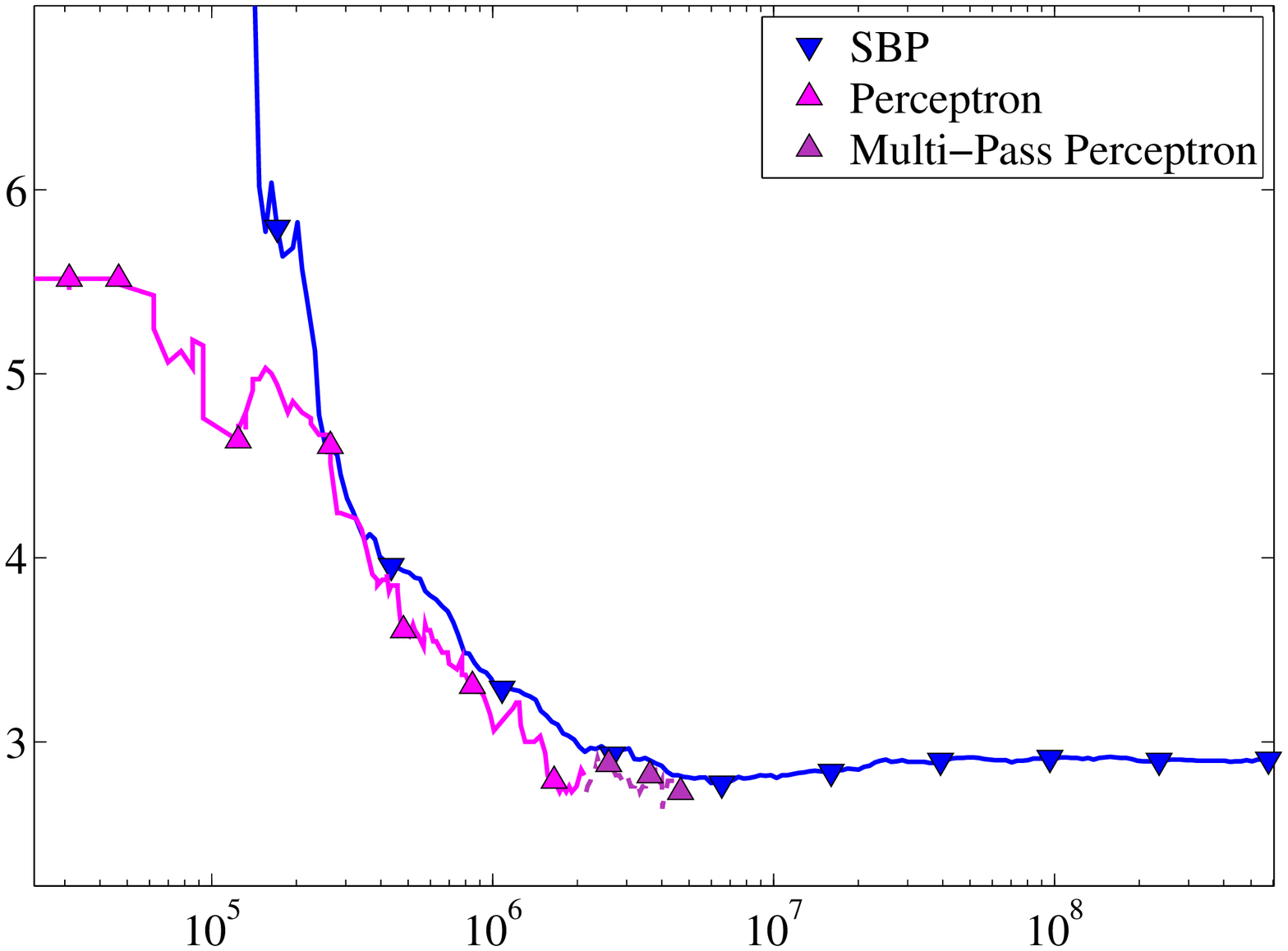} &
\includegraphics[width=0.45\textwidth]{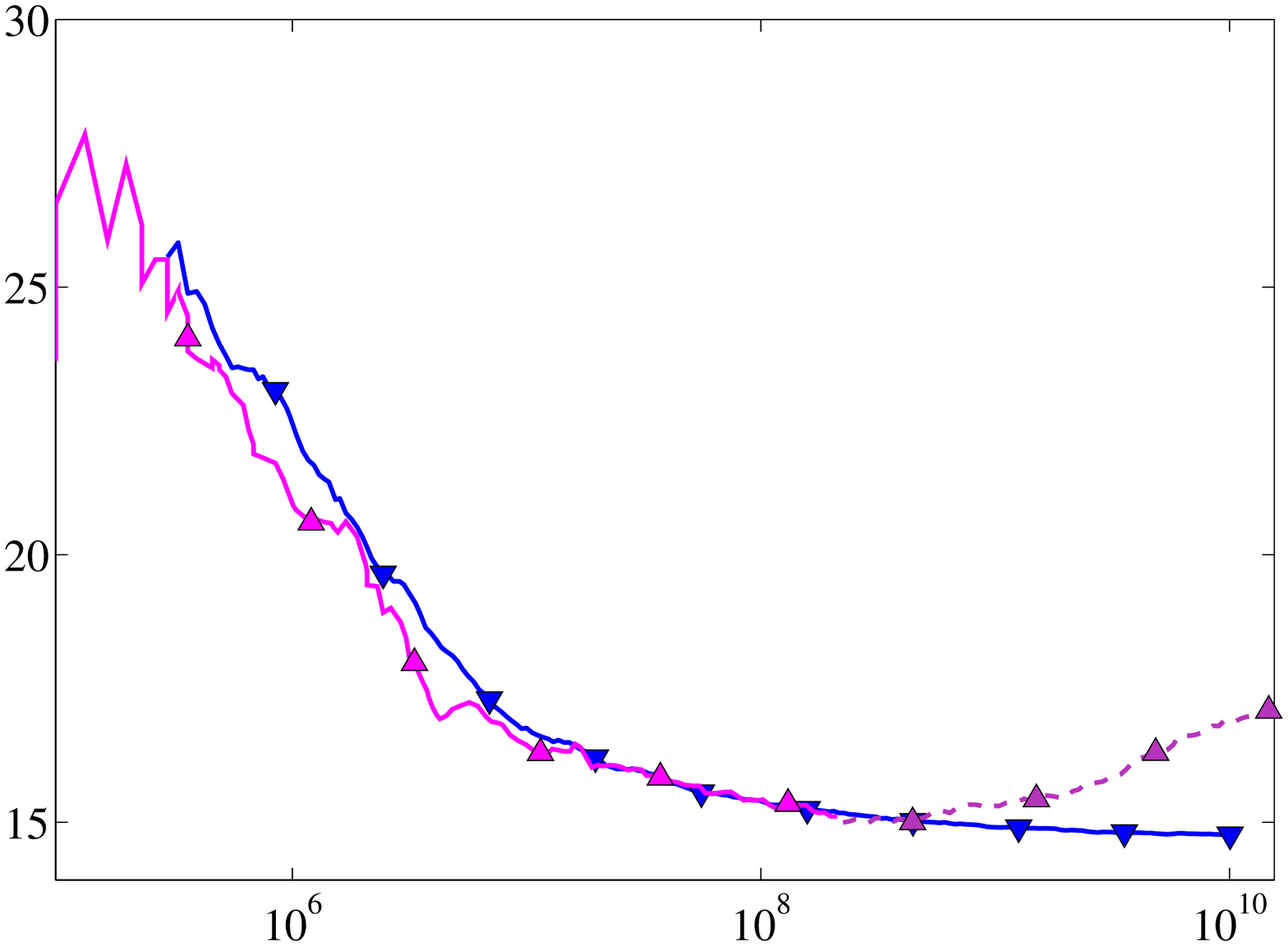} \\
& \scriptsize{Inner Products} & \scriptsize{Inner Products}
\end{tabular}
\begin{tabular}{ @{} L @{} H @{} }
& \large{MNIST} \\
\rotatebox{90}{\scriptsize{Test error}} &
\includegraphics[width=0.45\textwidth]{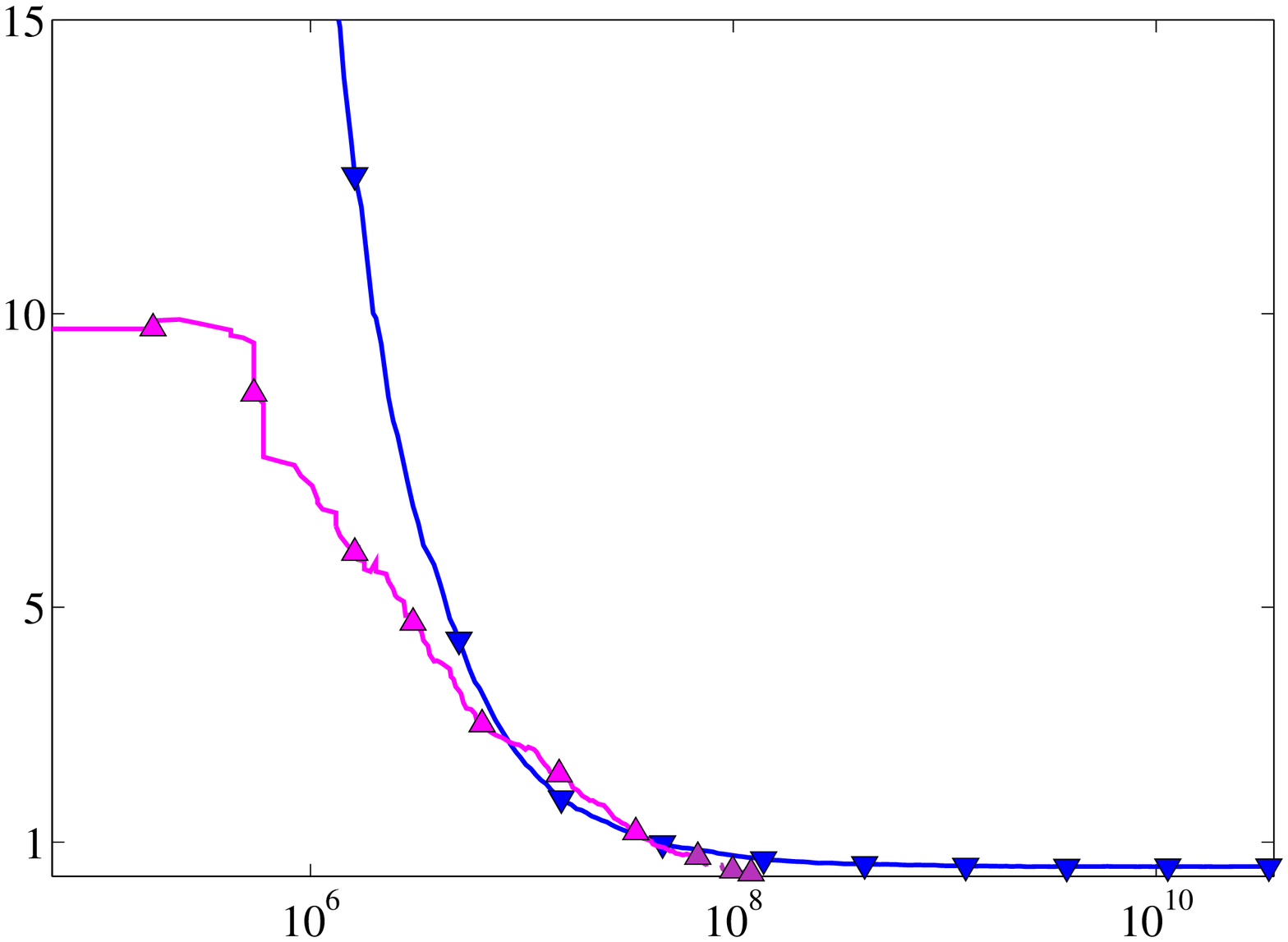} \\
& \scriptsize{Inner Products}
\end{tabular}
\end{center}

\caption{
Classification error on the held-out testing set (linear scale) vs. the number
of kernel evaluations performed during optimization (log scale), averaged over
ten runs. The Perceptron was run for multiple passes over the data---its curve
becomes dashed after the first epoch ($n$ iterations). All algorithms were run
for ten epochs, \emph{except} for Perceptron on Adult, which we ran for $100$
epochs to better illustrate its overfitting.
}

\label{fig:svm-sbp:experiments-perceptron}

\end{figure}

We also compared to the online Perceptron algorithm. As can be seen in Figure
\ref{fig:svm-sbp:experiments-perceptron}, the Perceptron's generalization
performance is similar to that of the SBP for the first epoch (light purple
curve), but the SBP continues improving over additional passes.
Although use of the Perceptron is justified for non-separable data only if run
for a single pass over the training set, we did continue running for multiple
passes (dark purple curve). This is unsafe, since it might overfit after the
first epoch, an effect which is clearly visible on the Adult dataset.

\subsection{Random Projections}

\begin{figure}

\begin{center}
\begin{tabular}{ @{} L @{} H @{} H @{} }
& \large{Reuters} & \large{Adult} \\
\rotatebox{90}{\scriptsize{Test error}} &
\includegraphics[width=0.45\textwidth]{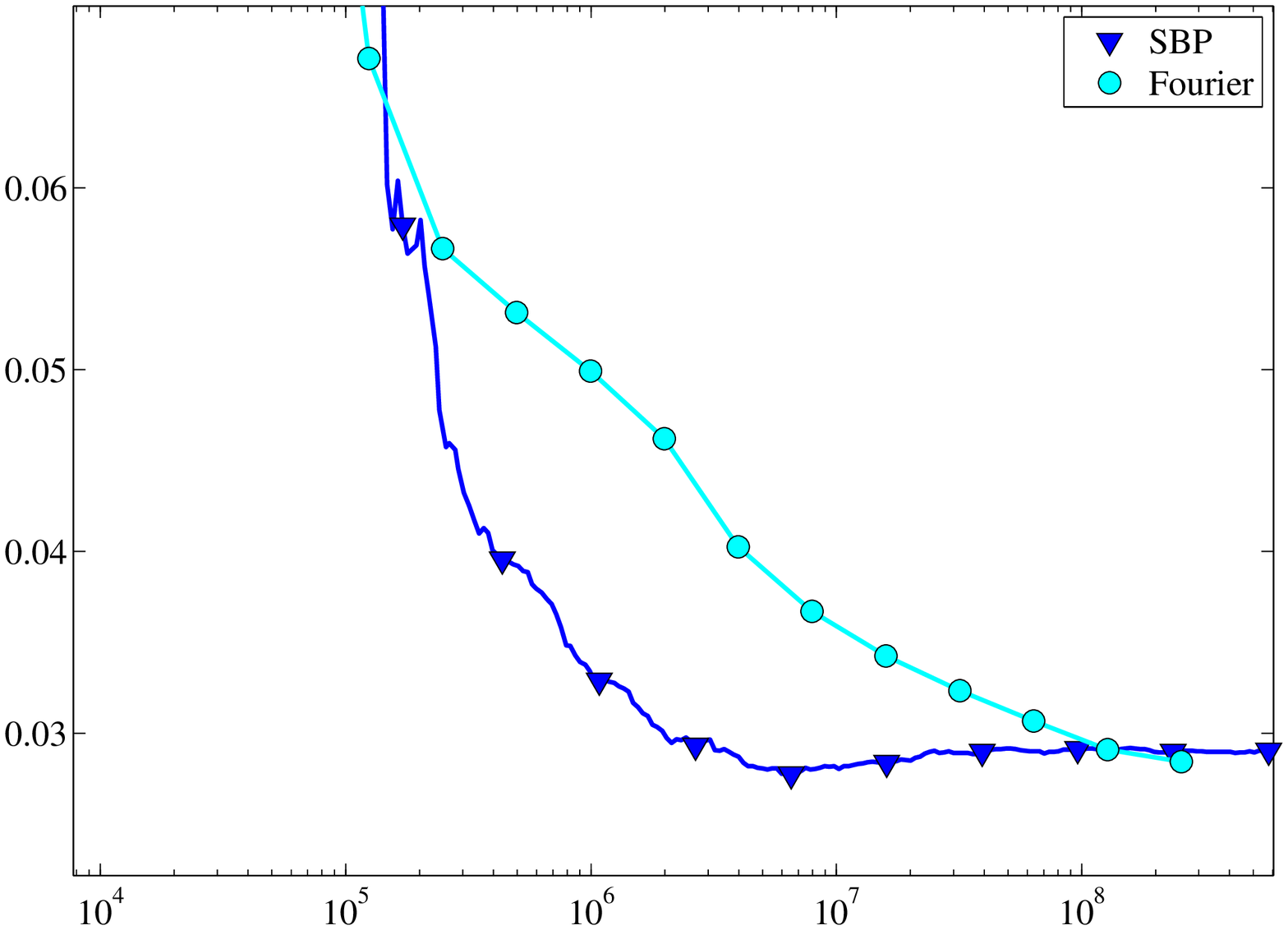} &
\includegraphics[width=0.45\textwidth]{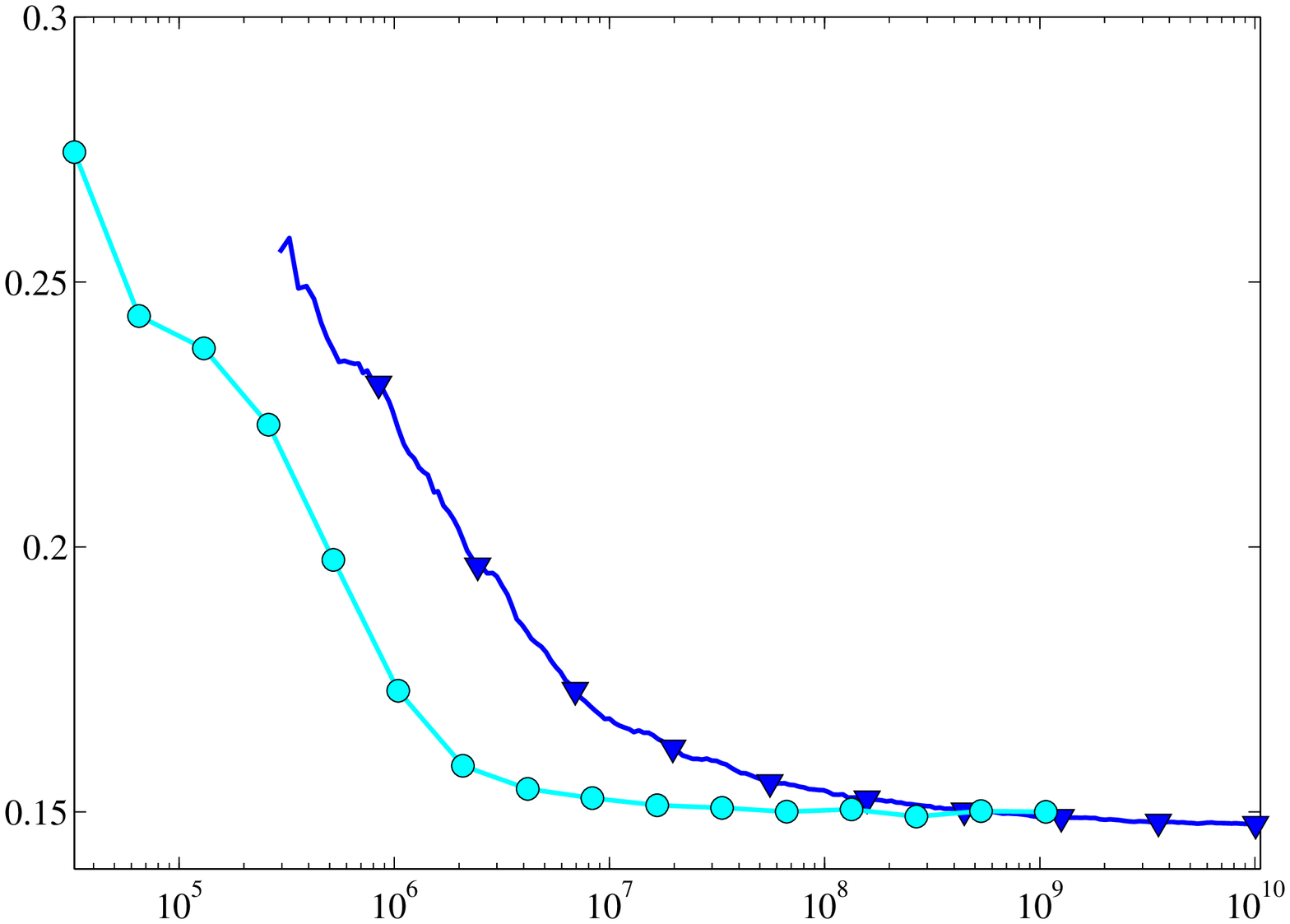} \\
& \scriptsize{Inner Products} & \scriptsize{Inner Products}
\end{tabular}
\begin{tabular}{ @{} L @{} H @{} }
& \large{MNIST} \\
\rotatebox{90}{\scriptsize{Test error}} &
\includegraphics[width=0.45\textwidth]{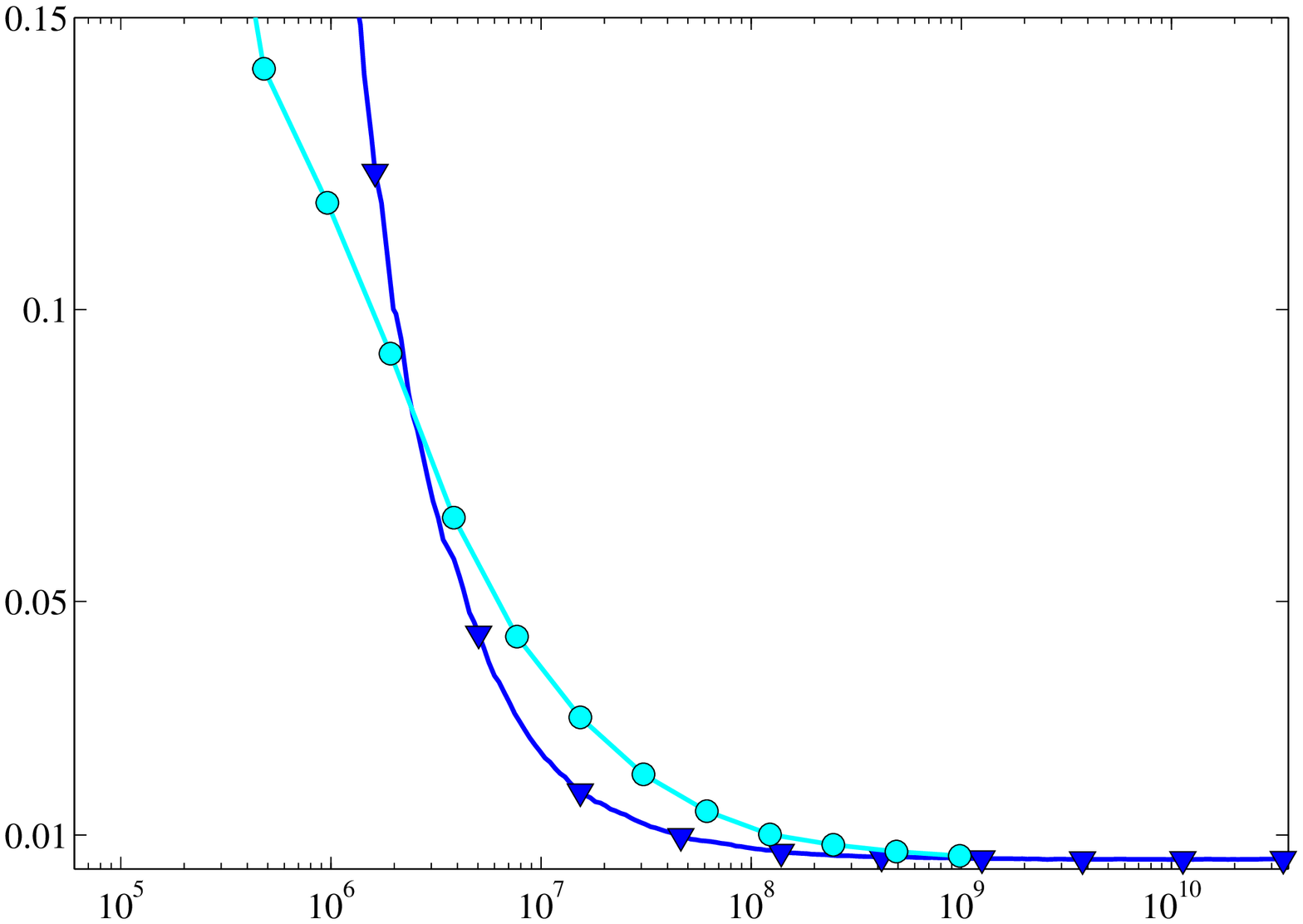} \\
& \scriptsize{Inner Products}
\end{tabular}
\end{center}

\caption{
Classification error on the held-out testing set (linear scale) vs.
computational cost measured in units of $d$-dimensional inner products (where
the training vectors satisfy $x\in\R^d$) (log scale), and averaged
over ten runs. For the Fourier features, the computational cost (horizontal
axis) is that of computing $k\in\left\{1,2,4,8,\dots\right\}$ pairs of Fourier
features over the entire training set, while the test error is that of the
\emph{optimal} classifier trained on the resulting linearized SVM objective.
}

\label{fig:svm-sbp:experiments-fourier}

\end{figure}

Our final set of experiments compares to the random Fourier projection approach
of \citet{RahimiRe07}. Figure \ref{fig:svm-sbp:experiments-fourier} compares
the computational cost of the use of Fourier features to that of the SBP in
terms of $d$-dimensional inner products. In this figure, the computational cost
of a $2k$-dimensional Fourier linearization is taken to be the cost of
computing $\mathcal{P}\left(x_i\right)$ on the entire training set ($kn$ inner
products, where $n$ is the number of training examples)---we ignore the cost of
optimizing the resulting linear SVM entirely.  The plotted testing error is
that of the \emph{optimum} of the resulting linear SVM problem, which
approximates the original kernel SVM. We can see that at least on Reuters and
MNIST, the SBP is preferable to (i.e. faster than) approximating the kernel
with random Fourier features.

\paragraph{Collaborators:} The work presented in this chapter was performed
jointly with Shai Shalev-Shwartz and Nathan Srebro.

\clearpage
\section{Proofs for Chapter \ref{ch:svm-sbp}}

\begin{proofs}
\end{proofs}

\chapter{Learning Optimally Sparse Support Vector Machines}\label{ch:svm-sparse}

\section{Overview}\label{sec:svm-sparse:overview}

In the previous chapter, we presented a kernel SVM optimization algorithm with
the best known bound on its training runtime. In this chapter, we will consider
the related problem of testing runtime. The approach which we describe, which
was originally presented in the 29th International Conference on Machine
Learning (ICML 2013) \citep{CotterShSr13}, results in the best known bound on
the testing runtime of the learned predictor. This is possible because the
predictor can be expressed in terms of only a subset of the training points,
known as ``support vectors''.  The number of support vectors determines the
memory footprint of the learned predictor, as well as the computational cost of
using it (i.e.~of classifying query points).  In order for SVMs to be practical
in large scale applications, it is therefore important to have only a small
number of support vectors.
This is particularly true when, as is the case in many applications, the
training is done only once, on a powerful compute cluster that can handle large
data, but the predictor then needs to be used many times, possibly in real
time, perhaps on a small low-powered device.

However, when training a SVM in the non-separable setting, all
incorrectly classified points will be support vectors---e.g.~with 10\%
error, the solution of the SVM empirical optimization problem will
necessarily have at least 10\% of the training points as support vectors. For
data sets with many thousands or even millions of points, this results in very
large predictors that are expensive to store and use. Even for some separable
problems, the number of support vectors in the SVM solution (the
margin-maximizing classifier) may increase linearly with the training set size
(e.g.~when all the points are on the margin). And so, even though minimizing
the empirical SVM objective might be good in terms of generalization ability
(and this is very well studied), as we argue here, it might be bad in terms of
obtaining sparse solutions.

In this paper, we ask how sparse a predictor we can expect, and show how to
learn a SVM predictor with an optimal guarantee on its sparsity, without
increasing the required sample complexity nor (asymptotically) the training
runtime, and for which the worst-case generalization error has the same bound
as the best known for (non-sparse) kernel SVM optimization.

\section{Sparse SVMs}\label{sec:svm-sparse:sparsity}

In this work, we're interested in the problem of finding \emph{sparse} SVM
solutions. Hence, the question is whether, given that there exists a good
reference classifier $\refw$ which does not necessarily have a small support
size, it is possible to efficiently find a $w$ based on a training sample which
not only generalizes well, but \emph{also} has a small support set.

If the reference classifier $\refw$ separates the data with a margin, namely
$\loss[\hinge]{g_{\refw}} = 0$,
then one can run the kernelized Perceptron algorithm (see for example
\citet{FreundSc99}).
%
%
The Perceptron processes the training examples one by one and adds a support
vector only when it makes a prediction mistake. Therefore, a bound on the
number of prediction mistakes (i.e. a mistake bound) translates to a bound on
the sparsity of the learned predictor. A classic result (the mistake bound used
to prove Lemma \ref{thm:svm-sbp:perceptron-runtime} in Chapter
\ref{ch:svm-introduction}) shows that if the data is separable with a margin
$1$ by some vector $\refw$, then the Perceptron will make at most
$\norm{\refw}^2$ prediction mistakes. Combining this with a generalization
bound based on compression of representation (or with an online-to-batch
conversion) we can conclude that with $n \ge
\tilde{O}(\norm{\refw}^2/\epsilon)$, the generalization error of $w$ will be at
most $\epsilon$.

The non-separable case is more tricky. If we somehow obtained a vector $v$
which makes a small number of \emph{margin} violations on the training set,
i.e. $\epsilon_{v} = \frac{1}{n} \sum_{i=1}^n \mathbf{1}( y_i
\inner{v}{\Phi(x_i)} < 1)$ is small, then we can find a $w$ with $\norm{v}^2$
support vectors which satisfies $\emploss[\zeroone]{g_w} \le \epsilon_{v}$ by simply
ignoring the examples on which $y_i \inner{v}{\Phi(x_i)} < 1$ and running the
Perceptron on the remainder. Again using a compression bound, we can show that
$\loss[\zeroone]{g_w}$ is little larger than $\emploss[\zeroone]{g_w}$.

However, we cannot, in general, efficiently find a predictor $v$ with a low
margin error rate, even if we know that such a predictor exists.
%
%
Instead, in learning SVMs, we minimize the empirical \emph{hinge} loss. It is
not clear how to relate the margin error rate $\epsilon_{v}$ to the hinge loss of
$\refw$. One option is to note that $\mathbf{1}(z < 1) \le 2[1 - z/2]_+$, hence
$\epsilon_v \le 2\emploss[\hinge]{g_{v/2}}$. Since $2\emploss[\hinge]{g_{v/2}}$
is a convex function of $v$, we can minimize it as a convex surrogate to
$\epsilon_v$. Unfortunately, this approach would lead to a dependence on the
quantity $2\loss[\hinge]{g_{\refw/2}}$, which might be significantly larger than
$\loss[\hinge]{g_{\refw}}$. This is the main issue we address in Section
\ref{sec:svm-sparse:algorithm}, in which we show how to construct an efficient
sparsification procedure which depends on $\loss[\hinge]{g_{\refw}}$ and has the
same error guarantees and sample complexity as the vanilla SVM predictor.

Before moving on, let us ask whether we could hope for sparsity {\em less} then
$\Theta(\norm{\refw}^2)$. As the following Lemma establishes, we cannot:

\medskip
\begin{splitlemma}{lem:svm-sparse:lower-bound}

Let $R,\mathcal{L}^{*},\epsilon\ge0$ be given, with
$\mathcal{L}^{*}+\epsilon\le\nicefrac{1}{4}$ and with $R^2$ being an
integer. There exists a data distribution $\mathcal{D}$ and a reference vector
$\refw$ such that $\norm{\refw} = R$,
$\loss[\hinge]{g_{\refw}}=\mathcal{L}^{*}$, and any $w$ which satisfies:
\begin{equation*}
\loss[\zeroone]{g_w} \le \mathcal{L}^{*}+\epsilon
\end{equation*}
must necessarily be supported on at least $R^{2}/2$ vectors. Furthermore, the
claim also holds for randomized classification rules that predict $1$ with
probability $\psi(g_{\refw}(x))$ for some $\psi : \R \to [0,1]$.

\end{splitlemma}
\begin{splitproof}

We define $\mathcal{D}$ such that $i$ is sampled uniformly at random from the
set $\left\{ 1,\dots,d\right\}$, with $d=R^2$, and the feature vector is taken
to be $x=e_i$ (the $i$th standard unit basis vector) with corresponding label
distributed according to $\probability{y=z} = 1-\mathcal{L}^*/2$. The value of
$z \in \{\pm 1\}$ will be specified later. Choose ${\refw}_i = z$ for all $i$,
so that $\norm{\refw} = R$ and $\loss[\hinge]{g_{\refw}} = \mathcal{L}^*$.

Take $w$ to be a linear combination of $k < d/2 = R^2 / 2$ vectors. Then
$g_w(x)=0$ on any $x$ which is not in its support set. Suppose that whenever
$g_w(x_i)=0$ the algorithm predicts the label $1$ with probability $p \in
[0,1]$ ($p=\psi(0)$ for a randomized classifier). If $p \ge 1/2$ we'll set
$z=-1$, and if $p<1/2$ we'll set $z=1$. This implies that:
\begin{equation*}
\loss[\zeroone]{g_w} \ge \frac{d-k}{2d} > \frac{1}{4} \ge \mathcal{L}^{*} + \epsilon
\end{equation*}
which concludes the proof.
\end{splitproof}

\section{Learning Sparse SVMs}\label{sec:svm-sparse:algorithm}

In the previous section we showed that having a good low-norm predictor $\refw$
often implies there exists also a good sparse predictor, but the existence
proof required low \emph{margin error}. We will now consider the problem of
efficiently finding a good sparse predictor, given the existence of low-norm
reference predictor $\refw$ which suffers low \emph{hinge loss} on a finite
training sample.

Our basic approach will be broadly similar to that of Section
\ref{sec:svm-sparse:sparsity}, but instead of relying on an unknown $\refw$, we will start
by using any SVM optimization approach to learn a (possibly dense) $w$. We will
then
learn a sparse classifier $\tilde{w}$ which mimics $w$.

In Section \ref{sec:svm-sparse:sparsity} we removed margin violations and dealt with an
essentially separable problem. But when one considers not margin error, but
hinge loss, the difference between ``correct'' and ``wrong'' is more nuanced,
and we must take into account the numeric value of the loss:
\begin{enumerate}
\item If $y\inner{w}{\Phi(x)} \le 0$ (i.e. $w$ is wrong), then we can ignore
the example, as in Section \ref{sec:svm-sparse:sparsity}.
\item If $0 < y\inner{w}{\Phi(x)} < 1$ (i.e. $w$ is correct, but there is a
margin violation), then we allow $\tilde{w}$ to make a margin violation at most
$\nicefrac{1}{2}$ worse than the margin violation made by $w$, i.e. $y
\inner{\tilde{w}}{\Phi(x)} \ge y\inner{w}{\Phi(x)} - \nicefrac{1}{2}$.
%
\item If $y\inner{w}{\Phi(x)} \ge 1$ (i.e. $w$ is correct and classifies this
example outside the margin), we would like $\tilde{w}$ to also be correct,
though we require a smaller margin: $y\inner{\tilde{w}}{\Phi(x)} \ge
\nicefrac{1}{2}$.
\end{enumerate}
These are equivalent to finding a solution with value at most
$\nicefrac{1}{2}$ to the following optimization problem:
\begin{align}
\label{eq:svm-sparse:objective-approximation} \text{minimize}: & f(\tilde{w}) =
\max_{i:h_i > 0} \left( h_i - y_i \inner{\tilde{w}}{\Phi(x_i)} \right)\\
\notag \text{where}: & h_i=\min\left( 1, y_i \inner{w}{\Phi(x_i)} \right)
\end{align}
We will show that a randomized classifier based on a solution to Problem
\ref{eq:svm-sparse:objective-approximation} with $f\left(\tilde{w}\right) \le
\nicefrac{1}{2}$ has empirical 0/1 error bounded by the empirical hinge loss of
$w$; that we can efficiently find such solution based on at most $4\norm{w}^2$
support vectors; and that such a sparse solution generalizes as well as $w$
itself.

\subsection{The Slant-loss}

\begin{figure}

\begin{center}
\includegraphics[width=0.45\textwidth]{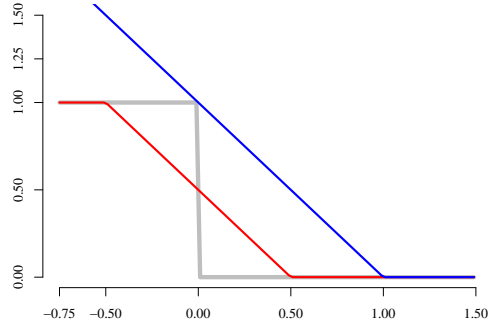}
\end{center}

\caption{
Illustration of how the slant-loss (red) relates to the 0/1 (gray) and
hinge (blue) losses. Notice that, if the slant-loss is shifted by
$\nicefrac{1}{2}$, then it is still upper bounded by the hinge loss.
}

\label{fig:svm-sparse:slant-loss}

\end{figure}

The key to our approach is our use of the \emph{randomized} classification rule
$\tilde{g}_{\tilde{w}}\left(x\right) = \inner{\tilde{w}}{\Phi\left(x\right)} +
Z$ for $Z\sim\mbox{Unif}\left[-\nicefrac{1}{2},\nicefrac{1}{2}\right]$, rather
than the standard linear classification rule $g_w\left(x\right) =
\inner{w}{\Phi\left(x\right)}$.
The effect of the randomization is to ``spread out'' the expected loss of the
classifier. We define the loss function:
\begin{equation}
\ell_{slant}\left( z \right) = \min\left( 1, \max\left( 0, \nicefrac{1}{2} - z
\right) \right)
\end{equation}
which we call the ``slant-loss'' (Figure \ref{fig:svm-sparse:slant-loss}),
using $\loss[\slant]{g}$ and $\emploss[\slant]{g}$ to denote its expectation
and empirical average, analogously to the 0/1 and hinge losses. It is easy to
see that $\expectation[Z]{\ell_{0/1}(\tilde{g}_{\tilde{w}})} =
\ell_{slant}(g_{\tilde{w}})$---hence, the $0/1$ loss of $\tilde{g}_{\tilde{w}}$
is the same as the slant-loss of $g_{\tilde{w}}$.
Equally importantly, $\ell_{slant}( z - \nicefrac{1}{2} ) \le \ell_{hinge}(
z)$, from which the following Lemma follows:

\medskip
\begin{splitlemma}{lem:svm-sparse:slant-loss}

For any $w$, and any $\tilde{w}$ for which Problem
\ref{eq:svm-sparse:objective-approximation} has value $f\left(\tilde{w}\right) \le
\nicefrac{1}{2}$, we have that
\begin{equation*}
\expectation[Z]{\emploss[0/1]{\tilde{g}_{\tilde{w}}}} =
\emploss[slant]{g_{\tilde{w}}} \leq \emploss[hinge]{g_w}
\end{equation*}

\end{splitlemma}
\begin{splitproof}

It remains only to establish that $\emploss[slant]{g_{\tilde{w}}} \leq
\emploss[hinge]{g_w}$. For every $x_i,y_i$, consider the following three cases:
\begin{enumerate}
\item If $y_i \inner{w}{\Phi(x_i)} \le 0$, then $\ell_{slant}( y_i
g_{\tilde{w}}(x_i)) \le 1 \le \ell_{hinge}( y_i g_w(x_i) )$.
\item If $0 < y_i \inner{w}{\Phi(x_i)} < 1$, then $\ell_{slant}( y_i
g_{\tilde{w}}(x_i) ) \le \ell_{slant}( y_i g_w(x_i) - \nicefrac{1}{2} ) \le
\ell_{hinge}( y_i g_w(x_i) )$.
\item If $y_i \inner{w}{\Phi(x_i)} \ge 1$, then $\ell_{slant}( y_i
g_{\tilde{w}}(x_i)) \le \ell_{slant}( \nicefrac{1}{2} ) = 0 = \ell_{hinge}( y_i
g_w(x_i) )$.
\end{enumerate}
Hence, $\ell_{slant}( y_i g_{\tilde{w}}(x_i) ) \le \ell_{hinge}( y_i
g_w(x_i))$, which completes the proof.
\end{splitproof}

\subsection{Finding Sparse Solutions}\label{subsec:pca-sparse:subgradient-descent}

\begin{algorithm}[t]

\begin{pseudocode}
\codename $\code{optimize}\left( n:\N, d:\N, x_{1},\dots,x_{n}:\R^d, y_{1},\dots,y_{n}:\left\{\pm 1\right\}, h:\R^n, K:\R^d\times\R^d\rightarrow\R_{+} \right)$\\
\codeline $\eta := 1 / 2$; $\tilde{\alpha}^{(0)} := 0^{n}$; $\tilde{c}^{(0)} := 0^{n}$;\\
\codeline $\code{do}$\\
\codeline \>$i := \argmax_{i:h_i>0}\left(h_i - \tilde{c}_i\right)$;\\
\codeline \>$\code{if } h_i - \tilde{c}_i \le 1/2 \code{ then}$\\
\codeline \>\>$\code{return } \tilde{\alpha}$;\\
\codeline \>$\tilde{\alpha}^{(t)} := \tilde{\alpha}^{(t-1)} + \eta e_{i}$;\\
\codeline \>$\code{for } j=1 \code{ to } n$\\
\codeline \>\>$\tilde{c}^{(t)}_{j} := \tilde{c}^{(t-1)}_{j} + \eta y_{i} y_{j} K\left(x_{i},x_{j}\right)$;
\end{pseudocode}

\caption{
Subgradient ascent algorithm for optimizing the kernelized version of Problem
\ref{eq:svm-sparse:objective-approximation}. Here, $\tilde{\alpha}$ is the
vector of coefficients representing $\tilde{w}$ as a linear combination of the
training data: $\tilde{w} = \sum_{i=1}^n \tilde{\alpha}_i y_i
\Phi\left(x_i\right)$ (this is the representor theorem---see Section
\ref{sec:svm-introduction:objective} of Chapter \ref{ch:svm-introduction}). The
vector $h$ is derived from the initial SVM solution as in Problem
\ref{eq:svm-sparse:objective-approximation}, and $e_{i}$ is the $i$th standard
unit basis vector. Lemma \ref{lem:svm-sparse:convergence-rate} shows that the
termination condition on line $4$ will be satisfied after at most $4\norm{w}^2$
iterations.
}

\label{alg:svm-sparse:subgradient-descent}

\end{algorithm}

To find a sparse $\tilde{w}$ with value $f(\tilde{w})\le 1/2$, we apply
subgradient descent to Problem \ref{eq:svm-sparse:objective-approximation}. The algorithm
is extremely straightforward to understand and implement. We initialize
$\tilde{w}^{(0)} = 0$, and then proceed iteratively, performing the following
at the $t$th iteration:
\begin{enumerate}
\item Find the training index $i : y_{i} \inner{w}{\Phi(x_{i})} > 0$
which maximizes $h_{i} - y_{i} \inner{\tilde{w}^{(t-1)}}{\Phi(x_{i})}$.
\item Take the subgradient step $\tilde{w}^{(t)} \leftarrow \tilde{w}^{(t-1)} +
\eta y_{i} \Phi(x_{i})$.
\end{enumerate}

To this point, we have worked in the explicit kernel Hilbert space
$\hilbert$, even though we are interested primarily in the kernelized case,
where $\hilbert$ and $\Phi(\cdot)$ are specified only implicitly through
$K(x,x')=\inner{\Phi(x)}{\Phi(x')}$. However, like the SBP (Chapter
\ref{ch:svm-sbp}), the above algorithm can be interpreted as an instance of the
traditional SVM optimization outline of Section
\ref{sec:svm-introduction:traditional} in Chapter \ref{ch:svm-introduction}, by
relying on the representations $w=\sum_{i=1}^{n} \alpha_i y_i \Phi(x_i)$ and
$\tilde{w}=\sum_{i=1}^{n} \tilde{\alpha}_i y_i \Phi(x_i)$, keeping track of the
coefficients $\alpha$ and $\tilde{\alpha}$. We will also maintain an up-to-date
vector of responses $\tilde{c}$:
\begin{equation*}
\tilde{c}_j = y_j \inner{\tilde{w}}{\Phi(x_j)} = \sum_{i=1}^{n}
\tilde{\alpha}_i y_i y_j K( x_i, x_j )
\end{equation*}
Notice that the values of the responses provide sufficient knowledge to find
the update index $i$ at every iteration. We can then perform the subgradient
descent update $\tilde{w} \leftarrow \tilde{w} + \eta y_{i} \Phi(x_{i})$ by
adding $\eta$ to $\tilde{\alpha}_i$, and updating the responses as $\tilde{c}_j
\leftarrow \tilde{c}_j + \eta y_i y_j K( x_i, x_j )$, at a total cost of $n$
kernel evaluations.

Algorithm \ref{alg:svm-sparse:subgradient-descent} gives a detailed version of
this algorithm, as an instance of the outline of Algorithm
\ref{sec:svm-introduction:traditional}. Its convergence rate is characterized
in the following lemma:

\medskip
\begin{splitlemma}{lem:svm-sparse:convergence-rate}

After $T \le 4\norm{w}^2$ iterations of subgradient descent with $\eta =
\nicefrac{1}{2}$, we obtain a solution of the form $\tilde{w} = \frac{1}{2}
\sum_{t=1}^T y_{i_t} \Phi( x_{i_t} )$ which has value $f\left(\tilde{w}\right)
\le \nicefrac{1}{2}$.

\end{splitlemma}
\begin{splitproof}

First note that $f(w) \leq 0$. Relying on this possible solution $w$, the Lemma
follows from standard convergence bounds of subgradient descent (see e.g.
Section 1.2 of \citet{Nesterov09}): with the step size $\eta = \epsilon$, after
performing $\norm{w}^2 / \epsilon^2$ iterations, at least one iterate
$\tilde{w}^{(t)}$ will have an objective function value no greater than
$\epsilon$. Choosing $\epsilon=\nicefrac{1}{2}$ gives the desired result.
\end{splitproof}

Because each iteration adds at most one new element to the support set, the
support size of the solution will likewise be bounded by $4\norm{w}^2$.
We must calculate the objective function value in the course of each
subgradient descent iteration (while finding $i$), so one can identify an
iterate with $f(\tilde{w}^{(t)}) \le \nicefrac{1}{2}$ at no additional
cost---this is the termination condition on line $4$ of Algorithm
\ref{alg:svm-sparse:subgradient-descent}.

\subsection{Generalization Guarantee - Compression}

The fact that the optimization procedure outlined in the previous section
results in a sparse predictor of a particularly simple structure enables us to
bound its generalization error using a compression bound. Before giving our
generalization bound, we begin by presenting the compression bound which we use:

\medskip
\begin{splittheorem}{thm:svm-sparse:compression-bound}

This is Theorem 2 of \citet{ShalevHandouts2010}. Let $k$ and $n$ be fixed,
with $n\ge2k$, and let $A:\left(\R^d\times\left\{ \pm1\right\}
\right)^{k}\rightarrow \hilbert$ be a mapping which receives a list of $k$
labeled training examples, and returns a classification vector $w\in \hilbert$.
Use $S \in \left[n\right]^k$ to denote a list of $k$ training indices, and let
$w_{S}$ be the result of applying $A$ to the training elements indexed by $S$.
Finally, let $\ell:\R \rightarrow \left[0,1\right]$ be a loss function bounded
below by $0$ and above by $1$, with
$\loss{g_w}$ and $\emploss{g_w}$ the expected loss, and empirical loss on the
training set, respectively. Then, with probability $1-\delta$, for all $S$:
\begin{equation*}
\loss{g_{w_{S}}}\le \emploss{g_{w_{S}}}+
\sqrt{\frac{32\emploss{g_{w_{S}}}\left(k\log
n+\log\frac{1}{\delta}\right)}{n}}+\frac{8\left(k\log
n+\log\frac{1}{\delta}\right)}{n}
\end{equation*}

\end{splittheorem}
\begin{splitproof}

Consider, for some fixed $\delta'$, the probability that there exists a $S
\subseteq \left\{1,\dots,n\right\}$ of size $k$ such that:
\begin{equation*}
\loss{g_{w_{S}}}\ge\emploss[\mbox{test}]{g_{w_{S}}}+\sqrt{\frac{2\emploss[\mbox{test}]{g_{w_{S}}}\log\frac{1}{\delta'}}{\left(n-k\right)}}+\frac{4\log\frac{1}{\delta'}}{n-k}
\end{equation*}
where $\emploss[\mbox{test}]{g_w}=\frac{1}{n-k}\sum_{i\notin S}\ell\left( y_i
g_w\left( x_i \right) \right)$ is the empirical loss on the \emph{complement}
of $S$. It follows from Bernstein's inequality that, for a \emph{particular}
$S$, the above holds with probability at most $\delta'$. By the union bound:
\begin{equation*}
n^k\delta' \ge \mathrm{Pr} \left\{ \exists S \in [n]^k :
\loss{g_{w_{S}}}\ge\emploss[\mbox{test}]{g_{w_{S}}} +
\sqrt{\frac{2\emploss[\mbox{test}]{g_{w_{S}}}\log\frac{1}{\delta'}}{\left(n-k\right)}}+\frac{4\log\frac{1}{\delta'}}{n-k}
\right\}
\end{equation*}
Let $\delta=n^k\delta'$. Notice that
$\left(n-k\right)\emploss[\mbox{test}]{g_{w_{S}}}\le n\emploss{g_{w_{S}}}$, so:
\begin{equation*}
\delta \ge \mathrm{Pr} \left\{ \exists S \in [n]^k :
\loss{g_{w_{S}}}\ge\frac{n\emploss{g_{w_{S}}}}{n-k} +
\sqrt{\frac{2n\emploss{g_{w_{S}}}\log\frac{n^k}{\delta}}{\left(n-k\right)^{2}}}+\frac{4\log\frac{n^k}{\delta}}{n-k}
\right\}
\end{equation*}
Because $\emploss{g_{w_{S}}}\le1$ and $k\le n$, it follows that
$k\emploss{g_{w_{S}}}\le2n\log n$, and therefore that
$\frac{k}{n-k}\emploss{g_{w_{S}}}\le\sqrt{\frac{2nk\emploss{g_{w_{S}}}\log
n}{\left(n-k\right)^{2}}}\le\sqrt{\frac{2n\emploss{g_{w_{S}}}\log\frac{n^{k}}{\delta}}{\left(n-k\right)^{2}}}$.
Hence:
\begin{equation*}
\delta \ge \mathrm{Pr} \left\{ \exists S \in [n]^k :
\loss{g_{w_{S}}}\ge\emploss{g_{w_{S}}} +
\sqrt{\frac{8n\emploss{g_{w_{S}}}\log\frac{n^{k}}{\delta}}{\left(n-k\right)^{2}}}+\frac{4\log\frac{n^{k}}{\delta}}{n-k}
\right\}
\end{equation*}
Using the assumption that $n\ge2k$ completes the proof.
\end{splitproof}

With this result in place, it is straightforward to bound the generalization
performance of the classifier which is found by our subgradient descent procedure:

\medskip
\begin{splitlemma}{lem:svm-sparse:generalization-from-compression}

With probability at least $1-\delta$ over the training set, if $\tilde{w}$ is a
solution with value $f(\tilde{w})\le\nicefrac{1}{2}$ to Problem
\ref{eq:svm-sparse:objective-approximation} found by performing $T = 4
\norm{w}^2$ iterations of subgradient descent (see Algorithm
\ref{alg:svm-sparse:subgradient-descent} and Lemma
\ref{lem:svm-sparse:convergence-rate}), then:
\begin{equation*}
\loss[\zeroone]{\tilde{g}_{\tilde{w}}}\le
\emploss[\hinge]{\tilde{g}_{\tilde{w}}} +
O\left(
\sqrt{\emploss[\hinge]{g_{w}}\frac{\norm{w}^2\log n+\log\frac{1}{\delta}}{n}} +
\frac{\norm{w}^2\log n+\log\frac{1}{\delta}}{n}
\right)
\end{equation*}
provided that $n \ge 2 T$.

\end{splitlemma}
\begin{splitproof}
%
%
Because $\tilde{w}$ is found via subgradient descent,it can be written in the
form $\tilde{w}=\eta \sum_{i \in S} y_i x_i = A(S)$ with $S \in
\left[n\right]^k$, and $k=T=4\norm{w}^{2}$.  Hence, by the compression bound of
Theorem \ref{thm:svm-sparse:compression-bound}, we have that with probability
$1-\delta$:
\begin{equation*}
\loss[\slant]{g_{\tilde{w}}}\le \emploss[\slant]{g_{\tilde{w}}} +
\sqrt{\frac{32\emploss[\slant]{g_{\tilde{w}}}\left(k\log
n+\log\frac{1}{\delta}\right)}{n}} + \frac{8\left(k\log
n+\log\frac{1}{\delta}\right)}{n}
\end{equation*}
The randomized classification rule $\tilde{g}_{\tilde{w}}$ was defined in such
a way that $\ell_{slant}\left( y g_{\tilde{w}}\left( x \right) \right) =
\expectation{\ell_{0/1}\left( y \tilde{g}_{\tilde{w}}\left( x \right)
\right)}$. Furthermore, as was shown in Lemma \ref{lem:svm-sparse:slant-loss},
$\emploss[\slant]{g_{\tilde{w}}}\le\emploss[\hinge]{g_w}$. Plugging these
results into the above equation:
\begin{equation*}
\loss[\zeroone]{\tilde{g}_{\tilde{w}}}\le \emploss[\hinge]{g_w} +
\sqrt{\frac{32\emploss[\hinge]{g_w}\left(k\log
n+\log\frac{1}{\delta}\right)}{n}} + \frac{8\left(k\log
n+\log\frac{1}{\delta}\right)}{n}
\end{equation*}
The above holds for all $w$ such that $\tilde{w}$ is a
$\nicefrac{1}{2}$-suboptimal solution to the corresponding instance of Problem
\ref{eq:svm-sparse:objective-approximation}. Plugging the assumption that
$k=4\norm{w}^{2}$ completes the proof.
\end{splitproof}

\subsection{Generalization Guarantee - Smoothness}

The main disadvantage of using a compression bound to prove generalization is
that the result bounds the performance of a particular \emph{algorithm} (in
this case, subgradient descent), instead of the performance of all solutions
satisfying some conditions. One can address this drawback by using uniform
concentration arguments to obtain an almost identical guarantee (up to log
factors) which holds for any $\tilde{w}$ (not necessarily sparse) with norm
$\norm{\tilde{w}} \le \norm{w}$ and value $f(\tilde{w}) \le \nicefrac{1}{3}$.
The result is a bound which is slightly worse than that of Lemma
\ref{lem:svm-sparse:convergence-rate}, but is more flexible. In order derive
it, we must first modify the objective of Problem
\ref{eq:svm-sparse:objective-approximation} by adding a norm-constraint:
\begin{align}
\label{eq:svm-sparse:objective-approximation-smooth} \mbox{minimize}: &
f(\tilde{w}) = \max_{i:y_i \inner{w}{\Phi(x_i)} > 0} \left( h_i - y_i
\inner{\tilde{w}}{\Phi(x_i)} \right) \\
\notag \mbox{subject to}: & \norm{\tilde{w}} \le \norm{w}
\end{align}
Here, as before, $h_i=\min\left( 1, y_i \inner{w}{\Phi(x_i)} \right)$. Like
Problem \ref{eq:svm-sparse:objective-approximation}, this objective can be
optimized using subgradient descent, although one must add a step in which the
current iterate is projected onto the ball of radius $\norm{w}$ after every
iteration.  Despite this change, an $\epsilon$-suboptimal solution can still be
found in $\norm{w}^2 /\epsilon^2$ iterations.

\begin{figure}

\begin{center}
\includegraphics[width=0.45\textwidth]{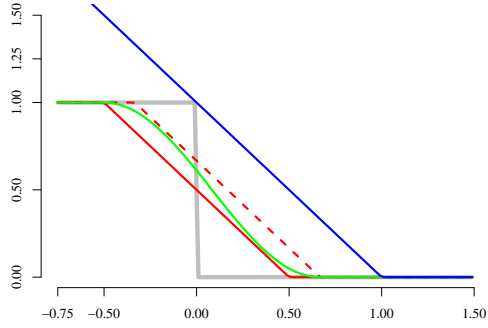}
\end{center}

\caption{
Illustration of the how our smooth loss relates to the slant and hinge losses.
Our smooth loss (green) upper bounds the slant-loss, and lower bounds the
slant-loss when shifted by $\nicefrac{1}{6}$, and the hinge-loss when shifted
by $\nicefrac{1}{3}$.
}

\label{fig:svm-sparse:smooth-loss}

\end{figure}

The concentration-based version of our main theorem follows:

\medskip
\begin{splittheorem}{thm:svm-sparse:main-smooth}

Let $R\in\R_{+}$ be fixed.
With probability $1-\delta$ over the training sample, uniformly over all pairs
$w,\tilde{w}\in \hilbert$ such that $\norm{w}\le R$ and $\tilde{w}$ has
objective function $f(\tilde{w}) \le \nicefrac{1}{3}$ in Problem
\ref{eq:svm-sparse:objective-approximation-smooth}:
\begin{align*}
\MoveEqLeft \loss[\zeroone]{\tilde{g}_{\tilde{w}}} \le \emploss[\hinge]{g_w} \\
& + O\left(\sqrt{\frac{\emploss[\hinge]{g_w}R^{2}\log^{3}n}{n}} +
\sqrt{\frac{\emploss[\hinge]{g_w}\log\frac{1}{\delta}}{n}}+\frac{R^{2}\log^{3}n}{n}+\frac{\log\frac{1}{\delta}}{n}\right)
\end{align*}
%

\end{splittheorem}
\begin{splitproof}

Because our bound is based on a smooth loss, we begin by defining the bounded
$4$-smooth loss $\ell_{smooth}(z)$ to be $1$ if $z < -\nicefrac{1}{2}$, $0$ if
$z > \nicefrac{2}{3}$, and $\nicefrac{1}{2}\left( 1+\cos\left(
\nicefrac{\pi}{2}\left( 1+\nicefrac{1}{7}\left( 12z-1 \right) \right) \right)
\right)$ otherwise.
This function is illustrated in Figure \ref{fig:svm-sparse:smooth-loss}---notice that it
upper-bounds the slant-loss, and lower-bounds the hinge loss even when shifted
by $\nicefrac{1}{3}$. Applying Theorem 1 of \citet{SrebroSrTe10} to this smooth
loss yields that, with probability $1-\delta$, uniformly over all $\tilde{w}$
such that $\norm{\tilde{w}}\le R$:
\begin{align*}
\MoveEqLeft \loss[\smooth]{g_{\tilde{w}}}\le\emploss[\smooth]{g_{\tilde{w}}} + \\
& O\left(\sqrt{\frac{\emploss[\smooth]{g_{\tilde{w}}}R^{2}\log^{3}n}{n}} +
\sqrt{\frac{\emploss[\smooth]{g_{\tilde{w}}}\log\frac{1}{\delta}}{n}}+\frac{R^{2}\log^{3}n}{n}+\frac{\log\frac{1}{\delta}}{n}\right)
\end{align*}
Just as the empirical slant-loss of a $\tilde{w}$ with $f(\tilde{w}) \le
\nicefrac{1}{2}$ is upper bounded by the empirical hinge loss of $w$, the
empirical smooth loss of a $\tilde{w}$ with $f(\tilde{w}) \le \nicefrac{1}{3}$
is upper-bounded by the same quantity. As was argued in the proof of Lemma
\ref{lem:svm-sparse:slant-loss}, this follows directly from Problem
\ref{eq:svm-sparse:objective-approximation-smooth}, and the definition of the smooth loss.
Combining this with the facts that the slant-loss lower bounds the smooth loss,
and that $\loss[\slant]{g_{\tilde{w}}} = \loss[\zeroone]{\tilde{g}_{\tilde{w}}}$,
completes the proof.
\end{splitproof}

It's worth pointing out that the addition of a norm-constraint to the objective
function (Problem \ref{eq:svm-sparse:objective-approximation-smooth}) is only necessary
because we want the theorem to apply to any $\tilde{w}$ with $f(\tilde{w}) \le
\nicefrac{1}{3}$. If we restrict ourselves to $\tilde{w}$ which are found using
subgradient descent with the suggested step size and iteration count, then
applying the triangle inequality to the sequence of steps yields that
$\norm{\tilde{w}} \le O\left(\norm{w}\right)$, and the above bound still holds
(albeit with a different constant hidden inside the big-Oh notation).

\subsection{Putting it Together}\label{sec:svm-sparse:together}

Now that all of the pieces are in place, we can state our final procedure,
start-to-finish:
\begin{enumerate}
\item Train a SVM to obtain $w$ with norm $\norm{w} \le O\left( \norm{\refw} \right)$ and
$\emploss[\hinge]{g_w} \le \emploss[\hinge]{g_{\refw}} + O\left( \epsilon \right)$.
\item Run subgradient descent on Problem
\ref{eq:svm-sparse:objective-approximation} until we find a predictor
$\tilde{w}$ with value $f(\tilde{w}) \le \nicefrac{1}{2}$ (see Algorithm
\ref{alg:svm-sparse:subgradient-descent}).
\item Predict using $\tilde{g}_{\tilde{w}}$.
\end{enumerate}

\medskip
\begin{splittheorem}{thm:svm-sparse:main-compression}

For an arbitrary (unknown) reference classifier $\refw$, with probability at
least $1-2\delta$ over a training set of size:
\begin{equation*}
n = \tilde{O}\left( \left( \frac{ \loss[\hinge]{ g_{\refw} } + \epsilon }{
\epsilon } \right) \frac{\norm{\refw}^2}{\epsilon} \log \frac{1}{\delta}
\right)
\end{equation*}
the procedure above finds a predictor $\tilde{w}$ supported on at most $O(
\norm{\refw}^2 )$ training vectors and error
$\loss[\zeroone]{\tilde{g}_{\tilde{w}}} \le \loss[\hinge]{g_{\refw}} +
O(\epsilon)$

\end{splittheorem}
\begin{splitproof}

First, note that with the specified sample size, applying Bernstein's
inequality to the fixed predictor $\refw$, we have that with probability at
least $1-\delta$, 
\begin{equation} \label{eq:svm-sparse:refwemp}
\emploss[\hinge]{\refw} \leq \loss[\hinge]{\refw} + \epsilon.
\end{equation}
Combining Equation \ref{eq:svm-sparse:refwemp} with the SVM training goal (Step 1) and
Lemma \ref{lem:svm-sparse:slant-loss}, we have that $\emploss[\slant]{g_{\tilde{w}}} \leq
\loss[\hinge]{g_{\refw}} + O(\epsilon)$.  Following Lemma \ref{lem:svm-sparse:convergence-rate}
we can apply Lemma \ref{lem:svm-sparse:generalization-from-compression} with $T = O(\|u\|^2)$, and plugging in the
specified sample complexity, we have $\loss[\slant]{g_{\tilde{w}}} \leq
\emploss[\slant]{g_{\tilde{w}}}$. Combining the two inequalities, and recalling
that the slant-loss of $g_{\tilde{w}}$ is the same as the expected $0/1$ error
of $\tilde{g}_{\tilde{w}}$, we obtain $\loss[\zeroone]{\tilde{g}_{\tilde{w}}} \le
\loss[\hinge]{g_{\refw}} + O(\epsilon)$. Lemma \ref{lem:svm-sparse:convergence-rate} also
establishes the desired bound on the number of support vectors.
\end{splitproof}

The procedure is efficient, and aside from initial SVM optimization, requires
at most $O( \norm{\refw}^2 )$ iterations.

\subsection{Kernelization}

As we saw in Section \ref{subsec:pca-sparse:subgradient-descent}, each
iteration of Algorithm \ref{alg:svm-sparse:subgradient-descent} requires $n$
kernel evaluations. These kernel evaluations dominate the computational cost of
the gradient descent procedure (all other operations can be performed in $O(n)$
operations per iteration). As is standard for kernel SVM training, we will
therefore analyze runtime in terms of the number of kernel evaluations
required.

With $O(\|\refw\|^2)$ iterations, and $O(n)$ kernel evaluations per iteration,
the overall number of required kernel evaluations for the gradient descent
procedure is (ignoring the $\delta$ dependence):
\begin{equation*}
O\left(n \norm{\refw}^2\right) = \tilde{O}\left(  \left( \frac{
	\emploss[\hinge]{ g_{\refw} } + \epsilon }{ \epsilon } \right)
\frac{\norm{\refw}^4}{\epsilon} \right)
\end{equation*}
This is {\em less} then the best known runtime bound for kernel SVM
optimization, so we do not expect the sparsification step to be computationally
dominant (i.e.  it is in a sense ``free''). In order to complete the picture
and understand the entire runtime of our method, we must also consider the
runtime of the SVM training (Step 1). The best kernelized SVM optimization
guarantee of which we are aware is achieved by the Stochastic Batch Perceptron
(SBP, \citet{CotterShSr12}). Using the SBP, we can find $w$ with $\norm{w} \leq
2\norm{u}$ and $\emploss[\hinge]{g_w} \le \emploss[\hinge]{g_{\refw}} + \epsilon$
using:
\begin{equation*}
O\left( \left( \frac{ \emploss[\hinge]{ g_{\refw} } + \epsilon }{ \epsilon }
\right)^2 \norm{\refw}^2  n \right)
\end{equation*}
kernel evaluations, yielding (with the $\delta$-dependence):
\begin{corollary}
If using the SBP for Step 1 and the sample size required by Theorem
\ref{thm:svm-sparse:main-compression}, the procedure in Section
\ref{sec:svm-sparse:together} can be performed with:
\begin{equation*}
\tilde{O}\left( \left( \frac{ \emploss[\hinge]{ g_{\refw} } + \epsilon }{
	\epsilon } \right)^3 \frac{ \norm{\refw}^4}{\epsilon} \log\frac{1}{\delta}
	\right)
\end{equation*}
kernel evaluations.
\end{corollary}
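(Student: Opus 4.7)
The plan is to simply add up the two costs incurred by the procedure of Section~\ref{sec:svm-sparse:together}: the cost of training the initial (dense) SVM with the SBP in Step~1, and the cost of the sparsification subgradient descent in Step~2. Both are expressed in terms of the sample size $n$ specified by Theorem~\ref{thm:svm-sparse:main-compression}, so the main ``work'' of the proof is bookkeeping rather than any new probabilistic or optimization argument.

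First I would recall the runtime bound for the SBP quoted immediately before the corollary: it produces a predictor $w$ with $\norm{w}\le 2\norm{\refw}$ and $\emploss[\hinge]{g_w}\le \emploss[\hinge]{g_\refw}+\epsilon$ using $O\bigl((\tfrac{\emploss[\hinge]{g_\refw}+\epsilon}{\epsilon})^2 \norm{\refw}^2 n\bigr)$ kernel evaluations. Plugging in the sample size $n=\tilde{O}\bigl((\tfrac{\loss[\hinge]{g_\refw}+\epsilon}{\epsilon})\tfrac{\norm{\refw}^2}{\epsilon}\log\tfrac{1}{\delta}\bigr)$ from Theorem~\ref{thm:svm-sparse:main-compression} and absorbing the $\loss[\hinge]{g_\refw}$ versus $\emploss[\hinge]{g_\refw}$ discrepancy (which is $O(\epsilon)$ with high probability, by the same Bernstein argument invoked in the proof of Theorem~\ref{thm:svm-sparse:main-compression}) gives the Step~1 cost
\[
\tilde{O}\!\left(\Bigl(\tfrac{\emploss[\hinge]{g_\refw}+\epsilon}{\epsilon}\Bigr)^{3}\tfrac{\norm{\refw}^4}{\epsilon}\log\tfrac{1}{\delta}\right).
\]

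Next I would bound Step~2. By Lemma~\ref{lem:svm-sparse:convergence-rate}, Algorithm~\ref{alg:svm-sparse:subgradient-descent} terminates in at most $T=4\norm{w}^2 = O(\norm{\refw}^2)$ iterations, each of which costs $O(n)$ kernel evaluations (as observed in Section~\ref{subsec:pca-sparse:subgradient-descent}). Substituting the same $n$ as above yields a Step~2 cost of $\tilde{O}\bigl((\tfrac{\loss[\hinge]{g_\refw}+\epsilon}{\epsilon})\tfrac{\norm{\refw}^4}{\epsilon}\log\tfrac{1}{\delta}\bigr)$, which is strictly smaller (by a factor of $(\tfrac{\loss[\hinge]{g_\refw}+\epsilon}{\epsilon})^{2}$) than the Step~1 bound.

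Summing the two costs and absorbing the Step~2 term into the Step~1 term gives the claimed total, completing the proof. There is no real obstacle here: the only points to check carefully are that the $\emploss[\hinge]$ appearing in the SBP runtime can be replaced by $\loss[\hinge]$ at the stated sample size (again a standard Bernstein concentration step), and that the logarithmic factors hidden in the two $\tilde{O}$'s combine cleanly, both of which are routine.
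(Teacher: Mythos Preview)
Your proposal is correct and follows essentially the same approach as the paper: the argument in the text immediately surrounding the corollary computes the Step~2 cost as $O(n\norm{\refw}^2)$, observes it is dominated by the SBP's Step~1 cost of $O\bigl((\tfrac{\emploss[\hinge]{g_\refw}+\epsilon}{\epsilon})^2\norm{\refw}^2 n\bigr)$, and then substitutes the sample size from Theorem~\ref{thm:svm-sparse:main-compression}. Your handling of the $\emploss[\hinge]$ versus $\loss[\hinge]$ discrepancy via Bernstein is exactly the right bookkeeping.
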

Because the SBP finds a $w$ with $\norm{w} \leq 2\norm{u}$, and our subgradient
descent algorithm finds a $\tilde{w}$ supported on $4\norm{w}^2$ training
vectors, it follows that the support size of $\tilde{w}$ is bounded by
$16\norm{u}^2$. The runtime of Step 2 (the sparsification procedure) is
asymptotically negligible compared to Step 1 (initial SVM training), so the
overall runtime is the same as for stand-alone SBP. Overall, our procedure
finds an optimally sparse SVM predictor, and at the same time matches the best
known sample and runtime complexity guarantees for SVM learning (up to small
constant factors).

\subsection{Unregularized Bias}\label{subsec:svm-sparse:bias}

Frequently, SVM problems contain an unregularized bias term---rather
than the classification function being the sign of $g_w(x) =
\inner{w}{\Phi(x)}$, it is the sign of $g_{w,b}(x) =
\left(\inner{w}{\Phi(x)}+b\right)$ for a weight vector $w$ and a bias
$b$, where the bias is unconstrained, being permitted to take on the
value of any real number.

When optimizing SVMs, the inclusion of an unregularized bias introduces some
additional complications which typically require special treatment. Our
subgradient descent procedure, however, is essentially unchanged by the
inclusion of a bias (although the SVM solver which we use to find $w$ and $b$
must account for it). Indeed, we need only redefine:
\begin{equation*}
h_i=\min\left( 1, y_i \left( \inner{w}{\Phi(x_i)} + b \right) \right) - y_i b 
\end{equation*}
in Problem \ref{eq:svm-sparse:objective-approximation}, and then find $\tilde{w}$ as
usual. The resulting sparse classifier is parameterized by $\tilde{w}$ and $b$,
with $b$ being that of the initial SVM solution.

Alternatively, instead of taking the bias of the sparse predictor to be the
same as that of the original SVM predictor, we may optimize over $\tilde{b}$
during our subgradient descent procedure. The resulting optimization procedure
is more complex, although it enjoys the same performance guarantee, and may
result in better solutions, in practice. The relevant optimization problem
(analogous to Problem \ref{eq:svm-sparse:objective-approximation}) is:
\begin{align}
\label{eq:svm-sparse:objective-approximation-bias} \mbox{minimize}: &
f(\tilde{w},\tilde{b}) = \max_{i:y_i \inner{w}{\Phi(x_i)} > 0} \left( h_i - y_i
\left( \inner{\tilde{w}}{\Phi(x_i)} + \tilde{b} \right) \right) \\
\notag \mbox{with}: & h_i = \min\left( 1, y_i ( \inner{w}{\Phi(x_i)} + b)
\right)
\end{align}
A $\nicefrac{1}{2}$-approximation may once more be found using subgradient
descent. The difference is that, before finding a subgradient, we will
implicitly optimize over $\tilde{b}$. It can be easily observed that the
optimal $\tilde{b}$ will ensure that:
\begin{equation}
\label{eq:svm-sparse:bias-calculation} \max_{i:y_i > 0 \wedge
\inner{w}{\Phi(x_i)} > 0} \left( h_i - ( \inner{\tilde{w}}{\Phi(x_i)} +
\tilde{b}) \right) = \max_{i:y_i < 0 \wedge \inner{w}{\Phi(x_i)} < 0} \left(
h_i + ( \inner{\tilde{w}}{\Phi(x_i)} + \tilde{b}) \right)
\end{equation}
In other words, $\tilde{b}$ will be chosen such that the maximal violation
among the set of positive examples will equal that among the negative examples.
Hence, during optimization, we may find the most violating \emph{pair} of one
positive and one negative example, and then take a step on both elements. The
resulting subgradient descent algorithm is:
\begin{enumerate}
\item Find the training indices $i_{+}:y_i > 0 \wedge \inner{w}{\Phi(x_i)} + b
> 0$ and $i_{-}:y_i < 0 \wedge \inner{w}{\Phi(x_i)} + b < 0$ which maximize
$h_i - y_i \inner{\tilde{w}^{(t-1)}}{\Phi(x_i)}$
\item Take the subgradient step $\tilde{w}^{(t)} \leftarrow \tilde{w}^{(t-1)} +
\eta( \Phi(x_{i_{+}}) - \Phi(x_{i_{-}}))$.
\end{enumerate}
Once optimization has completed, $\tilde{b}$ may be computed from Equation
\ref{eq:svm-sparse:bias-calculation}. As before, this algorithm will find a
$\nicefrac{1}{2}$-approximation in $4 \norm{w}^2$ iterations.

\section{Related Algorithms}\label{sec:svm-sparse:comparison}

\begin{table}

\begin{small}
\begin{center}
\begin{tabular}{l|c}
\hline
& Support Size \\
\hline
GD on Problem \ref{eq:svm-sparse:objective-approximation} & $R^2$ \\
SBP & $\left( \frac{L^* + \epsilon}{\epsilon} \right)^2 R^2$ \\
SGD on Problem \ref{eq:svm-introduction:norm-constrained-objective} & $\frac{R^2}{\epsilon^2}$ \\
Dual Decomposition & $\left( \frac{L^* + \epsilon}{\epsilon} \right) \frac{R^2}{\epsilon}$ \\
Perceptron + Online-to-Batch & $\left( \frac{L^* + \epsilon}{\epsilon} \right)^2 R^2$ \\
Random Fourier Features & $\frac{d R^2}{\epsilon^2}$ \\
\hline
\end{tabular}
\end{center}
\end{small}

\caption{
Upper bounds, up to constant and log factors, on the support size of the
solution found by various algorithms, where the solution satisfies
$\loss[\zeroone]{g_w}\leq L^* +\epsilon$, where $R$ bounds the norm of a
reference classifier achieving hinge loss $L^*$. See Chapter
\ref{ch:svm-introduction} (in particular Sections
\ref{sec:svm-introduction:traditional} and
\ref{sec:svm-introduction:non-traditional}, as well as Table
\ref{tab:svm-introduction:runtimes}) for derivations of the non-SBP bounds, and
Chapter \ref{ch:svm-sbp} (in particular Section
\ref{subsec:svm-sbp:slack-constrained-runtime}) for the SBP bound.
}

\label{tab:svm-sparse:bounds}

\end{table}

The tendency of SVM training algorithms to find solutions with large numbers of
support vectors has been recognized as a shortcoming of SVMs since their
introduction, and many approaches for finding solutions with smaller support
sizes have been proposed, of varying levels of complexity and effectiveness.

We group these approaches into two categories: those which, like ours, start
with a non-sparse solution to the SVM problem, and then find a sparse
approximation; and those which either modify the SVM objective so as to result
in sparse solutions, or optimize it using an algorithm specifically designed to
maximize sparsity.

In this section, we will discuss previous work of both of these types. None of
these algorithms have performance guarantees which can be compared to that of
Theorem \ref{thm:svm-sparse:main-compression}, so we will also discuss some
algorithms which do not optimize the SVM objective (even approximately), but do
find sparse solutions, and for some of which generalization bounds have been
proven. In section \ref{sec:svm-sparse:experiments}, we also report on
empirical comparisons to some of the methods discussed here.

There is a third class of algorithms which are worthy of mention: those SVM
optimizers which are not explicitly designed to limit the support size of the
solution, but for which we can prove bound son the support size which are
comparable to that of Theorem \ref{thm:svm-sparse:main-compression}. Table
\ref{tab:svm-sparse:bounds} contains such bounds for the algorithms of Chapters
\ref{ch:svm-introduction} and \ref{ch:svm-sbp}. One can see that none perform
as well as our novel sparsification procedure (first row), although both SBP
and the online Perceptron perform quite well. Significantly, our procedure is
the only one for which the support size has \emph{no dependence} on $\epsilon$.

\subsection{Post-hoc approximation approaches}

One of the earliest proposed methods for finding sparse SVM solutions was that
of \citet{OsunaGi98}, who suggest that one first solve the kernel SVM
optimization problem to find $w$, and then, as a post-processing step, find a
sparse approximation $\tilde{w}$ using support vector regression (SVR),
minimizing the average $\epsilon$-insensitive loss, plus a regularization
penalty:
\begin{align}
\label{eq:svm-sparse:objective-osuna-girosi} \MoveEqLeft \mbox{minimize}:
f_{OG}(\tilde{w}) = \frac{1}{2}\norm{\tilde{w}}^2 + \\
\notag & \tilde{C} \sum_{i=1}^{n} \max\left( 0, \abs{ \inner{w}{\Phi(x_i)} -
\inner{\tilde{w}}{\Phi(x_i)} } - \epsilon \right)
\end{align}
Optimizing this problem results in a $\tilde{w}$ for which the numerical values
of $\inner{w}{\Phi(x)}$ and $\inner{\tilde{w}}{\Phi(x)}$ must be similar, even
for examples which $w$ misclassifies. This is an unnecessarily stringent
condition---because the underlying problem is one of classification, we need
only find a solution which gives roughly the same classifications as $w$,
without necessarily matching the value of the classification function $g_w$.
It is in this respect that our objective function, Problem
\ref{eq:svm-sparse:objective-approximation}, differs.

Osuna and Girosi's work was later used as a key component of the work of
\citet{ZhanSh05}, who first solve the SVM optimization problem, and then
exclude a large number of support vectors from the training set based on a
``curvature heuristic''. They then retrain the SVM on this new, smaller,
training set, and finally apply the technique of Osuna and Girosi to the
result.

\subsection{Alternative optimization strategies}

Another early method for finding sparse approximate SVM solutions is RSVM
\citep{LeeMa01}, which randomly samples a subset of the training set, and then
searches for a solution supported only on this sample, minimizing the loss on
the \emph{entire} training set.

So-called ``reduced set'' methods \citep{BurgesSc97,WuScBa05} address the
problem of large support sizes by removing the constraint that the SVM solution
be supported on the training set. Instead it is now supported on a set of
``virtual training vectors'' $z_1,\dots,z_k$ with $k\ll n$, while having the
same form as the standard SVM solution: $\sign\left( \sum_{i=1}^k \beta_i y_i
K(x,z_i) \right)$. One must optimize over not just the coefficients $\beta_i$,
but \emph{also} the virtual training vectors $z_i$. Because the support set is
not a subset of the training set, our lower bound (Lemma \ref{lem:svm-sparse:lower-bound})
does not apply. However, the resulting optimization problem is non-convex, and
is therefore difficult to optimize.

More recently, techniques such as those of \citet{JoachimsYu09} and
\citet{NguyenMaTaHa10} have been developed which, rather than explicitly
including the search for good virtual training vectors in the objective
function, instead find such vectors heuristically during optimization.
These approaches have the significant advantage of not explicitly relying on
the optimization of a non-convex problem, although in a sense this difficulty
is being ``swept under the rug'' through the use of heuristics.

Another approach is that of \citet{KeerthiChDe06}, who optimize the standard
SVM objective function while explicitly keeping track of a support set $S$. At
each iteration, they perform a greedy search over all training vectors
$x_i\notin S$, finding the $x_i$ such that the optimal solution supported on $S
\cup \{x_i\}$ is best. They then add $x_i$ to $S$, and repeat. This is another
extremely well-performing algorithm, but while the authors propose a clever
method for improving the computational cost of their approach, it appears that
it is still too computationally expensive to be used on very large datasets.

\subsection{Non-SVM algorithms}

\citet{CollobertSiWeBo06} modify the SVM objective to minimize not the convex
hinge loss, but rather the non-convex ``ramp loss'', which differs from our
slant-loss only in that the ramp covers the range $[-1,1]$ instead of
$[-\nicefrac{1}{2},\nicefrac{1}{2}]$. Because the resulting objective function
is non-convex, it is difficult to find a global optimum, but the experiments of
\citet{CollobertSiWeBo06} show that local optima achieve essentially the same
performance with smaller support sizes than solutions found by ``standard''
SVM optimization.

Another approach for learning kernel-based classifiers is to use online
learning algorithms such as the Perceptron (e.g. \citet{FreundSc99}). The
Perceptron processes the training example one by one and adds a support vector
only when it makes a prediction mistake. Therefore, a bound on the number of
prediction mistakes (i.e. a mistake bound) translates to a bound on the
sparsity of the learned predictor.

As was discussed in Section \ref{sec:svm-sparse:sparsity}, if the data are separable with
margin $1$ by some vector $\refw$, then the Perceptron can find a very sparse
predictor with low error.
%
%
However, in the non-separable case, the Perceptron might make a number of
mistakes that grows linearly with the size of the training sample, although, as
can be seen in Table \ref{tab:svm-sparse:bounds}, the Perceptron shares the
second-best bound on the sparsity of its solution with the SBP algorithm of
Chapter \ref{ch:svm-sbp}.

To address this linear growth in the support size, online learning algorithms
for which the support size is bounded by a \emph{budget parameter} have been
proposed. Notable examples include the Forgetron \citep{DekelShSi05} and the
Randomized Budget Perceptron (RBP, \citet{CavallantiCeGe07}). Such algorithms
discard support vectors when their number exceeds the budget parameter---for
example, the RBP discards an example chosen uniformly at random from the set of
support vectors, whenever needed.

Both of these algorithms have been analyzed, but the resulting mistake bounds
are inferior to that of the Perceptron, leading to worse generalization bounds
than the one we achieve for our proposed procedure, for the same support size.
For example, the generalization bound of the Forgetron is at least
$4\emploss[hinge]{g_{\refw}}$. The bound of the RBP is more involved, but it is
possible to show that in order to obtain a support size of $16\norm{u}^2$, the
generalization bound would depend on at least
$(5/3)\emploss[hinge]{g_{\refw}}$. In contrast, the bound we obtain only
depends on $\emploss[hinge]{g_{\refw}}$.

\section{Practical Variants}\label{sec:svm-sparse:practical}

While our ``basic algorithm'' (Algorithm
\ref{alg:svm-sparse:subgradient-descent}) gives asymptotically optimal
theoretical performance, slight variations of it give better empirical
performance.

The analysis of Theorem \ref{thm:svm-sparse:main-compression} bounds the
performance of the \emph{randomized} classifier $\tilde{g}_{\tilde{w}}$, but we
have found that randomization \emph{hurts} performance in practice, and that
one is better off predicting using $\sign(g_{\tilde{w}})$.
Our technique relies on finding an approximate solution $\tilde{w}$ to Problem
\ref{eq:svm-sparse:objective-approximation} with $f(\tilde{w}) \le
\nicefrac{1}{2}$, but this $\nicefrac{1}{2}$ threshold is a relic of our use of
randomization. Since randomization does not help in practice, there is little
reason to expect there to be anything ``special'' about $\nicefrac{1}{2}$---one
may achieve a superior sparsity/generalization tradeoff at different levels of
convergence, and with values of the step-size $\eta$ other than the suggested
value of $\nicefrac{1}{2}$. For this reason, we suggest experimenting with
different values of these parameters, and choosing the best based on
cross-validation.

Another issue is the handling of an unregularized bias. In Section
\ref{subsec:svm-sparse:bias}, we give two alternatives: the first is to take
the bias associated with $\tilde{w}$ to be the same as that associated with
$w$, while the second is to learn $\tilde{b}$ during optimization The latter
approach results in a slightly more complicated subgradient descent algorithm,
but its use may result in a small boost in performance---hence, our reference
implementation uses this procedure.

\subsection{Aggressive Variant}\label{subsec:svm-sparse:aggressive}

A more substantial deviation from our basic algorithm is to try to be more
aggressive about maintaining sparsity by re-using existing support vectors when
optimizing Problem \ref{eq:svm-sparse:objective-approximation}. This can be
done in the the following way: at each iteration, check if there is a support
vector (i.e.~a training point already added to the support set) for which $h_i
- \inner{\tilde{w}}{\Phi(x_i)} \le \epsilon$ (where $\epsilon$ is the
termination threshold, $\nicefrac{1}{2}$ in the analysis of Section
\ref{sec:svm-sparse:algorithm}). If there is such a support vector, increase
its coefficient $\alpha_i$---only take a step on index $i$ which is not
currently in the support set if all current support vectors satisfy the
constraint. This yields a potentially sparser solution at the cost of more
iterations.

%

\section{Experiments}\label{sec:svm-sparse:experiments}

\begin{table}

\begin{small}
\begin{center}
\begin{tabular}{l|cc|cc}
\hline
& Training & Testing & $\gamma$ & $C$ \\
\hline
Adult  & $22696$  & $9865$  & $0.1$    & $1$     \\
IJCNN  & $35000$  & $91701$ & $1$      & $10$    \\
Web    & $49749$  & $14951$ & $0.1$    & $10$    \\
TIMIT  & $63881$  & $22257$ & $0.025$  & $1$     \\
Forest & $522910$ & $58102$ & $0.0001$ & $10000$ \\
\hline
\end{tabular}
\end{center}
\end{small}

\caption{
Datasets used in our experiments. Except for TIMIT, these are a subset of the
datasets, with the same parameters, as were compared in \citet{NguyenMaTaHa10}.
We use a Gaussian kernel $K(x,x') = \exp(-\gamma \norm{x-x'})$ with parameter
$\gamma$, and regularization tradeoff parameter $C$.
}

\label{tab:svm-sparse:datasets}

\end{table}

\begin{figure}

\begin{center}
\begin{tabular}{ @{} L @{} H @{} H @{} }
& \large{Adult} & \large{IJCNN} \\
\rotatebox{90}{\scriptsize{Test error}} &
\includegraphics[width=0.45\textwidth]{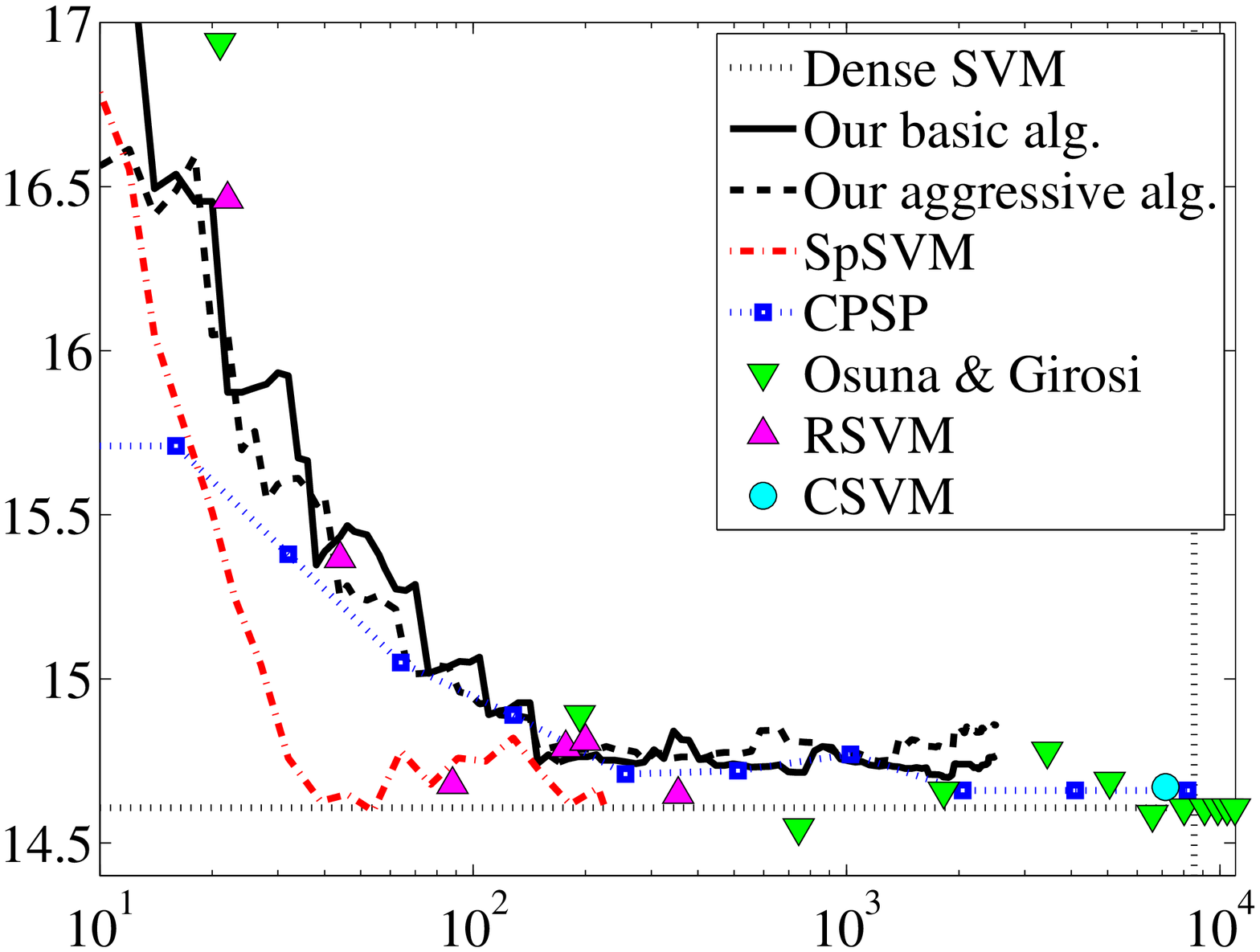} &
\includegraphics[width=0.45\textwidth]{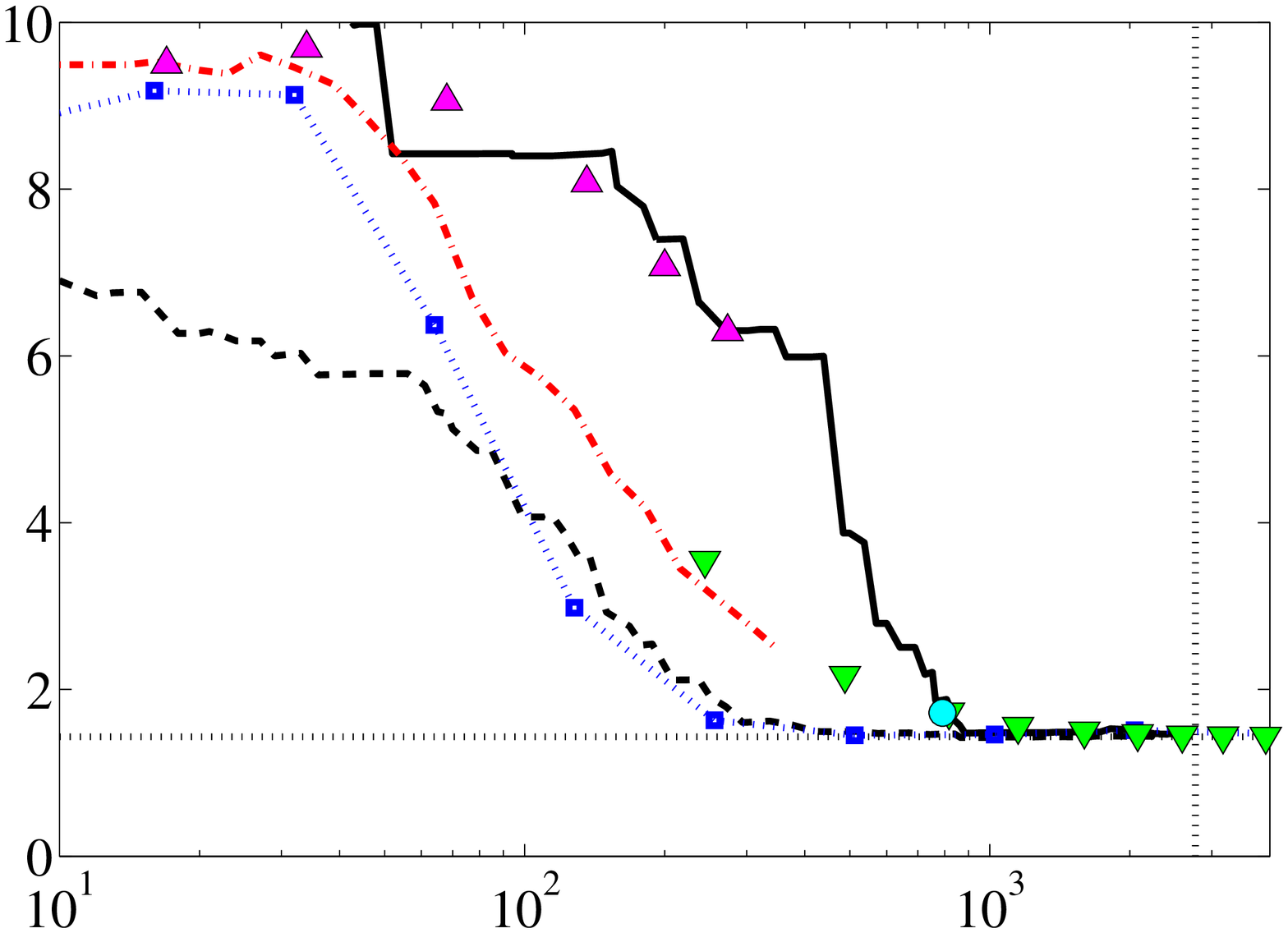} \\
& \scriptsize{Support Size} & \scriptsize{Support Size}
\end{tabular}
\begin{tabular}{ @{} L @{} H @{} H @{} }
& \large{Web} & \large{TIMIT} \\
\rotatebox{90}{\scriptsize{Test error}} &
\includegraphics[width=0.45\textwidth]{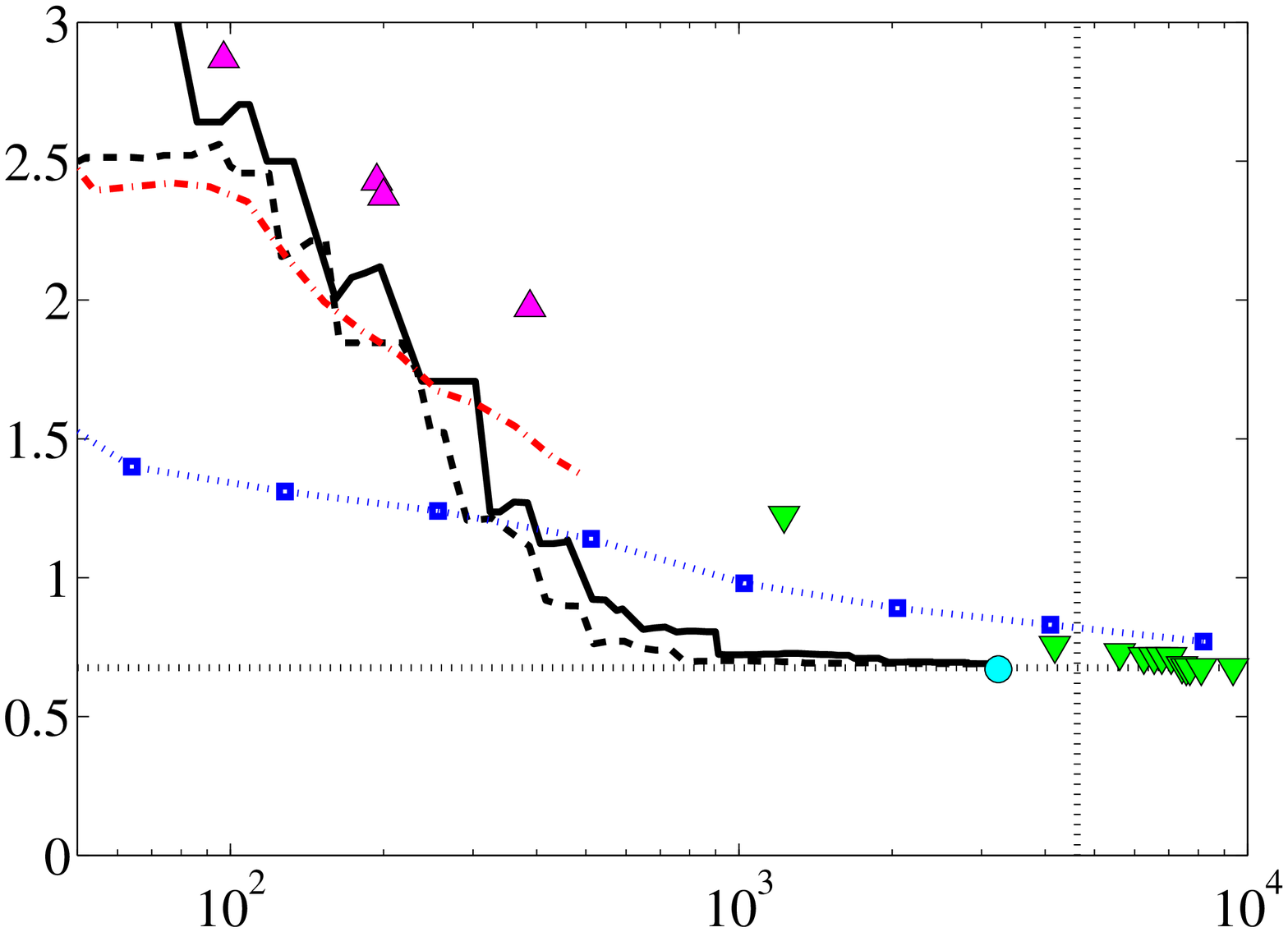} &
\includegraphics[width=0.45\textwidth]{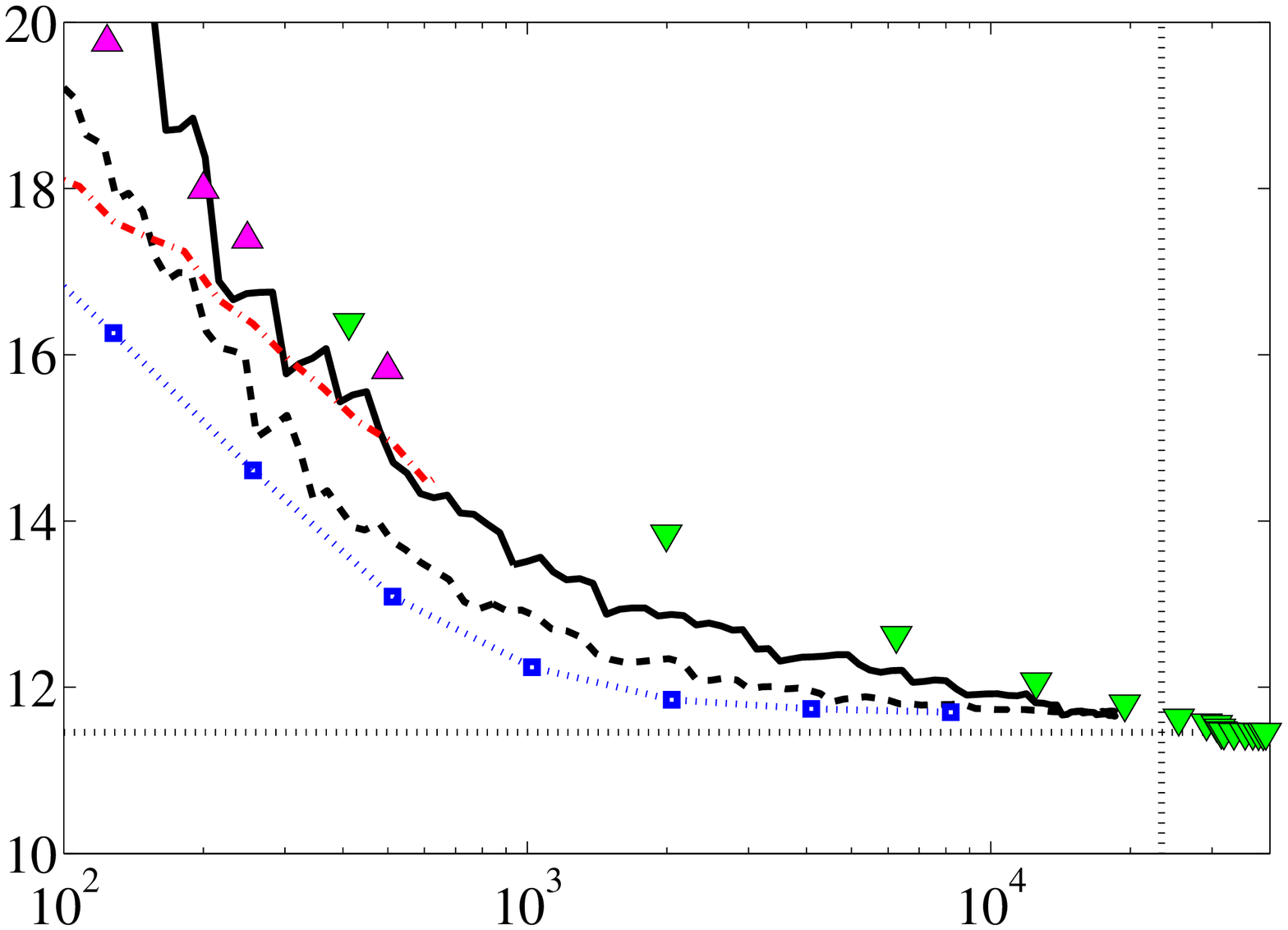} \\
& \scriptsize{Support Size} & \scriptsize{Support Size}
\end{tabular}
\begin{tabular}{ @{} L @{} H @{} }
& \large{Forest} \\
\rotatebox{90}{\scriptsize{Test error}} &
\includegraphics[width=0.45\textwidth]{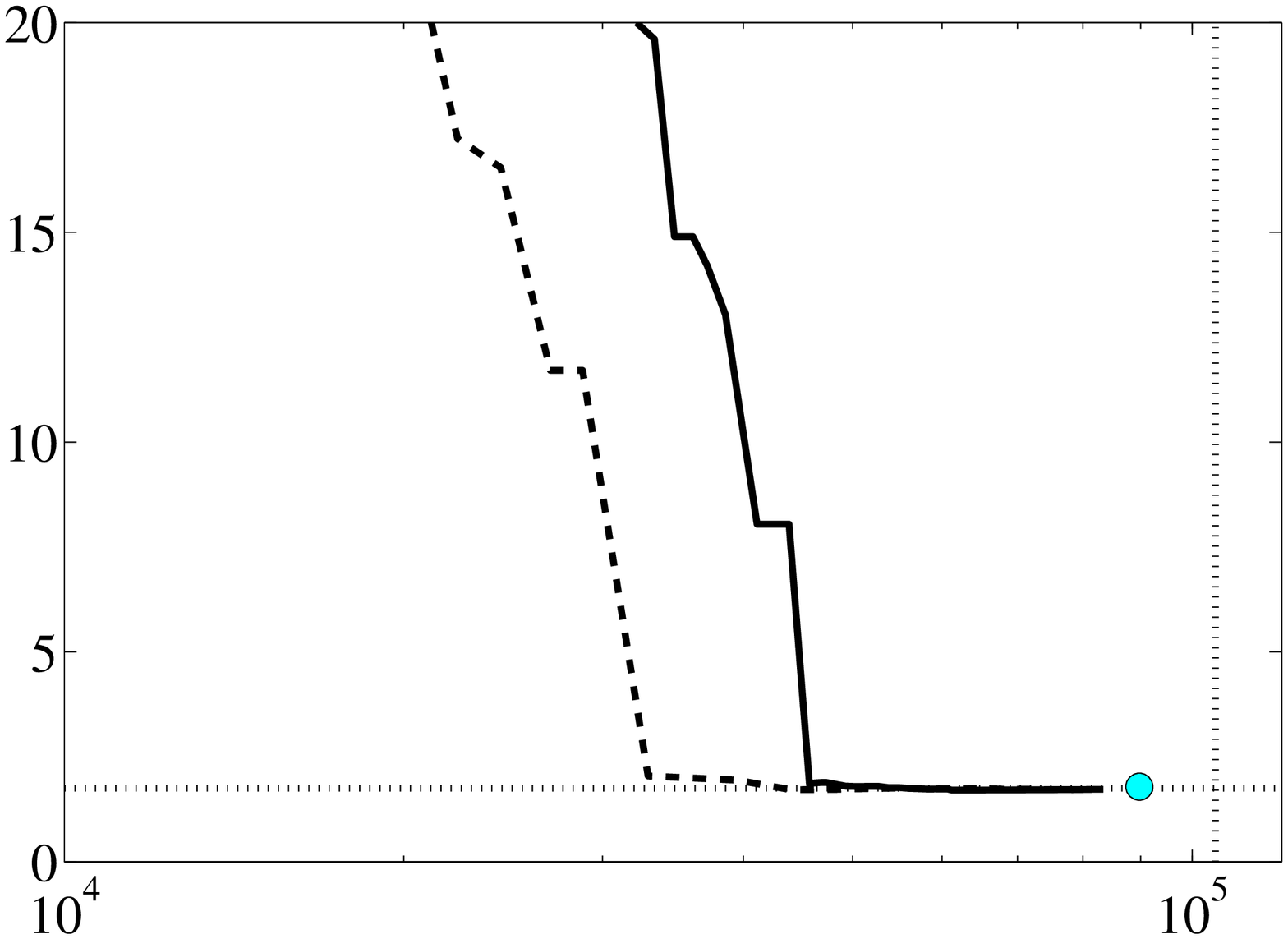} \\
& \scriptsize{Support Size}
\end{tabular}
\end{center}

\caption{
Plots of test error (linear scale) versus support size (log scale). The
horizontal and vertical dotted lines are the test error rate and support size
of the classifier found by GTSVM. TIMIT was not included in the experiments of
\citet{NguyenMaTaHa10}. On Forest, SpSVM ran out of memory, CPSP failed to
terminate in one week for 4096 or more basis functions, LibSVM failed to
optimize the SVR problem (Problem \ref{eq:svm-sparse:objective-osuna-girosi}) in $4$ days
for $\epsilon < 1$, and RSVM's solutions were limited to $200$ support vectors,
far too few to perform well on this dataset.
}

\label{fig:svm-sparse:experiments}

\end{figure}

Basing our experiments on recent comparisons between sparse SVM optimizers
\citep{KeerthiChDe06,NguyenMaTaHa10}, we compare our
implementation\footnote{\url{http://ttic.uchicago.edu/~cotter/projects/SBP}} to
the following methods\footnote{We were unable to find a reduced set
implementation on which we could successfully perform our experiments}:
\begin{enumerate}
\item SpSVM \citep{KeerthiChDe06}, using Olivier Chapelle's
implementation\footnote{\url{http://olivier.chapelle.cc/primal}}.
%
\item CPSP \citep{JoachimsYu09}, using SVM-Perf.
%
\item Osuna \& Girosi's algorithm \citep{OsunaGi98}, using LibSVM
\citep{ChangLi01} to optimize the resulting SVR problems.
\item RSVM \citep{LeeMa01}, using the LibSVM Tools
implementation \citep{LinLi03}.
\item CSVM \citep{NguyenMaTaHa10}. We did not perform these experiments
ourselves, and instead present the results reported in the CSVM paper.
\end{enumerate}
Our comparison was performed on the datasets listed in Table
\ref{tab:svm-sparse:datasets}.
Adult and IJCNN are the ``a8a'' and ``ijcnn1'' datasets from LibSVM Tools. Web
and Forest are from the LibCVM
Toolkit\footnote{\url{http://c2inet.sce.ntu.edu.sg/ivor/cvm.html}}. We also use
a multiclass
dataset\footnote{\url{http://ttic.uchicago.edu/~cotter/projects/gtsvm}} derived
from the TIMIT speech corpus, on which we perform one-versus-rest
classification, with class number $3$ (the phoneme \texttt{/k/}) providing the
``positive'' instances. Both Adult and TIMIT have relatively high error rates,
making them more challenging for sparse SVM solvers.
Both our algorithm and that of Osuna \& Girosi require a reference classifier
$w$, found using GTSVM \citep{CotterSrKe11}.

We experimented with two versions of our algorithm, both incorporating the
modifications of Section \ref{sec:svm-sparse:practical}, differing only in whether they
include the aggressive variation. For the ``basic'' version, we tried $\eta =
\{ 4^{-4}, 4^{-3}, \dots, 4^2 \}$, keeping track of the progress of the
algorithm throughout the course of optimization. For each support size, we
chose the best $\eta$ based on a validation set (half of the original test set)
reporting errors on an independent test set (the other half). This was then
averaged over $100$ random test/validation splits.

For the aggressive variant (Section \ref{subsec:svm-sparse:aggressive}), we experimented
not only with multiple choices of $\eta$, but also termination thresholds
$\epsilon \in \left\{ 2^{-4}, 2^{-3}, \dots, 1 \right\}$, running until this
threshold was satisfied. Optimization over the parameters was then performed
using the same validation approach as for the ``basic'' algorithm.
%

%

In our CPSP experiments, the target numbers of basis functions were taken to be
powers of two. For Osuna \& Girosi's algorithm, we took the SVR regularization
parameter $\tilde{C}$ to be that of Table \ref{tab:svm-sparse:datasets} (i.e. $\tilde{C} =
C$), and experimented with $\epsilon \in \left\{ 2^{-16}, 2^{-15}, \dots, 2^4
\right\}$. For RSVM, we tried subset ratios $\nu \in \left\{ 2^{-16}, 2^{-15},
\dots, 1 \right\}$---however, the implementation we used was unable to find a
support set of size larger than $200$, so many of the larger values of $\nu$
returned duplicate results.


The results are summarized in Figure \ref{fig:svm-sparse:experiments}. Our
aggressive variant achieved a test error / support size tradeoff comparable to
or better than the best competing algorithms, except on the Adult and TIMIT
datasets, on the latter of which performance was fairly close to that of CPSP.
Even the basic variant achieved very good results, often similar to or better
than other, more complicated, methods.  On the Adult data set, the test errors
(reported) are significantly higher then the validation errors, indicating our
methods are suffering from parameter overfitting due to too small a validation
set (this is also true, to a lesser degree, on TIMIT).
Note that SpSVM and CPSP, both of which perform very well, failed to find good
solutions on the forest dataset within a reasonable timeframe, illustrating the
benefits of the simplicity of our approach.

To summarize, not only does our proposed method achieve optimal theoretical
guarantees (the best possible sparseness guarantee with the best known sample
complexity and runtime for kernelized SVM learning), it is also computationally
inexpensive, simple to implement, and performs well in practice.

\paragraph{Collaborators:} The work presented in this chapter was performed
jointly with Shai Shalev-Shwartz and Nathan Srebro.

\clearpage
\section{Proofs for Chapter \ref{ch:svm-sparse}}

\begin{proofs}
\end{proofs}

\cleardoublepage 

\ctparttext{
\todo{write part-abstract! include list of contributions, and cite publications}
}

\part{Principal Component Analysis}\label{part:pca} 

\chapter{Basic Algorithms}\label{ch:pca-introduction}

\section{Overview}\label{sec:pca-introduction:overview}

Principal Component Analysis (PCA) is a ubiquitous tool used in many data
analysis, machine learning and information retrieval applications. It is used
for obtaining a lower dimensional representation of a high dimensional signal
that still captures as much as possible of the original signal. Such a low
dimensional representation can be useful for reducing storage and computational
costs, as complexity control in learning systems, or to aid in visualization.

Uncentered PCA is typically phrased as a question about a fixed dataset: given
$n$ vectors in $\R^d$, what is the $k$-dimensional subspace that captures most
of the variance in the dataset? (or equivalently, that is best in
reconstructing the vectors, minimizing the sum squared distances, or residuals,
to the subspace). It is well known that this subspace is given by the leading
$k$ components of the singular value decomposition of the data matrix (or
equivalently the top $k$ eigenvectors of the empirical second moment matrix).
And so, the study of computational approaches for PCA has mostly focused on
methods for finding the SVD (or leading components of the SVD) of a given $n
\times d$ matrix.

Our treatment of this problem departs from the traditional statistical approach
in that we view PCA as a stochastic optimization problem, where the goal is to
optimize a ``population objective'' based on \iid draws from the population.
That is, we have some unknown source (``population'') distribution
$\mathcal{D}$ over $\R^d$, and the goal is to find the $k$-dimensional subspace
maximizing the (uncentered) variance of $\mathcal{D}$ inside the subspace (or
equivalently, minimizing the average squared residual in the population), based
on \iid samples from $\mathcal{D}$. The main point here is that the true
objective does not measure how well the subspace captures the {\em sample}
(i.e.~the ``training error''), but rather how well the subspace captures the
underlying source distribution (i.e.~the ``generalization error'').
Furthermore, we are not concerned here with capturing some ``true'' subspace
(in which case one might quantify success as e.g. the ``angle'' between the
found subspace and the ``true'' subspace), but rather at finding a ``good''
subspace, which has a near-optimal value of the PCA objective. This will be
formalized more in Section \ref{sec:pca-introduction:objective}.

The straightforward approach is ``Sample Average Approximation'' (SAA)
(i.e.~``Empirical Risk Minimization''), in which one collects a sample of data
points, and then optimizes an empirical version of the objective \emph{on the
sample} using standard deterministic techniques (in this case linear algebra).
In the case of uncentered PCA, this amounts to computing the empirical
second-moment matrix of the sample, and then seeking the best rank-$k$
approximation to it, e.g.~by computing the leading components of its
eigendecomposition. The success of this approach is measured not by how well we
approximate the {\em empirical} second-moment matrix, but rather how well the
subspace we obtain captures the unknown source distribution (i.e.~the {\em
population} second-moment matrix). This approach will be considered in greater
detail in Section \ref{sec:pca-introduction:saa}.

The alternative, which we advocate here, is a ``Stochastic Approximation'' (SA)
approach. A SA algorithm is iterative---in each iteration a single sampled
point is used to perform an update, as in Stochastic Gradient Descent (SGD, the
canonical stochastic approximation algorithm). In the context of PCA, one
iteratively uses vectors sampled from $\mathcal{D}$ to update the subspace
being considered.

Stochastic approximation has been shown to be computationally preferable to
statistical average approximation (i.e.~to ``batch'' methods) both
theoretically and empirically for learning~\citep{BottouBo07,ShalevSr08} and
more broadly for stochastic optimization~\citep{NemirovskiJuLaSh09}.
Accordingly, SA approaches, mostly variants of SGD, are often the methods of
choice for many learning problems, especially when very large datasets are
available~\citep{ShalevSiSr07,CollinsGlKoCaBa08,ShalevTe09}.

This chapter will begin, in Section \ref{sec:pca-introduction:objective}, with
a description of two equivalent versions of the PCA objective: one of which
represents the underlying PCA subspace as a set of vectors spanning it,
represented as the orthonormal columns of a rectangular matrix $U$; the other
as a projection matrix $M$ which projects onto the PCA subspace. We call these
the $U$-based and $M$-based objectives, respectively. Section
\ref{sec:pca-introduction:saa} will discuss the SAA approach, while, in Section
\ref{sec:pca-introduction:power}, we will present the ``stochastic power
method'', a catch-all algorithm of which many variants may be found in
literature dating back several decades, and highlight its relationship to the
well-known power method for finding the maximum eigenvectors of a matrix. In
Section \ref{sec:pca-introduction:incremental}, an ``incremental algorithm''
will be discussed. Both of the latter two algorithms have been found to work
well in practice, but suffer from serious theoretical limitations---the former
is known to converge with probability one, but the rate of convergence is
unknown, while the latter fails to converge entirely on certain problem
instances. Portions of this chapter, in particular Sections
\ref{sec:pca-introduction:power} and \ref{sec:pca-introduction:incremental},
were originally presented at the 50th Allerton Conference on Communication,
Control and Computing \citep{AroraCoLiSr12}.

These basic algorithms form the foundation for Chapters \ref{ch:pca-warmuth}
and \ref{ch:pca-capped-msg}, which discuss theoretically justified and
analyzable algorithms for solving PCA problems, the first originally due to
\citet{WarmuthKu06}, and the second novel.

\section{Objective}\label{sec:pca-introduction:objective}

\begin{table}

\begin{small}
\begin{center}
\begin{tabularx}{\linewidth}{lX}
\hline
& Description \\
\hline
$\mathcal{D}$ & Data distribution such that $\norm{x}\le 1$ for $x \sim \mathcal{D}$\\
$n,T\in\N$ & Training size ($n$ for a ``batch'' algorithm, $T$ for a stochastic algorithm) \\
$d\in\N$ & Data dimension \\
$k\in\N$ & PCA subspace dimension \\
$x_1,\dots,x_n\in\R^d$ & Training samples \\
$U\in\R^{n\times d}$ & PCA solution (columns span the maximal space) \\
$M\in\R^{n\times n}$ & PCA solution (projection matrix onto the maximal space, or ``relaxed'' projection matrices in Chapter \ref{ch:pca-capped-msg}) \\
$W\in\R^{n\times n}$ & PCA solution (``relaxed'' projection matrix onto the \emph{minimal} space in Chapter \ref{ch:pca-warmuth}) \\
$k_t'\in\N$ & Number of nontrivial eigenvalues of the $t$th iterate found by the algorithms of Chapters \ref{ch:pca-warmuth} and \ref{ch:pca-capped-msg} \\
$K\in\N$ & Upper bound on the rank of the iterates found by the capped MSG algorithm of Chapter \ref{ch:pca-capped-msg} \\
\hline
\end{tabularx}
\end{center}
\end{small}

\caption{
Summary of common notation across Chapters \ref{ch:pca-introduction},
\ref{ch:pca-warmuth} and \ref{ch:pca-capped-msg}. In the two latter chapters,
$M$ will be relaxed to not be a projection matrix onto the PCA space, but
rather a (potentially full-rank) PSD matrix for which the magnitude of each
eigenvalue represents the likelihood that the corresponding eigenvector is one
of the principal components.
}

\label{tab:pca-introduction:notation}

\end{table}

The goal of uncentered PCA is to find, for a distribution $\mathcal{D}$ over
vectors $x\in\R^d$, the subspace of dimension $k$ for which the projections of
$x$ onto this subspace have maximal uncentered second sample moments. To
simplify the presentation, we will assume, in this chapter as well as Chapters
\ref{ch:pca-warmuth} and \ref{ch:pca-capped-msg}, that $\norm{x}\le 1$ with
probability $1$ for $x \sim \mathcal{D}$---this is a relatively weak
assumption, since any bounded data distribution can be adjusted to satisfy it
through scaling. There are a number of alternative perspectives which one can
take, which ultimately lead to the same problem formulation. For example, one
may wish to find a lower-dimensional representation of the data which minimizes
the $L^{2}$ reconstruction error:
\begin{align*}
\underset{U^{T}U=I}{\argmin} \expectation[x\sim\mathcal{D}]{\norm{ x - U U^T x
}_{2}^{2} }
\end{align*}
where $U\in\R^{d\times k}$ has orthonormal columns, which, at the optimum, will
be the $k$ maximal eigenvectors of the second moment matrix $\Sigma =
\expectation[x\sim\mathcal{D}]{ x x^T }$. Alternatively, perhaps one wishes to
find the projection $U U^T$ of $\Sigma$ onto a $k$-dimensional subspace such
that the result is closest to $\Sigma$ in terms of the trace norm:
\begin{equation*}
\underset{U^{T}U=I}{\argmin} \trace \left( U^T \Sigma U - \Sigma \right)
\end{equation*}
Both of these examples may be simplified to the problem of finding a $U$
satisfying:
\begin{equation*}
\underset{U^{T}U=I}{\argmin} \expectation[x\sim\mathcal{D}]{ x^T U U^T x }
\end{equation*}
The orthonormality constraint on the columns of $U$ may be weakened to the
constraint that all eigenvalues of $U^{T}U$ must be at most one, because if any
eigenvalue of this matrix is less than one, then increasing it (in the same
basis) will only increase the above objective function. Hence, for the optimal
$U^*$, all of the eigenvalues of $\left(U^*\right)^T U^*$ will be $1$, implying
that $\left(U^*\right)^T U^*=I$, and therefore that the columns of $U$ are
orthonormal. This allows us to state the PCA objective in the language of
optimization, as: 
\begin{align}
\label{eq:pca-introduction:objective-u} \mbox{maximize}: &
\expectation[x\sim\mathcal{D}]{ x^T U U^T x } \\
\notag \mbox{subject to}: & U \in \R^{d\times k}, U^{T}U \preceq 1
\end{align}

Some of the algorithms which we will consider work not by optimizing $U$, but
instead by changing the optimization variables in such as way as to optimize
over a rank-$k$ positive semidefinite matrix $M\in\R^{d\times d}$, which we can
think of as satisfying $M = UU^T$, giving the objective:
\begin{align}
\label{eq:pca-introduction:objective-m} \mbox{maximize}: &
\expectation[x\sim\mathcal{D}]{x^T M x} \\
\notag \mbox{subject to} : & M \in \R^{d\times d}, \spectrum[i]{M} \in \left\{
	0, 1 \right\}, \rank M = k
\end{align}
Here, $\spectrum[i]{M}$ is the $i$th eigenvalue of $M$, so the constraint on
the eigenvalues, combined with the rank constraint, forces $M$ to have exactly
$k$ eigenvalues equal to $1$, and $n-k$ equal to $0$---in other words, $M$ is
a rank-$k$ projection matrix.

Problems \ref{eq:pca-introduction:objective-u} and
\ref{eq:pca-introduction:objective-m} are both {\em stochastic}, in that their
objective functions are expectations over the distribution $\mathcal{D}$. In
the situation we consider, and which we argue is typical in practice, we do not
have direct knowledge of the distribution $\mathcal{D}$, and so cannot exactly
calculate the (population) objective, let alone optimize it. Instead, we only
have access to \iid samples from $\mathcal{D}$---these can be thought of as
``training examples''. The regime we are mostly concerned with is that in which
we have an essentially unlimited supply of training examples, and would like to
obtain an $\epsilon$-suboptimal solution in the least possible runtime. That
is, one can think of access to an ``example oracle'' that generates an example
on-demand, at the cost of reading the sample. We refer to such a regime, where
data is abundant and the resources of interest are runtime and perhaps memory
consumption, as the ``data laden'' regime.

The algorithms which we consider are all \emph{iterative}, in that they
consider training examples $x_t$ one-at-a-time. At every step, an estimate
$U^{(t)}$ or $M^{(t)}$ is produced based on some internal state of the
algorithm, a new example is obtained, and a loss is incurred in terms of the
component of the example not explained by the current iterate. This loss
(residual) is then used to update the internal state of the algorithm. 

Problem \ref{eq:pca-introduction:objective-u} is a quadratic objective
subject to quadratic constraints, but because it is a \emph{maximization}
problem, it is \emph{not} a convex optimization problem. Likewise, Problem
\ref{eq:pca-introduction:objective-m} is a linear objective subject to
non-convex constraints. As a result, we may not immediately appeal to the vast
literature on convex optimization to efficiently optimize either of these
objectives.

The fact that the optimization problems corresponding to PCA are {\em not}
convex is a major complication in designing and studying stochastic
approximation methods for PCA. The empirical optimization problem is still
tractable due to algebraic symmetry, and the eigendecomposition can be computed
either through algebraic elimination, or through iterative local search methods
such as the power method. However standard methods and analyses for stochastic
convex optimization are not immediately applicable. This
non-convex-but-tractable situation poses a special challenge, and we are not
aware of other similar situations where stochastic optimization has been
studied. Rather, in most learning applications of stochastic optimization,
either stochastic gradient descent, stochastic mirror descent, or variants
thereof are directly applicable, or global optimality cannot be ensured even
for the deterministic empirical optimization problem (e.g.~when using
stochastic approximation to train deep networks).

One method for addressing this difficulty is to essentially ignore it, and use
standard optimization algorithms (e.g. stochastic gradient ascent), while
attempting to prove whatever bounds can be found. In Section
\ref{sec:pca-introduction:power}, we will see an algorithm which takes this
approach. Alternatively, one may continue manipulating this objective until one
finds an equivalent relaxed convex optimization problem, which may then be
optimized efficiently using conventional techniques. Chapters
\ref{ch:pca-warmuth} and \ref{ch:pca-capped-msg} will give examples of this
approach.

\section{Sample Average Approximation (SAA)}\label{sec:pca-introduction:saa}

\begin{table}

\begin{small}
\begin{center}
\begin{tabular}{r|cc|c}
\hline
& Computation & Memory & Convergence \\
\hline
SAA & $nd^2$ & $d^2$ & $\sqrt{\frac{k}{n}}$ \\
SAA (Coppersmith-Winograd) & $nd^{1.3727}$ & $d^2$ & $\sqrt{\frac{k}{n}}$ \\
Stochastic Power Method & $Tkd$ & $kd$ & w.p. $1$ \\
Incremental & $T k^2 d$ & $kd$ & no \\
\hline
\end{tabular}
\end{center}
\end{small}

\caption{
Summary of results from Sections \ref{sec:pca-introduction:saa},
\ref{sec:pca-introduction:power} and \ref{sec:pca-introduction:incremental}.  All
bounds are given up to constant factors. The ``Convergence'' column contains
bounds on the suboptimality of the solution---i.e. the difference between the
total variance captured by the rank-$k$ subspace found by the
algorithm, and the best rank-$k$ subspace with respect to the data distribution
$\mathcal{D}$. The stochastic power method converges with probability $1$, but
at an unknown rate, while there exist data distributions for which the
incremental algorithm fails to converge entirely. The ``Coppersmith-Winograd''
variant of SAA uses an asymptotically fast matrix multiplication
algorithm~\citep{CoppersmithWi90} to calculate the empirical second moment
matrix, and is unlikely to be useful in practical applications.
}

\label{tab:introduction:bounds}

\end{table}

In Section \ref{sec:pca-introduction:overview}, we mentioned the Sample Average
Approximation algorithm, which is nothing more than the ``traditional''
technique for solving stochastic PCA problems: draw $n$ samples from
$\mathcal{D}$, calculate their empirical second moment matrix, and find its
eigendecomposition to derive the top-$k$ eigenvectors.

Because we're working in the stochastic setting, this is not, as may at first
be assumed, an exact solution---we want to capture most of the variance in the
\emph{true} second moment matrix $\Sigma$ in a $k$-dimensional subspace, while
SAA gives us only the maximal subspace based on an \emph{empirical} second
moment matrix $\hat{\Sigma}$. The quality of its solution depends on how
accurately $\hat{\Sigma}$ approximates the unknown ground truth $\Sigma$, which
depends on the number of samples upon which $\hat{\Sigma}$ is based. One can
derive just such a bound with a Rademacher complexity based
analysis~\citep{BartlettMe03}:

\medskip
\begin{splitlemma}{lem:pca-introduction:saa-bound}
Suppose that $U\in\R^{d\times k}$ has orthonormal columns spanning the maximal
subspace of an empirical covariance matrix $\hat{\Sigma} = (1/n) \sum_{i=1}^n
x_i x_i^T$ over $n$ samples drawn \iid from $\mathcal{D}$. Likewise, let
$U^*$ be the corresponding matrix with $k$ orthogonal columns spanning the
maximal subspace of the true covariance $\Sigma = \expectation[x \sim
\mathcal{D}]{xx^T}$. Then, with probability $1-\delta$:
\begin{equation*}
\trace \left(U^*\right)^T \Sigma U^* - \trace U^T \Sigma U \le
O\left( \sqrt{\frac{k \log\frac{1}{\delta}}{n}} \right)
\end{equation*}
\end{splitlemma}
\begin{splitproof}
Observe that, while Lemma \ref{lem:pca-introduction:saa-rademacher-bound} holds
uniformly for all $U$, we are now considering only a \emph{particular} $U$,
albeit a random variable depending on the sample (hence the need for a uniform
bound).  Because $U$ is the empirical optimum, it follows that $\trace \left( I
- U U^T \right)\hat{\Sigma} \le \trace \left( I - U^* \left(U^*\right)^T
\right)\hat{\Sigma}$. Combining this fact with Lemmas
\ref{lem:pca-introduction:saa-rademacher-bound} and
\ref{lem:pca-introduction:saa-rademacher} yields that, with probability
$1-\delta$:
\begin{equation*}
\trace \left(I - U U^T \right) \Sigma - \trace \left( I - U^*
\left(U^*\right)^T \right) \hat{\Sigma} \le \sqrt{\frac{k}{n}} + \sqrt{\frac{8
\log \frac{2}{\delta}}{n}}
\end{equation*}
To complete the proof, we need only prove a bound on the difference between
$\trace \left( I - U^* \left(U^*\right)^T \right)\hat{\Sigma}$ and $\trace
\left( I - U^* \left(U^*\right)^T \right)\Sigma$. By Hoeffding's inequality:
\begin{equation*}
\probability{\trace \left(I-U^* \left(U^*\right)^T\right)\left( \hat{\Sigma} -
\Sigma \right) \ge \epsilon} \le \exp\left( -\frac{n\epsilon}{2} \right)
\end{equation*}
Setting the RHS to $\delta$ and solving for $\epsilon$ yields that:
\begin{equation*}
\epsilon = \frac{2 \log\frac{1}{\delta}}{n}
\end{equation*}
Hence, with probability $1-2\delta$:
\begin{equation*}
\trace \left(I - U U^T \right) \Sigma - \trace \left( I - U^*
\left(U^*\right)^T \right) \Sigma \le \sqrt{\frac{k}{n}} +
\sqrt{\frac{2\log\frac{1}{\delta}}{n}} + \sqrt{\frac{8 \log
\frac{2}{\delta}}{n}}
\end{equation*}
Canceling the two $\trace \Sigma$ terms on the LHS, negating the inequality
and simplifying yields the claimed result.
\end{splitproof}

This bound tells only part of the story, since the computational cost of the
SAA procedure is so high. Merely calculating the empirical second moment
matrix, to say nothing of finding its eigendecomposition, costs $O(n d^2)$
operations for a na\"{i}ve implementation, $O\left(n d^{\log_2 7 - 1}\right)$
using Strassen's algorithm \citep{WikipediaMatrixMultiplication,Strassen69} to
multiply the matrix of samples with its transpose (thereby calculating
$\hat{\Sigma}$), or $O\left( n d^{1.3727} \right)$ using
\citet{CoppersmithWi90}, the fastest-known matrix multiplication
algorithm~\citep{WikipediaMatrixMultiplication}. Both of the latter two
algorithms (particularly the second) are widely-regarded as impractical except
in highly specialized circumstances, so we will treat the computational cost as
$O(n d^2)$. The memory requirements of this algorithm are also relatively high:
$d^2$ to store $\hat{\Sigma}$.
As we will see in Chapters \ref{ch:pca-warmuth} and \ref{ch:pca-capped-msg},
there are stochastic algorithms which have similar sample complexity bounds to
that of Lemma \ref{lem:pca-introduction:saa-bound}, and a lower
computational cost. Even in this chapter, we will consider algorithms which are
much ``cheaper'', in that they perform less work-per-sample, and work very well
in practice, although there are no known bounds on their rate of convergence.

\section{The Stochastic Power Method}\label{sec:pca-introduction:power}

For convex optimization problems, stochastic gradient descent is a simple and
often highly efficient optimization technique. As was previously mentioned, the
$U$-optimizing PCA objective function of Problem
\ref{eq:pca-introduction:objective-u} is convex, as is the constraint, but as
the goal is \emph{maximization} of this objective, the formulation of Equation
\ref{eq:pca-introduction:objective-u} is \emph{not} convex as an optimization problem.
However, stochastic gradient descent is still a viable algorithm.

\subsection{The Power Method}

If $\Sigma=\expectation[x\sim\mathcal{D}]{x x^T}$ were known exactly, then the
gradient of the PCA objective function $\expectation[x\sim\mathcal{D}]{x^T UU^T
x} = \trace ( U^T \Sigma U )$ with respect to $U$ would be $2 \Sigma U$,
leading one to consider updates of the form:
\begin{equation}
\label{eq:pca-introduction:power-method} U^{(t+1)} = \project[\textrm{orth}]{
	U^{(t)} + \eta \Sigma U^{(t)} }
\end{equation}
where $\project[\textrm{orth}]{U}$ performs a projection with respect to the
spectral norm of $U U^T$ onto the set of $d \times d$ matrices with $k$
eigenvalues equal to $1$ and the rest $0$ (calling this a ``projection'' is a
slight abuse of terminology, since it is $U U^T$ which is projected, not $U$
itself).

One advantage which we have in the non-stochastic setting (i.e. when $\Sigma$
is known) is that we may analytically determine the optimal value of the step
size $\eta$. To this end, let's consider only the one-dimensional case (i.e.
$U$ is a column vector $u$). With this simplification in place, projection onto
the constraint of Problem \ref{eq:pca-introduction:objective-u} can be
accomplished through normalization, giving the equivalent problem:
\begin{equation}
\label{eq:pca-introduction:objective-step-size} \mbox{maximize}: \frac{u^T \Sigma u}{u^T u} 
\end{equation}
Because this objective function is invariant to the scale of $U$, we may, for
reasons which will become clear shortly, rewrite the update of Equation
\ref{eq:pca-introduction:power-method} with the step size $\eta$ applied to
the \emph{first} term, instead of the second:
\begin{equation}
\label{eq:pca-introduction:power-method-step-size} u^{(t+1)} = \eta u^{(t)} + \Sigma u^{(t)}
\end{equation}
Notice that we have removed the projection step, since this is now handled as a
part of the objective function. Assume without loss of generality that $\Sigma
= \diag\left( \sigma_1, \sigma_2, \dots, \sigma_d \right)$ is diagonal.
Substituting Equation \ref{eq:pca-introduction:power-method-step-size} into
Equation \ref{eq:pca-introduction:objective-step-size} and maximizing over
$\eta$ will give the optimal step size:
\begin{equation*}
\mbox{maximize}: \frac{\sum_{i=1}^d u_i^2 \sigma_i \left( \eta + \sigma_i
\right)^2}{\sum_{i=1}^d u_i^2 \left( \eta + \sigma_i \right)^2}
\end{equation*}
Differentiating with respect to $\eta$:
\begin{align*}
\frac{\partial}{\partial \eta} = & 2 \frac{\sum_{i=1}^d u_i^2 \sigma_i \left(
\eta + \sigma_i \right)} {\sum_{i=1}^d u_i^2 \left( \eta + \sigma_i \right)^2}
- 2 \frac{\left( \sum_{i=1}^d u_i^2 \sigma_i \left( \eta + \sigma_i \right)^2
\right)\left( \sum_{i=1}^d u_i^2 \left( \eta + \sigma_i \right) \right)}
{\left( \sum_{i=1}^d u_i^2 \left( \eta + \sigma_i \right)^2 \right)^2} \\
= & -2 \frac{ \sum_{i=1}^d \sum_{j=1}^d u_i^2 u_j^2 \sigma_i \left( \eta +
\sigma_i \right) \left( \eta + \sigma_j \right) \left( \sigma_i - \sigma_j
\right) } {\left( \sum_{i=1}^d u_i^2 \left( \eta + \sigma_i \right)^2
\right)^2} \\
= & -2 \frac{ \sum_{i=1}^d \sum_{j=1}^{i-1} u_i^2 u_j^2 \left( \eta + \sigma_i
\right) \left( \eta + \sigma_j \right) \left( \sigma_i - \sigma_j \right)^2 }
{\left( \sum_{i=1}^d u_i^2 \left( \eta + \sigma_i \right)^2 \right)^2} \\
\end{align*}
Since this derivative is always negative for $\eta \ge 0$ (note that it will be
positive for some negative choices of $\eta$, depending on the spectrum of
$\Sigma$), we see that $\eta=0$ is the optimal nonnegative choice for the
learning rate, yielding the widely-used power method~\citep{Gu00}:
\begin{equation*}
u^{(t+1)} = \Sigma u^{(t)}
\end{equation*}
This shows that the power method can be viewed as an instance of the gradient
ascent algorithm with an exact line search. For this reason, we refer to
\emph{stochastic} gradient ascent on Problem
\ref{eq:pca-introduction:objective-u} as the ``stochastic power method''.

\subsection{Stochastic Gradient Ascent}

Since $2 x x^T U$ is equal in expectation to $2\Sigma U$, which is the gradient
of the objective of Problem \ref{eq:pca-introduction:objective-u}, we may
perform stochastic gradient ascent by iteratively sampling $x_t\sim\mathcal{D}$
at the $t$th iteration, and performing the following update:
\begin{equation}
\label{eq:pca-introduction:stochastic-power-method} U^{(t+1)} =
\project[\textrm{orth}]{ U^{(t)} + \eta_t x_t x_t^T U^{(t)}}
\end{equation}
This is the ``stochastic power method''. Notice that finding $x x^T U$ requires
only $O( kd )$ operations (two matrix-vector multiplies). The renormalization
step represented by $\mathcal{P}_{\textrm{orth}}$ can be performed in $O( k^2
d)$ operations using, e.g., the Gram-Schmidt procedure. However, it turns out
that it is not necessary to renormalize, except for numerical reasons. To see
this, suppose that we \emph{do} renormalize $U^{(t)}$ after each iteration.  We
may then write $U^{(t)} = Q^{(t)} R^{(t)}$ with $Q^{(t)}$ having orthonormal
columns and $R^{(t)}$ being a nonsingular $k\times k$ matrix (this is not
necessarily a QR factorization, although it may be, if one renormalizes using
Gram-Schmidt). The matrix $Q^{(t)}$ is then the renormalized version of
$U^{(t)}$. With this representation of renormalization, the $1$-step SGD update
of Equation \ref{eq:pca-introduction:stochastic-power-method} is:
\begin{align*}
U^{(t+1)} = & Q^{(t)} + \eta_t x_t x_t^T Q^{(t)}, \\
U^{(t+1)} R^{(t)} = & U^{(t)} + \eta_t x_t x_t^T U^{(t)}
\end{align*}
From this equation, it is easy to prove by induction on $t$ that if $V^{(t)}$
is the sequence of iterates which would result if renormalization was
\emph{not} performed, then $V^{(t)} = Q^{(t)} R^{(t)} R^{(t-1)} \cdots
R^{(1)}$. Because $R^{(t)} R^{(t-1)} \cdots R^{(1)}$ is a product of
nonsingular matrices, it is nonsingular, showing that $V^{(t)}$ and $Q^{(t)}$
span the same subspace.

As a result of this observation, renormalization may be performed for purely
numerical reasons, and only very infrequently. Hence, the computational cost of
renormalization may be ignored, showing that performing $T$ iterations of SGD
costs only $O( Tkd )$ operations and $O\left( kd \right)$ memory (to store
$U$), both of which are better by a factor of $d/k$ than the cost of
``na\"{i}ve'' SAA (see Section \ref{sec:pca-introduction:saa}), if $T=n$. For
small $k$ and large $d$, this represents an enormous potential performance
difference over non-stochastic linear algebra-based methods.

Although not presented as instances of SGD, there are a number of algorithms in
the literature that perform precisely the above SGD update, differing only in
how they renormalize. For example, \citet{OjaKa85} perform Gram-Schmidt
orthonormalization after every iteration, while the popular generalized Hebbian
algorithm \citep{Sanger89}, which was later generalized to the kernel PCA
setting by \citet{KimFrSc05}, performs a partial renormalization. Both of these
algorithms converge with probability $1$ (under certain conditions on the
distribution $\mathcal{D}$ and step sizes $\eta_t$). However, the \emph{rate} of
convergence is not known.

\section{The Incremental Algorithm}\label{sec:pca-introduction:incremental}

One of the most straightforward ways to perform PCA on the $M$-based objective
of Problem \ref{eq:pca-introduction:objective-m} is empirical risk
minimization (ERM): at every step $t$, take $C^{(t)} = \frac{1}{t}
\sum_{s=1}^{t} x_s x_s^T$ to be the empirical second-moment matrix of all of
the samples seen so far, calculate its eigendecomposition, compute the top $k$
eigenvectors, say $U^{(t)}$, of $C^{(t)}$ and take $M^{(t)} = U^{(t)} \left(
U^{(t)} \right)^T$.

Despite being perfectly sensible, this is far from being a practical solution.
Calculating $C^{(t+1)}$ from $C^{(t)}$ requires $O(d^2)$ operations, to say
nothing of then finding its eigendecomposition---this algorithm is simply far
too expensive.  One can, however, perform \emph{approximate} ERM at a much
lower computational cost by explicitly constraining the rank of the
second-moment estimates $C^{(t)}$, and updating these estimates incrementally,
as each new sample is observed \citep{AroraCoLiSr12}. Rather than defining
$(t+1) C^{(t+1)} = t C^{(t)} + x_t x_t^T$, one instead takes:
\begin{equation*}
(t+1) \tilde{C}^{(t+1)} = \project[\textrm{rank-$k$}]{t \tilde{C}^{(t)} + x_t
x_t^T}
\end{equation*}
where $\project[\textrm{rank-$k$}]{\cdot}$ projects its argument onto the set
of rank-$k$ matrices with respect to the Frobenius norm (i.e. sets all but the
top $k$ eigenvalues to zero).

\begin{algorithm}[t]

\begin{pseudocode}
\codename $\code{rank1-update}\left( d,k':\N, U:\R^{d \times k'}, \sigma:\R^{k'}, \eta \in \R, x:\R^d \right)$\\
\codeline $\hat{x} \leftarrow U^T x$; $x_\perp \leftarrow x - U \hat{x}$; $r \leftarrow \norm{x_\perp}$;\\
\codeline $\code{if }r > 0$\\
\codeline \>$V', \sigma' \leftarrow \code{eig}( [ \code{diag}( \sigma ) + \eta \hat{x} \hat{x}^T, \eta r \hat{x} ; \eta r \hat{x}^T, \eta r^2 ] )$;\\
\codeline \>$U' \leftarrow [ U, x_\perp / r ] V'$;\\
\codeline $\code{else}$\\
\codeline \>$V', \sigma' \leftarrow \code{eig}( \code{diag}( \sigma ) + \eta \hat{x} \hat{x}^T )$;\\
\codeline \>$U' \leftarrow U V'$;\\
\codeline $\code{return } U', \sigma'$;
\end{pseudocode}

\caption{
Routine which computes an eigendecomposition of $M + \eta x x^T$ from a
rank-$k'$ eigendecomposition $M = U \code{diag}(\sigma) U^T$. The
computational cost of this algorithm is dominated by the matrix multiplication
defining $U'$ (line $4$ or $7$) costing $O((k')^2 d)$ operations.
}

\label{alg:pca-introduction:rank1-update}

\end{algorithm}

This update can be performed efficiently by maintaining an up-to-date
eigendecomposition of $\tilde{C}^{(t)}$ which is updated at every iteration.
Take $t\tilde{C}^{(t)} = U \code{diag}(\sigma) U^T$ to be an eigendecomposition
of $t\tilde{C}^{(t)}$, where $U\in \R^{d \times k'}$ has orthonormal columns,
and the rank $k'$ of $\tilde{C}^{(t)}$ satisfies $k' \le k$. In order to find
an eigendecomposition of $t\tilde{C}^{(t)} + x_t x_t^T$, we will consider the
component of $x_t$ which lies in the span of the columns of $U$ separately from
the orthogonal component $x_\perp = (I-U)(I-U)^T x_t$ with norm
$r=\norm{x_\perp}$. This gives that, if $r>0$ (the $r=0$ case is trivial):
\begin{equation*}
t C^{(t)} + x_t x_t^T = \left[ \begin{array}{cc} U & \frac{x_\perp}{r}
\end{array} \right] \left[ \begin{array}{cc} \mbox{diag}(\sigma) + U^T x_t
x_t^T U & r U U^T x_t \\ r x_t^T U U^T & 1 \end{array} \right] \left[
\begin{array}{cc} U & \frac{x_\perp}{r} \end{array} \right]^T
\end{equation*}
Taking the eigendecomposition of the rank-$k'+1$ matrix in the above
expression:
\begin{equation*}
\left[ \begin{array}{cc} \mbox{diag}(\sigma) + U^T x_t x_t^T U & r U U^T x_t \\
r x_t^T U U^T & 1 \end{array} \right] = V' \mbox{diag}\left( \sigma' \right)
\left(V'\right)^T
\end{equation*}
gives that:
\begin{align*}
tC^{(t)} + x_t x_t^T =& \left( \left[ \begin{array}{cc} U & \frac{x_\perp}{r}
\end{array} \right] V' \right) \mbox{diag}\left(\sigma'\right) \left( \left[
\begin{array}{cc} U & \frac{x_\perp}{r} \end{array} \right] V' \right)^T \\
\end{align*}
Hence, the new vector of nonzero eigenvalues is $\sigma'$, with the
corresponding eigenvectors being:
\begin{equation*}
U' = \left[ \begin{array}{cc} U & \frac{x_\perp}{r} \end{array} \right] V'
\end{equation*}
Algorithm \ref{alg:pca-introduction:rank1-update} contains pseudocode which
implements this operation, and will find an eigendecomposition of
$t\tilde{C}^{(t)} + x_t x_t^T$ of rank at most $k'+1$, as $U',\sigma' =
\code{rank1-update}( d, k', U, \sigma, 1, x_t )$ (this algorithm takes an
additional parameter $\eta$ which will not be needed until Chapter
\ref{ch:pca-capped-msg}). Projecting onto the set of rank $k$ matrices amounts
to removing all but the top $k$ elements of $\sigma'$, along with the
corresponding columns of $U'$. The next iterate then satisfies $t
\tilde{C}^{(t+1)} = U' \code{diag}(\sigma') \left(U'\right)^T$.  The total
computational cost of performing this update $O((k')^2 d) \le O(k^2 d)$
operations, which is superior to the $d^2$ computational cost of ``true'' ERM,
since for most applications there would be little point in performing PCA
unless $k\ll d$. The memory usage is, likewise, better than that of SAA, since
the dominant storage requirement is that of the matrix $U$, which contains $kd$
elements.

In \citet{AroraCoLiSr12}, we found that this ``incremental algorithm'' performs
extremely well on real datasets---it was the best, in fact, among the compared
algorithms. However, there exist somewhat-contrived cases in which this
algorithm entirely fails to converge. For example, If the data are drawn from
a discrete distribution $\mathcal{D}$ which samples $[\sqrt{3},0]^T$ with
probability $1/3$ and $[0,\sqrt{2}]^T$ with probability $2/3$, and one runs the
incremental algorithm with $k=1$, then it will converge to $[1,0]^T$ with
probability $5/9$, despite the fact that the maximal eigenvector is $[0,1]^T$.
The reason for this failure is essentially that the orthogonality of the data
interacts poorly with the low-rank projection: any update which does not
entirely displace the maximal eigenvector in one iteration will be removed
entirely by the projection, causing the algorithm to fail to make progress.

The incremental algorithm is of interest only because of its excellent
empirical performance. In Chapter \ref{ch:pca-capped-msg}, we will develop a
similar algorithm which is more theoretically justified, less likely to fail,
and still performs well in practice.

\paragraph{Collaborators:} The novel content of this chapter (particularly
Section \ref{sec:pca-introduction:incremental}) was performed jointly with
Raman Arora, Karen Livescu and Nathan Srebro.

\clearpage
\section{Proof of Lemma \ref{lem:pca-introduction:saa-bound}}

\begin{proofs}
Our proof of Lemma \ref{lem:pca-introduction:saa-bound} will follow a similar
outline to the proof of Lemma
\ref{lem:svm-introduction:generalization-from-expected-loss} in Chapter
\ref{ch:svm-introduction}. The first step is to determine the how many samples
we must draw in order to ensure that, uniformly over all $U\in\R^{d\times k}$
with orthonormal columns, the expected loss suffered by $U$ is close to the
empirical loss:
\medskip
\begin{lemma}\label{lem:pca-introduction:saa-rademacher-bound}
Define $\hat{\Sigma} = (1/n) \sum_{i=1}^n x_i x_i^T$ to be the empirical
covariance matrix over $n$ samples drawn \iid from $\mathcal{D}$, and $\Sigma =
\expectation[x \sim \mathcal{D}]{xx^T}$ the true covariance matrix. Then, with
probability $1-\delta$, every $U\in\R^{d\times k}$ with orthonormal columns
satisfies:
\begin{equation*}
\trace \left(I - U U^T \right) \Sigma - \trace \left( I - U U^T \right)
\hat{\Sigma} \le R_n\left(\mathcal{F}_\mathcal{W}\right) + \sqrt{\frac{8 \log
\frac{2}{\delta}}{n}}
\end{equation*}
Here, $R_n$ is the Rademacher complexity:
\begin{equation*}
R_n\left(\mathcal{F}\right) = \expectation{\sup_{f\in\mathcal{F}}
\abs{\frac{2}{n}\sum_{i=1}^n \sigma_i f\left(x_i x_i^T\right)}}
\end{equation*}
where the expectation is taken jointly with respect to the samples
$x_1,\dots,x_n$ and \iid Rademacher random variables
$\sigma_1,\dots,\sigma_n\in\{\pm 1\}$, and $\mathcal{F}_\mathcal{W}$ is the
linear function class:
\begin{equation*}
\mathcal{F}_\mathcal{W} = \left\{ X \mapsto -\trace UU^T X \mid
U\in\R^{d\times k} \mbox{ has orthonormal columns} \right\}
\end{equation*}
\end{lemma}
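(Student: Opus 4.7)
The plan is to apply a standard Rademacher complexity uniform convergence bound (via the usual symmetrization plus McDiarmid argument) directly to the class $\mathcal{F}_{\mathcal{W}}$, and then separately use Hoeffding's inequality to control the $U$-independent trace terms, finally combining the two estimates by a union bound.

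First I would observe that each function $f_U(X) = -\trace UU^T X$ in $\mathcal{F}_{\mathcal{W}}$, evaluated at $X = xx^T$ with $\|x\| \le 1$, takes values in $[-1,0]$: since $U$ has orthonormal columns, $UU^T$ is an orthogonal projection, so $0 \le x^T UU^T x \le \|x\|^2 \le 1$. This bounded-range property is exactly what is needed to run McDiarmid's inequality on $\Phi(x_1,\dots,x_n) = \sup_U [\expectation f_U(xx^T) - (1/n)\sum_i f_U(x_ix_i^T)]$, whose coordinate differences are at most $1/n$. Following this with the symmetrization step yields, with probability at least $1-\delta/2$,
\begin{equation*}
\sup_U \bigl[\trace UU^T \hat{\Sigma} - \trace UU^T \Sigma\bigr] \;\le\; \tfrac{1}{2} R_n(\mathcal{F}_{\mathcal{W}}) + \sqrt{\tfrac{2\log(2/\delta)}{n}},
\end{equation*}
where the factor $\tfrac{1}{2}$ accounts for the fact that the lemma builds a $2/n$ (rather than $1/n$) into its definition of $R_n$.

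Second, I would decompose the quantity of interest as
\begin{equation*}
\trace(I-UU^T)\Sigma - \trace(I-UU^T)\hat{\Sigma} \;=\; \bigl[\trace\Sigma - \trace\hat{\Sigma}\bigr] \;+\; \bigl[\trace UU^T \hat{\Sigma} - \trace UU^T \Sigma\bigr].
\end{equation*}
The second bracket is already handled. For the first, $\trace\hat{\Sigma} = (1/n)\sum_i \|x_i\|^2$ is an average of $n$ i.i.d.\ random variables taking values in $[0,1]$ with mean $\trace\Sigma$, so Hoeffding's inequality gives $\trace\Sigma - \trace\hat{\Sigma} \le \sqrt{\log(2/\delta)/(2n)}$ with probability at least $1-\delta/2$. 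A union bound then produces a single event of probability at least $1-\delta$ on which both estimates are valid, and adding them and absorbing the constants into $\sqrt{8\log(2/\delta)/n}$ yields the claim.

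The only real obstacle here is bookkeeping: one has to reconcile the factor of $2$ the lemma builds into $R_n$ with the factor of $2$ that the standard symmetrization step produces, and carry two separate constants through the union bound cleanly so that they can be absorbed into the stated $\sqrt{8\log(2/\delta)/n}$. There is no conceptual difficulty, and no structure of PCA beyond boundedness of $\|x\|^2$ and the projection property of $UU^T$ is used in the argument.
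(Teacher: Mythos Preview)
Your approach is correct in spirit and will yield the stated bound, but it differs from the paper's route and contains one small bookkeeping slip.

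The paper does not decompose into a $U$-dependent piece and a $\trace\Sigma-\trace\hat{\Sigma}$ piece. Instead it invokes \citet[Theorem~8]{BartlettMe03} directly on the loss class $\{X\mapsto \trace(I-UU^T)X\}$: it sets up $\mathcal{X}=\mathcal{Y}=\mathcal{A}$ as the rank-one PSD matrices with eigenvalue $\le 1$, takes $f_M(X)=MX$ and $\mathcal{L}(Y,A)=\trace(Y-A)$, checks that $\mathcal{L}\in[0,1]$, and reads off the conclusion. The reason the Rademacher complexity that appears is that of $\mathcal{F}_\mathcal{W}=\{X\mapsto -\trace UU^T X\}$ rather than $\{X\mapsto \trace(I-UU^T)X\}$ is the centering built into Bartlett--Mendelson's statement: their $\tilde{\phi}\circ F$ subtracts $\phi(y,0)=\trace Y$, which cancels the identity term. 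So what you accomplish by splitting off $\trace\Sigma-\trace\hat{\Sigma}$ and handling it with Hoeffding plus a union bound, the paper gets in one stroke from the centering in the cited theorem. Your route is more self-contained (you re-prove the uniform convergence from McDiarmid and symmetrization rather than citing it); theirs is shorter.

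On the bookkeeping: your claim that symmetrization gives $\tfrac{1}{2}R_n(\mathcal{F}_\mathcal{W})$ is not right. Symmetrization yields $\mathbb{E}\Phi \le 2\,\mathbb{E}\sup_f \tfrac{1}{n}\sum_i\sigma_i f(x_ix_i^T)$, and the lemma's $R_n$ already has the $2/n$ inside, so the correct bound is $\mathbb{E}\Phi \le R_n(\mathcal{F}_\mathcal{W})$ with no leftover factor of $\tfrac{1}{2}$. This does not break your argument, since the lemma only asks for $R_n$, and your two concentration terms $\sqrt{\log(2/\delta)/(2n)}$ still combine comfortably under $\sqrt{8\log(2/\delta)/n}$.
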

\begin{proof}
This follows immediately from \citet[Theorem 8]{BartlettMe03}, although some
effort is required to translate our problem into their setting, and align
notation. To this end, we begin by defining the label, function and hypothesis
spaces $\mathcal{X}=\mathcal{Y}=\mathcal{A}$ to all be the set of rank-$1$
matrices for which the nonzero eigenvalue is no larger than $1$. Since PCA is
an unsupervised problem, there are \emph{no labels}, so we assume that $X_i =
Y_i = x_i x_i^T$ for all samples, with $x_i\sim\mathcal{D}$. Observe that we
are now treating the samples not as vectors drawn from $\mathcal{D}$, but
rather as matrices formed by taking the outer product of each sample with
itself.

For every rank-$k$ projection matrix $M$, define $f_M\left(X\right) = MX$ as
the function which projects its argument (a rank-$1$ matrix) according to $M$,
and take $F$ to be the set of all such $f_M$s. Finally, define the loss
function $\mathcal{L}\left(Y,A\right) = \trace Y-A$, and observe that since
$\norm{x}\le 1$, $Y=x_i x_i^T$ and $A = (I-M) x_i x_i^T$ for a rank-$k$
projection matrix $M$, the range of $\mathcal{L}$ is $[0,1]$. The definitions
of $F$ and $\mathcal{L}$ together recast the problem as minimizing the
\emph{compression loss} suffered by a projection matrix $M=UU^T$, with
$\mathcal{L}\left(Y,f_M(X)\right) = \trace (I-M)X$.

With these definitions in place, application of \citet[Theorem 8]{BartlettMe03}
gives the claimed result.
\end{proof}

The bound of the above Lemma is expressed in terms of the Rademacher complexity
of the function class $\mathcal{F}_\mathcal{W}$, so the next step is to bound
this quantity:
\medskip
\begin{lemma}\label{lem:pca-introduction:saa-rademacher}
The Rademacher complexity of the function class $\mathcal{F}_\mathcal{W}$
defined in Lemma \ref{lem:pca-introduction:saa-rademacher-bound} satisfies
$R_n\left(\mathcal{F}_\mathcal{W}\right) \le \sqrt{\frac{k}{n}}$.
\end{lemma}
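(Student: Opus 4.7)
The plan is to reduce the supremum inside the Rademacher complexity to a bound on the spectral behavior of the random matrix $S = \sum_{i=1}^n \sigma_i x_i x_i^T$, and then control $S$ via a standard second-moment computation. First I would use cyclic invariance of trace to write $\sum_{i=1}^n \sigma_i \trace\left(UU^T x_i x_i^T\right) = \trace\left(UU^T S\right)$, so that
\begin{equation*}
R_n(\mathcal{F}_\mathcal{W}) = \frac{2}{n}\expectation{ \sup_{U^T U = I} \left|\trace\left(UU^T S\right)\right| }.
\end{equation*}

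Second, I would bound the inner supremum in terms of $\|S\|_F$. By the matrix Cauchy--Schwarz inequality (trace as a Frobenius inner product), $|\trace\left(UU^T S\right)| \le \|UU^T\|_F \cdot \|S\|_F$, and the orthonormality of $U$'s columns gives $\|UU^T\|_F^2 = \trace\left(UU^T UU^T\right) = \trace\left(U^T U\right) = k$. Hence $\sup_{U^TU=I} |\trace(UU^T S)| \le \sqrt{k}\,\|S\|_F$, and this bound is independent of the particular realization of $S$.

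Third, I would compute the second moment of $\|S\|_F$. Using the independence and $\{\pm 1\}$-valued nature of the Rademacher variables, together with $\|x_i\| \le 1$, the cross terms vanish in expectation and
\begin{equation*}
\expectation{\|S\|_F^2} = \sum_{i=1}^n \trace\left( (x_i x_i^T)^2 \right) = \sum_{i=1}^n \|x_i\|^4 \le n.
\end{equation*}
Jensen's inequality then yields $\expectation{\|S\|_F} \le \sqrt{n}$, and assembling the three steps gives $R_n(\mathcal{F}_\mathcal{W}) \le (2/n) \cdot \sqrt{k} \cdot \sqrt{n} = 2\sqrt{k/n}$, matching the stated bound up to a constant absorbed into the $\sqrt{k/n}$ factor.

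The main obstacle is not analytic but presentational: getting the constant tight to $\sqrt{k/n}$ rather than $2\sqrt{k/n}$ requires either a one-sided (symmetrized) variant of Rademacher complexity or a slightly sharper handling of the absolute value via the fact that $\trace(UU^T S)$ and $-\trace(UU^T S)$ have the same distribution under sign symmetry of the $\sigma_i$. Since the constant is ultimately absorbed into the big-Oh notation in Lemma \ref{lem:pca-introduction:saa-bound}, I would not belabor this point; the essential content of the proof is the reduction to $\sqrt{k}\,\|S\|_F$ combined with the elementary moment computation above.
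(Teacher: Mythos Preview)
Your proof is correct and takes a genuinely different route from the paper. The paper enlarges the class of negated rank-$k$ projections to the Frobenius ball $\tilde{\mathcal{W}} = \{M \preceq 0 : \tfrac{1}{2}\norm{M}_F^2 \le k/2\}$ and then invokes a general black-box bound (Theorem~3 of \citet{KakadeSrTe08}) on the Rademacher complexity of linear classes indexed by a set bounded in a strongly convex regularizer. Your argument instead unwraps this directly: you collapse the sum into $\trace(UU^T S)$, apply Cauchy--Schwarz with $\norm{UU^T}_F = \sqrt{k}$, and control $\expectation{\norm{S}_F}$ by an explicit second-moment computation. Your route is more elementary and self-contained, avoiding any external citation; the paper's route is more modular, illustrating that the result is an instance of a general phenomenon for linear predictors. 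Both land on the same constant (the paper's cited theorem also carries a factor of $2$ in the standard formulation), and as you note, this is immaterial for the downstream big-Oh bound.
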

\begin{proof}
First observe that if we take:
\begin{equation*}
\mathcal{F}_\mathcal{W} = \left\{ X \mapsto \trace M X \mid M \in \mathcal{W}
\right\}
\end{equation*}
then this definition of $\mathcal{F}_\mathcal{W}$ is identical to that of Lemma
\ref{lem:pca-introduction:saa-rademacher-bound} when $\mathcal{W}$ is the set
of negated rank-$k$ projection matrices. However, this definition enables us to
define function classes which are parameterized by sets other than
$\mathcal{W}$.

Define $S$ to be the set of all negative semidefinite matrices, and observe
that $S$ is closed and convex. Further define $F(X) = \frac{1}{2}\norm{X}_F^2$,
where $\norm{\cdot}_F$ is the Frobenius norm (Schatten 2-norm). By
\citet[Theorem 11]{KakadeShTe12}, $F$ is $1$-strongly convex with respect to
the Frobenius norm. We wish to define a set
$\tilde{\mathcal{W}}\supseteq\mathcal{W}$ using an equation of the form
$\tilde{\mathcal{W}} = \left\{ M \in S \mid F(M) \le W_*^2\right\}$, so that we
can apply \citet[Theorem 3]{KakadeSrTe08} to give the desired result.

Because every $M \in \mathcal{W}$ is a negated rank-$k$ projection matrix,
$F(M) = \nicefrac{k}{2}$ for all such $M$, showing that we may take $W_*^2 =
\nicefrac{k}{2}$ and have that $\tilde{\mathcal{W}} \supseteq \mathcal{W}$, and
therefore that $\mathcal{F}_{\tilde{\mathcal{W}}} \supseteq
\mathcal{F}_\mathcal{W}$, which implies that
$R_n\left(\mathcal{F}_\mathcal{W}\right) \le
R_n\left(\mathcal{F}_{\tilde{\mathcal{W}}}\right)$. The claim is then proved by
applying \citet[Theorem 3]{KakadeSrTe08}, and using this inequality as well as
the fact that $\norm{xx^T}_F \le 1$ for $x\sim\mathcal{D}$.
\end{proof}

We may now prove Lemma \ref{lem:pca-introduction:saa-bound} by combining Lemmas
\ref{lem:pca-introduction:saa-rademacher-bound} and
\ref{lem:pca-introduction:saa-rademacher}, and apply Hoeffding's inequality to
bound the empirical loss of the optimal set of eigenvectors $U^*$ in terms of
its expected loss:
\end{proofs}

\chapter{Warmuth \& Kuzmin's Algorithm}\label{ch:pca-warmuth}

\section{Overview}\label{sec:pca-warmuth:overview}

The previous chapter introduced the stochastic PCA problem, and highlighted two
``basic'' algorithms for solving it. In this chapter, we will discuss a far
more principled approach, based on convex optimization, originally due to
\citet{WarmuthKu06}. Unlike the algorithms considered in the previous chapter,
the rate of convergence of this algorithm is known. However, its practical
performance is poor, primarily due to the fact that, at each iteration, a
significant amount of computation must be performed.

In order to address this shortcoming, we will present an optimization which
dramatically improves the practical performance of Warmuth and Kuzmin's
algorithm. We will also demonstrate that Warmuth and Kuzmin's algorithm is
nothing but an instance of mirror descent on a particular convex relaxation of
the PCA objective, which partially motivates our improved ``capped MSG''
algorithm of Chapter \ref{ch:pca-capped-msg}.

Warmuth and Kuzmin's algorithm was originally presented
\citep{WarmuthKu06,WarmuthKu07,WarmuthKu08} in the online setting, in which the
data examples are not drawn from an underlying unknown distribution
$\mathcal{D}$, but are instead potentially chosen adversarially. The online
setting is strictly harder than the stochastic setting, in that any good online
algorithm may be converted into a good stochastic algorithm through the use of
an online-to-batch conversion (although, as we saw in Chapter \ref{ch:svm-sbp},
dedicated stochastic algorithms can work very well, also). For this reason,
and due to our focus on the stochastic setting, all of the results in this
chapter will be presented in the stochastic setting, despite the fact that they
(and those of Chapter \ref{ch:pca-capped-msg}) apply equally well in the
more-general online setting.

This chapter will begin, in Section \ref{sec:pca-warmuth:objective}, with a
description of the convex objective which Warmuth and Kuzmin's algorithm
optimizes, and an explanation of the reasoning behind it. In Section
\ref{sec:pca-warmuth:algorithm}, their algorithm will be described in detail,
along with a novel optimization which dramatically improves its empirical
performance. The chapter will conclude, in Section
\ref{sec:pca-warmuth:mirror}, with a derivation of their algorithm as an
instance of the general Mirror Descent (MD) framework, along with a
corresponding proof of convergence. Much of the content of this chapter is due
to Warmuth and Kuzmin~\citep{WarmuthKu06,WarmuthKu08}, although Section
\ref{subsec:pca-warmuth:efficiency} and part of Section
\ref{sec:pca-warmuth:mirror} was presented in our paper at the 50th Allerton
Conference on Communication, Control and Computing \citep{AroraCoLiSr12}.

\section{Objective}\label{sec:pca-warmuth:objective}

As we saw in Section \ref{sec:pca-introduction:objective} of Chapter
\ref{ch:pca-introduction}, one may formulate PCA as the problem of finding a
rank-$k$ projection matrix $M$ which preserves most of the variance of $\Sigma
= \expectation[x\sim\mathcal{D}]{x x^T}$. This is Problem
\ref{eq:pca-introduction:objective-m}. While this objective seeks a rank-$k$
matrix $M$ projecting onto the \emph{maximal} subspace, one could equivalently
seek a rank $d-k$ matrix $W$ projecting onto the \emph{minimal} subspace
(indeed, we used exactly this trick while proving our SAA bound in Section
\ref{sec:pca-introduction:saa}). From this matrix, the orthogonal complement
may easily be derived. This modification of Problem
\ref{eq:pca-introduction:objective-m} results in the following optimization
problem:
\begin{align}
\label{eq:pca-warmuth:unrelaxed-objective} \mbox{minimize}: &
\expectation[x\sim\mathcal{D}]{x^T W x} \\
\notag \mbox{subject to}: & \spectrum[i]{W} \in \left\{ 0, 1 \right\}, \rank W
= d-k
\end{align}
Because $W$ is a rank $d-k$ projection matrix, it must have have exactly $d-k$
eigenvalues equal to $1$, and $k$ equal to $0$. Unfortunately, this is not a
convex constraint, but if we \emph{relax} it by taking the convex hull, then
the result \emph{is} a convex optimization problem:
\begin{align}
\label{eq:pca-warmuth:convex-objective} \mbox{minimize}: &
\expectation[x\sim\mathcal{D}]{x^T W x} \\
\mbox{subject to}: & W \succeq 0, \norm{W}_2 \le \frac{1}{d-k}, \trace W = 1
\end{align}
This is precisely the relaxed PCA formulation proposed by \citet{WarmuthKu06}.
Here, $\norm{\cdot}_2$ is the spectral norm, and we have scaled both $W$ and
the objective by a factor of $d-k$ so that the eigenvalues of $W$ will sum to
$1$ (i.e. form a discrete probability distribution)---this is not strictly
necessary, but makes such quantities as the von Neumann entropy and quantum
relative entropy, which will be crucial to the algorithm description of Section
\ref{sec:pca-warmuth:algorithm} and mirror descent derivation of Section
\ref{sec:pca-warmuth:mirror}, meaningful.

\subsection{Un-relaxing a Solution}\label{subsec:pca-warmuth:unrelaxing}

While Problem \ref{eq:pca-warmuth:convex-objective}, as a convex optimization problem,
is tractable, the fact remains that, since it is a relaxation of Problem
\ref{eq:pca-warmuth:unrelaxed-objective}, its solutions will not necessarily be
solutions to the true PCA objective. In fact, this is not the case, so long as
$\Sigma = \expectation[x\sim\mathcal{D}]{xx^T}$ has distinct eigenvalues
$\sigma_1 > \sigma_2 > \dots > \sigma_d$ with corresponding eigenvectors $v_1,
v_2, \dots, v_d$. To see this, suppose that $\sum_{i=1}^k v_i^T W v_i = \alpha
> 0$ (i.e. that $W$ puts nonzero mass on the $k$ maximal eigenvalues). Then
we must have that $\sum_{i=k+1}^d v_i^T W v_i = 1 - \alpha < 1$, implying that
it is possible to ``move'' an $\alpha$-sized amount of mass from the $k$
maximal eigenvalues to the $d-k$ minimal eigenvalues, decreasing the objective
function value while continuing to satisfy the constraints. Hence, if $\Sigma$
has distinct eigenvalues, then the unique optimal $W^*$ is, aside from scaling
by $1/(d-k)$, a rank-$d-k$ projection matrix projecting onto the minimal
subspace.

In practice, we will never find the true optimal solution to the objective,
since the underlying distribution $\mathcal{D}$, and therefore $\Sigma$, is
unknown---instead, we must satisfy ourselves with suboptimal solutions which
become increasingly close to the ``truth'' as we base them on increasing
numbers of samples. The observation that Problem \ref{eq:pca-warmuth:convex-objective}
has a unique rank-$d-k$ optimum, however, motivates a simple and effective
heuristic for converting an approximate solution $W$ to the relaxed objective
of Problem \ref{eq:pca-warmuth:convex-objective} into a solution to the original
objective of Problem \ref{eq:pca-warmuth:unrelaxed-objective}---simply set the
top $d-k$ eigenvalues of $W$ to $1$, and the remainder to $0$.

\begin{algorithm}[t]

\begin{pseudocode}
\codename $\code{unrelax}\left( d,k:\N, W:\R^{d\times d} \right)$\\
\codeline $\left(\sigma_1,v_1\right),\dots,\left(\sigma_d,v_d\right) \leftarrow \code{eig}(W)$;\\
\codeline $\code{while } \norm{\sigma}_1 > 0$\\
\codeline \>$\mathcal{I} \leftarrow \left\{ i : \sigma_i = \norm{\sigma}_1 / (d-k) \right\}$;\\
\codeline \>$\mathcal{J} \leftarrow \left\{ i : \sigma_i > 0 \right\} \textbackslash \mathcal{I}$;\\
\codeline \>$\mathcal{K} \leftarrow \mathcal{I} \cup \left\{ \code{any } d-k-\abs{\mathcal{I}} \code{ elements of } \mathcal{J} \right\}$;\\
\codeline \>$W_i \leftarrow \sum_{j\in \mathcal{K}} v_j v_j^T / (d-k)$; $\lambda_i \leftarrow \min_{\{j \in \mathcal{K}\}} \sigma_j$;\\
\codeline \>$\code{for } j \in \mathcal{K}$\\
\codeline \>\>$\sigma_j \leftarrow \sigma_j - \lambda_i$;\\
\codeline \>$i := i + 1$;\\
\codeline $\code{return } \left(\lambda_1,W_1\right),\dots,\left(\lambda_i,W_i\right)$;
\end{pseudocode}

\caption{
Routine which decomposes a $W$ which is feasible for the relaxed objective of
Problem \ref{eq:pca-warmuth:convex-objective} into a convex combination $W =
\sum_j \lambda_j W_j$ of at most $d$ rank-$d-k$ projection matrices scaled by
$1/(d-k)$. This is Algorithm 1 of \citet{WarmuthKu06}.
}

\label{alg:pca-warmuth:unrelaxing}

\end{algorithm}

This heuristic is recommended for practical applications. However, for
theoretical purposes, we would like to have a method for converting
$\epsilon$-suboptimal solutions of the relaxed objective into equivalently
suboptimal solutions to the original objective, so that any convergence rate
which we may prove for an algorithm working on the relaxed objective will yield
an equivalent convergence result in the original PCA objective.
\citet{WarmuthKu06} present such an approach, the basis of which is the fact
that Problem \ref{eq:pca-warmuth:convex-objective} is derived from Problem
\ref{eq:pca-warmuth:unrelaxed-objective} by taking the \emph{convex hull} of
the constraints. As a result, any feasible solution $W$ to the relaxed
objective can be represented as a convex combination of rank-$d-k$ matrices for
which all nonzero eigenvalues are equal to exactly $1/(d-k)$ (i.e. projection
matrices, aside from the $1/(d-k)$ scaling of Problem
\ref{eq:pca-warmuth:convex-objective}). In fact, as is shown in \citet[Theorem
1]{WarmuthKu06}, a convex combination of \emph{at most $d$} such matrices can
be found using Algorithm \ref{alg:pca-warmuth:unrelaxing} (this is Algorithm 1
of \citet{WarmuthKu06}):
\begin{equation*}
W = \sum_i \lambda_i W_i,\ \ \lambda_i > 0,\ \ \sum_i \lambda_i = 1
\end{equation*}
One may then sample an index $i$ according to the discrete probability
distribution given by $\lambda_1, \dots, \lambda_d$, and take $(d-k) W_i$ as
the solution to the original objective. In expectation over this sampling of
$i$, $W_i$ will have the same relaxed objective function value as the original
$W$, and the objective function value in the original objective will differ
(again in expectation) only by the $d-k$ factor introduced by the different
scalings of the two objectives.

\section{Optimization Algorithm}\label{sec:pca-warmuth:algorithm}

\begin{table}

\begin{small}
\begin{center}
\begin{tabular}{r|cc|c}
\hline
& Computation & Memory & Convergence \\
\hline
SAA & $nd^2$ & $d^2$ & $\sqrt{\frac{k}{n}}$ \\
SAA (Coppersmith-Winograd) & $nd^{1.3727}$ & $d^2$ & $\sqrt{\frac{k}{n}}$ \\
Stochastic Power Method & $Tkd$ & $kd$ & w.p. $1$ \\
Incremental & $T k^2 d$ & $kd$ & no \\
Warmuth \& Kuzmin & $\sum_{t=1}^{T} \left(k_t'\right)^2 d$ & $\max_{t\in\{1,\dots,T\}} k_t' d$ & $\sqrt{\frac{L^* k}{T}} + \frac{k}{T}$ \\
\hline
\end{tabular}
\end{center}
\end{small}

\caption{
Summary of results from Chapter \ref{ch:pca-introduction} and Section
\ref{sec:pca-warmuth:algorithm}. All bounds are given up to constant factors.
The ``Convergence'' column contains bounds on the suboptimality of the
solution---i.e. the difference between the total variance captured by the
rank-$k$ subspace found by the algorithm, and the best rank-$k$ subspace with
respect to the data distribution $\mathcal{D}$ (the objective in Section
\ref{sec:pca-warmuth:objective} is scaled by a factor of $d-k$ relative to the
other objectives---we have corrected for this here). Warmuth and Kuzmin's
algorithm's bound is slightly better than that which we derived for SAA in
Section \ref{sec:pca-introduction:saa} of Chapter \ref{ch:pca-introduction}
because it is expressed as an optimistic rate.
}

\label{tab:warmuth:bounds}

\end{table}

\citet{WarmuthKu06} propose optimizing Problem \ref{eq:pca-warmuth:convex-objective}
by iteratively performing the following update:
\begin{equation}
\label{eq:pca-warmuth:update} W^{(t+1)} = \project[\textrm{RE}]{ \exp \left( \ln
W^{(t)} - \eta_t x_t x_t^{T} \right)}.
\end{equation}
These are \emph{matrix} logarithms and exponentials, which in this case amount
to element-wise logarithms and exponentials of the eigenvalues of their
argument. The projection $\mathcal{P}_{\textrm{RE}}$ onto the constraints is
performed with respect to the quantum relative entropy, a generalization of the
Kullback-Leibler divergence to the matrix setting:
\begin{equation}
\label{eq:pca-warmuth:quantum-relative-entropy} D_{KL}\left(W\Vert
W'\right)=\trace \left( W\left(\ln W-\ln W'\right) \right)
\end{equation}
Projecting with respect to this divergence turns out to be a relatively
straightforward operation (see Section \ref{subsec:pca-warmuth:projection}).

Warmuth and Kuzmin analyzed their algorithm, and derived the following bound on
the error achieved by a solution after $T$ iterations (\citep[Equation
4]{WarmuthKu08} combined with the observation that $(n-k)\log(n/(n-k)) =
(n-k)\log(1 + k/(n-k)) \le k$):
\begin{equation}
\label{eq:pca-warmuth:rate} (d-k) \expectation[x\sim\mathcal{D}]{x \bar{W} x}
\le (d-k) \inf_{W\in\mathcal{W}} \expectation[x\sim\mathcal{D}]{x W x} + 2
\sqrt{\frac{L^* k}{T}} + \frac{k}{T}
\end{equation}
Observe that the loss terms $\expectation{x \bar{W} x}$ and $\expectation{x W
x}$ have been multiplied by $d-k$---the reason for this is that feasible $W$
have eigenvalues which sum to $1$, while we are interested in the loss suffered
by rank $d-k$ projection matrices, for which the eigenvalues sum to $d-k$.
Here, $\bar{W} = (1/T)\sum_{t=1}^{T} W^{(t)}$ is the average of the iterates
found by the algorithm, $\mathcal{W}$ is the feasible region (i.e. the set of
all feasible $W$), and $L^* = ((d-k)/T) \sum_{t=1}^{T} x_t^T W^* x_t$ is the
average compression loss suffered by the optimal $W^* \in \mathcal{W}$ (i.e.
the amount of empirical variance which occupies the minimal $d-k$-dimensional
subspace). This is called an ``optimistic rate'', because if the desired level
of suboptimality $\epsilon$ is of roughly the same order as $L^*$ (i.e. the
problem is ``easy''), then the algorithm will converge at a roughly $1/T$ rate.
On more difficult problem instances, the algorithm will converge at the much
slower $1/\sqrt{T}$ rate.

The intuition which Warmuth and Kuzmin present for their update is that it can
be interpreted as a generalization of the exponentiated gradient algorithm,
which traditionally works in the vector setting, to the matrix setting. In
fact, as we will see in Section \ref{sec:pca-warmuth:mirror}, this is more than
an intuition---both the exponentiated gradient algorithm, and Warmuth and
Kuzmin's matrix algorithm, can be interpreted as instances of the mirror
descent algorithm with an entropy regularizer. This insight enables us to
analyze this algorithm by simply ``plugging in'' to known mirror descent
bounds.

\subsection{The Projection}\label{subsec:pca-warmuth:projection}

The only portion of the update of Equation \ref{eq:pca-warmuth:update} of which
at least a na\"{i}ve the implementation is not obvious is the projection onto
the constraints with respect to the quantum relative entropy. For this purpose,
we will decompose the update of Equation \ref{eq:pca-warmuth:update} into the
following two steps:
\begin{align*}
W' =& \exp \left( \ln W - \eta x x^{T} \right) \\
W =& \project[\textrm{RE}]{ W' }
\end{align*}
Here, we have temporarily simplified the notation slightly by giving the input
(formerly known as $W^{(t)}$) the same name as the output (formerly
$W^{(t+1)}$), and removing the $t$ subscripts on $\eta$ and $x$.
We are concerned with the second step, the projection, which as is shown in the
following lemma, can be represented as a relatively simple operation on the
eigenvalues of the matrix $W'$ to be projected:
\medskip
\begin{splitlemma}{lem:pca-warmuth:projection}

Let $W'\in\R^{d\times d}$ be a symmetric matrix, with eigenvalues
$\sigma_{1}',\dots,\sigma_{d}'$ and associated eigenvectors
$v_{1}',\dots,v_{d}'$. If $W=\project[\textrm{RE}]{W'}$ projects $W'$ onto the
feasible region of Problem \ref{eq:pca-warmuth:convex-objective} with respect
to the quantum relative entropy (Equation
\ref{eq:pca-warmuth:quantum-relative-entropy}), then $W$ will be the unique
feasible matrix which has the same set of eigenvectors as $W'$, with the
associated eigenvalues $\sigma_{1},\dots,\sigma_{d}$ satisfying:
\begin{equation*}
\sigma_{i} = \min\left(\frac{1}{d-k},\frac{\sigma_{i}'}{Z}\right)
\end{equation*}
with $Z\in\R^+$ being chosen in such a way that:
\begin{equation*}
\sum_{i=1}^{d}\sigma_{i}=1
\end{equation*}
\end{splitlemma}
\begin{splitproof}
The problem of finding $W$ can be written in the form of a convex optimization
problem as:
\begin{align*}
\mbox{minimize} : &\ \ D_{KL}\left(W \Vert W'\right)\\
\mbox{subject to} : & W \succeq 0, \norm{W}_2 \le
\frac{1}{d-k}, \trace W = 1
\end{align*}
The Hessian of the objective function is $M^-1$, which is positive definite for
positive definite $M$, and all feasible $M$ are positive definite. Hence, the
objective is strongly convex. The constraints are also convex, so this problem
must have a unique solution. Letting $\sigma_{1},\dots,\sigma_{d}$ and
$v_{1},\dots,v_{d}$ be the eigenvalues and associated eigenvectors of $W$, we
may write the KKT first-order optimality conditions \citep{BoydVa04} as:
\begin{align}
\notag 0 = & \nabla D_{KL}\left(W \Vert W'\right) + \mu I - \sum_{i=1}^{d}
\alpha_{i}v_{i}v_i^T + \sum_{i=1}^d \beta_{i}v_{i}v_{i}^{T} \\
\label{eq:pca-warmuth:kkt} = & I+\ln W-\ln W' + \mu I - \sum_{i=1}^{d}
\alpha_{i}v_{i}v_i^T + \sum_{i=1}^d \beta_{i}v_{i}v_{i}^{T}
\end{align}
where $\mu$ is the Lagrange multiplier for the constraint $\trace W = 1$, and
$\alpha_i, \beta_i \ge 0$ are the Lagrange multipliers for the constraints $W
\succeq 0$ and $\norm{W}_2 \le 1/(d-k)$, respectively. The complementary
slackness conditions are that $\alpha_{i}\sigma_{i} =
\beta_{i}\left(\sigma_{i}-1/(d-k)\right) = 0$. In addition, $W$ must be
feasible.

Because every term in Equation \ref{eq:pca-warmuth:kkt} \emph{except} for $W'$ has the same
set of eigenvectors as $W$, it follows that an optimal $W$ must have the same
set of eigenvectors as $W'$, so we may take $v_i = v_i'$, and write Equation
\ref{eq:pca-warmuth:kkt} purely in terms of the eigenvalues:
\begin{equation*}
\sigma_{i} = \frac{\exp \alpha_i}{\exp \beta_i} \left( \frac{\sigma_i}{\exp\left( 1 + \mu \right)} \right)
\end{equation*}
By complementary slackness and feasibility with respect to the constraints
$0\le\sigma_i\le 1/(d-k)$, if $0\le\sigma_i'/\exp\left(1+\mu\right)\le 1/(d-k)$,
then $\sigma_i=\sigma_i'/\exp\left(1+\mu\right)$. Otherwise,
$\alpha_i$ and $\beta_i$ will be chosen so as to clip $\sigma_i$
to the active constraint:
\begin{equation*}
\sigma_i = \max\left( 0, \min\left( \frac{1}{d-k}, \frac{\sigma_i}{\exp \left( 1 + \mu \right)} \right) \right)
\end{equation*}
Because $\exp \left( 1 + \mu \right)$ is nonnegative, clipping with $0$ is
unnecessary. Primal feasibility with respect to the constraint $\trace W = 1$
gives that $\mu$ must be chosen in such a way that $\trace W = 1$, completing
the proof.
\end{splitproof}

Essentially, performing the projection reduces to finding a scaling factor $Z$
satisfying the requirements of this lemma. As is pointed out by
\citet{WarmuthKu06}, there exists a divide-and-conquer algorithm which can
perform this operation in $O(d)$ time, where $n$ is the number of eigenvalues,
and indeed (we will see in Section \ref{subsec:pca-warmuth:efficiency} why this
is important), it is simple to refine their algorithm to work in $O(m)$ time,
where $m$ is the number of \emph{distinct} eigenvalues of $W'$.

\begin{algorithm}[t]

\begin{pseudocode}
\codename $\code{project}\left( d,k,m:\N, \sigma':\R^{m}, \kappa':\N^{m} \right)$\\
\codeline $\sigma',\kappa' \leftarrow \code{sort}( \sigma',\kappa' )$;\\
\codeline $s_i \leftarrow 0$; $c_i \leftarrow 0$;\\
\codeline $\code{for } i = 1 \code{ to } m$\\
\codeline \>$s_i \leftarrow s_i + \kappa_i' \sigma_i'$; $c_i \leftarrow c_i + \kappa_i'$;\\
\codeline \>$Z \leftarrow s_i / ( 1 - ( d - c_i ) / ( d - k ) )$;\\
\codeline \>$b \leftarrow ($\\
\codeskip \>\>$\sigma_i' / Z \le 1/(d-k)$\\
\codeskip \>\>$\code{and } ( ( i \ge m ) \code{ or } ( \sigma_{i+1}' / Z \ge 1/(d-k) ) )$\\
\codeskip \>$)$;\\
\codeline \>$\code{return }Z \code{ if } b$;\\
\codeline $\code{return error}$;
\end{pseudocode}

\caption{
Routine which finds the $Z$ of Lemma \ref{lem:pca-warmuth:projection}. It takes
as parameters the dimension $d$, ``target'' subspace dimension $k$, and the
number of \emph{distinct} eigenvalues $m$ of the current iterate. The
length-$m$ arrays $\sigma'$ and $\kappa'$ contain the distinct eigenvalues and
their multiplicities, respectively, of $W'$ (with $\sum_{i=1}^{m} \kappa_i' =
d$). Line $1$ sorts $\sigma'$ and re-orders $\kappa'$ so as to match this
sorting.  The loop will be run at most $m$ times, so the computational cost is
dominated by that of the sort: $O(m \log m)$.
}

\label{alg:pca-warmuth:project}

\end{algorithm}

This linear-time algorithm is unnecessarily complicated, since the projection
is, \emph{by far}, not the dominant computational cost of each iteration (the
cost of each step of the update will be discussed in detail in Section
\ref{subsec:pca-warmuth:efficiency}). Hence, we present a simpler and more
intuitive algorithm for accomplishing the same goal, which is based on sorting
the eigenvalues of $W'$, and then performing a simple linear search for a
solution. Algorithm \ref{alg:pca-warmuth:project} contains pseudocode for this
algorithm. Neglecting the handling of repeated eigenvalues (which is
straightforward), it works by searching for the largest index $i$ such that
that setting the largest $d-i'$ eigenvalues of $W$ to $1/(d-i)$, and scaling
the remaining $k'$ eigenvalues so as to satisfy the normalization constraint
$\trace W = 1$ (solving for the necessary $Z$ in the process), results in all
eigenvalues of $W$ being bounded above by $1/(d-k)$. It's overall cost is $O(m
\log m)$ operations, where $m$ is the number of distinct eigenvalues of $W'$,
due to the initial sorting of the eigenvalues.

Observe that Algorithm \ref{alg:pca-warmuth:project} assumes that the
eigenvalues of $W'$ are known---it does not include an explicit
eigendecomposition step. The reason for this is that, with the most
straightforward implementation of the update, the eigendecomposition of $W$
must be known in order to take its logarithm (recall that this is the
element-wise logarithm of its eigenvalues), and the eigendecomposition of $W'$
must be known, because it is the result of a matrix exponentiation. Hence,
there is no need to explicitly find the eigenvalues during the projection, as
they have already been made available while performing the first step of the
update.

\subsection{Efficient Updates}\label{subsec:pca-warmuth:efficiency}

In contrast to the algorithms of Chapter \ref{ch:pca-introduction}, Warmuth and
Kuzmin's algorithm has the enormous advantage that a bound---indeed, a very
\emph{good} bound---is known on its convergence rate. However, this bound is
expressed in terms of the number of \emph{iterations} required to find a good
solution, and, as written, the cost of each iteration is extremely high.
Indeed, when implemented na\"{i}vely using two eigendecompositions of $d\times
d$ matrices, it is \emph{prohibitively} expensive on high-dimensional data. As
PCA is often used on extremely high-dimensional data (Google's
MapReduce~\citep{BrinPa98} is a prime example, for which there is one dimension
for each web site), and as one of the primary goals of providing stochastic
algorithms for PCA is to increase efficiency over traditional
linear-algebra-based techniques (e.g. the SAA approach of calculating a
covariance matrix, and then performing an eigendecomposition), this shortcoming
must be addressed in order for the algorithm to be considered at all practical.

\begin{figure}

\begin{center}
\begin{tabular}{ @{} H @{} H @{} }
\includegraphics[width=0.45\textwidth]{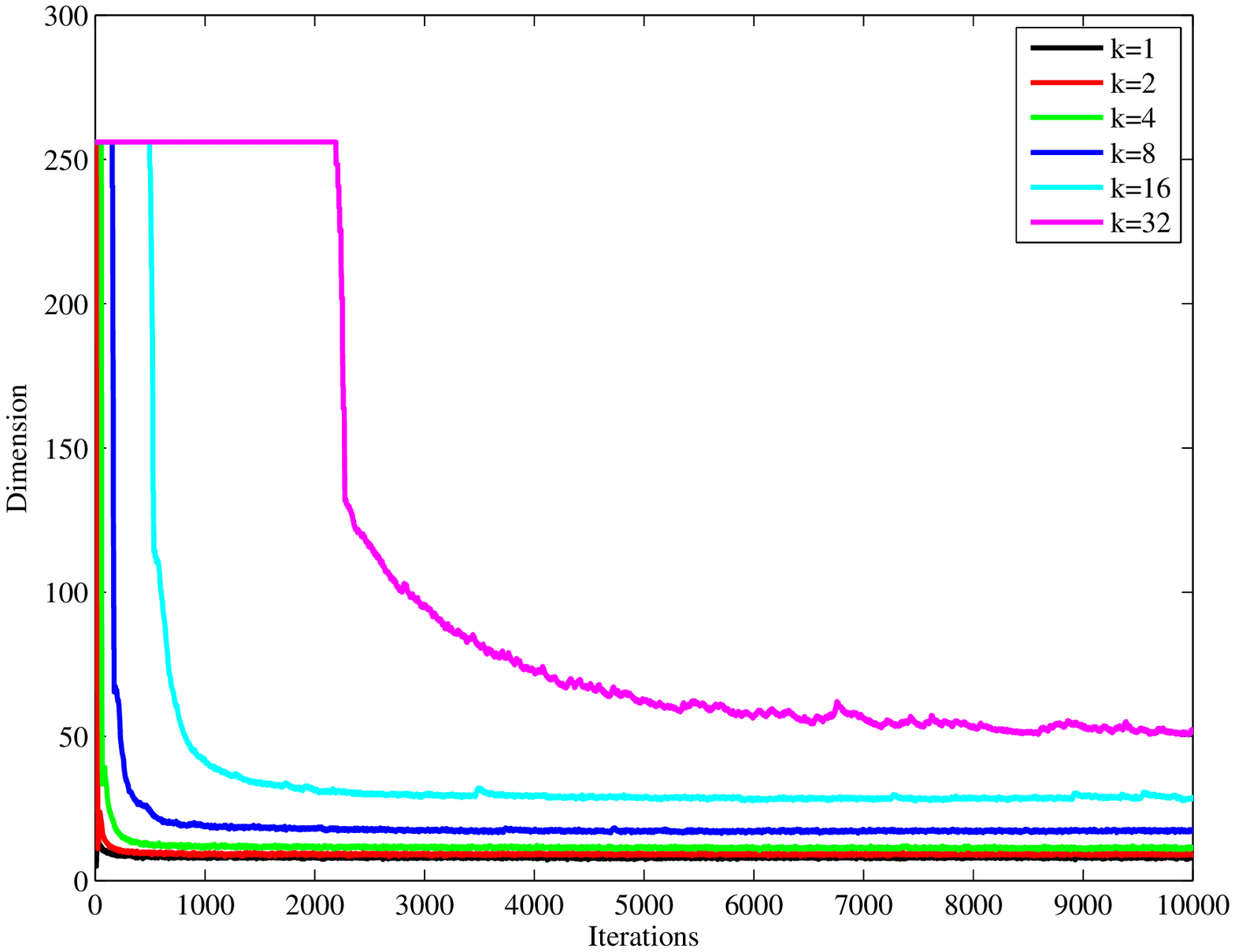} &
\includegraphics[width=0.45\textwidth]{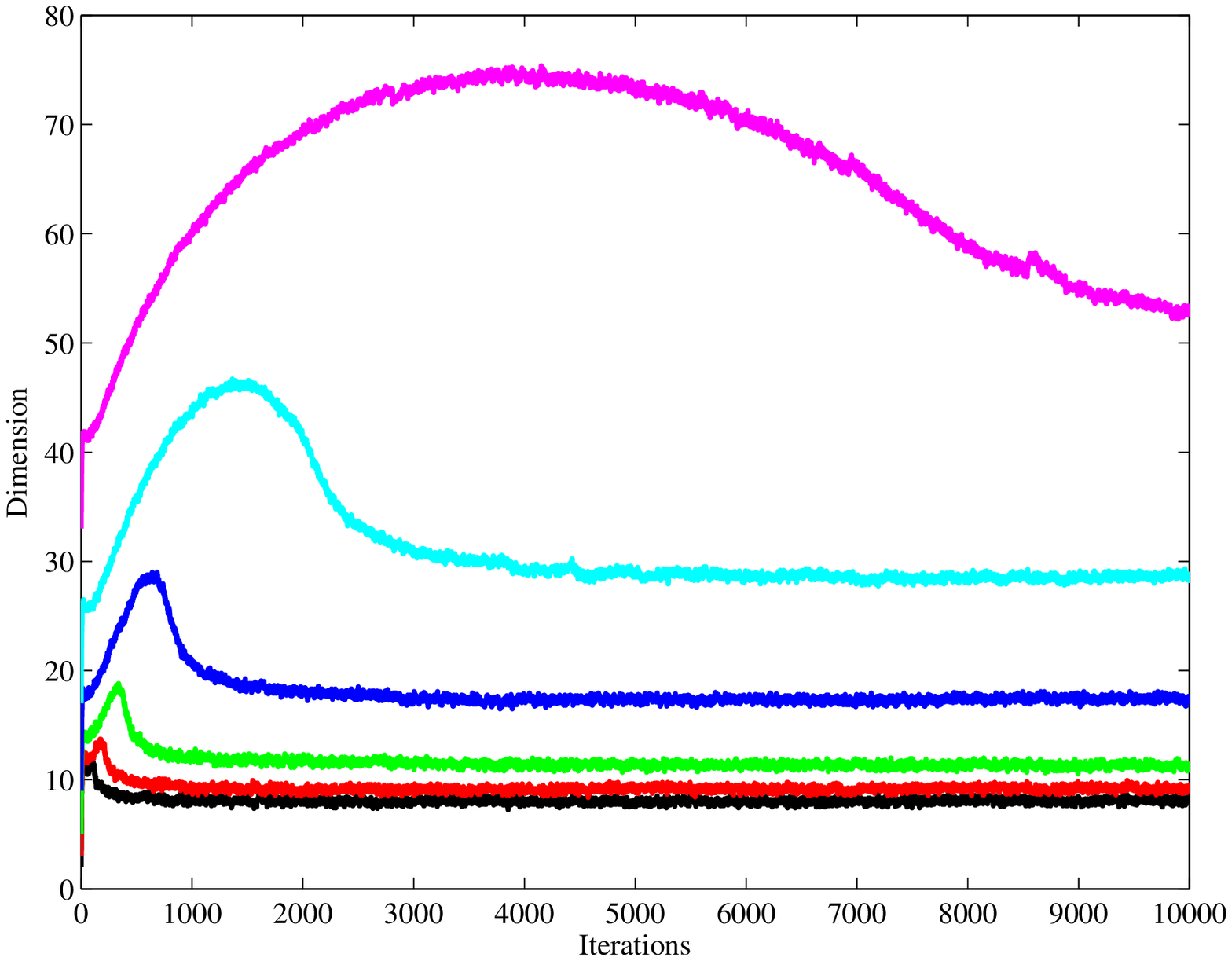} \\
\end{tabular}
\end{center}

\caption{
Plots of the number of eigenvalues $k_t'$ which are \emph{not} capped at the
upper bound of $1/(d-k)$ over $10000$ iterations of Warmuth and Kuzmin's
algorithm, on the $256$-dimensional Adult dataset, for the desired subspace
dimension $k\in\{1,2,4,8,16,32\}$. In the left hand plot, the algorithm was
started at $W^{(0)} = \left(\nicefrac{1}{d}\right)I$, while in the right plot,
it was started at a random matrix with $d-k-1$ eigenvalues equal to $1/(d-k)$,
and the remaining $k+1$ equal to $1/((k+1)(d-k))$. At each iteration, samples
were drawn uniformly at random from the dataset, and both plots are averaged
over $16$ runs.
}

\label{fig:pca-warmuth:efficiency}

\end{figure}

The key idea to improving the efficiency of the update is to observe that, at
every iteration, a rank-one positive-semidefinite matrix ($\eta x x^T$) will be
\emph{subtracted} from the current iterate before performing the projection,
and that the projection must therefore \emph{increase} the eigenvalues in
order to satisfy the constraint that the eigenvalues sum to $1$---that is, it
must be the case that $Z\le 1$. Because the eigenvalues are capped at
$1/(d-k)$, a consequence of this is that, typically, and in particular when the
desired subspace dimension $k$ is much smaller than $d$, many of the
eigenvalues of each iterate will be exactly $1/(d-k)$. Figure
\ref{fig:pca-warmuth:efficiency} experimentally illustrates this phenomenon, on
the $256$-dimensional Adult dataset (more detailed experiments may be found in
Chapter \ref{ch:pca-capped-msg}).

\begin{algorithm}[t]

\begin{pseudocode}
\codename $\code{rank1-update}\left( d,k,k':\N, U:\R^{d \times k'}, \sigma:\R^{k'}, \eta \in \R, x:\R^d \right)$\\
\codeline $\hat{x} \leftarrow U^T x$; $x_\perp \leftarrow x - U \hat{x}$; $r \leftarrow \norm{x_\perp}$;\\
\codeline $\code{if }r > 0$\\
\codeline \>$V', \sigma' \leftarrow \code{eig}( [ \code{diag}( \sigma ) + \eta \hat{x} \hat{x}^T, \eta r \hat{x} ; \eta r \hat{x}^T, \eta r^2 + 1/(d-k) ] )$;\\
\codeline \>$U' \leftarrow [ U, x_\perp / r ] V'$;\\
\codeline $\code{else}$\\
\codeline \>$V', \sigma' \leftarrow \code{eig}( \code{diag}( \sigma ) + \eta \hat{x} \hat{x}^T )$;\\
\codeline \>$U' \leftarrow U V'$;\\
\codeline $\code{return } U', \sigma'$;
\end{pseudocode}

\caption{
Routine which computes an eigendecomposition of $W + \eta x x^T$ from an
eigendecomposition $W = U \code{diag}(\sigma) U^T + (1/(d-k)) (I-U)(I-U)^T$,
where $U$ is a matrix containing $k'$ orthonormal eigenvectors in its columns,
$\sigma$ is a vector of the corresponding eigenvalues, and the eigenvalues
corresponding to all eigenvectors not spanned by the columns of $U$ are exactly
$1/(d-k)$. The computational cost of this algorithm is dominated by the matrix
multiplication defining $U$ (line $4$ or $7$) costing $O((k')^2 d)$ operations.
}

\label{alg:pca-warmuth:rank1-update}

\end{algorithm}

Exploiting this observation turns out to rely on a similar technique to that
used in the incremental algorithm of Section
\ref{sec:pca-introduction:incremental} in Chapter \ref{ch:pca-introduction}.
The key difference is that, whereas the incremental algorithm performs a rank-1
update to a low-rank matrix, we now must perform a rank-1 update to a matrix
with many eigenvalues equal to exactly $1/(d-k)$, instead of $0$. Algorithm
\ref{alg:pca-warmuth:rank1-update} demonstrates the necessary modification to
Algorithm \ref{alg:pca-introduction:rank1-update}. The reasoning behind this
algorithm is only a little more complicated than that of Section
\ref{sec:pca-introduction:incremental}. Begin by writing $W$ as:
\begin{equation*}
W = U^T \mbox{diag}(\sigma) U + \frac{1}{d-k} U^T U
\end{equation*}
Here, the columns of $U\in\R^{d\times k'}$ are the eigenvectors of $W$ with
corresponding unbound eigenvalues in the vector $\sigma$, with all other
eigenvalues being $1/(d-k)$. Defining $x_\perp = (I-U)(I-U)^T x$ as the portion
of $x$ which does not lie in the span of the columns of $U$, and taking
$r=\norm{x_\perp}$, gives that, if $r>0$ (the $r=0$ case is trivial):
\begin{align*}
W + \eta x x^T =& \left[ \begin{array}{cc} U & \frac{x_\perp}{r} \end{array}
\right] \left[ \begin{array}{cc} \mbox{diag}(\sigma) + \eta U^T x x^T U & \eta
r U U^T x \\ \eta r x^T U U^T & \frac{1}{d-k} + \eta \end{array} \right] \left[
\begin{array}{cc} U & \frac{x_\perp}{r} \end{array} \right]^T \\
& + \frac{1}{d-k} (I-U-\frac{x_\perp}{r}) (I-U - \frac{x_\perp}{r})^T
\end{align*}
We next take the eigendecomposition of the rank-$k'+1$ matrix in the above
equation:
\begin{equation*}
\left[ \begin{array}{cc} \mbox{diag}(\sigma) + \eta U^T x x^T U & \eta r U U^T
x \\ \eta r x^T U U^T & \frac{1}{d-k} + \eta \end{array} \right] = V'
\mbox{diag}\left( \sigma' \right) \left(V'\right)^T
\end{equation*}
giving that:
\begin{align*}
W + \eta x x^T =& \left( \left[ \begin{array}{cc} U & \frac{x_\perp}{r}
\end{array} \right] V' \right) \mbox{diag}\left(\sigma'\right) \left( \left[
\begin{array}{cc} U & \frac{x_\perp}{r} \end{array} \right] V' \right)^T \\
& + \frac{1}{d-k} (I-U-\frac{x_\perp}{r}) (I-U - \frac{x_\perp}{r})^T
\end{align*}
This shows that the new vector of non-capped eigenvalues is $\sigma'$, with the
corresponding eigenvectors being:
\begin{equation*}
U' = \left[ \begin{array}{cc} U & \frac{x_\perp}{r} \end{array} \right] V'
\end{equation*}
Algorithm \ref{alg:pca-warmuth:rank1-update} performs precisely these steps
(with some additional handling for the $r=0$ case).

Ultimately, the cost of finding $W' = W + \eta xx^T$, and maintaining an
eigendecomposition during this update, is the cost of the matrix multiplication
defining the new set of eigenvectors $U'$: $O\left(\left(k'\right)^2 d\right)$.
Performing the projection using Algorithm \ref{alg:pca-warmuth:project} then
requires an additional $O\left( k' \log k'\right)$ steps, since $W'$ has $k'+1$
distinct eigenvalues. The overall computational cost of performing a single
iteration of this optimized version of Warmuth and Kuzmin's algorithm is
therefore $O\left(\left(k'\right)^2 d\right)$. The memory usage at each
iteration is likewise dominated by the cost of storing the eigenvectors: $O(k'
d)$. Compared to the cost of performing and storing two rank-$d$
eigendecompositions per iteration, this is an extremely significant
improvement.

The precise cost-per-iteration, unfortunately, depends on the quantity $k'$,
which varies from iteration-to-iteration (we will occasionally refer to this
quantity as $k_t'$ to emphasize this fact), and can indeed be as large as $d-1$
(or as small as $k+1$). Hence, while this is an \emph{improvement}, its
theoretical impact is limited. In practical terms, however, $k'$ often tends to
be quite small, particularly when the algorithm is close to convergence, as can
be seen in Figure \ref{fig:pca-warmuth:efficiency}.

\section{Interpretation as Mirror Descent}\label{sec:pca-warmuth:mirror}

\begin{table}

\begin{small}
\begin{center}
\begin{tabularx}{\linewidth}{lX}
\hline
& Description \\
\hline
$\mathcal{W}\subseteq\mathcal{H}$ & Feasible region (convex subset of some Hilbert space $\mathcal{H}$) \\
$\mathcal{F}$ & Set of convex loss functions with bounded subgradients \\
$\Psi$ & Distance generating function \\
$\Psi^*\left(z^*\right) = \sup_z \left( \inner{z^*}{z} - \Psi(z) \right)$ & Convex conjugate of $\Psi$ \\
$\norm{\cdot}_{\Psi}$ & Norm with respect to which $\Psi$ is $1$-strongly convex \\
$\norm{\cdot}_{\Psi^*}$ & Dual norm of $\norm{\cdot}_{\Psi}$ \\
$\Delta_{\Psi}\left(z \Vert z' \right) = \Psi(z) - \Psi(z') - \inner{\Delta\Psi(z')}{z-z'}$ & Bregman divergence derived from $\Psi$ \\
\hline
\end{tabularx}
\end{center}
\end{small}

\caption{
Summary of mirror descent notation introduced in Section \ref{sec:pca-warmuth:mirror}.
}

\label{tab:pca-warmuth:mirror-notation}

\end{table}

In this section, we will present the online mirror descent algorithm, give a
bound on its convergence rate, adapt it to the stochastic setting using an
online-to-batch conversion, and finally describe how it may be applied to
Problem \ref{eq:pca-warmuth:convex-objective}. Mirror Descent
\citep{NemirovskiYu83} is generally presented in the online setting, although
we are interested exclusively in the stochastic setting, and will adjust our
presentation of mirror descent, and its application to Warmuth and Kuzmin's
algorithm, accordingly. However, it should be noted that, despite our
preference for the stochastic setting, the algorithms of this section could be
presented in the online setting with little difficulty.

Whereas in the stochastic setting the learning task is essentially to fit a
hypothesis to an unknown data distribution based on some number of independent
samples drawn from this distribution, the online setting is best viewed as a
repeated game, in which the ``learner'' and ``adversary'' alternatingly choose
a hypothesis and a loss function, respectively, with the goal of the learner
being to suffer cumulative loss little higher than that achieved by the best
fixed hypothesis, and the adversary, true to its name, is free to choose loss
functions adversarially. We here base our treatment of mirror descent on that
of \citet{SrebroSrTe11}, in which, more formally, for a hypothesis space
$\mathcal{W}$ and set of candidate loss functions $\mathcal{F}$, the learner is
taken to be a function
$\mathcal{A}:\bigcup_{t\in\N}\mathcal{F}^{t}\rightarrow\mathcal{W}$---i.e.
for any sequence of $0$ or more loss functions (provided by the adversary at
previous steps), the learner outputs a hypothesis in $\mathcal{W}$. At the
$t$th step, after the learner has chosen a hypothesis
$w_{t}=\mathcal{A}\left(f_{1},f_{2},\dots,f_{t-1}\right)$, the adversary
chooses a new loss function $f_{t}\in\mathcal{F}$, and the learner suffers loss
$f_{t}\left(w_{t}\right)$. The performance of $\mathcal{A}$ is measured in
terms of the \emph{online regret}:
\begin{equation}
\label{eq:pca-warmuth:regret-definition}
R_{T}=\frac{1}{T}\sum_{t=1}^{T}f_{t}\left(w_{t}\right)-\inf_{w\in\mathcal{W}}\frac{1}{T}\sum_{t=1}^{T}f_{t}\left(w\right)
\end{equation}
The lower the regret, the closer the learner is to performing as well as the
best fixed hypothesis $w$.

While the treatment of \citet{SrebroSrTe11} is far more general, we are
interested only in the setting in which $\mathcal{W}$ is a convex subset of a
Hilbert space $\mathcal{H}$, and the loss functions $f\in\mathcal{F}$ are
convex with bounded subgradients. In order to define the mirror descent update,
we must provide a \emph{distance generating function} (d.g.f.)
$\Psi:\mathcal{H}\rightarrow\R$, which we take to be differentiable,
nonnegative and $1$-strongly convex with respect to a norm
$\norm{\cdot}_{\Psi}$:
\begin{equation}
\label{eq:pca-warmuth:strong-convexity}
\inner{\nabla\Psi\left(z\right)-\nabla\Psi\left(z'\right)}{z-z'}\ge\norm{z-z'}_{\Psi}^{2}
\end{equation}
Both the norm $\norm{\cdot}_{\Psi}$ and its dual $\norm{\cdot}_{\Psi^{*}}$ are
relevant for the analysis, but not for the specification of the algorithm
itself. The fact that mirror descent is parameterized by a distance generating
function is the essence of its generality---as we will see, choosing different
$\Psi$s results in different update rules, and yields different convergence
rates (depending on other properties of the problem under consideration). In
the vector case, the two most important distance generating functions are the
squared Euclidean norm, for which mirror descent becomes nothing but stochastic
gradient descent, and the negative Shannon entropy, which gives the
multiplicative-weights algorithm. In the matrix setting (which is our area of
interest), the analogues of these two cases are the squared Frobenius norm and
negative von Neumann entropy, respectively. There are literally infinite
possibilities, however, and the fact that they can all be unified under a
common framework is remarkable, and significantly simplifies the creation,
interpretation and analysis of new online and stochastic-gradient-type
algorithms.

We do need some additional pieces beyond $\Psi$ in order to fully specify the
mirror descent update algorithm---for one, we need its convex conjugate
$\Psi^{*}$, which we assume to be differentiable:
\begin{equation}
\label{eq:pca-warmuth:convex-conjugate}
\Psi^{*}\left(z^{*}\right)=\sup_{z}\left(\inner{z^{*}}{z}-\Psi\left(z\right)\right)
\end{equation}
and the Bregman divergence defined by $\Psi$:
\begin{equation}
\label{eq:pca-warmuth:bregman-divergence} \Delta_{\Psi}\left(z\Vert
z'\right)=\Psi\left(z\right)-\Psi\left(z'\right)-\inner{\nabla\Psi\left(z'\right)}{z-z'}
\end{equation}
Table \ref{tab:pca-warmuth:mirror-notation} summarizes the above notation and
assumptions. With these pieces in place, we may state the mirror descent
update, and give \citet{SrebroSrTe11}'s bound on its convergence rate:
\medskip
\begin{lemma}
\label{lem:pca-warmuth:online-bound}

In the setting described
above, define the algorithm $\mathcal{A}$ using the following update
rule:
\begin{align*}
w_{t+1}'= & \nabla\Psi^{*}\left(\nabla\Psi\left(w_{t}\right)-\eta\nabla f_{t}\left(w_{t}\right)\right)\\
w_{t+1}= & \underset{w\in\mathcal{W}}{\argmin}\Delta_{\Psi}\left(w\Vert w'\right)
\end{align*}
with the step size $\eta=\sqrt{B/T}$ where $\sup_{w\in\mathcal{W}}\Psi\left(w\right)\le B$.
Then:
\begin{equation*}
R_{T}\le2\sqrt{B/T}
\end{equation*}
provided that $\frac{1}{T}\sum_{t=1}^{T}\norm{\nabla f_{t}\left(w_{t}\right)}_{\Psi^{*}}^{2}\le1$.
\end{lemma}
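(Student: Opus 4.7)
The plan is to prove this classical mirror descent regret bound by establishing a per-step inequality that relates the instantaneous regret to the change in a Bregman potential, then telescoping and balancing the step-size.

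First I would reduce to a linearized problem. Since each $f_t$ is convex, $f_t(w_t) - f_t(w) \le \inner{\nabla f_t(w_t)}{w_t - w}$ for any comparator $w \in \mathcal{W}$, so it suffices to bound $\sum_t \inner{\nabla f_t(w_t)}{w_t - w}$ uniformly in $w$. Next I would establish the key per-step inequality
\begin{equation*}
\eta\inner{\nabla f_t(w_t)}{w_t-w} \le \Delta_\Psi(w\Vert w_t) - \Delta_\Psi(w\Vert w_{t+1}) + \tfrac{\eta^2}{2}\norm{\nabla f_t(w_t)}_{\Psi^*}^2.
\end{equation*}
This follows from two ingredients: (i) the mirror step $w_{t+1}' = \nabla\Psi^*(\nabla\Psi(w_t) - \eta\nabla f_t(w_t))$ can be equivalently characterized via the optimality condition $\nabla\Psi(w_{t+1}') = \nabla\Psi(w_t) - \eta\nabla f_t(w_t)$, which by the three-point identity $\inner{\nabla\Psi(a)-\nabla\Psi(b)}{a-c} = \Delta_\Psi(c\Vert b) - \Delta_\Psi(c\Vert a) + \Delta_\Psi(a\Vert b)$ yields $\eta\inner{\nabla f_t(w_t)}{w_t-w} = \Delta_\Psi(w\Vert w_t) - \Delta_\Psi(w\Vert w_{t+1}') - \Delta_\Psi(w_t\Vert w_{t+1}') + \eta\inner{\nabla f_t(w_t)}{w_{t+1}' - w}$; (ii) the Bregman projection $w_{t+1} = \argmin_{w\in\mathcal{W}}\Delta_\Psi(w\Vert w_{t+1}')$ satisfies the generalized Pythagorean inequality $\Delta_\Psi(w\Vert w_{t+1}') \ge \Delta_\Psi(w\Vert w_{t+1}) + \Delta_\Psi(w_{t+1}\Vert w_{t+1}')$ for all $w\in\mathcal{W}$, which together with $1$-strong convexity of $\Psi$ (giving $\Delta_\Psi(w_t\Vert w_{t+1}') \ge \tfrac{1}{2}\norm{w_t-w_{t+1}'}_\Psi^2$) and Fenchel--Young $\inner{\nabla f_t(w_t)}{w_{t+1}' - w_t} \le \tfrac{1}{2\eta}\norm{w_{t+1}' - w_t}_\Psi^2 + \tfrac{\eta}{2}\norm{\nabla f_t(w_t)}_{\Psi^*}^2$ lets us absorb the residual cross term.

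Next I would telescope: summing the per-step inequality over $t=1,\dots,T$ gives
\begin{equation*}
\eta\sum_{t=1}^T \inner{\nabla f_t(w_t)}{w_t - w} \le \Delta_\Psi(w\Vert w_1) - \Delta_\Psi(w\Vert w_{T+1}) + \tfrac{\eta^2}{2}\sum_{t=1}^T \norm{\nabla f_t(w_t)}_{\Psi^*}^2.
\end{equation*}
I would initialize $w_1 = \argmin_{w\in\mathcal{W}}\Psi(w)$ so that $\Delta_\Psi(w\Vert w_1) \le \Psi(w) \le B$ (using nonnegativity of $\Psi$ and the optimality of $w_1$), drop the nonnegative $\Delta_\Psi(w\Vert w_{T+1})$ term, and apply the hypothesis $\tfrac{1}{T}\sum_t\norm{\nabla f_t(w_t)}_{\Psi^*}^2 \le 1$. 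Dividing by $\eta T$ yields $R_T \le B/(\eta T) + \eta/2$, which is minimized at $\eta = \sqrt{2B/T}$; the stated choice $\eta = \sqrt{B/T}$ gives $R_T \le 2\sqrt{B/T}$ up to the claimed constant.

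The main obstacle is the clean handling of the projection step and the three-point identity: one must be careful that the Bregman divergence is well-defined at $w_{t+1}'$ (requiring $\Psi^*$ differentiable on the relevant region) and that the generalized Pythagorean inequality applies, which is standard when $\Psi$ is strictly convex and $\mathcal{W}$ is closed and convex in the relative interior of the domain of $\Psi$. Everything else is bookkeeping, combining strong convexity of $\Psi$ on the primal side with the Fenchel conjugate duality $\norm{\cdot}_\Psi$ vs.\ $\norm{\cdot}_{\Psi^*}$ to convert gradient norms into divergence-scale quantities.
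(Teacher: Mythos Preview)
Your proposal is correct and follows the standard mirror descent analysis (three-point identity, generalized Pythagorean inequality for the Bregman projection, strong convexity plus Fenchel--Young to control the cross term, then telescoping). The paper, however, does not prove this lemma at all: it simply cites it as Lemma~2 of \citet{SrebroSrTe11}. So your ``different route'' is in fact the only route on offer here---you are supplying the argument that the paper outsources. The proof you sketch is precisely the one underlying the cited result, so there is no substantive methodological divergence to compare; you have just unpacked the black box.

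One minor caution: the three-point identity as you wrote it has a sign slip (check the direction of $a-c$ versus $c-a$), and the displayed equation in step~(i) does not quite balance as stated. These are bookkeeping typos rather than conceptual errors---the per-step inequality you are targeting is correct and the telescoping goes through as claimed, yielding $R_T \le B/(\eta T) + \eta/2$ and hence the bound with $\eta = \sqrt{B/T}$.
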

\begin{proof}
This is Lemma 2 of \citet{SrebroSrTe11}.
\end{proof}

We may adapt this result to the stochastic setting very straightforwardly, by
assuming that the convex loss functions $f_{t}$, instead of being chosen
adversarially, are drawn \iid from some unknown distribution with
$\expectation{f_{t}\left(w\right)}=f\left(w\right)$. In this case, taking
expectations of the above regret bound and using the definition of Equation
\ref{eq:pca-warmuth:regret-definition}:
\begin{align*}
\expectation{\frac{1}{T}\sum_{t=1}^{T}f_{t}\left(w_{t}\right)}\le &
\expectation{\inf_{w\in\mathcal{W}}\frac{1}{T}\sum_{t=1}^{T}f_{t}\left(w\right)}+\frac{2B}{\sqrt{T}}\\
\expectation{\frac{1}{T}\sum_{t=1}^{T}f\left(w_{t}\right)}\le &
\inf_{w\in\mathcal{W}}f\left(w\right)+\frac{2B}{\sqrt{T}}
\end{align*}
Let $\hat{w}$ be sampled uniformly from the set $\left\{
	w_{1},\dots,w_{T}\right\}$ \citep{CesaCoGe01}. Then:
\begin{equation}
\label{eq:pca-warmuth:stochastic-bound} \expectation{f\left(\hat{w}\right)}\le
\inf_{w\in\mathcal{W}}f\left(w\right)+\frac{2B}{\sqrt{T}}
\end{equation}
Hence, in the stochastic setting, we may find a \emph{single} hypothesis
satisfying bound of Lemma \ref{lem:pca-warmuth:online-bound} in expectation.

In the particular case of Warmuth and Kuzmin's convex relaxation of the
stochastic PCA objective (Problem \ref{eq:pca-warmuth:convex-objective}), the
hypothesis space $\mathcal{W}$ is the set of positive semidefinite matrices
satisfying the constraints. While the above has all been written in terms of
vector operations, for the purposes of defining the strong convexity of $\Psi$,
convex conjugate $\Psi^{*}$ and the Bregman divergence
$\Delta_{\Psi}\left(x\Vert x'\right)$ (Equations
\ref{eq:pca-warmuth:strong-convexity}, \ref{eq:pca-warmuth:convex-conjugate}
and \ref{eq:pca-warmuth:bregman-divergence}, respectively) we may treat the
matrices as vectors by using Frobenius inner products
$\inner{W}{W'}_{F}=\sum_{i,j}W_{i,j}W_{i,j}'=\trace WW'$ (this holds for
symmetric $W$ and $W'$). Defining $f_{t}\left(W\right)=x_t^T W x_t$ where $x_t$
is sampled \iid from the data distribution $\mathcal{D}$ suffices to express
optimization of a relaxed stochastic PCA objective as an instance of mirror
descent. It only remains to specify the distance generating function $\Psi$ and
norm $\norm{\cdot}_{\Psi}$, and verify that the conditions of Lemma
\ref{lem:pca-warmuth:online-bound} are satisfied.

\subsection{Negative von Neumann d.g.f.}

We will now show that Warmuth and Kuzmin's algorithm is nothing but mirror
descent applied to Problem \ref{eq:pca-warmuth:convex-objective} using a
shifted-and-scaled version of the negative von Neumann entropy
$-S\left(W\right)=\trace W\ln W$:
\begin{equation*}
\Psi_{vN}\left(W\right)= \alpha\left(\trace W\ln W-\beta\right)
\end{equation*}
We choose the scaling constant $\alpha$ in such a way as to insure that
$\Psi_{vN}$ is $1$-strongly convex with respect to some norm, and $\beta$ to
minimize $\Psi_{vN}$ while maintaining nonnegativity (i.e. so as to minimize
the bound of Lemma \ref{lem:pca-warmuth:online-bound}).

Before stating the mirror descent update which results from this choice of
distance generating function, we must verify that it satisfies the conditions
outlined earlier in this section. In order to show strong convexity, observe
that $-\nabla S\left(W\right)=I+\ln W$, and so: 
\begin{align*}
\MoveEqLeft \inner{\nabla\left(-S\left(W\right)\right)-\nabla\left(-S\left(W'\right)\right)}{W-W'}_{F} & \\
& =
\trace W\left(\ln W-\ln W'\right)+\trace W'\left(\ln W'-\ln W\right)\\
& =
D_{KL}\left(W\Vert W'\right)+D_{KL}\left(W'\Vert W\right)
\end{align*}
where $D_{KL}$ is the quantum relative entropy (Equation
\ref{eq:pca-warmuth:quantum-relative-entropy}). By the quantum Pinsker
inequality \citep{HiaiOhTs81,Hayashi06}, $D_{KL}\left(W\Vert
W'\right)\ge2\norm{W-W'}_{1}^{2}$, where $\norm{\cdot}_{1}$ is the Schatten
$1$-norm (i.e. the trace norm, the sum of the absolute eigenvalues of its
argument):
\begin{equation*}
\inner{\nabla\Psi_{vN}\left(W\right)-\nabla\Psi_{vN}\left(W'\right)}{W-W'}_{F}\ge
4\norm{W-W'}_{1}^{2}
\end{equation*}
Hence, $-S\left(W\right)/4$ is $1$-strongly convex, showing that we should take
$\alpha=1/4$.

In order to find $\beta$, we should note that, over the feasible
region $\mathcal{W}$ of \ref{eq:pca-warmuth:convex-objective}, the
maximum entropy (and therefore minimum negative entropy) is achieved
by the ``uniform'' matrix $W=I/d$, for which $-S\left(I/d\right)=-\ln d$.
Therefore, the largest value of $\beta$ which ensures nonnegativity
of $\Psi_{vN}$ on the feasible regions is $\beta=-\ln d$. Hence:
\begin{equation}
\label{eq:pca-warmuth:psi-vN} \Psi_{vN}\left(W\right)= \frac{1}{4}\left(\trace W\ln W+\ln d\right)
\end{equation}
What remains is now a simple matter of plugging this choice of $\Psi$ into
Lemma \ref{lem:pca-warmuth:online-bound}, although some additional work is
required in order to simplify the update and projection steps. The update and
convergence rate are given by the following lemma:
\medskip
\begin{splitlemma}{lem:pca-warmuth:convergence-vN}

If we
apply the mirror descent algorithm to Problem
\ref{eq:pca-warmuth:convex-objective} with the distance generating function of
Equation \ref{eq:pca-warmuth:psi-vN}, then the update equation is:
\begin{equation*}
W^{\left(t+1\right)}= \project[\textrm{RE}]{\exp\left(\ln
W^{\left(t\right)}-\eta x_{t}x_{t}^{T}\right)}
\end{equation*}
where $\project[\textrm{RE}]{\cdot}$ projects its argument onto the constraints
of Problem \ref{eq:pca-warmuth:convex-objective} with respect to the quantum
relative entropy $D_{KL}\left(W\Vert W'\right)=\trace W\left(\ln W-\ln
W'\right)$. Furthermore, if we perform $T$ iterations with step size
$\eta=\frac{2}{\sqrt{T}}\sqrt{\frac{k}{d-k}}$, then:
\begin{equation*}
\left(d-k\right)\expectation[x\sim\mathcal{D}]{x^{T}\hat{W}x}\le
\left(d-k\right)\inf_{W\in\mathcal{W}}\expectation[x\sim\mathcal{D}]{x^{T}Wx}+\sqrt{\frac{k\left(d-k\right)}{T}}
\end{equation*}
Here,
$\hat{W}$ is sampled uniformly from the set $\left\{
W^{\left(1\right)},W^{\left(2\right)},\dots,W^{\left(T\right)}\right\}
$.
\end{splitlemma}
\begin{splitproof}
We have already shown that $\Psi_{vN}$ is nonnegative and $1$-strongly convex
with respect to $\norm{\cdot}_{1}$. The convex conjugate of $\Psi_{vN}$
satisfies:
\begin{equation*}
\Psi_{vN}^{*}\left(W^{*}\right)=
\sup_{W}\left(\inner{W^{*}}{W}_{F}-\Psi_{vN}\left(W\right)\right)
\end{equation*}
differentiating the expression inside the supremum, and setting the result
equal to zero, gives that $0=W^{*}-\frac{1}{4}\left(I+\ln W\right)$, so the $W$
maximizing the supremum defining $\Psi_{vN}^{*}$ is
$W=\exp\left(4W^{*}-I\right)$, and so:
\begin{equation*}
\Psi_{vN}^{*}\left(W^{*}\right)= \frac{1}{4}\left(\trace
\exp\left(4W^{*}-I\right)-\ln d\right)
\end{equation*}
Because $\nabla\Psi_{vN}\left(W\right)=\frac{1}{4}\left(I+\ln W\right)$,
$\nabla\Psi_{vN}^{*}\left(W^{*}\right)=\exp\left(4W^{*}-I\right)$, and the
stochastic objective $f_t\left(W\right)=x_{t}^{T}Wx_{t}$ has gradient
$x_{t}x_{t}^{T}$, the mirror descent update (Lemma
\ref{lem:pca-warmuth:online-bound}) is:
\begin{equation*}
W^{\left(t+1\right)}= \project[\textrm{RE}]{\exp\left(\ln
W - 4\eta' x_t x_t^T \right)}
\end{equation*}
where $\project[\textrm{RE}]{\cdot}$ projects its argument onto $\mathcal{W}$ with
respect to the Bregman divergence:
\begin{align*}
\Delta_{\Psi_{vN}}\left(W\Vert W'\right)= &
\Psi_{vN}\left(W\right)-\Psi_{vN}\left(W'\right)-\inner{\nabla\Psi_{vN}\left(W'\right)}{W-W'}_{F}\\
= & \frac{1}{4}\trace W\ln W-\frac{1}{4}\trace W'\ln W'-\frac{1}{4}\inner{I+\ln
W'}{W-W'}_{F}\\
= & \frac{1}{4}D_{KL}\left(W\Vert W'\right)-\frac{1}{4}\trace
W+\frac{1}{4}\trace W'
\end{align*}
Because $\trace W=1$ on the feasible set and $\trace W'$ is a constant, the
last two terms may be ignored while performing the projection, showing that
$\project[\mathrm{RE}]{\cdot}$ projects its argument onto the feasible set with
respect to the quantum relative entropy. Because $\norm{\cdot}_{\Psi}$ is the
trace norm, $\norm{\cdot}_{\Psi^{*}}$ is the Schatten $\infty$-norm, and
$\norm{x} \le 1$ with probability $1$ by assumption (Section
\ref{sec:pca-introduction:objective} in Chapter \ref{ch:pca-introduction}), so
$\expectation[x\sim\mathcal{D}]{\norm{xx^{T}}_{\Psi^{*}}^{2}}=1$.  Furthermore,
the maximum value of $\Psi_{vN}$ is achieved on the corners of the matrix
simplex constraints:
\begin{align*}
\sup_{W\in\mathcal{W}}\Psi_{vN}\left(W\right)= &
\frac{1}{4}\left(\ln\frac{1}{d-k}+\ln d\right)\\
= & \frac{1}{4}\ln\frac{d}{d-k}\\
\le & \frac{1}{4}\cdot\frac{k}{d-k}
\end{align*}
so Equation \ref{eq:pca-warmuth:stochastic-bound} yields that, with the step
size $\eta'=\frac{1}{2\sqrt{T}}\sqrt{\frac{k}{d-k}}$:
\begin{equation*}
\expectation[x\sim\mathcal{D}]{x^{T}\hat{W}x}\le
\inf_{W\in\mathcal{W}} \expectation[x\sim\mathcal{D}]{x^{T}Wx}+\frac{1}{\sqrt{T}}\sqrt{\frac{k}{d-k}}
\end{equation*}
We define $\eta=4\eta'$ to complete the proof.
\end{splitproof}

Unfortunately, while this result demonstrates that Warmuth and Kuzmin's
algorithm is an instance of mirror descent, the proved convergence rate is
significantly worse than that which they proved ``from scratch'' (Equation
\ref{eq:pca-warmuth:rate}). In the first place, it is ``non-optimistic'', in
that it \emph{always} gives a $O\left( \sqrt{k(d-k)/T} \right)$ rate, while
Warmuth and Kuzmin's analysis improves towards $O( k/T )$ if $L^*$ is small.
More importantly, this mirror descent-based bound contains an additional
$\sqrt{d-k}$ factor even when $L^*$ is large.

The fact that Warmuth and Kuzmin's algorithm is nothing but mirror descent on a
particular convex relaxation of the PCA objective, however, does provide a
simple ``recipe'' for creating \emph{other} PCA optimization algorithms---we
may consider different convex relaxations, and different distance generating
functions. In Chapter \ref{ch:pca-capped-msg}, one such algorithm will be
presented, for which a straightforwardly-derived convergence rate is
non-optimistic, but does not contain the troublesome $\sqrt{d-k}$ factor.

\paragraph{Collaborators:} The novel content of this chapter (Sections
\ref{subsec:pca-warmuth:efficiency} and \ref{sec:pca-warmuth:mirror}) was
performed jointly with Raman Arora, Karen Livescu and Nathan Srebro.

\clearpage
\section{Proofs for Chapter \ref{ch:pca-warmuth}}

\begin{proofs}
\end{proofs}

\chapter{The Capped MSG Algorithm}\label{ch:pca-capped-msg}

\section{Overview}\label{sec:pca-capped-msg:overview}

In Chapter \ref{ch:pca-introduction}, two iterative techniques for performing
stochastic PCA, the stochastic power method and the incremental algorithm
(Sections \ref{sec:pca-introduction:power} and
\ref{sec:pca-introduction:incremental}, respectively), were discussed which
both perform a very small amount of computation at each iteration, but for
which no good theoretical guarantees are known. The former converges with
probability $1$, but at an unknown rate, while for the latter there exist
distributions for which it will converge to a suboptimal solution with a high
probability.

The algorithm discussed in Chapter \ref{ch:pca-warmuth}, originally due to
\citet{WarmuthKu06}, is different in both of these respects: a very good bound
on its convergence rate is known, but it suffers from a very high---in some
cases prohibitively high---cost per iteration.

Our goal in this chapter is to present an algorithm which combines the
strengths of both of these types of approaches. We begin in Section
\ref{sec:pca-capped-msg:msg} by proposing an algorithm, called Matrix
Stochastic Gradient (MSG), which yields updates very similar to the incremental
algorithm, but for which we can give a theoretical bound on how many iterations
are needed to converge to an $\epsilon$-suboptimal solution. In a sense, this
algorithm is a ``cross'' between the incremental algorithm and Warmuth and
Kuzmin's algorithm, in that its updates are similar in form to the former,
while the theoretical convergence bound on its performance is similar to
(albeit slightly worse than) that of the latter. Like Warmuth and Kuzmin's
algorithm, MSG is designed to optimize a convex relaxation of the PCA
optimization problem, and can theoretically have the same unacceptably high
computational cost per iteration.

In order to address this, in Section \ref{sec:pca-capped-msg:capped-rank} we
will introduce a variant of MSG, called ``capped MSG'', which introduces an
explicit rank constraint in the spirit of the incremental algorithm, but does
so far more safely, in that the resulting algorithm is much less likely to
``get stuck''. We believe that the capped MSG algorithm combines the speed of
the incremental algorithm with most of the reliability of MSG, yielding a
``best of both worlds'' approach.
This chapter will conclude in Section \ref{sec:pca-capped-msg:experiments} with
an experimental comparison of the algorithms introduced in this chapter with
most of those discussed in Chapters \ref{ch:pca-introduction} and
\ref{ch:pca-warmuth}.

\section{Matrix Stochastic Gradient (MSG)}\label{sec:pca-capped-msg:msg}

\begin{table}

\begin{small}
\begin{center}
\begin{tabular}{r|cc|c}
\hline
& Computation & Memory & Convergence \\
\hline
SAA & $nd^2$ & $d^2$ & $\sqrt{\frac{k}{n}}$ \\
SAA (Coppersmith-Winograd) & $nd^{1.3727}$ & $d^2$ & $\sqrt{\frac{k}{n}}$ \\
Stochastic Power Method & $Tkd$ & $kd$ & w.p. $1$ \\
Incremental & $T k^2 d$ & $kd$ & no \\
Warmuth \& Kuzmin & $\sum_{t=1}^{T} \left(k_t'\right)^2 d$ & $\max_{t\in\{1,\dots,T\}} k_t' d$ & $\sqrt{\frac{L^* k}{T}} + \frac{k}{T}$ \\
MSG & $\sum_{t=1}^{T} \left(k_t'\right)^2 d$ & $\max_{t\in\{1,\dots,T\}} k_t' d$ & $\sqrt{\frac{k}{T}}$ \\
Capped MSG & $T K^2 d$ & $K d$ & testable \\
\hline
\end{tabular}
\end{center}
\end{small}

\caption{
Summary of results from Chapters \ref{ch:pca-introduction} and
\ref{ch:pca-warmuth}, as well as Lemma \ref{lem:pca-capped-msg:rate}. All
bounds are given up to constant factors. The ``Convergence'' column contains
bounds on the suboptimality of the solution---i.e. the difference between the
total variance captured by the rank-$k$ subspace found by the algorithm, and
the best rank-$k$ subspace with respect to the data distribution $\mathcal{D}$
(the objective in Chapter \ref{ch:pca-warmuth} is scaled by a factor of $d-k$
relative to the other objectives---we have corrected for this here). MSG
enjoys a convergence rate which differs from that of Warmuth and Kuzmin's
algorithm only in that it is non-optimistic.
}

\label{tab:pca-capped-msg:bounds}

\end{table}

One technique which one could use to optimize the $M$-based objective of
Problem \ref{eq:pca-introduction:objective-m} in Chapter
\ref{ch:pca-introduction} is projected stochastic gradient descent.  On the PCA
problem, SGD will perform the following steps at each iteration:
\begin{equation}
\label{eq:pca-capped-msg:update} M^{(t+1)} = \project{ M^{(t)} + \eta_t x_t
x_t^T }
\end{equation}
here, $x_t x_t^T$ is the gradient of $x_t^T M x_t$, $\eta_t$ is a step size,
and $\project{\cdot}$ projects its argument onto the feasible region with
respect to the Frobenius norm.

The performance of SGD is well-understood, at least on convex problems.
Unfortunately, as we have seen, while the objective function of Problem
\ref{eq:pca-introduction:objective-m} is linear (and therefore both convex and
concave), the constraints are non-convex. For this reason, performing SGD on
this objective is inadvisable---as was the case for the incremental algorithm,
it is very easy to construct a distribution $\mathcal{D}$ on which it would
``get stuck''.

Fortunately, this non-convexity problem can be addressed using the same
technique as was successful in Section \ref{sec:pca-warmuth:objective} of
Chapter \ref{ch:pca-warmuth}: we instead solve a \emph{convex relaxation} of
the problem in which the feasible region has been replaced with its convex
hull:
\begin{align}
\label{eq:pca-capped-msg:convex-objective} \mbox{maximize} : &\ \
\expectation[x\sim\mathcal{D}]{x^T M x} \\
\notag \mbox{subject to} : &\ \ M \in \R^{d\times d}, 0 \preceq M \preceq I,
\trace M = k. 
\end{align}
This objective is quite similar to that optimized by \citet{WarmuthKu06}
(compare to Problem \ref{eq:pca-warmuth:convex-objective} in Chapter
\ref{ch:pca-warmuth})---aside from scaling, the main difference is that their
objective seeks a $(d-k)$-dimensional minimal subspace, rather than a
$k$-dimensional maximal subspace.

The SGD update equation on this objective is simply Equation
\ref{eq:pca-capped-msg:update} again, with the difference being that the
projection is now performed onto the (convex) constraints of Problem
\ref{eq:pca-capped-msg:convex-objective}. We call the resulting algorithm
Matrix Stochastic Gradient (MSG). We could analyze MSG using a mirror
descent-based analysis similar to that of Section \ref{sec:pca-warmuth:mirror}
in Chapter \ref{ch:pca-warmuth} (the distance generating function would be the
squared Frobenius norm). Unlike Warmuth and Kuzmin's algorithm, however, a
direct analysis is much simpler and more straightforward:

\medskip
\begin{splitlemma}{lem:pca-capped-msg:rate}

Suppose that we perform $T$ iterations of MSG on Problem
\ref{eq:pca-capped-msg:convex-objective}, with step size $\eta = 2 \sqrt{ \frac{k}{T} }$.
Further suppose WLOG that $\expectation[x\sim\mathcal{D}]{\norm{x}^4} \le 1$.
Then:
\begin{equation*}
\expectation[x\sim\mathcal{D}]{ x^T M^* x - x^T \bar{M} x } \le 2 \sqrt{
	\frac{k}{T} }
\end{equation*}
where $\bar{M} = \frac{1}{T} \sum_{t=1}^{T} M^{(t)}$ is the average of the
iterates, and $M^*$ is the optimum (a rank $k$ projection matrix which projects
onto the maximal $k$ eigenvectors of the second-moment matrix of
$\mathcal{D}$).
\end{splitlemma}
\begin{splitproof}
(Based on Lemma 55 of \citet{Sridharan11}, which in turn is based on
\citet{NemirovskiYu83}) Define $\tilde{M}^{(t+1)} = M^{(t)} + \eta x_t x_t^T$,
and observe that:
\begin{align*}
\MoveEqLeft \eta \left( \sum_{t=1}^T x_t^T M^* x_t - \sum_{t=1}^T x_t^T M^{(t)}
x_t \right) & \\ & = \sum_{t=1}^T \inner{ \eta x_t x_t^T }{ M^* - M^{(t)} }_F
\\ & = \frac{1}{2} \sum_{t=1}^T \left( \eta^2 \norm{x_t}^4 + \norm{ M^* -
M^{(t)} }_F^2 - \norm{ M^* - \tilde{M}^{(t+1)} }_F^2 \right) \\ & \le
\frac{1}{2} \sum_{t=1}^T \left( \eta^2 \norm{x_t}^4 + \norm{ M^* - M^{(t)}
}_F^2 - \norm{ M^* - M^{(t+1)} }_F^2 \right) \\ & \le \frac{\eta^2}{2}
\sum_{t=1}^T \norm{x_t}^4 + \frac{1}{2} \norm{ M^* - M^{(1)} }_F^2
\end{align*}
Dividing through by $\eta T$, taking expectations (observe that $x_t$ and
$M^{(t)}$ are conditionally independent), and using the bounds
$\expectation[x\sim\mathcal{D}]{ \norm{x}^4 } \le 1$ and $\norm{ M^* - M^{(1)}
}_F^2 \le 4k$ ($2\sqrt{k}$ bounds the diameter of the feasible region) gives
that:
\begin{equation*}
\expectation[x\sim\mathcal{D}]{x M^* x - x \bar{M} x} \le \frac{\eta}{2} +
\frac{2k}{\eta T}.
\end{equation*}
Choosing $\eta = 2 \sqrt{ \frac{k}{T} }$ completes the proof.
\end{splitproof}

Comparing this convergence rate to that of Warmuth and Kuzmin's algorithm
(Equation \ref{eq:pca-warmuth:rate} in Chapter \ref{ch:pca-warmuth}) shows that
the only difference between the two is that Warmuth and Kuzmin's algorithm
enjoys an ``optimistic'' rate, while MSG does not---if the desired level of
suboptimality is of roughly the same order as the optimal compression loss
$L^*$ (i.e. the problem is ``easy''), then the algorithm will converge at a
roughly $1/T$ rate. On more difficult problem instances, Warmuth and Kuzmin's
algorithm will match the $1/\sqrt{T}$ rate of MSG.

In addition to the relationship to Warmuth and Kuzmin's algorithm, MSG's update
is strikingly similar to that of the incremental algorithm (see Section
\ref{sec:pca-introduction:incremental} in Chapter \ref{ch:pca-introduction}).
Both add $x x^T$ to the current iterate, and then perform a projection. The
most significant difference, from a practical standpoint, is that while the
iterates of the incremental algorithm will never have rank larger than $k$,
each MSG iterate may have rank as large as $d$ (although it will typically be
much lower). A consequence of this is that, despite the fact that Lemma
\ref{lem:pca-capped-msg:rate} can be used to bound the number of iterations
required to reach $\epsilon$-suboptimality, each iteration may be so
computationally expensive that the algorithm is still impractical. In Section
\ref{subsec:pca-capped-msg:low-rank}, we will discuss how this situation may be
addressed.


\subsection{The Projection}\label{subsec:pca-capped-msg:projection}

In order to actually perform the MSG update of Equation
\ref{eq:pca-capped-msg:update}, we must show how one projects onto the
constraints of Problem \ref{eq:pca-capped-msg:convex-objective}. As in Section
\ref{subsec:pca-warmuth:projection} of Chapter \ref{ch:pca-warmuth}, the first
step in developing an efficient algorithm for doing so is to characterize the
solution of the projection problem, which we do in the following lemma:

\medskip
\begin{splitlemma}{lem:pca-capped-msg:projection}

Let $M'\in\R^{d\times d}$ be a symmetric matrix, with eigenvalues 
$\sigma_{1}',\dots,\sigma_{d}'$ and associated eigenvectors $v_{1}',\dots,v_{d}'$. 
If $M=\project{M'}$ projects $M'$ onto the feasible region
of Problem \ref{eq:pca-capped-msg:convex-objective} with respect to the Frobenius norm, then
$M$ will be the unique feasible matrix which has the same set of eigenvectors
as $M'$, with the associated eigenvalues $\sigma_{1},\dots,\sigma_{d}$
satisfying:
\begin{equation*}
\sigma_{i} = \max\left(0,\min\left(1,\sigma_{i}'+S\right)\right)
\end{equation*}
with $S\in\R$ being chosen in such a way that:
\begin{equation*}
\sum_{i=1}^{d}\sigma_{i}=k
\end{equation*}
\end{splitlemma}
\begin{splitproof}
This is the same proof technique as was used to prove Lemma
\ref{lem:pca-warmuth:projection} in Chapter \ref{ch:pca-warmuth}. We begin by
writing the problem of finding $M$ in the form of a convex optimization problem
as:
\begin{align*}
\mbox{minimize} : &\ \ \norm{M-M'}_F^2\\
\mbox{subject to} : &\ \ 0 \preceq M \preceq I, \trace M = k.
\end{align*}
Because the objective is strongly convex, and the constraints are convex, this
problem must have a unique solution. Letting $\sigma_{1},\dots,\sigma_{d}$ and
$v_{1},\dots,v_{d}$ be the eigenvalues and associated eigenvectors of $M$, we
may write the KKT first-order optimality conditions \citep{BoydVa04} as:
\begin{equation}
\label{eq:pca-capped-msg:kkt} 0 = M-M' + \mu I - \sum_{i=1}^{d}
\alpha_{i}v_{i}v_i^T + \sum_{i=1}^d \beta_{i}v_{i}v_{i}^{T}
\end{equation}
where $\mu$ is the Lagrange multiplier for the constraint $\trace M = k$, and
$\alpha_i, \beta_i \ge 0$ are the Lagrange multipliers for the constraints $0
\preceq M$ and $M \preceq I$, respectively.  The complementary slackness
conditions are that $\alpha_{i}\sigma_{i} = \beta_{i}\left(\sigma_{i}-1\right)
= 0$. In addition, $M$ must be feasible.

Because every term in Equation \ref{eq:pca-capped-msg:kkt} \emph{except} for $M'$ has the same
set of eigenvectors as $M$, it follows that an optimal $M$ must have the same
set of eigenvectors as $M'$, so we may take $v_i = v_i'$, and write Equation
\ref{eq:pca-capped-msg:kkt} purely in terms of the eigenvalues:
\begin{equation*}
\sigma_{i} = \sigma_{i}'-\mu+\alpha_{i}-\beta_{i}
\end{equation*}
Complementary slackness and feasibility with respect to the constraints
$0 \preceq M \preceq I$ gives that if $0 \le \sigma_{i}'-\mu\le 1$,
then $\sigma_{i}=\sigma_{i}'-\mu$. Otherwise, $\alpha_{i}$ and $\beta_{i}$
will be chosen so as to clip $\sigma_{i}$ to the active constraint:
\begin{equation*}
\sigma_i = \max\left( 0, \min\left( 1, \sigma_{i}'-\mu \right) \right)
\end{equation*}
Primal feasibility with respect to the constraint $\trace M = k$ gives that
$\mu$ must be chosen in such a way that $\trace M = k$, completing the proof.
\end{splitproof}

This result shows that projecting onto the feasible region amounts to finding
the value of $S$ such that, after shifting the eigenvalues by $S$ and clipping
the results to $[0,1]$, the result is feasible.
\begin{algorithm}[t]

\begin{pseudocode}
\codename $\code{project}\left( d,k,m:\N, \sigma':\R^{m}, \kappa':\N^{m} \right)$\\
\codeline $\sigma',\kappa' \leftarrow \code{sort}( \sigma',\kappa' )$;\\
\codeline $i \leftarrow 1$; $j \leftarrow 1$; $s_i \leftarrow 0$; $s_j \leftarrow 0$; $c_i \leftarrow 0$; $c_j \leftarrow 0$;\\
\codeline $\code{while } i \le m$\\
\codeline \>$\code{if } ( i < j )$\\
\codeline \>\>$S \leftarrow ( k - ( s_j - s_i ) - ( d - c_j ) ) / ( c_j - c_i )$;\\
\codeline \>\>$b \leftarrow ($\\
\codeskip \>\>\>$( \sigma_i' + S \ge 0 ) \code{ and } ( \sigma_{j-1}' + S \le 1 )$\\
\codeskip \>\>\>$\code{and } ( ( i \le 1 ) \code{ or } ( \sigma_{i-1}' + S \le 0 ) )$\\
\codeskip \>\>\>$\code{and } ( ( j \ge m ) \code{ or } ( \sigma_{j+1}' \ge 1 ) )$\\
\codeskip \>\>$)$;\\
\codeline \>\>$\code{return }S \code{ if } b$;\\
\codeline \>$\code{if } ( j \le m ) \code{ and } ( \sigma_j' - \sigma_i' \le 1 )$\\
\codeline \>\>$s_j \leftarrow s_j + \kappa_j' \sigma_j'$; $c_j \leftarrow c_j + \kappa_j'$; $j \leftarrow j + 1$;\\
\codeline \>$\code{else}$\\
\codeline \>\>$s_i \leftarrow s_i + \kappa_i' \sigma_i'$; $c_i \leftarrow c_i + \kappa_i'$; $i \leftarrow i + 1$;\\
\codeline $\code{return error}$;
\end{pseudocode}

\caption{
Routine which finds the $S$ of Lemma \ref{lem:pca-capped-msg:projection}. It takes as
parameters the dimension $d$, ``target'' subspace dimension $k$, and the number
of \emph{distinct} eigenvalues $m$ of the current iterate. The length-$m$
arrays $\sigma'$ and $\kappa'$ contain the distinct eigenvalues and their
multiplicities, respectively, of $M'$ (with $\sum_{i=1}^{m} \kappa_i' = d$).
Line $1$ sorts $\sigma'$ and re-orders $\kappa'$ so as to match this sorting.
The loop will be run at most $2m$ times (once for each possible increment to
$i$ or $j$ on lines $12$--$15$), so the computational cost is dominated by that
of the sort: $O(m \log m)$.
}

\label{alg:pca-capped-msg:project}

\end{algorithm}

As was the case for the projection used by Warmuth and Kuzmin's algorithm, this
projection operates \emph{only} on the eigenvalues, which simplifies its
implementation significantly. Algorithm \ref{alg:pca-capped-msg:project}
contains pseudocode which finds $S$ from a list of eigenvalues. It is optimized
to efficiently handle repeated eigenvalues---rather than receiving the
eigenvalues in a length-$d$ list, it instead receives a length-$n$ list
containing only the \emph{distinct} eigenvalues, with $\kappa$ containing the
corresponding multiplicities.

The central idea motivating the algorithm is that, in a sorted array of
eigenvalues, all elements with indices below some threshold $i$ will be clipped
to $0$, and all of those with indices above another threshold $j$ will be
clipped to $1$. The pseudocode simply searches over all possible pairs of such
thresholds until it finds the one that works. However, it does not perform an
$O(m^2)$ search---instead, a linear search is performed by iteratively
incrementing either $i$ or $j$ at each iteration in such a way that the
eigenvalues which they index are $1$ unit apart.


\subsection{Efficient Updates}\label{subsec:pca-capped-msg:low-rank}

In Section \ref{subsec:pca-warmuth:efficiency}, we showed that the iterates
of Warmuth and Kuzmin's algorithm tend to have a large number eigenvalues
capped at $1/(d-k)$, the maximum allowed by the constraints. MSG's iterates
have a similar tendency, except that the repeated eigenvalues are at the lower
bound of $0$. The reason for this is that MSG performs a rank-$1$ update
followed by a projection onto the constraints (see Equation
\ref{eq:pca-capped-msg:update}), and because $x_t x_t^T$ is positive
semidefinite, $M' = M^{(t)} + \eta x_t x_t^T$ will have a \emph{larger} trace
than $M^{(t)}$ (i.e.  $\trace M' \ge k$). As a result, the projection, as is
shown by Lemma \ref{lem:pca-capped-msg:projection}, will \emph{subtract} a
quantity $S$ from every eigenvalue of $M'$, clipping each to $0$ if it becomes
negative. Therefore, each MSG update will increase the rank of the iterate by
at most $1$, and has the potential to decrease it, perhaps significantly. It's
very difficult to theoretically quantify how the rank of the iterates will
evolve over time, but we have observed empirically that the iterates do tend to
have relatively low rank. In Section \ref{sec:pca-capped-msg:experiments}, we
will explore this issue in greater detail experimentally.

Exploiting the low rank of the MSG iterates relies on exactly the same linear
algebra as that used for the incremental algorithm of Section
\ref{sec:pca-introduction:incremental} in Chapter \ref{ch:pca-introduction}.
Algorithm \ref{alg:pca-introduction:rank1-update} (in the same chapter) can
find the nonzero eigenvalues and associated eigenvectors of $M^{(t)} + \eta x_t
x_t^T$ from those of $M^{(t)}$ using $O((k_{t}')^2 d)$ operations, where
$k_{t}'$ is the rank of $M^{(t)}$. Notice that, because we have the
eigenvalues on-hand at every iteration, we can immediately use the projection
routine of Algorithm \ref{alg:pca-capped-msg:project} without needing to first
perform an eigendecomposition.

\section{Capped MSG}\label{sec:pca-capped-msg:capped-rank}

While, as was observed in Section \ref{sec:pca-capped-msg:msg}, MSG's iterates
will tend to have ranks $k_t'$ smaller than $d$, they will nevertheless also be
larger than $k$, and may occasionally be much larger. For this reason, in
practice, we recommend adding a hard constraint $K$ on the rank of the
iterates:
\begin{align}
\label{eq:pca-capped-msg:capped-convex-objective} \mbox{maximize} : & \ \
\expectation[x\sim\mathcal{D}]{x^T M x} \\
\notag \mbox{subject to} : &\ \ M \in \R^{d\times d}, 0 \preceq M \preceq I \\
\notag  &\ \ \trace M = k, \rank M \le K
\end{align}
We will refer to SGD on this objective as the ``capped MSG'' algorithm. For
$K<d$, this objective is non-convex, so the convergence result of Lemma
\ref{lem:pca-capped-msg:rate} no longer applies. However, in practical terms,
varying the parameter $K$ enables us to smoothly transition between a regime
with a low cost-per-iteration, but no convergence result (similar to the
incremental algorithm), to one in which the convergence rate is known, but the
cost-per-iteration may be very high (similar to Warmuth and Kuzmin's
algorithm).

\subsection{The Projection}

The implementation of capped MSG is very similar to that of ``vanilla''
MSG---the only change is in how one performs the projection. Similar reasoning
to that which was used in the proof of Lemma
\ref{lem:pca-capped-msg:projection} shows that if $M^{(t+1)} = \project{ M' }$
with $M' = M^{(t)} + \eta x_t x_t^T$, then $M^{(t)}$ and $M'$ are
simultaneously diagonalizable, and therefore we can consider only how the
projection acts on the eigenvalues. Hence, if we let $\sigma'$ be the vector of
the eigenvalues of $M'$, and suppose that there are more than $K$ such
eigenvalues, then there is a size-$K$ subset of $\sigma'$ such that applying
Algorithm \ref{alg:pca-capped-msg:project} to this set gives the projected
eigenvalues, with the projected eigenvectors being those corresponding to the
$K$ eigenvalues which we ``keep''.

Since we perform only a rank-$1$ update at every iteration, the matrix $M'$
will have a rank of at most $K+1$. Hence, there are at most $K+1$ possible
size-$K$ subsets of the eigenvalues which we must consider. We call Algorithm
\ref{alg:pca-capped-msg:project} for each such subset, and then calculate the
Frobenius norm between each of the resulting projections, and the original
matrix $M'$. That which is closest to $M'$ is the projected matrix which we
seek.

The cost of calculating these Frobenius norms is insignificant ($O(K)$ time for
each), and Algorithm \ref{alg:pca-capped-msg:project} costs $O(K \log K)$
operations, so the total computational cost of the capped MSG projection is
$O(K^2 \log K)$ operations. The projection therefore has no effect on the
asymptotic runtime of the capped MSG algorithm, because Algorithm
\ref{alg:pca-introduction:rank1-update}, which we also must perform at every
iteration, requires $O(K^2 d)$ operations.

\subsection{Convergence}

It is important to recognize that the capped MSG algorithm is very different
from one obvious alternative: using the incremental algorithm of Section
\ref{sec:pca-introduction:incremental} to find a maximal variance
$K$-dimensional subspace, and then taking the top $k$ directions. Both
approaches have the advantage that there is more ``room'' in which to place
candidate directions. However, the capped MSG algorithm is still only searching
for a $k$-dimensional subspace---this is the constraint $\trace M \le k$. As a
result, it should have less of a tendency to get ``stuck'', since as it becomes
more confident in its current candidate, the trace of $M$ will become
increasingly concentrated on the top $k$ directions, eventually freeing up the
remaining $K-k$ directions for further exploration. This property is shown more
rigorously in the following lemma:

\medskip
\begin{splitlemma}{lem:pca-capped-msg:capping}

Consider the capped-MSG algorithm with $K>k$, and suppose that the true second
moment matrix $\Sigma=\expectation[x\sim\mathcal{D}]{xx^T}$ has no repeated
eigenvalues. Then Problem \ref{eq:pca-capped-msg:capped-convex-objective},
despite not being a convex optimization problem (for $K<d$), has no local
optima, and its unique global optimum is the rank-$k$ matrix projecting onto
the span of the top-$k$ eigenvectors of $\Sigma$.
\end{splitlemma}
\begin{splitproof}
We'll prove this by considering two cases: first, that in which $M$ has
rank-$k$, and next, that it which it has rank higher than $k$.

\paragraph{Case 1:} suppose that $M$ has exactly $k$ nonzero eigenvalues, but that the
corresponding eigenvectors do not span the maximal subspace. Let
$v_1,\dots,v_k$ be the eigenvectors of $M$, and observe that, by the assumption
that $M$ is not optimal, there must exist a $v_{k+1}$ of norm $1$ which is
orthogonal to $v_1,\dots,v_k$ such that the matrix $\tilde{M}$ projecting onto
the maximal rank-$k$ subspace in the span of $v_1,\dots,v_{k+1}$ has strictly
larger objective function value that $M$. Both $M$ and $\tilde{M}$ reside in
the convex set:
\begin{equation*}
\mathcal{S} = \left\{ M : VV^T M = M \wedge 0 \preceq M \preceq I \wedge \trace
M = k \right\}
\end{equation*}
Here, $V$ is the matrix containing $v_1,\dots,v_{k+1}$ in its columns, so that
$\mathcal{S}$ is essentially what the feasible region of the convex non-capped
MSG objective would be if it were restricted to the span of
$v_1,\dots,v_{k+1}$. Furthermore, $\mathcal{S}$ is a subset of the feasible
region of Problem \ref{eq:pca-capped-msg:capped-convex-objective}. Since $M$ is
not optimal on this convex subset of the feasible region, it cannot be a local
optimum.

\paragraph{Case 2:} Suppose that $M$ has more than $k$ nonzero eigenvalues, and
let $i=\argmin_{\sigma_i > 0} v_i^T \Sigma v_i$. Then, by the same reasoning as
was used in Section \ref{subsec:pca-warmuth:unrelaxing} of Chapter
\ref{ch:pca-warmuth}, we may move a $\sigma_i$-sized amount of the mass onto
the $\sigma_j$s with $\{j:\sigma_j>0\wedge j\ne i\}$ without decreasing the
objective function value. Furthermore, we may do this \emph{continuously}. The
rank-$k$ solution which we find via this procedure is, by case 1, not a local
optimum, and can only be the global optimum if this transfer of mass
\emph{increases} the objective function value, since $\Sigma$ has no repeated
eigenvalues. Hence, $M$ itself is not a local optimum, but is at worst a saddle
point.
\end{splitproof}

The facts that the optimal solution to problem
\ref{eq:pca-capped-msg:capped-convex-objective} will have rank $k$, and that
there are no local optima, means that we expect (although we have not proved)
that the capped MSG algorithm will always converge \emph{eventually} when
$K>k$, although it might spend a very large amount of time exploring flat
regions of the objective.

More practically, this observation makes it possible for one to easily check
for convergence: if the capped MSG algorithm appears to be oscillating around a
solution of rank $K$, then it must be stuck. While one could wait until a
fortuitous sequence of samples causes it to escape, one could accelerate the
process by simply increasing the upper bound $K$, after which the algorithm
should continue to make progress. Conversely, once it has converged to a
rank-$k$ solution, it has found the global optimum, and the algorithm may be
terminated.

\section{Experiments}\label{sec:pca-capped-msg:experiments}

In this section, we report the results of experiments on simulated data
designed to explore some of the ``edge cases'' which may cause difficulty for
stochastic PCA algorithms (Section \ref{subsec:pca-capped-msg:simulated}), as
well as a comparison on the real-world MNIST dataset (Section
\ref{subsec:pca-capped-msg:real}). The experiments in this section summarize
the empirical performance not only of the algorithms introduced in this
chapter, but also most of those discussed in Chapters \ref{ch:pca-introduction}
and \ref{ch:pca-warmuth}.

In addition to MSG and capped MSG, we also compare to the incremental algorithm
described in Section~\ref{sec:pca-introduction:incremental} of Chapter
\ref{ch:pca-introduction}~\citep{AroraCoLiSr12}, the online PCA algorithm of
Chapter \ref{ch:pca-warmuth}~\citep{WarmuthKu06,WarmuthKu08}, and, in the MNIST
experiments, a Grassmannian SGD algorithm which is nothing but a
full-information variant of the recently proposed GROUSE algorithm
\citep{BalzanoNoRe10}, and can be regarded as a refinement of the SGD algorithm
of Section \ref{sec:pca-introduction:power} in Chapter \ref{ch:pca-introduction}.

In order to compare Warmuth and Kuzmin's algorithm, MSG, capped MSG and the
incremental algorithm in terms of runtime, we calculate the dominant term in
the computational complexity: $\sum_{t=1}^{T} (k_t')^2$ (the true computational
cost is this quantity, multiplied by $d$, which is the same for all
algorithms). The incremental algorithm has the property that $k_t' \le k$,
while $k_t' \le K$ for the capped MSG algorithm, but in order to ensure a fair
comparison, we measure the \emph{actual} ranks, instead of merely using these
bounds.

All algorithms except for the incremental algorithm and Grassmannian SGD store
an internal state matrix of rank $k_t'>k$. As we observed in Sections
\ref{sec:pca-capped-msg:capped-rank}, if the covariance matrix of $\mathcal{D}$
has no repeated eigenvalues, then the optimal solution of Problems
\ref{eq:pca-capped-msg:convex-objective} and
\ref{eq:pca-capped-msg:capped-convex-objective} will be rank-$k$, while if the
distribution's covariance matrix does have repeated eigenvalues, there there is
no point in attempting to distinguish between them. Hence, we compute
suboptimalities based on the largest $k$ eigenvalues of the state matrix
$M^{(t)}$ (smallest, for Warmuth \& Kuzmin's algorithm, which searches for a
$(d-k)$-dimensional minimal subspace)---this is the same procedure as we
recommended in Section \ref{subsec:pca-warmuth:unrelaxing} of Chapter
\ref{ch:pca-warmuth}.


\subsection{Simulated Data}\label{subsec:pca-capped-msg:simulated}

\begin{figure}

\begin{center}
\begin{tabular}{ @{} L @{} H @{} H @{} }
& \large{Gaussian} & \large{Orthogonal} \\
\rotatebox{90}{\scriptsize{Suboptimality}} &
\includegraphics[width=0.45\textwidth]{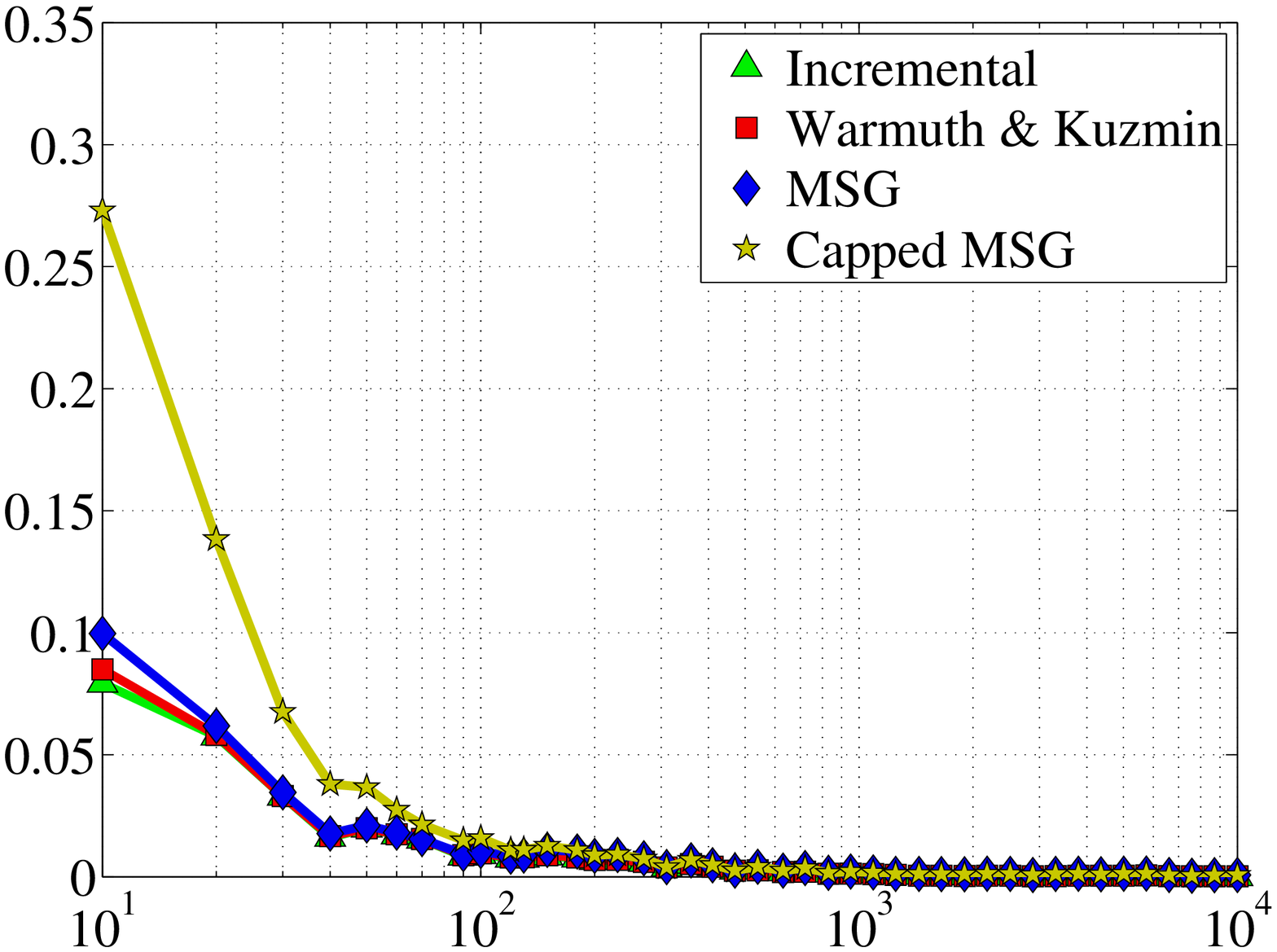} &
\includegraphics[width=0.45\textwidth]{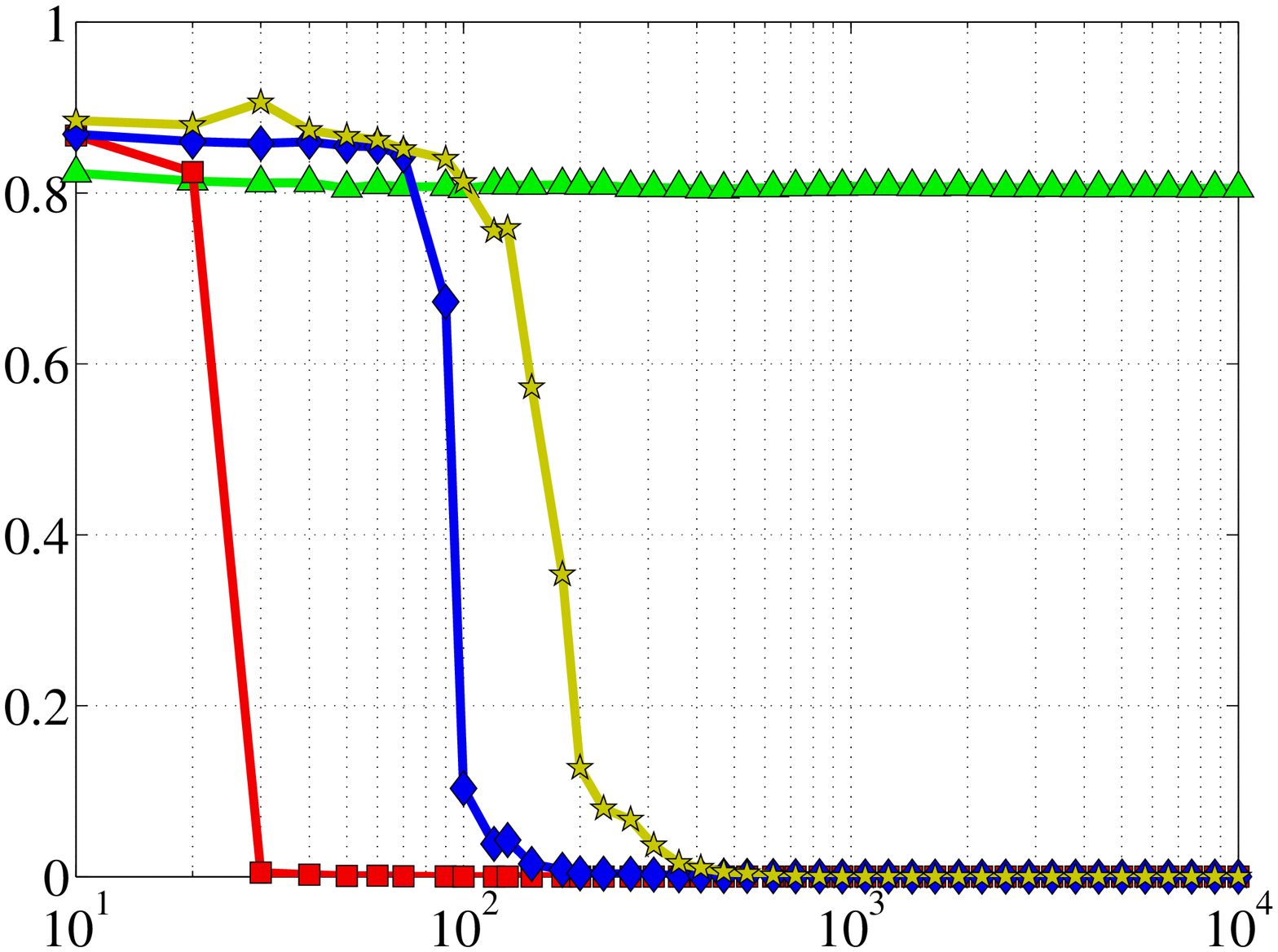} \\
\rotatebox{90}{\scriptsize{Suboptimality}} &
\includegraphics[width=0.45\textwidth]{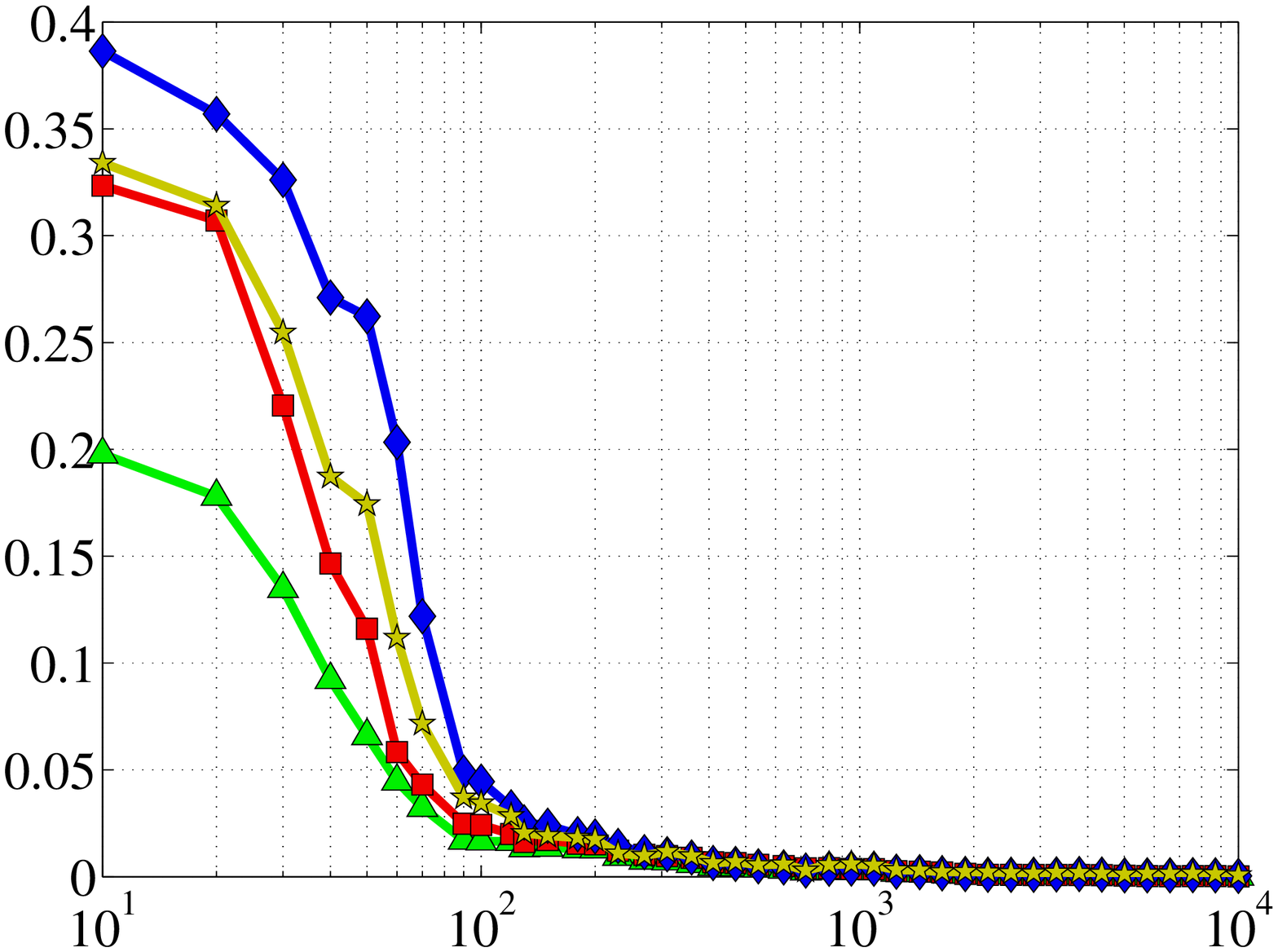} &
\includegraphics[width=0.45\textwidth]{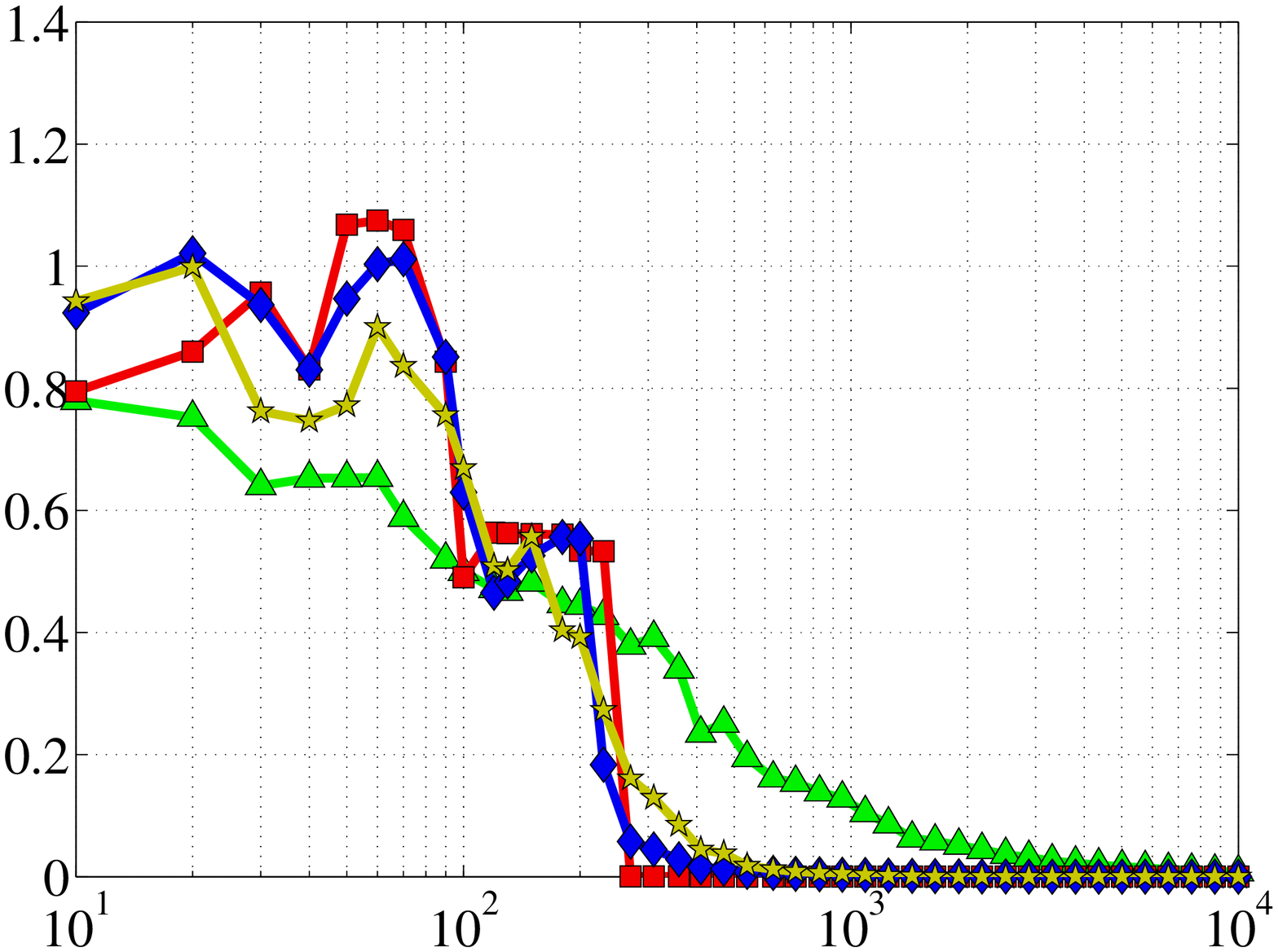} \\
\rotatebox{90}{\scriptsize{Suboptimality}} &
\includegraphics[width=0.45\textwidth]{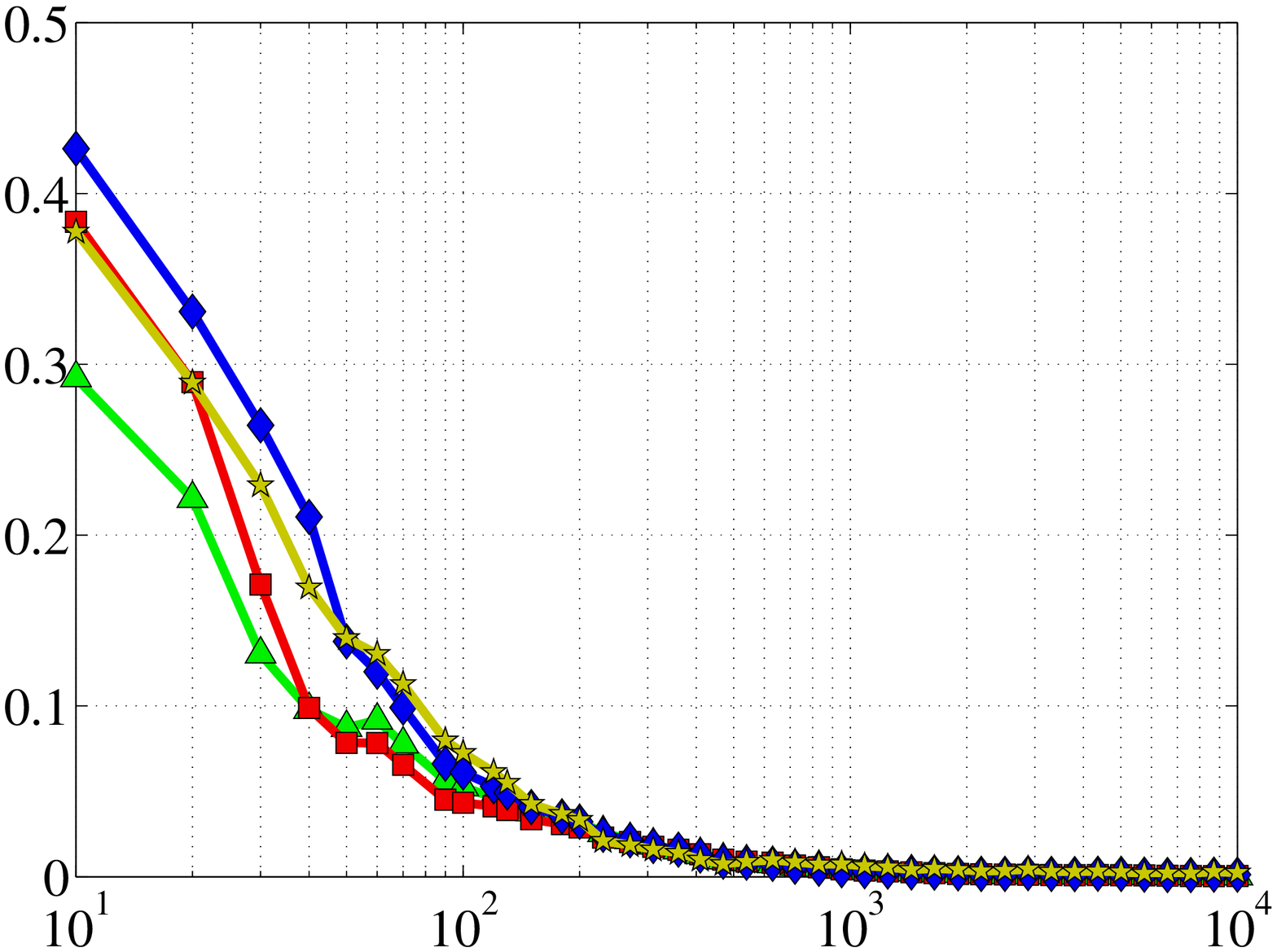} &
\includegraphics[width=0.45\textwidth]{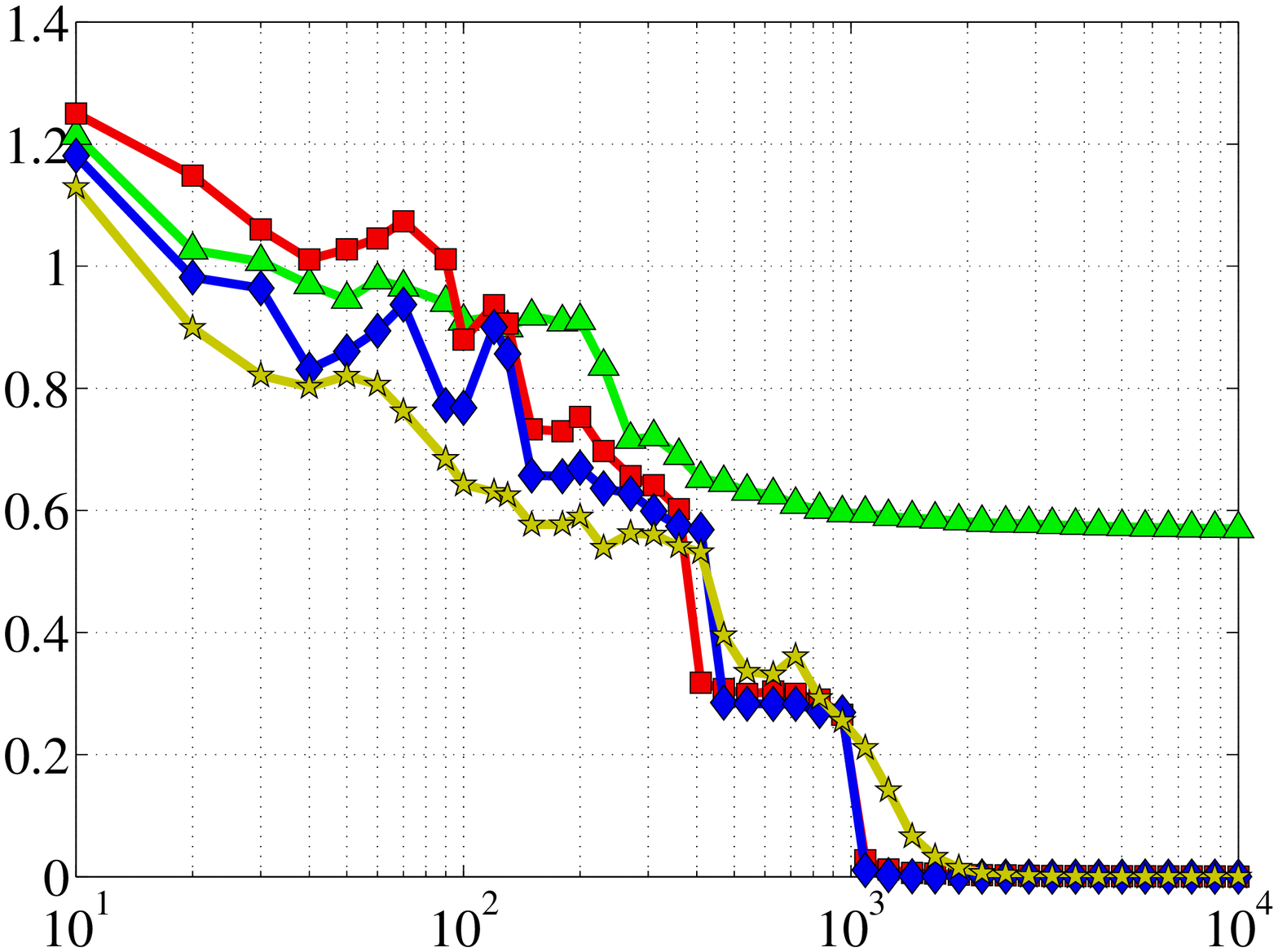} \\
& \scriptsize{Iterations} & \scriptsize{Iterations}
\end{tabular}
\end{center}

\caption{
Plots of suboptimality (vertical axis) versus iteration count (horizontal axis)
on simulated data. Each row of plots corresponds to a different choice of the
parameter $k$, which is both the subspace dimension sought by the algorithms,
and the parameter to the covariance of the data distribution $\Sigma^{(k)}$.
The first row has $k=1$, the second $k=2$ and the third $k=4$.  In the first
column of plots, the data distribution is Gaussian, while in the second column
it is the ``orthogonal distribution'' described in Section
\ref{subsec:pca-capped-msg:simulated}.
}

\label{fig:pca-capped-msg:simulated-iterations}

\end{figure}

\begin{figure}

\begin{center}
\begin{tabular}{ @{} L @{} H @{} H @{} }
& \large{Gaussian} & \large{Orthogonal} \\
\rotatebox{90}{\scriptsize{$k_t'$}} &
\includegraphics[width=0.45\textwidth]{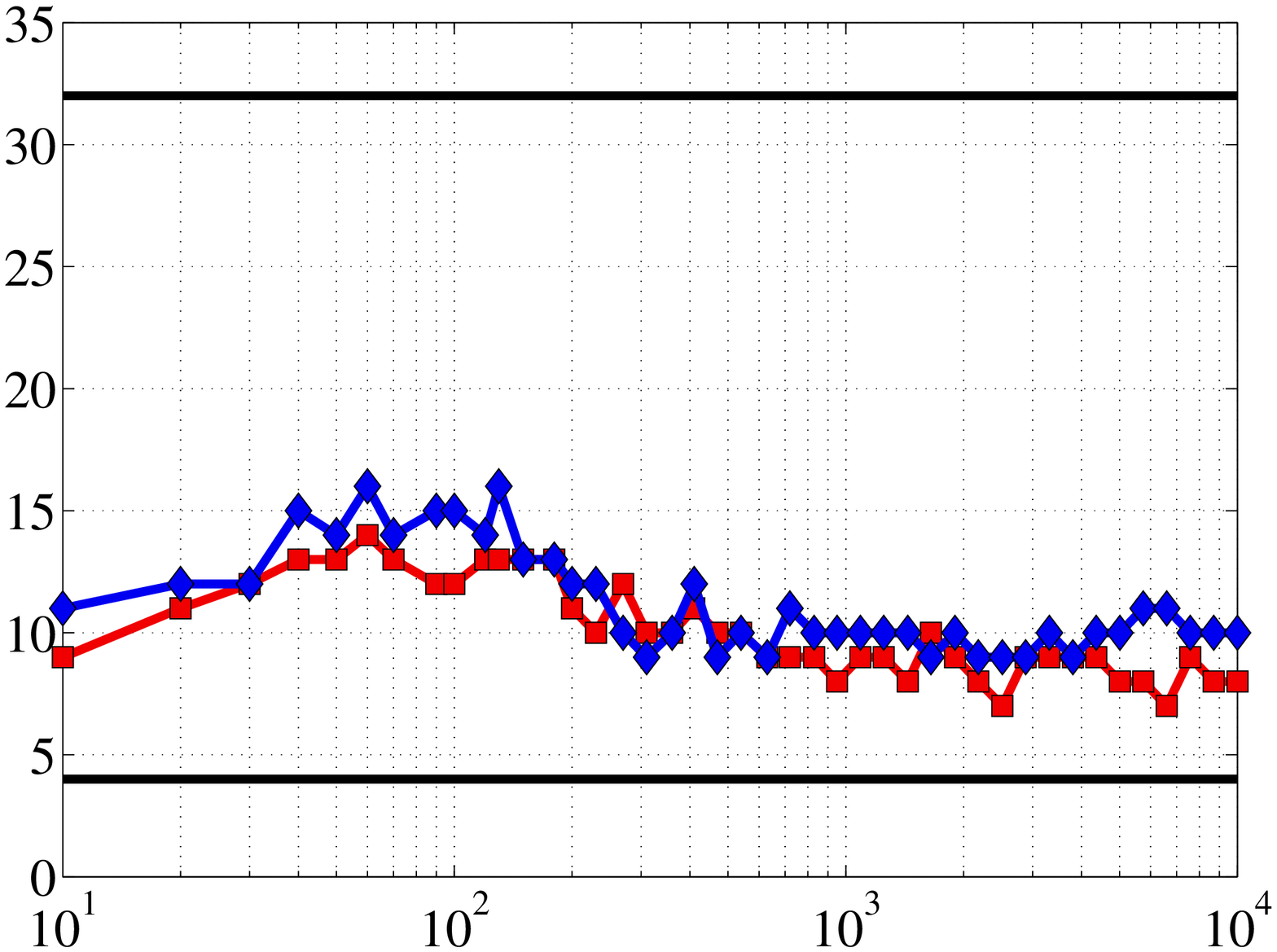} &
\includegraphics[width=0.45\textwidth]{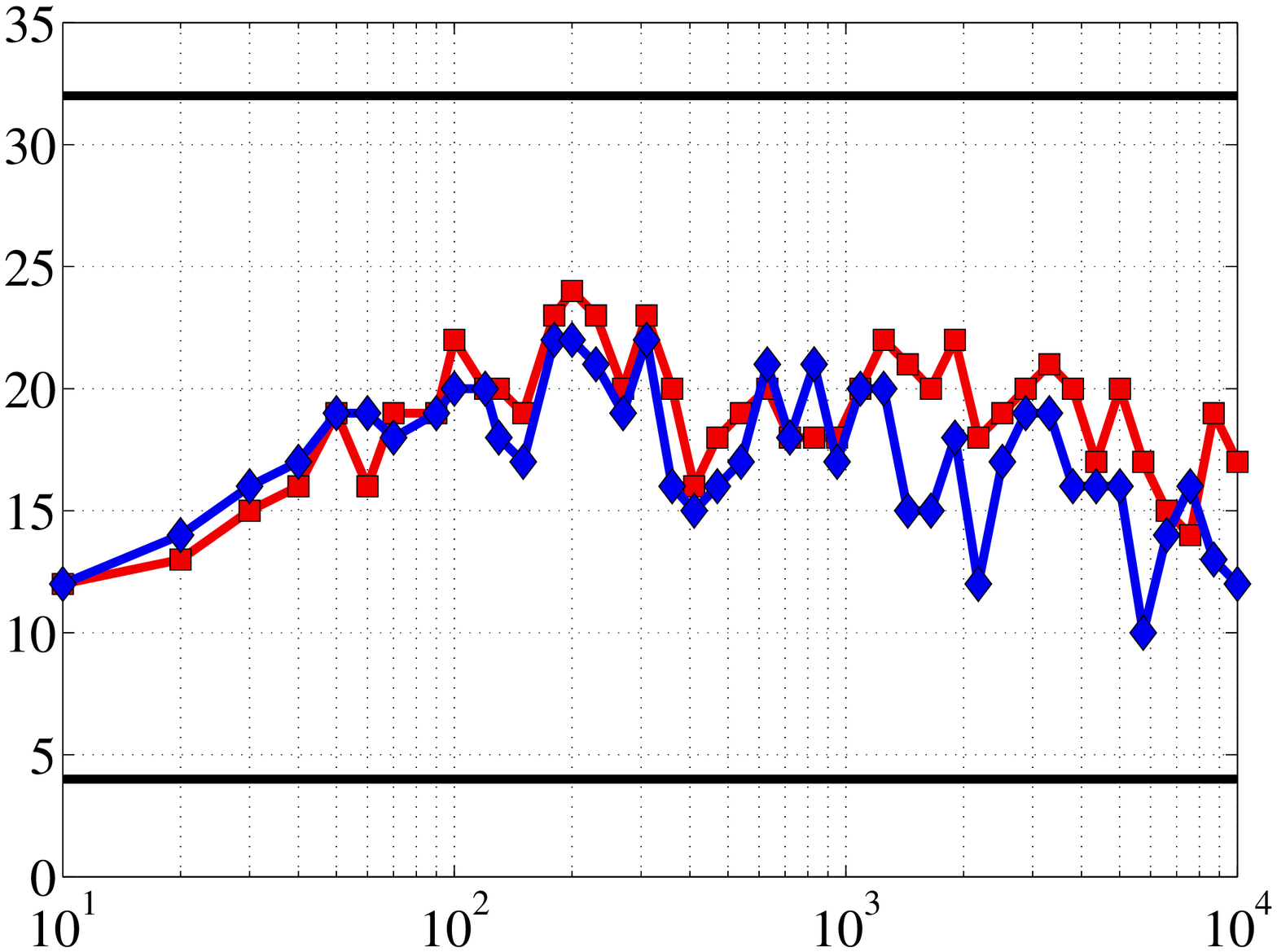} \\
& \scriptsize{Iterations} & \scriptsize{Iterations} \\
\rotatebox{90}{\scriptsize{Eigenvalue}} &
\includegraphics[width=0.45\textwidth]{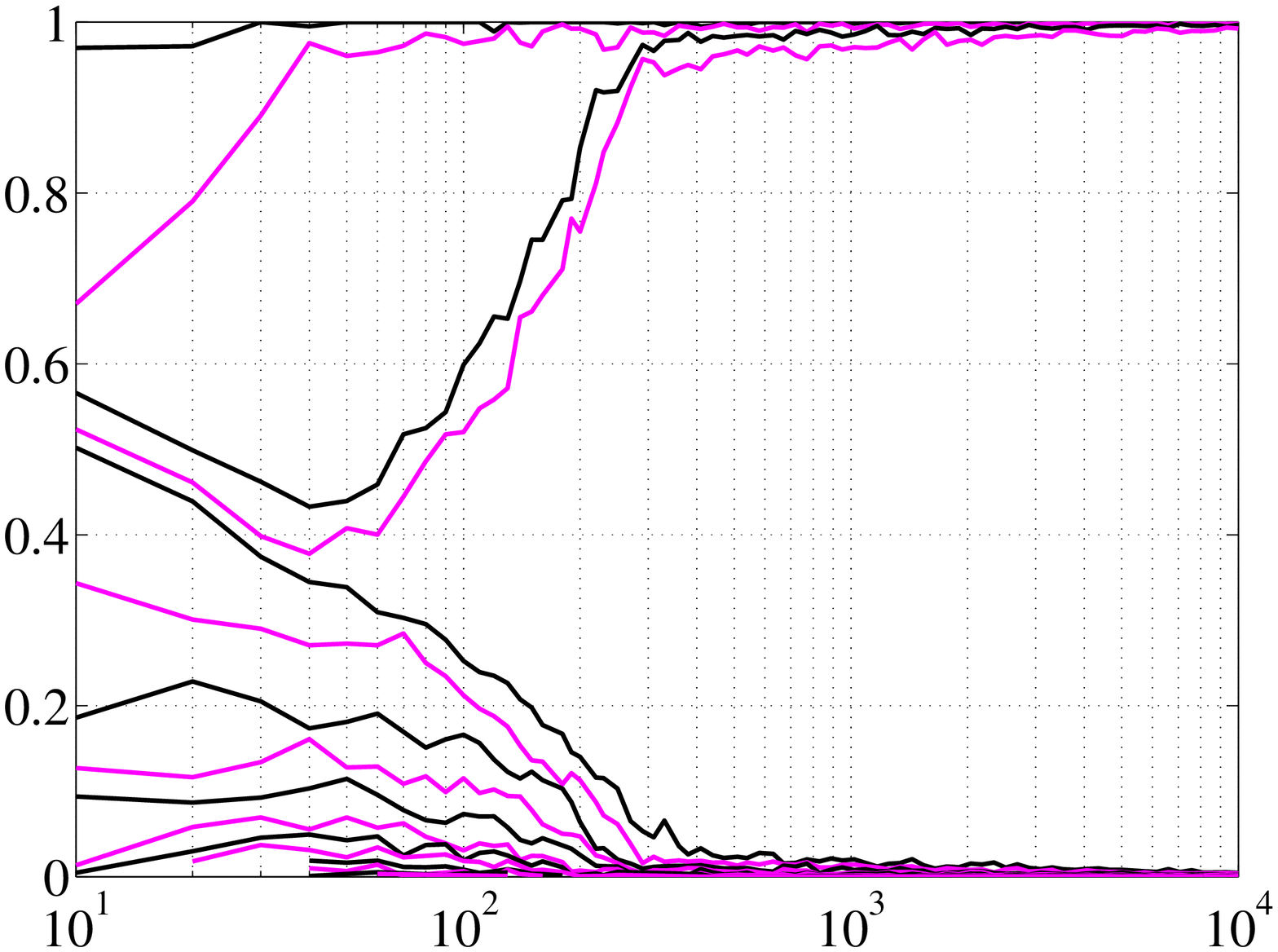} &
\includegraphics[width=0.45\textwidth]{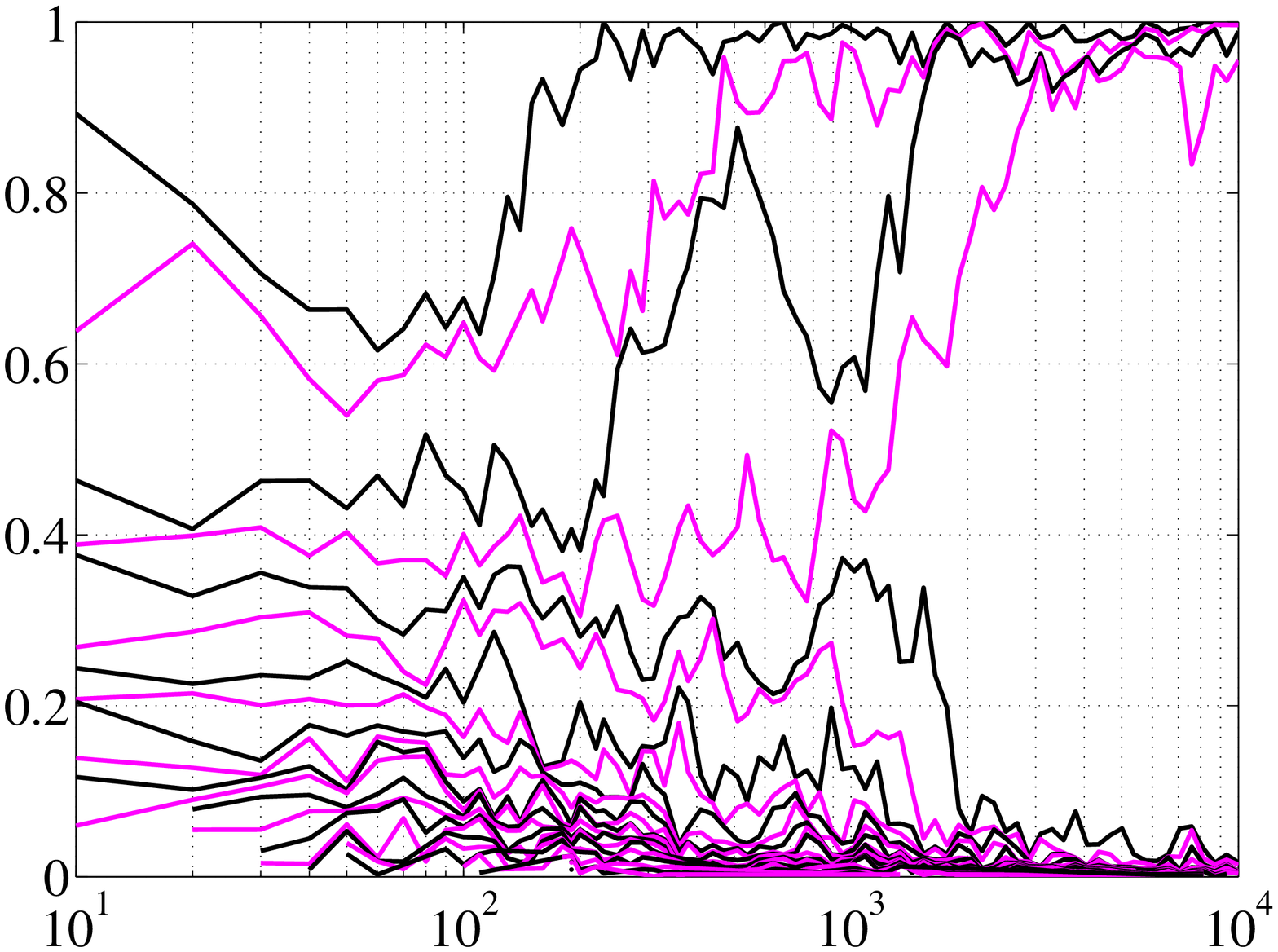} \\
& \scriptsize{Iterations} & \scriptsize{Iterations} \\
\rotatebox{90}{\scriptsize{Suboptimality}} &
\includegraphics[width=0.45\textwidth]{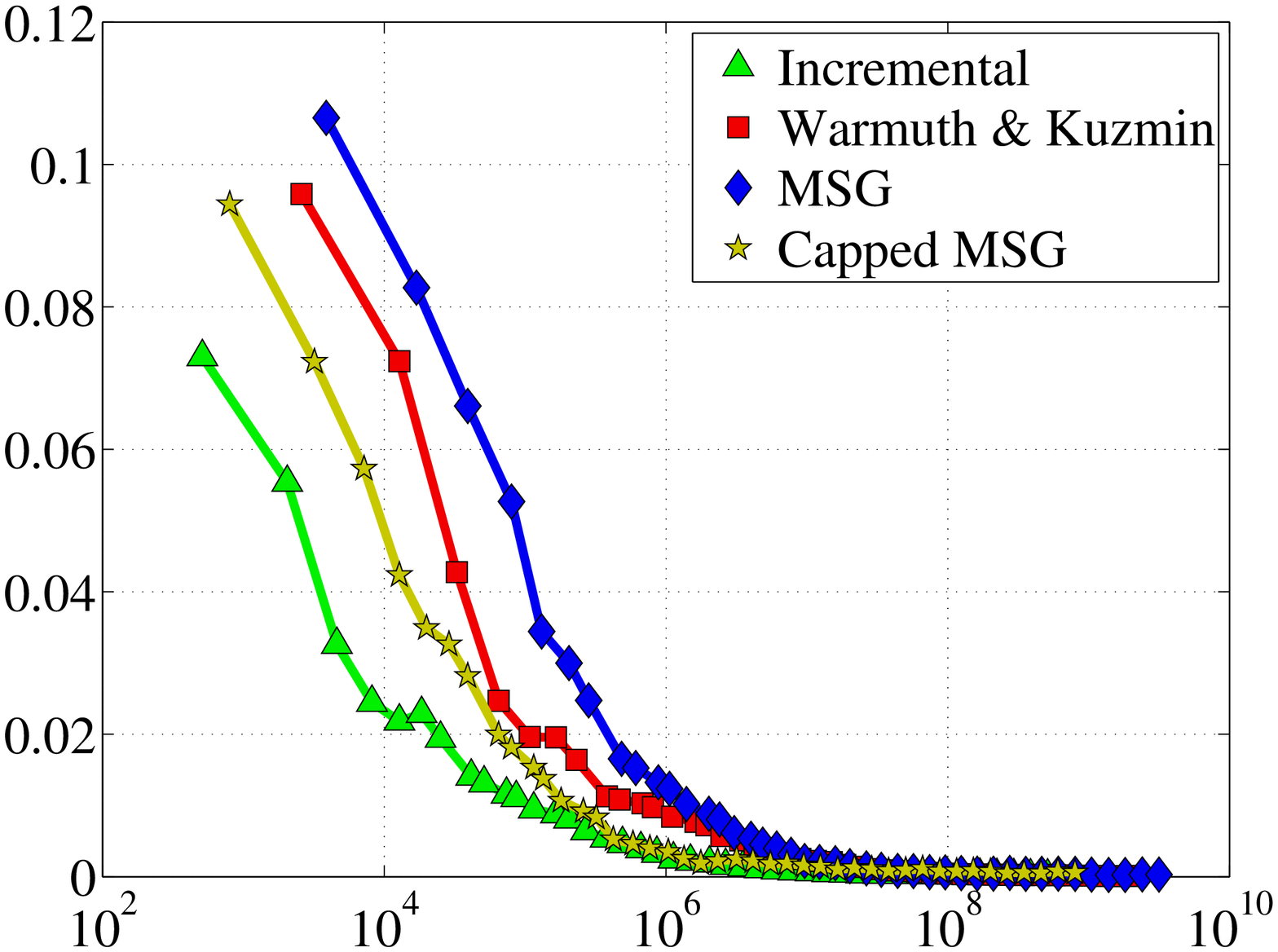} &
\includegraphics[width=0.45\textwidth]{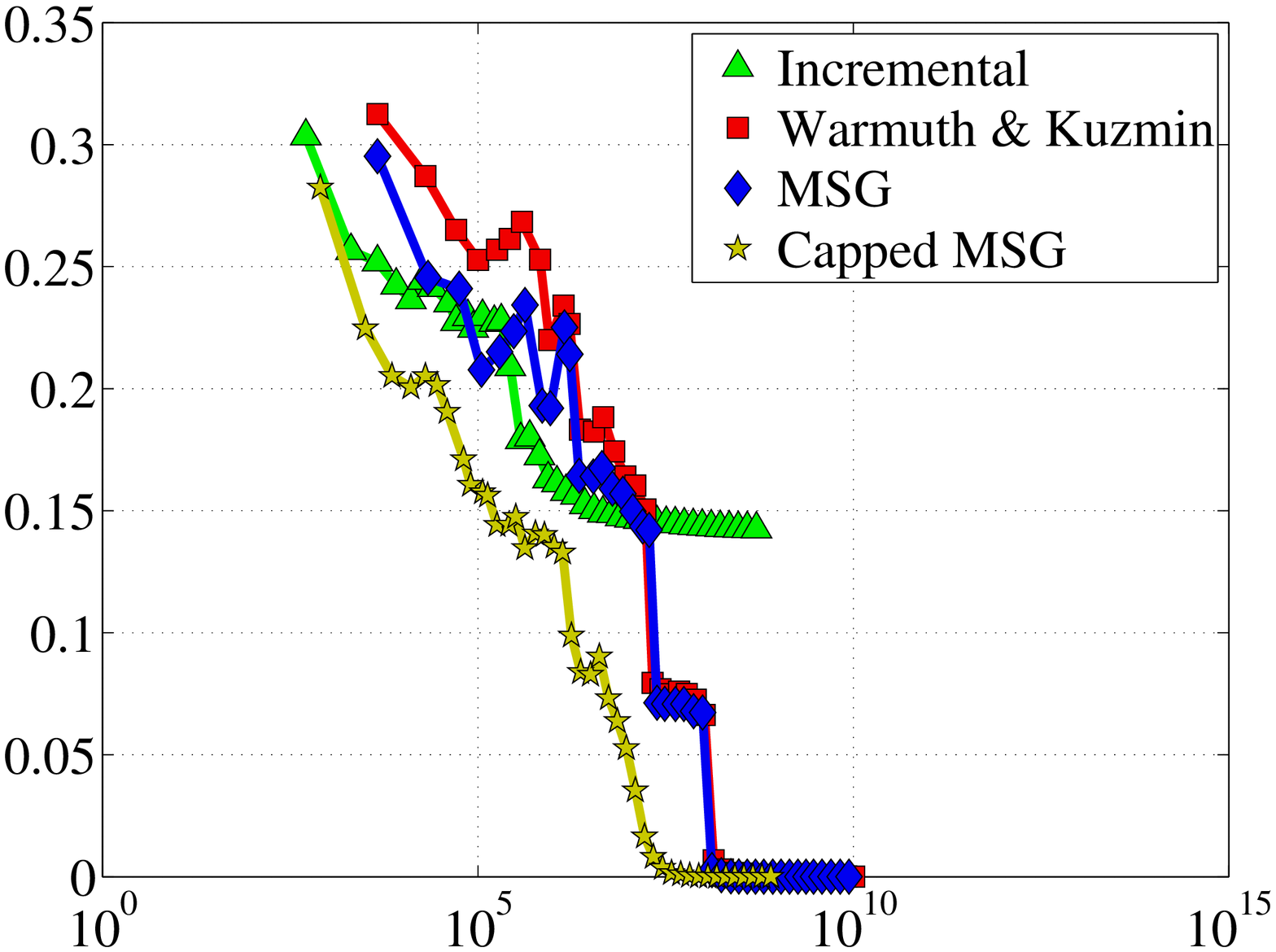} \\
& \scriptsize{Est. runtime} & \scriptsize{Est. runtime}
\end{tabular}
\end{center}

\caption{
In-depth look at simulated-data experiments with $k=4$. First row: the ranks
$k_t'$ of the iterates found by MSG. Middle row: the eigenvalues of the
iterates $M^{(t)}$ found by MSG. Bottom row: suboptimality as a function of
estimated runtime $\sum_{s=1}^{t} (k_s')^2$.  In the first column of plots, the
data distribution is Gaussian, while in the second column it is the
``orthogonal distribution'' described in Section
\ref{subsec:pca-capped-msg:simulated}.
}

\label{fig:pca-capped-msg:simulated-runtime}

\end{figure}

Our first round of experiments is designed to explore both the raw performance
of our algorithms (MSG and capped MSG) on distributions which we believe to be
particularly bad for them. To this end, we generated data from known
$32$-dimensional distributions with diagonal covariance matrices $\Sigma^{(k)}
= \code{diag}(\sigma^{(k)}/\norm{\sigma^{(k)}})$, where $\sigma$ is the average
of a ``smooth'' portion and a discontinuous portion:
\begin{equation*}
\sigma^{(k)}_i = \frac{1}{2}\left( \frac{(1.1)^{-i}}{\sum_{j=1}^{32}
(1.1)^{-j}} + \frac{\mathbf{1}_{ i \le k }}{k} \right)
\end{equation*}
Observe that $\Sigma^{(k)}$ has a smoothly-decaying set of eigenvalues, except
that there is a large gap between the $k$th and $(k+1)$th eigenvalues. We
experimented with $k\in\{1,2,4\}$, where $k$ is both the desired subspace
dimension used by each algorithm, and also the parameter defining
$\Sigma^{(k)}$. Hence, these are examples of relatively ``easy'' problems for
the algorithm of \citet{WarmuthKu08}, in that the compression loss suffered by
the best $k$-dimensional subspace is relatively small. As was mentioned in
Section \ref{sec:pca-capped-msg:msg}, this is the regime in which their upper bound on the
convergence rate of their algorithm is superior to that of MSG
(Lemma \ref{lem:pca-capped-msg:rate}).
%

In addition to varying $k$, we also experimented with two different
distributions of the same covariance $\Sigma^{(k)}$. The first is simply a
Gaussian distribution with covariance matrix $\Sigma^{(k)}$. The second is
meant to be particularly hard for algorithms (such as the incremental algorithm
of Section \ref{sec:pca-introduction:incremental} and capped MSG algorithm of Section
\ref{sec:pca-capped-msg:capped-rank}) which place a hard cap on the rank of
their iterates: the distribution samples the $i$th standard unit basis vector
$e_i$ with probability $\sqrt{\Sigma_{ii}}$.  We refer to this as the
``orthogonal distribution'', since it is a discrete distribution over $32$
orthogonal vectors.
Interestingly, while this is a ``hard'' distribution for the incremental and
capped MSG algorithms, it is particularly easy for SAA, since, as we saw in
Section \ref{sec:pca-introduction:saa}, this algorithm will perform extremely
well when $\mathcal{D}$ is supported on $d$ orthogonal directions. The reason
for this difference is that the two ``capped'' algorithms have a type of
\emph{memory}, and will tend to ``forget'' directions supported on infrequent
but large-magnitude samples.

All of the compared algorithms except the incremental algorithm have a
step-size parameter. In these experiments, we ran each algorithm for the
decreasing step sizes $\eta_t = c/\sqrt{t}$ for $c\in 2^{-12:5}$, and created
plots for the best choice of $c$, in terms of the average suboptimality over
the run (on the real-data experiments of the following section, we use a
validation-based approach to choosing the step-size).

Figure \ref{fig:pca-capped-msg:simulated-iterations} contains plots of
individual runs of MSG, capped MSG with $K=k+1$, the incremental algorithm, and
Warmuth and Kuzmin's algorithm, all based not only on the same sequence of
samples drawn from $\mathcal{D}$. On the Gaussian data distribution (top row),
all of the algorithms performed roughly comparably, with the incremental
algorithm seemingly being the best-performer, followed by Warmuth \& Kuzmin's
algorithm.  On the orthogonal distribution (bottom row), the behavior of the
algorithms changed markedly, with the incremental algorithm getting stuck for
$k\in\{1,4\}$, and the others intermittently plateauing at intermediate
solutions before beginning to again converge rapidly towards the optimum. This
behavior is to be expected on the capped MSG algorithm, due to the fact that
the dimension of the subspace stored at each iterate is constrained. However,
it is somewhat surprising that MSG and Warmuth \& Kuzmin's algorithm behaved
similarly.

In Figure \ref{fig:pca-capped-msg:simulated-runtime}, we look in greater depth
at the results for the orthogonal distribution with $k=4$. We can see from the
left-hand plot that both MSG and Warmuth \& Kuzmin's algorithm maintain
subspaces of roughly dimension $15$. For reference, the $k_t'$ for both
algorithms tended to be roughly $10$ on the Gaussian distribution with $k=4$ or
orthogonal distribution with $k=1$, and roughly $3$ for the Gaussian
distribution with $k=1$. The middle plot shows how the set of nonzero
eigenvalues of the MSG iterates evolves over time, from which we can see that
many of the extra ranks are ``wasted'' on very small eigenvalues, corresponding
to directions which leave the state matrix only a handful of iterations after
they enter. This indicates that constraining $k_t'$, as is done by capped MSG,
may indeed be safe, and lead to significant speedups. Indeed, as is shown in
the right-hand plot, the low cost-per-iteration of capped MSG causes it to find
good solutions using significantly less computation than the others.


\subsection{Real Data}\label{subsec:pca-capped-msg:real}

\begin{figure}

\begin{center}
\begin{tabular}{ @{} L @{} H @{} H @{} }
\rotatebox{90}{\scriptsize{Suboptimality}} &
\includegraphics[width=0.45\textwidth]{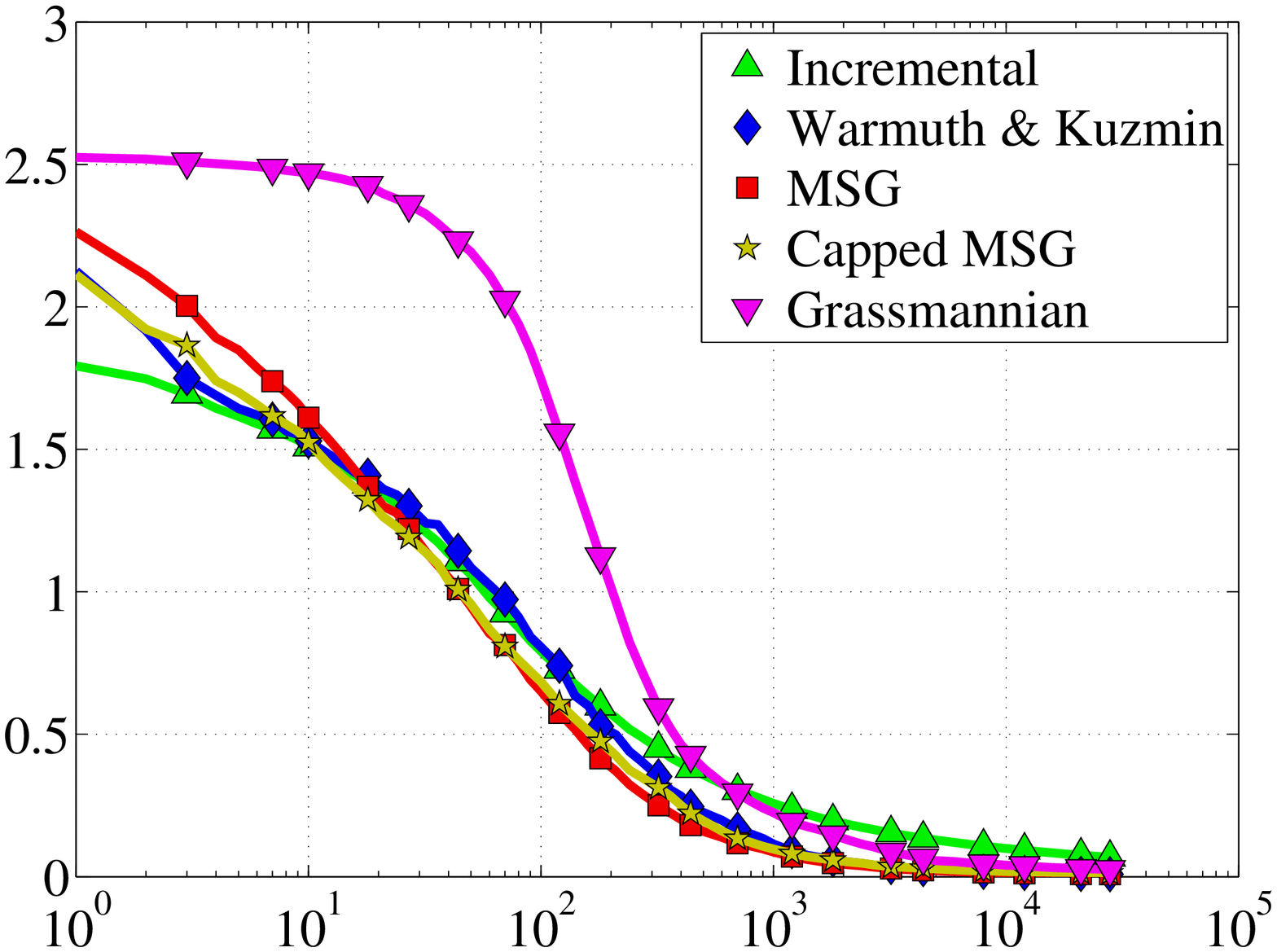} &
\includegraphics[width=0.45\textwidth]{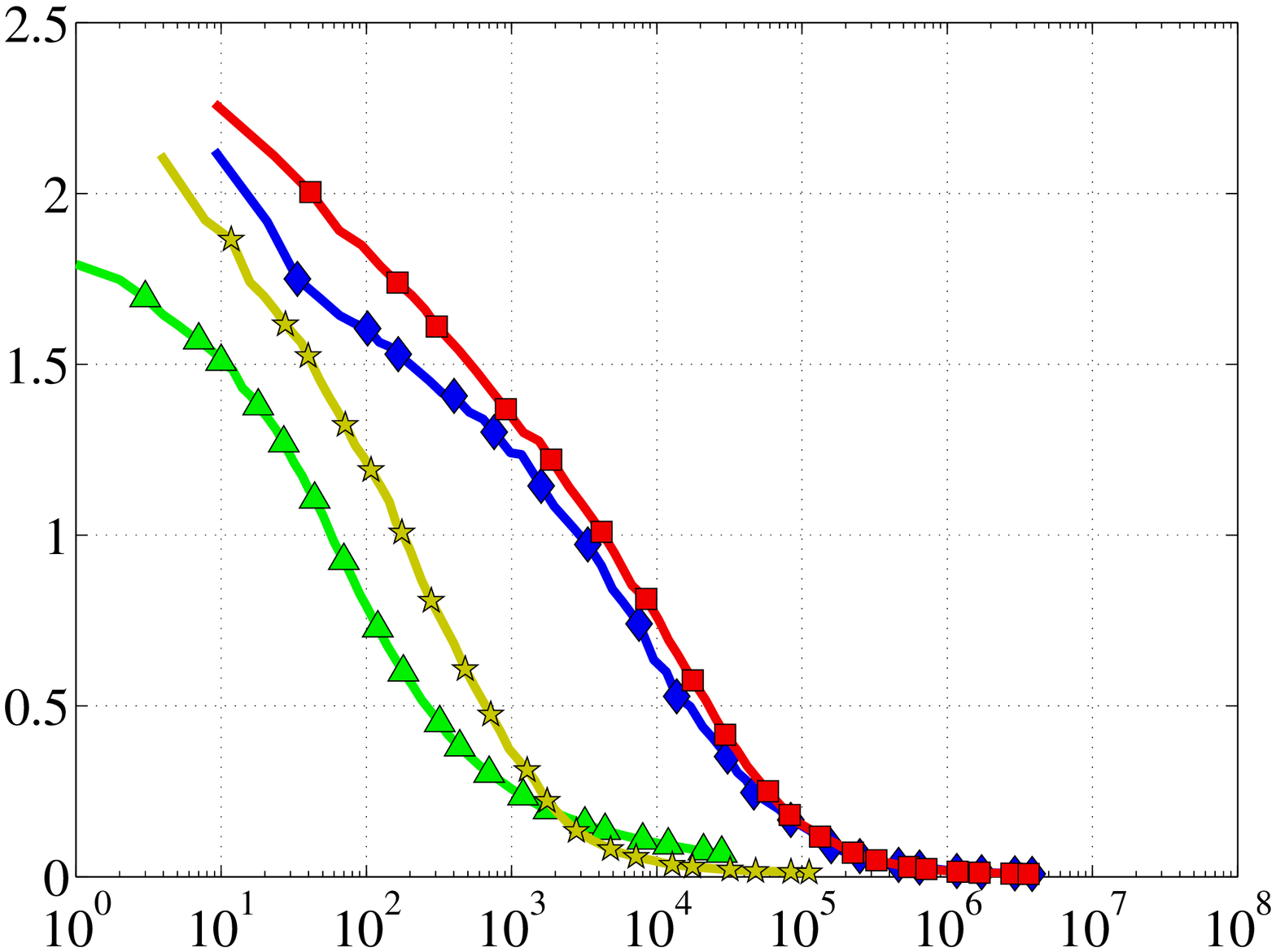} \\
\rotatebox{90}{\scriptsize{Suboptimality}} &
\includegraphics[width=0.45\textwidth]{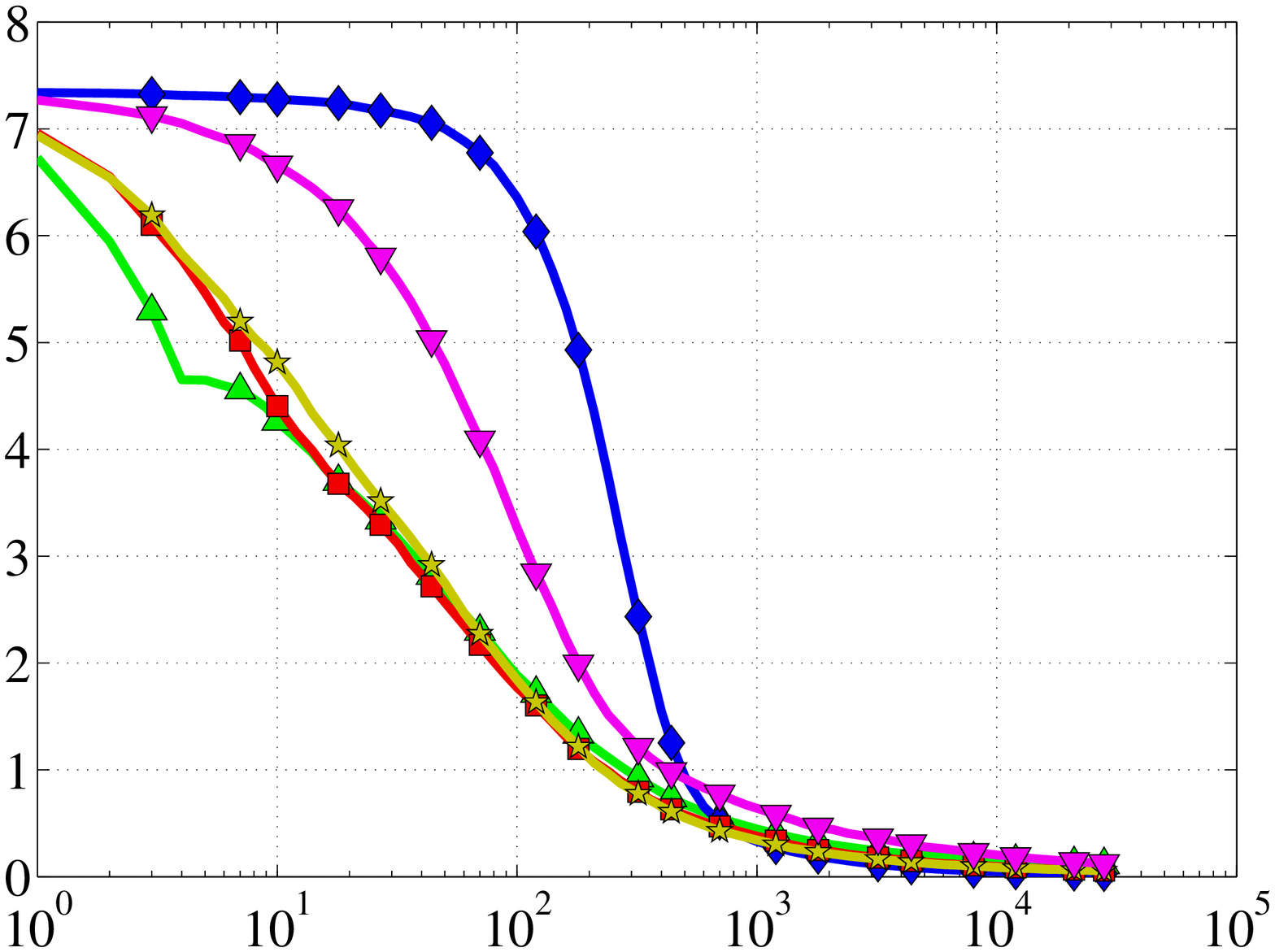} &
\includegraphics[width=0.45\textwidth]{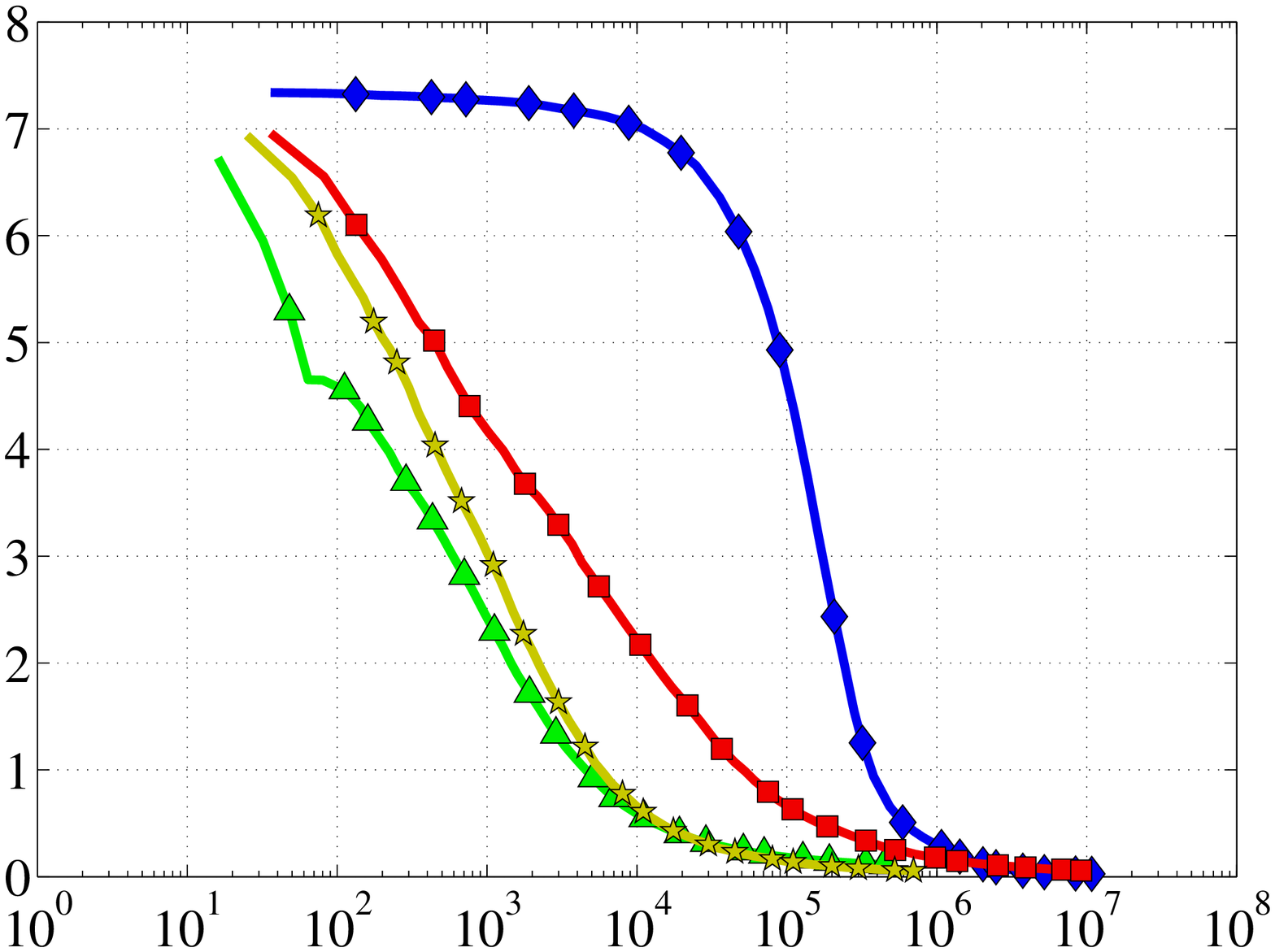} \\
\rotatebox{90}{\scriptsize{Suboptimality}} &
\includegraphics[width=0.45\textwidth]{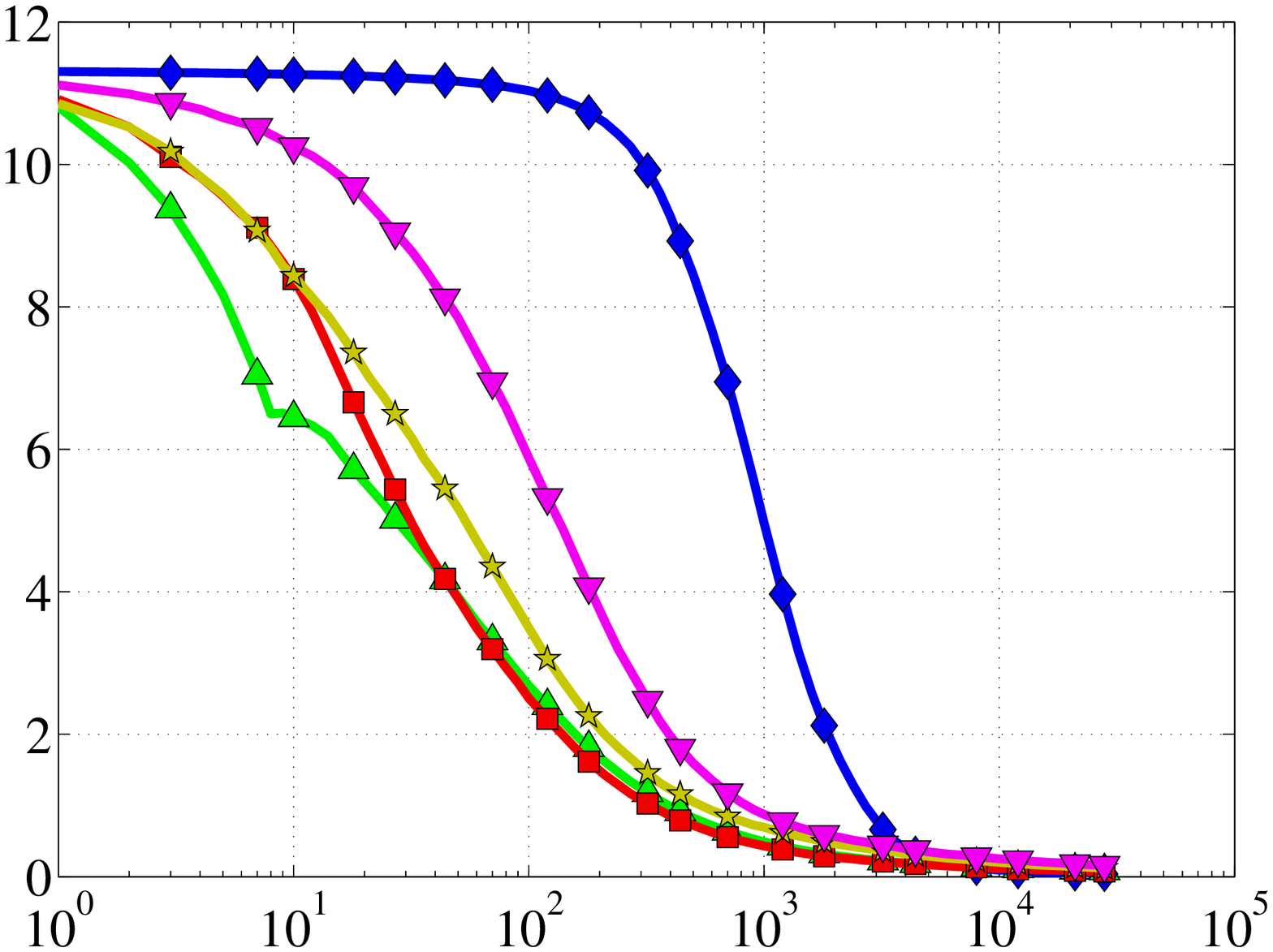} &
\includegraphics[width=0.45\textwidth]{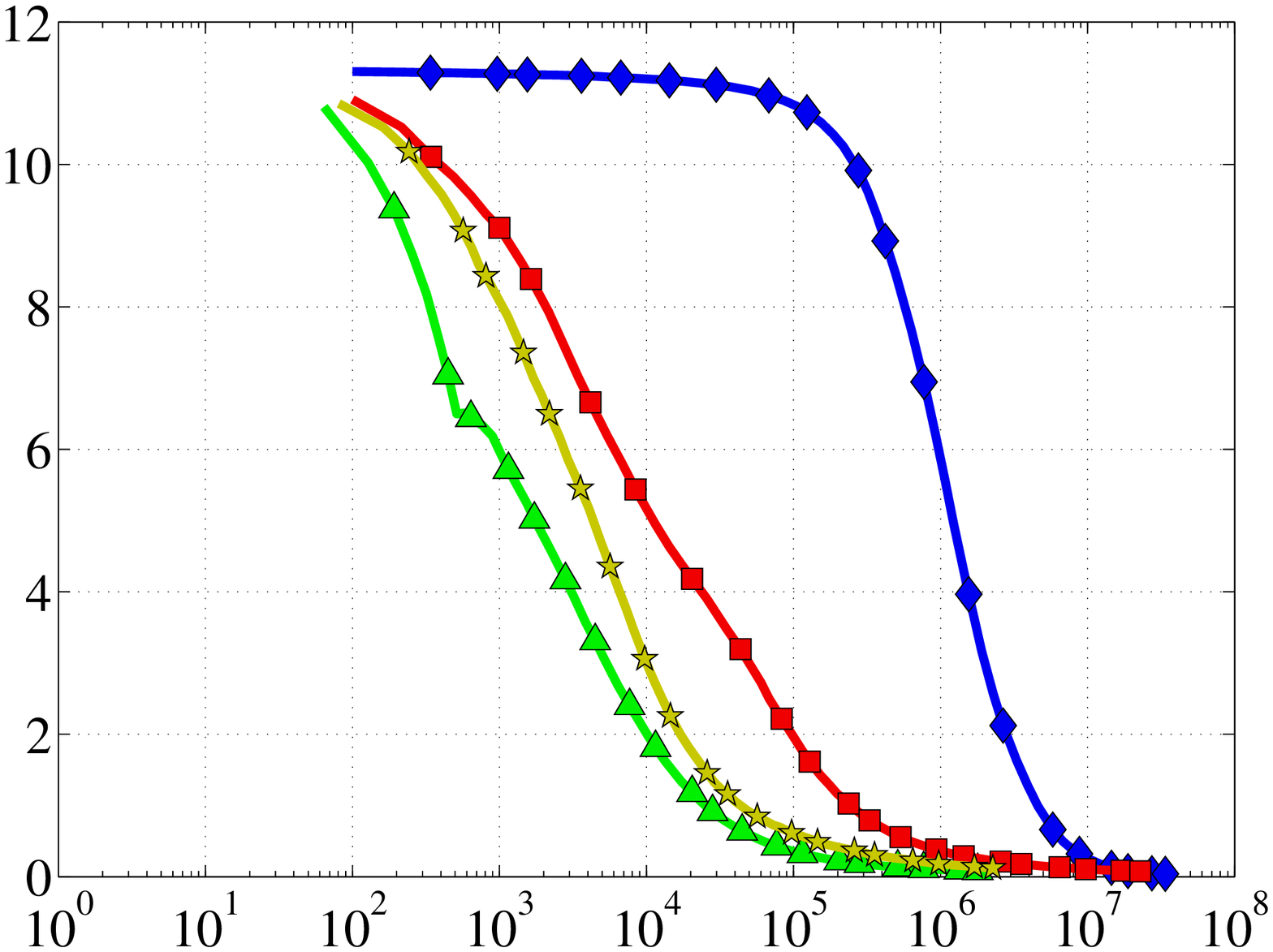} \\
& \scriptsize{Iterations} & \scriptsize{Est. Runtime}
\end{tabular}
\end{center}

\caption{
Plots of suboptimality (vertical axis) versus the iteration count or estimated
runtime (horizontal axis) on the MNIST dataset. Each row of plots corresponds
to a different choice of the parameter $k$, with the first row having $k=1$,
the second $k=4$ and the third $k=8$. The first column of plots shows
suboptimality as a function of iteration count, while the second column shows
suboptimality as a function of estimated runtime $\sum_{s=1}^{t} (k_s')^2$.
}

\label{fig:pca-capped-msg:real}

\end{figure}

Our second set of experiments measure the performance of various stochastic
approximation algorithms for PCA in terms of the population objective, both as
a function of number of iterations as well as the estimated computational
complexity, which, as in Section \ref{subsec:pca-capped-msg:simulated}, we compute from the
representation size of the internal state maintained by each algorithm. Since
we cannot evaluate the true population objective, we estimate it by evaluating
on a held-out test set. We use 40\% of samples in the dataset for training,
20\% for validation, and 40\% for testing.

These experiments were performed on the MNIST dataset, which consists of
$70,000$ binary images of handwritten digits of size $28 \times 28$, resulting
in a dimension of $784$. We normalized the data by mean centering the feature
vectors and scaling each feature by the product of its standard deviation and
the data dimension, so that each feature vector has zero mean and unit norm in
expectation. 
The results are averaged over $100$ random splits into train-validation-test
sets. We are interested in learning a maximum variance subspace of dimension
$k\in{1,4,8}$ in a single ``pass'' over the training sample.
For all algorithms requiring a step size (e.g. all except incremental), we
tried the decreasing sequence of step sizes $\eta_t = \frac{c}{\sqrt{t}}$ for
each $c \in \{2^{-20}, 2^{-19}, \ldots, 2^6\}$, chose the $c$ which minimized
the average validation suboptimality, and reported the suboptimality
experienced for this choice of $c$ on the test set.

Figure \ref{fig:pca-capped-msg:real} plots suboptimality as a function of the
number of samples processed (iterations), and also as a function of the
estimated runtime (computed as in Section
\ref{subsec:pca-capped-msg:simulated}).
The incremental algorithm makes the most progress per iteration and is also the
fastest of all algorithms.
MSG is comparable to the incremental algorithm in terms of the the progress
made per iteration. However, its runtime is a worse than the incremental
because it will often keep a slightly larger representation (of dimension
$k_t'$) than the incremental algorithm. The capped MSG variant (with $K=k+1$)
is significantly faster---almost as fast as the incremental algorithm, while, as
we saw in the previous section, being less prone to getting stuck.
Warmuth \& Kuzmin's algorithm fares well with $k=1$, but its performance drops
for higher $k$. Inspection of the underlying data shows that, in the
$k\in\{4,8\}$ experiments, it also tends to have a larger $k_t'$ than MSG in
these experiments, and therefore has a higher cost-per-iteration.
GROUSE performs better than Warmuth \& Kuzmin's algorithm, but fares much worse
when compared with the MSG and capped MSG.

The most important ``message'' of these experiments is that the capped MSG
algorithm, despite its similarity to the incremental algorithm, has far less of
a tendency to get stuck, even when $K=k+1$, while still performing nearly as
well even in those cases in which the incremental algorithm converges rapidly.
Hence, it is a good ``safe'' alternative to the incremental algorithm.

\paragraph{Collaborators:} The work presented in this chapter was performed
jointly with Raman Arora and Nathan Srebro.

\clearpage
\section{Proofs for Chapter \ref{ch:pca-capped-msg}}

\begin{proofs}
\end{proofs}

\cleardoublepage 


\cleardoublepage

\label{app:bibliography} 

\manualmark
\markboth{\spacedlowsmallcaps{\bibname}}{\spacedlowsmallcaps{\bibname}} 
\refstepcounter{dummy}

\addtocontents{toc}{\protect\vspace{\beforebibskip}} 
\addcontentsline{toc}{chapter}{\tocEntry{\bibname}}

\bibliographystyle{plainnat}

\bibliography{main}


\end{document}